\newcommand{\smath}[1]{{#1}}
\newtheorem{lemma}{Lemma}
\newtheorem{theorem}{Theorem}
\newtheorem{corollary}{Corollary}
\newtheorem*{theorem*}{Theorem}
\newtheorem*{corollary*}{Corollary}
\theoremstyle{definition}
\newtheorem{proposition}{Proposition}
\newtheorem{definition}{Definition}
\newtheorem{assumption}{Assumption}
\newtheorem{remark}{Remark}
\title{Demonstration-Regularized RL}
\author{%
  Daniil Tiapkin \\
   CMAP, {\'E}cole Polytechnique\\
   HSE University\\
  \texttt{daniil.tiapkin@polytechnique.edu} \\
  \And
  Denis Belomestny \\
  Duisburg-Essen University \\
  HSE University \\
  \texttt{denis.belomestny@uni-due.de} \\
  \And
  Daniele Calandriello \\
  Google DeepMind \\
  \texttt{dcalandriello@google.com} \\
  \And
  \'Eric Moulines \\
   CMAP, {\'E}cole Polytechnique \\
   Mohamed Bin Zayed University of AI \\
  \texttt{eric.moulines@polytechnique.edu} \\
  \And
  Alexey Naumov \\
  HSE University \\
  \texttt{anaumov@hse.ru} \\
  \And
  Pierre Perrault \\
    IDEMIA \\
    \texttt{pierre.perrault@outlook.com} \\
  \AND 
  Michal Valko \\
  Google DeepMind \\
  \texttt{valkom@google.com} \\
  \And 
  Pierre M\'enard \\
  ENS Lyon\\
  \texttt{pierre.menard@ens-lyon.fr} \\
}
\author{%
Daniil Tiapkin$^{1,2}$ \quad Denis Belomestny$^{3,2}$ \quad Daniele Calandriello$^4$ \quad \' Eric Moulines$^{1,5}$ \\ \textbf{Remi Munos}$^4$ \quad
\textbf{Alexey Naumov}$^2$ \quad \textbf{Pierre Perrault}$^6$ \quad \textbf{Michal Valko}$^4$
\quad \textbf{Pierre M\' enard}$^7$
\\
$^1$CMAP, École Polytechnique 
\quad $^2$HSE University 
\quad $^3$Duisburg-Essen University
\\ $^4$Google DeepMind
\quad $^5$Mohamed Bin Zayed University of AI, UAE 
\quad $^6$IDEMIA
\quad $^7$ENS Lyon\\
\texttt{\{daniil.tiapkin,eric.moulines\}@polytechnique.edu}\\
\texttt{denis.belomestny@uni-due.de} \quad 
\texttt{\{dcalandriello,munos,valkom\}@google.com}\\
\texttt{anaumov@hse.ru}
\quad 
\texttt{pierre.perrault@outlook.com}
\quad
\texttt{pierre.menard@ens-lyon.fr}
}
\newcommand{\new}[1]{#1}
\begin{document}

\maketitle

\doparttoc 
\faketableofcontents 

\begin{abstract}
Incorporating expert demonstrations has empirically helped to improve the sample efficiency of reinforcement learning (RL). This paper quantifies theoretically to what extent this extra information reduces RL's sample complexity. In particular, we study the demonstration-regularized reinforcement learning that leverages the expert demonstrations by $\KL$-regularization for a policy learned by behavior cloning. Our findings reveal that using $\Nexp$ expert demonstrations enables the identification of an optimal policy at a sample complexity of order {\small$\tcO(\mathrm{Poly}(S,A,H)/(\epsilon^2 \Nexp))$} in finite and {\small$\tcO(\mathrm{Poly}(d,H)/(\epsilon^2 \Nexp))$} in linear Markov decision processes, where $\epsilon$ is the target precision, $H$ the horizon, $A$ the number of action, $S$ the number of states in the finite case and $d$ the dimension of the feature space in the linear case. As a by-product, we provide tight convergence guarantees for the behavior cloning procedure under general assumptions on the policy classes. Additionally, we establish that demonstration-regularized methods are provably efficient for reinforcement learning from human feedback (RLHF). In this respect, we provide theoretical evidence showing the benefits of KL-regularization for RLHF  in tabular and linear MDPs. 
Interestingly, we avoid pessimism injection by employing computationally feasible regularization to handle reward estimation uncertainty, thus setting our approach apart from the prior works.

\end{abstract}

\section{Introduction}
\label{sec:introduction}

In reinforcement learning (RL, \citealt{SuttonBarto98}), agents interact with an environment to maximize the cumulative reward they collect. While RL has shown remarkable success in mastering complex games \citep{mnih2013playing,AlphaZero,berner2019dota2}, controlling physical systems \citep{degrave2022magnetic}, and enhancing computer science algorithms \citep{mankowitz2023faster}, it does face several challenges. In particular, RL algorithms suffer from a large sample complexity, which is a hindrance in scenarios where simulations are impractical and struggle in environments with sparse rewards \citep{goecks2020integrating}. 

A remedy found to handle these limitations is to incorporate information from a pre-collected offline dataset in the learning process. Specifically, leveraging demonstrations from experts—trajectories without rewards—has proven highly effective in reducing sample complexity, especially in fields like robotics \citep{zhu2018reinforcement, nair2020accelerating} and guiding exploration \citep{nair2018overcoming, aytar2018playing, goecks2020integrating}.

However, from a theoretical perspective, little is known about the impact of this approach. Previous research has often focused on either offline RL \citep{rashidinejad2021bridging, xie2021policy, yin2021near, shi2022pessimistic} or online RL \citep{jaksch2010near,azar2017minimax,fruit2018efficient,dann2017unifying,zanette2019tighter,jin2018is}. In this study, we aim to quantify how prior demonstrations from experts influence the sample complexity of various RL tasks, specifically two scenarios: best policy identification (BPI, \citealp{domingues2021episodic,marjani2021navigating}) and reinforcement learning from human feedback (RLHF), within the context of finite or linear Markov decision processes \citep{jin2019provably}.

\textbf{Imitation learning} The case where the agent only observes expert demonstrations without further interaction with the environment corresponds to the well-known imitation learning problem. There are two primary approaches in this setting: \textit{inverse reinforcement learning} \citep{ng2000algorithms,abbeel2004inverse,ho2016generative} where the agent first infers a reward from demonstrations then finds an optimal policy for this reward; and \textit{behavior cloning} \citep{pomerleau1988autonomous,ross2010efficient,ross2011reduction,rajeswaran2018learning}, a simpler method that employs supervised learning to imitate the expert. However collecting demonstration could be expansive and, furthermore, imitation learning suffers from the compounding errors effect, where the agent can diverge from the expert's policy in unvisited states \citep{ross2010efficient,rajaraman2020toward}. Hence, imitation learning is often combined with an online learning phase where the agent directly interacts with the environment.

\textbf{BPI with demonstrations} In BPI with demonstrations, the agent observes expert demonstrations like in imitation learning but also has the opportunity to collect new trajectories, including reward information, by directly interacting with the environment. There are three main method categories\footnote{The boundary between the above families of methods is not strict, since for example, one can see the regularization in the third family
as a particular choice of auxiliary reward learned by inverse reinforcement learning that appears in the second class of methods.} for BPI with demonstration: 
one employs an off-policy algorithm augmented with a supervised learning loss and a replay buffer pre-filled the demonstrations \citep{hosu2016playing,lakshminarayanan2016reinforcement,vecerik2017leveraging, hester2018deep};  while a second uses reinforcement learning with a modified reward supplemented by auxiliary rewards obtained by inverse reinforcement learning \citep{zhu2018reinforcement, kang2018policy}. The third class, demonstration-regularized RL, which is the one we study in this paper, leverages behavior cloning to learn a policy that imitates the expert and then applies reinforcement learning with regularization toward this behavior cloning policy \citep{rajeswaran2018learning,nair2018overcoming,goecks2020integrating,pertsch2021skild}. 

\textbf{Demonstration-regularized RL} We introduce a particular demonstration-regularized RL method that consists of several steps. We start by learning with maximum likelihood estimation from the demonstrations of a behavior policy. This transfers the prior information from the demonstrations to a more practical representation: the behavior cloning policy. During the online phase, we aim to solve a trajectory Kullback-Leibler divergence regularized MDP \citep{neu2017unified,vieillard2020leverage,tiapkin3023fast}, penalizing the policy for deviating too far from the behavior cloning policy. We use the solution of this regularized MDP as an estimate for the optimal policy in the unregularized MDP, effectively reducing BPI with demonstrations to regularized BPI.

Consequently, we propose two new algorithms for BPI in regularized MDPs: The \UCBVIEntp algorithm, a variant of the \UCBVIEnt algorithm by \citet{tiapkin3023fast} with improved sample complexity, and the \LSVIEnt algorithm, its adaptation to the linear setting.  When incorporated into the demonstration-regularized RL method, these algorithms yield sample complexity rates for BPI with \smath{$\Nexp$} demonstrations of order\footnote{In the \smath{$\tcO(\cdot)$} notation we ignore terms poly-$\log$  in \smath{$H,S,A,d,1/\delta,1/\epsilon$} and the notation \smath{$\mathrm{Poly}$} indicates polynomial dependencies.} {\small$\tcO(\mathrm{Poly}(S,A,H)/(\epsilon^2 \Nexp))$} in finite and {\small$\tcO(\mathrm{Poly}(d,H)/(\epsilon^2 \Nexp))$} in linear MDPs, where \smath{$\epsilon$} is the target precision, \smath{$H$} the horizon, \smath{$A$} the number of action, \smath{$S$} the number of states in the finite case and \smath{$d$} the dimension of the feature space in the linear case. Notably, these rates show that leveraging demonstrations can significantly improve upon the rates of BPI without demonstrations, which are of order {\small$\tcO(\mathrm{Poly}(S,A,H)/\epsilon^2)$} in finite MDPs \citep{kaufmann2020adaptive,menard2021fast} and {\small$\tcO(\mathrm{Poly}(d,H)/\epsilon^2)$} in linear MDPs \citep{taupin2023best}. This work, up to our knowledge, represents the first instance of sample complexity rates for BPI with demonstrations, establishing the provable efficiency of demonstration-regularized RL.

\textbf{Preference-based BPI with demonstration}  In RL with demonstrations, the assumption typically entails the observation of rewards in the online learning phase. 
However, in reinforcement learning from human feedback, such that recommendation system \citep{chaves2022efficient}, robotics \citep{jain2013learning,christiano2017deep}, clinical trials \citep{zhao2011reinforcement} or large language models fine-tuning \citep{ziegler2019fine,stiennon2020learning,ouyang2022training}, the reward is implicitly defined by human values. Our focus centers on preference-based RL (PbRL, \citealt{busa2014preference,wirth2017survey,novoseller2020dueling,saha2023dueling}) where the observed preferences between two trajectories are essentially noisy reflections of the value of a link function evaluated at the difference between cumulative rewards for these trajectories.

Existing literature on PbRL focuses either on the offline setting where the agent observes a pre-collected dataset of trajectories and preferences \citep{zhu2023principled,zhan2023provable} or on the online setting where the agent sequentially samples a pair of trajectories and observes the associated preference \citep{saha2023dueling,xu2020preference,wang2023rlhf}.

In this work, we explore a hybrid setting that aligns more closely with what is done in practice \citep{ouyang2022training}. In this framework, which we call preference-based BPI with demonstration, the agent selects a sampling policy based on expert-provided demonstrations used to generate trajectories and associated preferences. The offline collection of preference holds particular appeal in RLHF due to the substantial cost and latency associated with obtaining preference feedback. Finally, in our setting, the agent engages with the environment by sequentially collecting \textit{reward-free} trajectories and returns an estimate for the optimal policy.

\textbf{Demonstration-regularized RLHF} To address this novel setting, we follow a similar approach that was used in RL with demonstrations. We employ the dataset of preferences sampled using the behavior cloning policy to estimate rewards. Then, we solve the MDP regularized towards the behavior cloning policy, equipped with the estimated reward. Using the same regularized BPI solvers, we establish a sample complexity for the demonstration-regularized RLHF method of order {\small $\tcO((\mathrm{Poly}(S,A,H)/(\epsilon^2 \Nexp ))$} in finite MDPs and {\small$\tcO((\mathrm{Poly}(d,H)/(\epsilon^2 \Nexp ))$} in linear MDPs. Intriguingly, these rates mirror those of RL with demonstrations, illustrating that RLHF with demonstrations does not pose a greater challenge than RL with demonstrations. Notably, these findings expand upon the similar observation made by \citet{wang2023rlhf} in the absence of prior information.

We highlight our main contributions:
\begin{itemize}[itemsep=2pt,leftmargin=10pt]
    \item We establish that demonstration-regularized RL is an efficient solution method for RL with \smath{$\Nexp$} demonstrations, exhibiting a sample complexity of order {\small$\tcO(\mathrm{Poly}(S,A,H)/(\epsilon^2 \Nexp))$} in finite MDPs and {\small$\tcO(\mathrm{Poly}(d,H)/(\epsilon^2 \Nexp))$} in linear MDPs.
    \item We provide evidence that demonstration-regularized methods can effectively address reinforcement learning from human feedback (RLHF) by collecting preferences offline and eliminating the necessity for pessimism \citep{zhan2023provable}. Interestingly, they achieve sample complexities similar to those in RL with demonstrations.
    \item We prove performance guarantees for the behavior cloning procedure in terms of Kullback-Leibler divergence from the expert policy. They are of order \smath{$\tcO(\mathrm{Poly}(S,A,H)/\Nexp)$} for finite MDPs and \smath{$\tcO(\mathrm{Poly}(d,H)/\Nexp)$} for linear MDPs.
    \item \new{We provide novel algorithms for regularized BPI in finite and linear MDPs with sample complexities \smath{$\tcO(H^5 S^2 A / (\lambda \varepsilon))$} and \smath{$\tcO(H^5 d^2 / (\lambda \varepsilon))$}, correspondingly, where \smath{$\lambda$} is a regularization parameter. 
    } 
\end{itemize}

\section{Setting}
\label{sec:setting}

\paragraph{MDPs} We consider an episodic MDP {\small$\cM = \left(\cS, s_1, \cA, H, \{p_h\}_{h\in[H]}, \{r_h\}_{h\in[H]}\right)$}, where \smath{$\cS$} is the set of states with \smath{$s_1$} the fixed initial state, \smath{$\cA$} is the finite set of actions of size \smath{$A$}, \smath{$H$} is the number of steps in one episode, \smath{$p_h(s'|s,a)$} is the probability transition from state~\smath{$s$} to state~\smath{$s'$} by performing action \smath{$a$} in step \smath{$h$}. And \smath{$r_h(s,a)\in[0,1]$} is the reward obtained by taking action \smath{$a$} in state \smath{$s$} at step~\smath{$h$}.

We will consider two particular classes of MDPs.

\begin{definition}(Finite MDP)\label{def:finite_mdp}
An MDP \smath{$\cM$} is \textit{finite} if the state space \smath{$\cS$} is finite with size denoted by \smath{$\!S$}.
\end{definition}

\begin{definition}(Linear MDP)\label{def:linear_mdp}
    An MDP {\small$\cM = \left(\cS, s_1, \cA, H, \{p_h\}_{h\in[H]}, \{r_h\}_{h\in[H]}\right)$} is \textit{linear} if the state space \smath{$\cS$} is a measurable for a certain \smath{$\sigma$}-algebra \smath{$\cF_{\cS}$}, and there exists known feature map \smath{$\feat \colon \cS \times \cA \to \R^d$}, and \textit{unknown} parameters \smath{$\theta_h \in \R^d$} and an \textit{unknown} family of signed measure \smath{$\mu_{h,i}, h \in [H], i \in [d]$} with its vector form \smath{$\mu_{h} \colon \cF_{\cS} \to \R^d$} such that for all \smath{$(h,s,a) \in [H]\times\cS\times\cA$} and for any measurable set \smath{$B \in \cF_{\cS}$}, it holds
        \smath{$r_h(s,a) = \feat(s,a)^\top \theta_h$}, and \smath{$ p_h(B|s,a) = \sum_{i=1}^d \feat(s,a)_i \mu_{h,i}(B) = \feat(s,a)^\top \mu_h(B)$}.
    Without loss of generality, we assume \smath{$\norm{\feat(s,a)}_2 \leq 1$} for all \smath{$(s,a) \in \cS \times \cA$} and \smath{$\max\{ \norm{\mu_h(\cS)}_2, \norm{\theta_h}_2 \} \leq \sqrt{d}$} for all \smath{$h\in[H]$}.
\end{definition}

\paragraph{Policy \& value functions} A policy \smath{$\pi$} is a collection of functions \smath{$\pi_h \colon \cS \to \Delta_{\cA}$} for all \smath{$h\in [H]$}, where every \smath{$\pi_h$}  maps each state to a probability over the action set. We denote by \smath{$\Pi$} the set of policies. The value functions of policy \smath{$\pi$} at step \smath{$h$} and state \smath{$s$} is denoted by \smath{$V_h^\pi$}, and the optimal value functions, denoted by \smath{$\Vstar_h$}, are given by the Bellman and, respectively, optimal Bellman equations
\begin{align*}
    Q_h^{\pi}(s,a) &= r_h(s,a) + p_h V_{h+1}^\pi(s,a) & V_h^\pi(s) &= \pi_h Q_h^\pi (s)\\
  Q_h^\star(s,a) &=  r_h(s,a) + p_h V_{h+1}^\star(s,a) & V_h^\star(s) &= \max_a Q_h^\star (s, a)
\end{align*}
where by definition, \smath{$V_{H+1}^\star \triangleq 0$}. Furthermore, \smath{$p_{h} f(s, a) \triangleq \E_{s' \sim p_h(\cdot | s, a)} \left[f(s')\right]$}   denotes the expectation operator with respect to the transition probabilities \smath{$p_h$} and
\smath{$\pi_h g(s) \triangleq  \E_{a \sim \pi_h(\cdot |s)} \left[g(s,a)\right]$} denotes the composition with the policy~\smath{$\pi$} at step \smath{$h$}.

\paragraph{Trajectory Kullback-Leibler divergence} We define the trajectory Kullback-Leibler divergence between policy \smath{$\pi$} and policy \smath{$\pi'$} as the average of the Kullback-Leibler divergence between policies at each step along a trajectory sampled with \smath{$\pi$},
\smath{\[
\KLtraj(\pi \Vert \pi') = \E_{\pi}\left[\sum_{h=1}^H \KL(\pi_h(s_h) \Vert \pi'_h(s_h))\right]\,.
\]}


\section{Behavior cloning}
\label{sec:behavior-cloning}


In this section, we analyze the complexity of behavior cloning for imitation learning in finite and linear MDPs.

\paragraph{Imitation learning} In imitation learning we are provided a dataset \smath{$\Dexp \triangleq \{\ring{\tau}_i =(s_1^i,a_1^i,\ldots, s_H^i,a_H^i),\, i\in[\Nexp]\}$} of \smath{$\Nexp$} independent \textit{reward-free} trajectories sampled from a fixed unknown expert policy \smath{$\piexp$}. The objective is to learn from these demonstrations a policy close to optimal. In order to get useful demonstrations we assume that the expert policy is close to optimal, that is, \smath{$\Vstar_1(s_1) - V^{\piexp}_1(s_1) \leq \epsilonexp$} for some small \smath{$\epsilonexp > 0$}.

\paragraph{Behavior cloning} The simplest method for imitation learning is to directly learn to replicate the expert policy in a supervised fashion. Precisely the behavior cloning policy \smath{$\piBC$} is obtained by minimizing  the negative-loglikelihood over a class of policies \smath{$\cF=\{\pi\in\Pi: \pi_h\in\cF_h\}$} with \smath{$\cF_h$} being a class of conditional distributions \smath{$\cS \to \Pens(\cA)$} and where \smath{$\cR_h$} is some regularizer,
\begin{equation}\label{eq:imitation_learning_erm}
\piBC \in \argmin_{\pi\in\cF}  \sum_{h=1}^H \bigg(\sum_{i=1}^{\Nexp}\log \frac{1}{\pi_h(a_h^i|s_h^i)} + \cR_h(\pi_h)\bigg)\,.
\end{equation}
In order to provide convergence guarantees for behavior cloning, we make the following assumptions. First, we assume some regularity conditions on the class of policies \new{defined in terms of covering numbers of the class, see Appendix~\ref{app:notation} for a definition.}
\begin{assumption}
\label{ass:regularity_of_hypothesis_class}
    For all \smath{$h\in[H]$},  there are two positive constants \smath{$d_{\cF}, R_{\cF} > 0$} such that
    \[
        \forall h \in [H], \forall \varepsilon \in (0,1):  \log  \cN(\varepsilon, \cF_h, \norm{\cdot}_\infty)  \leq d_{\cF} \log(R_{\cF}/\varepsilon)\,.
    \]
    Moreover, there is a constant \smath{$\gamma > 0$} such that for any \smath{$h\in [H]$}, \smath{$\pi_h \in \cF_h$} it holds \smath{$\pi_h(a|s) \geq \gamma$} for any \smath{$(s,a) \in \cS \times \cA$}.
\end{assumption}

The Assumption~\ref{ass:regularity_of_hypothesis_class} is a typical parametric assumption in density estimation, see, e.g., \cite{zhang2002covering}, with  \smath{$d_{\cF}$} being a covering dimension of the underlying parameter space. The part of the assumption on a minimal probability is needed to control KL-divergences  \citep{zhang2006from}.

Next, we assume that a smooth version of the expert policy belongs to the class of hypotheses.

\begin{assumption}\label{ass:regularity_of_behaviour_policy}
There is a constant \smath{$\kappa \in (0,1/2)$} such that a \smath{$\kappa$}-greedy version of the expert policy defined by 
\smath{$\pi_h^{\mathrm{E},\kappa}(a|s)=(1-\kappa)\piexp_{h}(a|s) + \kappa/A$} belongs to the hypothesis class of policies: \smath{$\pi^{\mathrm{E},\kappa}\in\cF$}\,. 
\end{assumption}

 Note that a deterministic expert policy verifies Assumption~\ref{ass:regularity_of_behaviour_policy} provided that \smath{$\gamma$} is small enough and the policy class is rich enough. For \smath{$\kappa = 0,$} this assumption is never satisfied for any \smath{$\gamma > 0$}.

 In the sequel, we provide examples of the policy class \smath{$\cF$} and regularizers \smath{$(\cR_h)_{h\in[H]}$} for finite or linear MDPs such that the above assumptions are satisfied. We are now ready to state general performance guarantees for behavior cloning with $\KL$ regularization.

\begin{theorem}\label{th:il_rates_general} Let Assumptions~\ref{ass:regularity_of_hypothesis_class}-\ref{ass:regularity_of_behaviour_policy} be satisfied and let \smath{$0 \leq \cR_h(\pi_h) \leq M$} for all \smath{$h\in[H]$} and  any policy \smath{$\pi \in \cF_h$}. Then with probability at least \smath{$1-\delta,$} the behavior policy \smath{$\piBC$} satisfies
    \smath{\begin{align*}
        \KL_{\traj}(\piexp \Vert \piBC) &\leq  \frac{6 d_{\cF}H \cdot ( \log(A\rme^3/(A\gamma \wedge \kappa))\cdot \log(2H\Nexp R_{\cF}/(\gamma \delta))}{\Nexp} + \frac{2HM}{\Nexp} +  \frac{18 \kappa}{1-\kappa}\,.
    \end{align*}}
\end{theorem}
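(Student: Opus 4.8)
The plan is to run a maximum-likelihood (density-estimation) analysis step by step along the trajectory, and then convert the resulting Hellinger control into the target $\KL_{\traj}$ bound.

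\textbf{Step 1 (basic inequality).} Since $\piBC$ minimizes the regularized negative log-likelihood~\eqref{eq:imitation_learning_erm} and, by Assumption~\ref{ass:regularity_of_behaviour_policy}, the $\kappa$-greedy expert $\pi^{\mathrm{E},\kappa}$ is feasible, comparing the objective at $\piBC$ and at $\pi^{\mathrm{E},\kappa}$ and using $0\le\cR_h\le M$ gives
\[
\sum_{h=1}^H\sum_{i=1}^{\Nexp}\log\frac{\pi^{\mathrm{E},\kappa}_h(a_h^i\mid s_h^i)}{\piBC_h(a_h^i\mid s_h^i)}\;\le\;\sum_{h=1}^H\cR_h(\pi^{\mathrm{E},\kappa}_h)\;\le\;HM .
\]
This controls the empirical log-likelihood ratio and will ultimately produce the $2HM/\Nexp$ term.

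\textbf{Step 2 (empirical to population via Chernoff and covering).} For a fixed candidate $\pi\in\cF$, I would bound the exponential moment of the half-log-likelihood ratio $\tfrac12\sum_{h,i}\log(\pi_h/\pi^{\mathrm{E},\kappa}_h)$: under the expert sampling law this factorizes over the independent trajectories, and within a trajectory the tower property telescopes it into a product of per-step Hellinger affinities $\E_{a\sim\piexp_h(\cdot\mid s)}\sqrt{\pi_h(a\mid s)/\pi^{\mathrm{E},\kappa}_h(a\mid s)}$. A Markov inequality then converts the empirical sum into a population sum of per-step squared Hellinger discrepancies between $\piBC$ and $\pi^{\mathrm{E},\kappa}$ along expert trajectories. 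To make this uniform over the continuous class I take an $\varepsilon$-net of each $\cF_h$ whose cardinality is controlled by $\log\cN(\varepsilon,\cF_h,\norm{\cdot}_\infty)\le d_{\cF}\log(R_{\cF}/\varepsilon)$ (Assumption~\ref{ass:regularity_of_hypothesis_class}), union-bound over the net at confidence $\delta$, and absorb the discretization error using the minimal-probability bound $\pi_h\ge\gamma$ to pass from the net to $\piBC$. Choosing $\varepsilon$ of order $\gamma/(H\Nexp R_{\cF})$ reproduces both the factor $d_{\cF}H$ and the logarithmic factor $\log(2H\Nexp R_{\cF}/(\gamma\delta))$.

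\textbf{Step 3 (Hellinger to $\KL$, and de-smoothing).} I then convert the per-step squared-Hellinger control into the target $\KL$. Because every policy in the class satisfies $\piBC_h\ge\gamma$ while $\pi^{\mathrm{E},\kappa}_h\ge\kappa/A$, all relevant likelihood ratios lie in a bounded range, so a standard inequality of the form $\KL(p\Vert q)\lesssim(1+\log\sup(p/q))\,\mathsf{H}^2(p,q)$ produces the conversion constant $\log(A\rme^3/(A\gamma\wedge\kappa))$. It remains to replace $\pi^{\mathrm{E},\kappa}$ by the true expert $\piexp$ on the left: since $\pi^{\mathrm{E},\kappa}_h\ge(1-\kappa)\piexp_h$, one has $\KL(\piexp_h(s)\Vert\pi^{\mathrm{E},\kappa}_h(s))\le\log\frac1{1-\kappa}\le\kappa/(1-\kappa)$ pointwise, and bounding the cross terms generated by this substitution yields the additive smoothing term of order $\kappa/(1-\kappa)$. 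Summing the three contributions over $h$ and taking expectation under $\piexp$ gives the stated bound.

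\textbf{Main obstacle.} The delicate part is Step 2: making the Chernoff/affinity argument uniform over the continuous policy class while simultaneously handling the misspecification, since the data are generated by $\piexp$ but the comparison class only contains the smoothed expert $\pi^{\mathrm{E},\kappa}$. This coupling is then compounded in Step 3, where the Hellinger-to-$\KL$ conversion constant must be kept sharp enough to land exactly on $\log(A\rme^3/(A\gamma\wedge\kappa))$. All of the $\gamma$- and $\kappa$-dependence is forced by these two lower-bound requirements, and the bookkeeping that turns the affinity bound into the precise numerical constants is the technical heart of the proof.
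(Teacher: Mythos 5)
Your Steps 1 and 3 are fine, and your overall strategy (a classical MLE analysis via exponential moments of half log-likelihood ratios, i.e.\ Hellinger affinities, followed by a Yang-type Hellinger-to-KL conversion exploiting the lower bounds $\gamma$ and $\kappa/A$) is a legitimate alternative to the paper's argument. The genuine gap is exactly at the point you flag as delicate in Step 2. The exponential moment of $\tfrac12\sum_{h,i}\log(\pi_h/\piexpsmooth_h)$ factorizes over trajectories, but within one trajectory the tower property does \emph{not} produce a product of (deterministic) per-step affinities: it produces a nested expectation of conditional affinities evaluated at the random states, i.e.\ (up to the $(1-\kappa)^{-1/2}$ misspecification factors) the \emph{trajectory-level} Hellinger affinity $\E_{\tau\sim\piexp}\bigl[\prod_h\bigl(1-\tfrac12 d^2_{\cH}(\piexp_h(s_h),\pi_h(s_h))\bigr)\bigr]$. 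Markov's inequality therefore controls the negative log of this quantity, and that does \emph{not} dominate the population sum of per-step squared Hellinger discrepancies $\sum_h\E_{\piexp}\bigl[d^2_{\cH}(\piexp_h(s_h),\pi_h(s_h))\bigr]$, because the per-step errors along a trajectory are correlated. Concretely: let the two policies agree at step $1$, where the expert goes to a ``bad'' branch with probability $p$ and a ``good'' branch otherwise, let the candidate agree with the expert everywhere on the good branch, and disagree (nearly) maximally at every step $h\ge 2$ on the bad branch. Then the trajectory affinity is $1-p$, so $-\log(\mathrm{affinity})\approx p$, while $\sum_h\E[d^2_{\cH}]\approx 2p(H-1)$. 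Hence from $-\log(\mathrm{affinity})\le(\mathrm{complexity}+\log(1/\delta))/\Nexp$ you can only conclude a per-step Hellinger sum of order $H\,\mathrm{complexity}/\Nexp$. The alternative repair that stays at trajectory level — bound the trajectory Hellinger distance and convert it to $\KL_{\traj}$ (which equals the KL between trajectory laws) by Yang's inequality — loses the same factor, since $\sup_\tau q^{\piexp}(\tau)/q^{\piBC}(\tau)$ is only bounded by $\gamma^{-H}$, making the conversion constant $2+H\log(1/\gamma)$. Either way, your argument as written proves a bound scaling as $d_{\cF}H^2$ rather than the claimed $d_{\cF}H$.

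The fix is to localize the whole argument per step \emph{before} invoking Chernoff: since $\cF=\prod_h\cF_h$ and the objective \eqref{eq:imitation_learning_erm} decomposes over $h$, the estimator satisfies $\piBC_h\in\argmin_{\pi_h\in\cF_h}\{\sum_{i}\log(1/\pi_h(a^i_h|s^i_h))+\cR_h(\pi_h)\}$, and for each fixed $h$ the pairs $(s^i_h,a^i_h)_{i\le\Nexp}$ are i.i.d.\ under the expert's law. Running your Steps 1--3 separately for each $h$ (basic inequality with a single $M$; affinity/Chernoff with an $\varepsilon$-net of $\cF_h$, where the $(1-\kappa)^{-1/2}$ factor converts the affinity relative to $\piexpsmooth_h$ into the affinity between $\piexp_h$ and the candidate; Yang conversion with constant $\log(\rme^2/\gamma)$) and union-bounding over $h$ recovers the correct $d_{\cF}H/\Nexp$ scaling. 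This per-step localization is precisely how the paper organizes its proof, although its technique is different: it applies a uniform empirical Bernstein inequality to the log-ratio $\log(\piexpsmooth_h/\pi_h)$, verifies a Bernstein condition by bounding the variance of this log-ratio by $\log(\rme^2/\gamma)\,\KL_{\traj}(\piexp_h\Vert\pi_h)$ through an $f$-divergence comparison, and then solves the resulting self-bounding quadratic inequality. Your Hellinger route, once localized per step, reaches the same $1/\Nexp$ rate but with slightly worse factors (e.g.\ the $M/\Nexp$ and $\kappa/(1-\kappa)$ terms pick up an extra multiplicative $\log(\rme^2/\gamma)$), so it would prove the theorem only up to additional poly-logarithmic terms rather than with the stated constants.
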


This result shows that if the number of demonstrations \smath{$\Nexp$} is large enough and \smath{$\gamma = 1/\Nexp$}, \smath{$\kappa=A/\Nexp$} then the behavior cloning policy \smath{$\piBC$} converges to the expert policy \smath{$\piexp$} at a fast rate of order \smath{$\tcO((d_{\cF}H + A)/\Nexp)$} where we measure the ``distance" between two policies by the trajectory Kullback-Leibler divergence. \new{The proof of this theorem is postponed to Appendix~\ref{app:imitation_learning} and  it heavily relies on verifying the so-called Bernstein condition \citep{bartlett2006empirical}.}

\subsection{Finite MDPs}\label{sec:BC_finite}
For finite MDPs, we chose a logarithmic regularizer \smath{$\cR_h(\pi_h) = \sum_{s,a} \log(1/\pi_h(a|s))$} and the  class of policies \smath{$\cF=\{\pi\in\Pi: \pi_h(a|s) \geq 1/(\Nexp +A)\}$}. One can check that 
Assumptions~\ref{ass:regularity_of_hypothesis_class}-\ref{ass:regularity_of_behaviour_policy} hold  and \smath{$0\leq \cR_h(\pi_h) \leq SA\log(\Nexp + A).$}   We can apply Theorem~\ref{th:il_rates_general} to obtain the following bound for finite MDPs (see Appendix~\ref{app:BC_finite_proof} for additional details).

\begin{corollary}  
    \label{cor:il-tab}
    For all {\small $\Nexp \geq A$}, for function class {\small $\cF$} and regularizer {\small $(\cR_h)_{h\in[H]}$} defined above, it holds with probability at least $1-\delta$,
    \smath{
    \begin{align*}
        \KL_{\traj}(\piexp \Vert \piBC) &\leq  \frac{6SAH \cdot \log(2\rme^4 \Nexp )\cdot \log(12 H (\Nexp)^2 / \delta)}{\Nexp}  + \frac{18 AH}{\Nexp}\,.
    \end{align*}}
\end{corollary}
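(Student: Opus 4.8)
The plan is to invoke Theorem~\ref{th:il_rates_general} directly, so the only work is to verify Assumptions~\ref{ass:regularity_of_hypothesis_class}--\ref{ass:regularity_of_behaviour_policy} for the prescribed class $\cF=\{\pi\in\Pi:\pi_h(a|s)\ge 1/(\Nexp+A)\}$ and regularizer $\cR_h(\pi_h)=\sum_{s,a}\log(1/\pi_h(a|s))$, to read off the constants $d_{\cF}, R_{\cF}, \gamma, \kappa, M$, and finally to simplify the three-term bound of the theorem into the two terms of the statement using $\Nexp\ge A$. The minimal-probability constant is immediate from the definition of $\cF$: one has $\gamma=1/(\Nexp+A)$.

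For Assumption~\ref{ass:regularity_of_hypothesis_class} I would use that, since $\cS$ is finite with $S$ states, each component $\pi_h\in\cF_h$ is an $S$-tuple of distributions in $\Delta_{\cA}$, i.e.\ a point of $(\Delta_{\cA})^S$ whose $SA$ coordinates all lie in $[1/(\Nexp+A),1]$. Covering these coordinates one by one with a grid of mesh $\varepsilon$ produces an $\varepsilon$-net of $\cF_h$ in $\norm{\cdot}_\infty$ of cardinality at most $(3/\varepsilon)^{SA}$, whence $\log\cN(\varepsilon,\cF_h,\norm{\cdot}_\infty)\le SA\log(3/\varepsilon)$; this gives $d_{\cF}=SA$ and $R_{\cF}=3$. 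For Assumption~\ref{ass:regularity_of_behaviour_policy} I take $\kappa=A/(\Nexp+A)$, so that $\pi_h^{\mathrm{E},\kappa}(a|s)\ge\kappa/A=1/(\Nexp+A)=\gamma$ and hence $\pi^{\mathrm{E},\kappa}\in\cF$ (with $\kappa<1/2$ once $\Nexp>A$). Finally, since $1/(\Nexp+A)\le\pi_h(a|s)\le 1$ on $\cF_h$, the regularizer obeys $0\le\cR_h(\pi_h)\le SA\log(\Nexp+A)=:M$.

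It remains to substitute and collapse the bound. With this choice, $A\gamma=\kappa=A/(\Nexp+A)$, so $\log(A\rme^3/(A\gamma\wedge\kappa))=\log(\rme^3(\Nexp+A))\le\log(2\rme^3\Nexp)$ and $\log(2H\Nexp R_{\cF}/(\gamma\delta))=\log(6H\Nexp(\Nexp+A)/\delta)\le\log(12H\Nexp^2/\delta)$, both using $\Nexp+A\le2\Nexp$; this yields the leading term $6SAH\log(2\rme^3\Nexp)\log(12H\Nexp^2/\delta)/\Nexp$. The additive term $2HM/\Nexp=2SAH\log(\Nexp+A)/\Nexp$ I would fold into the leading term by upgrading $\rme^3$ to $\rme^4$: the surplus $6SAH\log(12H\Nexp^2/\delta)/\Nexp$ created by $\log(2\rme^4\Nexp)=\log(2\rme^3\Nexp)+1$ dominates $2SAH\log(2\Nexp)/\Nexp$ because $\log(12H\Nexp^2/\delta)\ge\log(2\Nexp)$. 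Lastly, the choice of $\kappa$ gives exactly $18\kappa/(1-\kappa)=18A/\Nexp\le 18AH/\Nexp$, producing the second term and completing the bound. I expect the only mildly delicate point to be fixing the covering constant $R_{\cF}$ sharply enough to land the factor $12$ inside the second logarithm; the remaining manipulations are elementary constant bookkeeping valid under $\Nexp\ge A$.
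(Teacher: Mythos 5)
Your proposal is correct and follows essentially the same route as the paper's proof: verify Assumptions~\ref{ass:regularity_of_hypothesis_class}--\ref{ass:regularity_of_behaviour_policy} with the constants $d_{\cF}=SA$, $R_{\cF}=3$, $\gamma=1/(\Nexp+A)$, $\kappa=A/(\Nexp+A)$, $M=SA\log(\Nexp+A)$, apply Theorem~\ref{th:il_rates_general}, and simplify using $\Nexp+A\le 2\Nexp$. In fact your bookkeeping is more explicit than the paper's (which simply says the first two terms are bounded "under the assumption $\Nexp\ge A$"), since you spell out how upgrading $\rme^3$ to $\rme^4$ in the leading logarithm absorbs the regularizer term $2HM/\Nexp$.
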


Note that imitation learning with a logarithmic regularizer is closely related to the statistical problem of conditional density estimation with Kullback-Leibler divergence loss; see, for example, Section~4.3 by \cite{vanderhoeven2023highprobability} and references therein. \new{Additionally, we would like to emphasize that the presented upper bound is optimal up to poly-logarithmic terms, see Appendix~\ref{app:BC_lower_bound} for a corresponding lower bound.}

\begin{remark} In fact, the constraint added by the class of policies $\cF$ is redundant with the effect of regularization and one can directly optimize over the whole set of policies in \eqref{eq:imitation_learning_erm}.
It is then easy to obtain a closed formula for the behavior cloning policy {\small$
\piBC_h(a|s) = (\Nexp_h(s,a) + 1)/(\Nexp_h(s) + A)$},
where we define the counts by {\small$\Nexp_h(s) = \sum_{a\in\cA} \Nexp_h(s,a)$} and {\small$\Nexp_h(s,a) = \sum_{i=1}^{\Nexp} \ind\{(s_h^i,a_h^i)=(s,a)\}$}.
\end{remark} 

\begin{remark} Contrary to \citet{ross2010efficient} and \citet{rajaraman2020toward}, our bound does not feature the optimality gap of the behavior policy but measures how close it is to the expert policy which is crucial to obtain the results of the next sections. Nevertheless, we can recover from our bound some of the results of the aforementioned references, see Appendix~\ref{app:behavior_cloning_value} for details.
\end{remark}

\subsection{Linear MDPs}\label{sec:BC_linear}
For the linear setting, we need the expert policy to belong to some well-behaved class of parametric policies. A first possibility would be to consider a greedy policy with respect to \smath{$Q$}-value, linear in the feature space \smath{$\pi_h(s) \in\argmax_{\pi\in\Delta_\cA} (\pi \psi)(s)^\top w_h$}
for some parameters \smath{$w_h$} \new{as it is done in the existing imitation learning literature \citep{rajaraman2021value}.} However, under such a parametrization it would be almost impossible to learn \new{an expert policy with a high quality } since a small perturbation in the parameters \smath{$w_h$} could lead to a completely different policy. \new{We emphasize that if we assume that the expert policy is an optimal one, then \cite{rajaraman2021value} proposes a way to achieve a $\varepsilon$-optimal policy but not how to reconstruct the expert policy itself.} That is why we consider another natural parametrization where the log probability of the expert policy is linear in the feature space. 

\begin{assumption}\label{ass:behavior_policy_linear}
    For all {\small $h\in[H]$}, there exists an \textit{unknown} parameter {\small $\wexp_h \in \R^d$} with {\small$\norm{\wexp_h}_2 \leq R$} for some known {\small $R\geq 0$} such that \smath{$\piexp_h(a|s) = \exp( \feat(s,a)^\top \wexp_h) / (\sum_{a'\in \cA} \exp(\feat(s,a')^\top\wexp_h))$}.

\end{assumption}
For instance, this assumption is satisfied for optimal policy in entropy-regularized linear MDPs, see Lemma~\ref{lem:example_expert_policy_linear} in Appendix~\ref{app:BC_linear}.
Under Assumption~\ref{ass:behavior_policy_linear}, a suitable choice of policy class is given by \smath{$\cF = \{ \pi \in \Pi : \pi_h \in \cF_h\}$} where
\begin{equation}\label{eq:linear_policy_class}
\cF_h= \left\{\pi_h(a|s) = \frac{\kappa}{A} +(1-\kappa) \frac{\exp( \feat(s,a)^\top w_h) }{\sum_{a'\in \cA} \exp(\feat(s,a')^\top w_h)}:  w_h\in \mathbb{R}^d,\,\norm{w_h}_2 \leq R \right\}
\end{equation}
and {\small$\kappa= A/(\Nexp+A)$}. Furthermore, for the linear setting, we do not need regularization, that is, {\small$\cR_h(\pi) =0$}. Equipped with this class of policies we can prove a similar result as in the finite setting with the number of states replaced by the dimension \smath{$d$} of the feature space.

\begin{corollary}
    \label{cor:lin-BC}
    Under Assumption~\ref{ass:behavior_policy_linear}, function class $\cF$ defined above and regularizer $\cR_h=0$ for all $h\in[H]$, it holds for all $\Nexp \geq A$ with probability at least $1-\delta$,
    \smath{\begin{align*}
         \KL_{\traj}(\piexp \Vert \piBC)&\leq  \frac{8dH \cdot ( \log(2\rme^3A\Nexp)\cdot (\log(48(\Nexp)^2R) + \log(H/\delta)))}{\Nexp} +  \frac{18 AH}{\Nexp}\,.
    \end{align*}}
\end{corollary}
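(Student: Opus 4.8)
The plan is to obtain Corollary~\ref{cor:lin-BC} as a direct instantiation of Theorem~\ref{th:il_rates_general} applied to the softmax class $\cF$ of~\eqref{eq:linear_policy_class}. Since here $\cR_h \equiv 0$ we may take $M = 0$, so the whole argument reduces to (i) checking Assumptions~\ref{ass:regularity_of_hypothesis_class}--\ref{ass:regularity_of_behaviour_policy} for this class and (ii) reading off the constants $d_{\cF}, R_{\cF}, \gamma, \kappa$ to feed into the general bound. The values I expect are $d_{\cF} = d$, $R_{\cF} = \Theta(R)$, $\gamma = \kappa/A = 1/(\Nexp+A)$ and $\kappa = A/(\Nexp+A)$.

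The substantive step is the covering-number estimate of Assumption~\ref{ass:regularity_of_hypothesis_class}. Each $\pi_h \in \cF_h$ is indexed by a parameter $w_h$ in the Euclidean ball of radius $R$ in $\R^d$, whose $\varepsilon'$-covering number in $\norm{\cdot}_2$ is at most $(1 + 2R/\varepsilon')^d \le (3R/\varepsilon')^d$. I would then show that the map $w_h \mapsto \pi_h(\cdot\mid\cdot)$ is Lipschitz from $(\R^d, \norm{\cdot}_2)$ to $(\cF_h, \norm{\cdot}_\infty)$: because $\norm{\feat(s,a)}_2 \le 1$, the logits $\feat(s,a)^\top w_h$ move by at most $\norm{w_h - w_h'}_2$ uniformly in $(s,a)$, and the softmax map composed with the multiplicative factor $(1-\kappa) \le 1$ has a bounded Lipschitz constant $L = O(1)$ in the $\norm{\cdot}_\infty$ metric on probability vectors. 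Pushing an $(\varepsilon/L)$-net of the ball through this map yields $\log \cN(\varepsilon, \cF_h, \norm{\cdot}_\infty) \le d\log(3LR/\varepsilon)$, i.e. $d_{\cF} = d$ and $R_{\cF} = 3LR = \Theta(R)$. The minimal-probability requirement is immediate from the additive floor $\pi_h(a\mid s) \ge \kappa/A$, so $\gamma = \kappa/A = 1/(\Nexp+A)$.

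Assumption~\ref{ass:regularity_of_behaviour_policy} is then essentially free. Under Assumption~\ref{ass:behavior_policy_linear} the expert $\piexp_h$ is exactly the softmax with parameter $\wexp_h$, $\norm{\wexp_h}_2 \le R$, so its $\kappa$-greedy smoothing $(1-\kappa)\piexp_h + \kappa/A$ coincides with the member of $\cF_h$ obtained by setting $w_h = \wexp_h$; this is what fixes $\kappa = A/(\Nexp+A)$. It remains to substitute $d_{\cF}=d$, $M=0$, $\gamma = 1/(\Nexp+A)$ and $\kappa = A/(\Nexp+A)$ into Theorem~\ref{th:il_rates_general} and to simplify the logarithmic arguments. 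Using $A\gamma = \kappa$ gives $A\rme^3/(A\gamma \wedge \kappa) = \rme^3(\Nexp+A) \le 2\rme^3 A\Nexp$, and $1/\gamma = \Nexp+A \le 2\Nexp$ turns $2H\Nexp R_{\cF}/(\gamma\delta)$ into at most $48H(\Nexp)^2 R/\delta$ once $R_{\cF}$ is absorbed (all valid for $\Nexp \ge A \ge 1$); the last term is $18\kappa/(1-\kappa) = 18A/\Nexp \le 18AH/\Nexp$. Since the theorem's leading constant $6$ is dominated by the claimed $8$ and every simplified logarithm is bounded by its stated counterpart, Theorem~\ref{th:il_rates_general} implies exactly the bound of Corollary~\ref{cor:lin-BC}.

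The only genuinely delicate point is pinning down the Lipschitz constant of the softmax in the $\norm{\cdot}_\infty$ metric; everything else is bookkeeping. I expect $L$ to be an absolute constant (of order at most $1$, since the $\ell_1$ norm of any row of the softmax Jacobian equals $2\sigma_a(1-\sigma_a) \le 1/2$), which suffices because $L$ enters only through $R_{\cF}$ inside a logarithm and hence affects the final rate by at most a poly-logarithmic factor.
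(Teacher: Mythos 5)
Your proposal is correct and follows the same overall route as the paper's proof: verify Assumptions~\ref{ass:regularity_of_hypothesis_class}--\ref{ass:regularity_of_behaviour_policy} for the class \eqref{eq:linear_policy_class}, extract $d_{\cF}=d$, $R_{\cF}=\Theta(R)$, $\gamma=1/(\Nexp+A)$, $\kappa=A/(\Nexp+A)$, take $M=0$, and instantiate Theorem~\ref{th:il_rates_general}; your endgame bookkeeping ($A\gamma\wedge\kappa=\kappa$, $\Nexp+A\le 2\Nexp$, $18\kappa/(1-\kappa)=18A/\Nexp$) matches the paper's as well.

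The one point where you genuinely differ is the covering-number step. The paper bounds $|\pi'_h(a|s)-\mu'_h(a|s)|$ by decomposing the softmax into the ratio $\Phi(a|s,w_h)/Z(s,w_h)$ and invoking $|1-\rme^x|\le 2|x|$ for $|x|\le 1$, which yields only a \emph{local} Lipschitz estimate (constant $4$, valid when $\norm{w_h-w_h'}_2\le 1$), so the net argument there must check that the mesh of the parameter net is at most $1$. Your argument instead bounds the $\ell_1$ norm of each row of the softmax Jacobian by $2\sigma_a(1-\sigma_a)\le 1/2$ and combines it with $\norm{\feat(s,a)}_2\le 1$, giving a \emph{global} Lipschitz constant $1/2$ from parameters to policies in $\norm{\cdot}_\infty$; this makes the net-pushing step unconditional and is arguably cleaner. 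Since the Lipschitz constant only enters through $R_{\cF}$ inside a logarithm, both versions produce covering bounds dominated by the stated $\log(48(\Nexp)^2R)+\log(H/\delta)$ term, so either route gives the corollary.
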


\new{Taking into account the fact that finite MDPs are a specific case within the broader category of linear MDPs, the lower bound presented in Appendix~\ref{app:BC_lower_bound} also shows the optimality of this result.}

\section{Demonstration-regularized RL}
\label{sec:dem-reg-rl}

In this section, we study reinforcement learning when demonstrations from an expert are also available. First, we describe the regularized best policy identification framework that will be useful later. 

\paragraph{Regularized best policy identification (BPI)}
Given some reference policy \smath{$\tpi$} and some regularization parameter \smath{$\lambda > 0$},  we consider the trajectory Kullback-Leibler divergence regularized value function \smath{$V^\pi_{\tpi,\lambda,1}(s_1) \triangleq V^\pi_{1}(s_1) - \lambda \KLtraj(\pi,\tpi)$}. In this value function, the policy \smath{$\pi$} is penalized for moving too far from the reference policy \smath{$\tpi$}. Interestingly, we can compute the value of policy \smath{$\pi$} with the regularized Bellman equations, \citep{neu2017unified,vieillard2020leverage} 
\begin{align*}
        Q^{\pi}_{\tpi,\lambda,h}(s,a) = r_h(s,a) + p_h V^{\pi}_{\tpi,\lambda,h+1}(s,a)\,,\quad
        V^\pi_{\tpi,\lambda,h}(s) = \pi_h Q^\pi_{\tpi,\lambda,h}(s) - \lambda \KL(\pi_h(s) \Vert \tpi_h(s))\,,
\end{align*}
where \smath{$V_{\tpi,\lambda,H+1}^\pi = 0$}.
We are interested in the best policy identification for this regularized value. Precisely, in regularized BPI, the agent interacts with MDP as follows: at the beginning of episode \smath{$t$}, the agent picks up a policy \smath{$\pi^t$} based only on the transitions collected up to episode \smath{$t-1$}. Then a new trajectory (with rewards) is sampled following the policy~\smath{$\pi^t$} and is observed by the agent. At the end of each episode, the agent can decide to stop collecting new data, according to a random stopping time \smath{$\iota$} (\smath{$\iota=t$} if the agent stops after the $t$-th episode), and output a policy \smath{$\hpi$} based on the observed transitions. An agent for regularized BPI is therefore made of a triplet \smath{$((\pi^t)_{t\in\N},\iota,\hpi)$}.
\begin{definition} (PAC algorithm for regularized BPI) An algorithm \smath{$((\pi^t)_{t\in\N},\iota,\hpi)$} is \smath{$(\epsilon,\delta)$}-PAC for BPI regularized with policy \smath{$\tpi$} and parameter \smath{$\lambda$} with sample complexity \smath{$\cC(\varepsilon, \lambda, \delta)$} if 
\smath{\[
\P\Big(  \Vstar_{\tpi,\lambda,1}(s_1)- V^{\hpi}_{\tpi,\lambda,1}(s_1) \leq \epsilon\,, \quad \iota \leq \cC(\varepsilon, \lambda, \delta)\Big) \geq 1-\delta\,.
\]}
\end{definition} 
We can now describe the setting studied in this section.

\paragraph{BPI with demonstration} We assume, as in Section~\ref{sec:behavior-cloning}, that first the agent observes \smath{$\Nexp$} independent trajectories \smath{$\Dexp$} sampled from an expert policy \smath{$\piexp$}. Then the setting is the same as in BPI. Precisely, the agent interacts with the MDP as follows: at episode \smath{$t$}, the agent selects a policy \smath{$\pi^t$} based on \textit{the collected transitions and the demonstrations}. Then a new trajectory (with rewards) is sampled following the policy~\smath{$\pi^t$} and observed by the agent. At the end of each episode, the agent stops according to a stopping rule \smath{$\iota$} (\smath{$\iota=t$} if the agent stops after the $t$-th episode), and outputs a policy \smath{$\piRL$}.
\begin{definition} (PAC algorithm for BPI with demonstration) An algorithm \smath{$((\pi^t)_{t\in\N},\iota,\piRL)$} is \smath{$(\epsilon,\delta)$}-PAC for BPI with demonstration with sample complexity \smath{$\cC(\varepsilon, \Nexp, \delta)$} if  
\smath{\[
\P\Big(  \Vstar_{1}(s_1)- V^{\piRL}_{1}(s_1) \leq \epsilon, \quad \iota \leq \cC(\varepsilon, \Nexp, \delta) \Big) \geq 1-\delta\,.
\]}
\end{definition} 
To tackle BPI with demonstration we focus on the following natural and simple approach.

\paragraph{Demonstration-regularized RL} The main idea behind this method is to reduce BPI with demonstration to regularized BPI. Indeed, in demonstration-regularized RL, the agent starts by learning through behavior cloning from the demonstration of a policy \smath{$\piBC$} that imitates the expert policy, refer to Section~\ref{sec:behavior-cloning} for details. Then the agent computes a policy \smath{$\piRL$} by performing regularized BPI with policy \smath{$\piBC$} and some well-chosen parameter \smath{$\lambda$}. The policy \smath{$\piRL$} is then returned as the guess for an optimal policy. The whole procedure is described in Algorithm~\ref{alg:dem_reg_rl}. Intuitively the prior information contained in the demonstration is compressed into a handful representation namely the policy \smath{$\piBC$}. Then this information is injected into the BPI procedure by encouraging the agent to output a policy close to the behavior policy. 

\begin{algorithm}
\centering
\caption{Demonstration-regularized RL}
\label{alg:dem_reg_rl}
\begin{algorithmic}[1]
  \STATE {\bfseries Input:} Precision parameter \smath{$\epsilon_\RL$}, probability parameter \smath{$\delta_\RL$}, demonstrations \smath{$\Dexp$}, regularization parameter \smath{$\lambda$}.
    \STATE Compute behavior cloning policy \smath{$\piBC = \texttt{BehaviorCloning}(\Dexp)$}.
    \STATE Perform regularized BPI 
    \smath{$\piRL = \texttt{RegBPI}(\piBC,\lambda,\epsilon_\RL,\delta_\RL)$}
   \STATE \textbf{Output:} policy \smath{$\piRL$}.
\end{algorithmic}
\end{algorithm}


Using the previous results for regularized BPI, we next derive guarantees for demonstration-regularized RL. We start from a general black-box result that shows how the final policy error depends on the behavior cloning error, parameter \smath{$\lambda$}, and the quality of regularized BPI.

\begin{theorem}
\label{th:dem_reg_general}
    Assume that there are an expert policy \smath{$\piexp$} such that \smath{$\Vstar_1(s_1) - V^{\piexp}_1(s_1) \leq \epsilonexp$} and  a behavior cloning policy \smath{$\piBC$}  satisfying {\small$\sqrt{\KL_{\traj}(\piexp \Vert \piBC)} \leq \epsilon_{\KL}$}. Let \smath{$\piRL$} be \smath{$\epsilonRL$}-optimal policy in \smath{$\lambda$}-regularized MDP with respect to \smath{$\piBC$}, that is, 
    {\small$\Vstar_{\piBC,\lambda,1}(s_1) - V^{\piRL}_{\piBC, \lambda, 1} \leq \epsilonRL.
    $}
    Then \smath{$\piRL$} fulfills
    \[
        \Vstar_1(s_1) - V^{\piRL}_1(s_1) \leq \epsilonexp + \epsilonRL + \lambda \epsilon^2_{\KL}\,.
    \]
    In particular, under the choice \smath{$\lambda^\star = \varepsilon_{\RL}/ \varepsilon^2_{\KL},$} the policy \smath{$\piRL$} is \smath{$(2\varepsilon_{\RL} + \epsilonexp)$}-optimal in the original  (non-regularized) MDP.
\end{theorem}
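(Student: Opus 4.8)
The plan is to bound the suboptimality gap $\Vstar_1(s_1) - V^{\piRL}_1(s_1)$ in the original MDP by decomposing it through the regularized value functions and controlling each piece. The three error sources to isolate are: the expert's own suboptimality $\epsilonexp$, the planning error $\epsilonRL$ of the regularized BPI solver, and the penalty incurred by regularizing toward the imperfect behavior cloning policy $\piBC$ rather than the expert. The key identity to exploit is that for any policy $\pi$, the regularized value relates to the unregularized one via $V^\pi_{\piBC,\lambda,1}(s_1) = V^\pi_1(s_1) - \lambda \KLtraj(\pi \Vert \piBC)$, which lets me pass between the two value notions at the cost of an explicit KL term.

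First I would insert the regularized values by writing
\[
\Vstar_1(s_1) - V^{\piRL}_1(s_1) = \bigl(\Vstar_1(s_1) - V^{\piexp}_1(s_1)\bigr) + \bigl(V^{\piexp}_1(s_1) - V^{\piRL}_1(s_1)\bigr),
\]
where the first bracket is at most $\epsilonexp$ by assumption. For the second bracket I would upper bound $V^{\piexp}_1(s_1)$ by its regularized counterpart plus the regularization term, namely $V^{\piexp}_1(s_1) = V^{\piexp}_{\piBC,\lambda,1}(s_1) + \lambda \KLtraj(\piexp \Vert \piBC) \le \Vstar_{\piBC,\lambda,1}(s_1) + \lambda \epsilon^2_{\KL}$, using that the expert is a feasible policy in the regularized MDP so its regularized value is dominated by the regularized optimum, together with the hypothesis $\KLtraj(\piexp \Vert \piBC) \le \epsilon^2_{\KL}$.

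Next I would control $-V^{\piRL}_1(s_1)$. Since $\piRL$ is $\epsilonRL$-optimal for the regularized problem, $\Vstar_{\piBC,\lambda,1}(s_1) - V^{\piRL}_{\piBC,\lambda,1}(s_1) \le \epsilonRL$, and since the regularization term is nonnegative we have $V^{\piRL}_{\piBC,\lambda,1}(s_1) = V^{\piRL}_1(s_1) - \lambda \KLtraj(\piRL \Vert \piBC) \le V^{\piRL}_1(s_1)$, hence $\Vstar_{\piBC,\lambda,1}(s_1) - V^{\piRL}_1(s_1) \le \epsilonRL$. Chaining these inequalities gives $V^{\piexp}_1(s_1) - V^{\piRL}_1(s_1) \le \epsilonRL + \lambda \epsilon^2_{\KL}$, and adding the $\epsilonexp$ term yields the claimed bound. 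The final ``in particular'' statement follows by substituting $\lambda^\star = \epsilonRL / \epsilon^2_{\KL}$, which makes the penalty term equal $\epsilonRL$ and gives total gap $\epsilonexp + 2\epsilonRL$.

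The main conceptual point — and the only step needing care rather than routine algebra — is the direction of the two KL terms: I use nonnegativity of $\KLtraj(\piRL \Vert \piBC)$ to discard it favorably, but I must pay the full $\lambda \KLtraj(\piexp \Vert \piBC)$ for the expert, which is precisely why the bound on $\KLtraj(\piexp \Vert \piBC)$ from the behavior cloning analysis (Theorem~\ref{th:il_rates_general}) is the quantity that matters and why the argument of the KL is ordered $\piexp \Vert \piBC$. Everything else is a clean telescoping through the definition $V^\pi_{\piBC,\lambda,1}(s_1) = V^\pi_1(s_1) - \lambda \KLtraj(\pi \Vert \piBC)$ and the optimality of $\Vstar_{\piBC,\lambda,1}$ over feasible policies.
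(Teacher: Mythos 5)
Your proof is correct and follows essentially the same route as the paper's: insert the expert via its $\epsilonexp$-suboptimality, pass to regularized values paying $\lambda \KLtraj(\piexp \Vert \piBC) \leq \lambda \epsilon_{\KL}^2$, invoke the $\epsilonRL$-optimality of $\piRL$ in the regularized MDP, and discard the nonnegative term $\lambda \KLtraj(\piRL \Vert \piBC)$. The only difference is presentational — you chain the inequalities through $\Vstar_{\piBC,\lambda,1}(s_1)$ explicitly, while the paper compresses the same steps into a single display.
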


\begin{remark}
    We define an error in trajectory KL-divergence under the square root because the $\KL$-divergence behaves quadratically in terms of the total variation distance by Pinsker's inequality.
\end{remark}

\begin{remark}[BPI with prior policy]
    We would like to underline that we exploit all the prior information only through one fixed behavior cloning policy. However, as it is observable from the bounds of Theorem~\ref{th:dem_reg_general}, our guarantees are not restricted to such type of policies and potentially could work with any prior policy close enough to a near-optimal one in trajectory Kullback-Leibler divergence.
\end{remark}

The proof of the theorem above is postponed to Appendix~\ref{app:dem_reg_rl_proof}.
To apply this result and derive sample complexity for demonstration-regularized BPI, we present the \UCBVIEntp algorithm, a modification of the algorithm \UCBVIEnt proposed by \cite{tiapkin3023fast}, that achieves better rates for regularized BPI in the finite setting. In Appendix~\ref{app:reg_bpi_linear},  we also present the \LSVIEnt algorithm, a direct adaptation of the \UCBVIEntp to the linear setting.

\new{ Notably, the use of \UCBVIEnt by \cite{tiapkin3023fast} within the framework of demonstration-regularized methods, fails to yield acceleration through expert data incorporation due to its \smath{$\tcO(1/\varepsilon^2)$} sample complexity. In contrast, the enhanced variant, \UCBVIEntp, exhibits a more favorable complexity of \smath{$\tcO(1/(\varepsilon\lambda))$}, where \smath{$\lambda = \varepsilon / \varepsilon^2_{\KL} \gg \varepsilon$} under the conditions stipulated in Theorem~\ref{th:dem_reg_general}, for a \smath{$\varepsilon_{\KL}$} sufficiently small. It is noteworthy that an alternative approach employing the \RFExploreEnt algorithm, introduced by \cite{tiapkin3023fast}, can also achieve such acceleration. However, \RFExploreEnt is associated with inferior rates in terms of $S$ and $H$ and is challenging to extend beyond finite settings.}

\new{The \UCBVIEntp algorithm works by sampling trajectories according to an exploratory version of an optimistic solution for the regularized MDP which is characterized by the following rules.}

\paragraph{\UCBVIEntp sampling rule} To obtain the sampling rule at episode \smath{$t$}, we first compute a policy \smath{$\bpi^{t}$} by optimistic planning in the regularized MDP,
\begin{align*}
    \begin{split}
  \uQ_h^{\,t}(s,a) &=  \clip\Big(r_h(s,a)+ \hp^{\,t}_h \uV_{h+1}^{\,t}(s,a)+ b_h^{p,t}(s,a),0,H\Big)\,,\\
  \bpi_h^{t+1}(s) &= \argmax_{\pi\in\Delta_\cA}  \left\{ \pi \uQ_h^t (s) - \lambda \KL(\pi \Vert \tpi_h(s))\right\}\,,\\
  \uV^{\,t}_h(s) &= \bpi_h^{t+1}\uQ_h^{\,t} (s) - \lambda \KL( \bpi_h^{t+1} (s) \Vert \tpi_h(s))\,.
  \end{split}
\end{align*}
with $\uV^{\,t}_{H+1} = 0$ by convention, where $\tpi$ is a reference policy, $\hp^{t}$ is an estimate of the transition probabilities, and $b^{t}$ some bonus term taking into account estimation error for transition probabilities. Then, we define a family of policies that aim to explore actions for which \smath{$Q$}-value is not well estimated at a particular step. That is, for $h'\in [0, H]$, the policy $\pi^{t,(h')}$ first follows the optimistic policy $\bpi^t$ until step $h$ where it selects an action leading to the largest width of a confidence interval for the optimal \smath{$Q$}-value,
\[
\pi^{t,(h')}_h(a|s) = \begin{cases}
\bpi^t_h(a|s) &\text{ if }h\neq h'\,,\\
 \ind\left\{ a = \argmax_{a'\in \cA} (\uQ^{\,t}_h(s,a') - \lQ^{\,t}_h(s,a')) \right\} &\text{ if }h = h'\,,\\
\end{cases}\,
\]
\new{where $\lQ^{\,t}$ is a lower bound on the optimal regularized \smath{$Q$}-value function, see Appendix~\ref{app:CI_tabular}. In particular, for \smath{$h' = 0$} we have \smath{$\pi^{t,(0)} = \bpi^t$}. The sampling rule is obtained by picking uniformly at random one policy among the family \smath{$\pi^{t} = \pi^{t,(h')},$} \smath{$h'\in [0,H]$}\, in each episode. Note that it is equivalent to sampling from a uniform mixture policy \smath{$\pi^{\mix, t}$} over all \smath{$h' \in [0,H]$}, see Appendix~\ref{app:reg_bpi_finite} for more details. This algorithmic choice allows us to exploit strong convexity of the KL-divergence and control the properties of a stopping rule, defined in Appendix~\ref{app:CI_tabular}, that depends on the gap \smath{$(\uQ^{\,t}_h(s,a) - \lQ^{\,t}_h(s,a))^2$}.}


\new{The complete procedure is described in Algorithm~\ref{alg:UCBVIEnt+} in Appendix~\ref{app:reg_bpi_finite}. We prove that for the well-calibrated bonus functions $b^{p,t}$ and a stopping rule defined in Appendix~\ref{app:CI_tabular}, the \UCBVIEntp algorithm is $(\epsilon, \delta)$-PAC for regularized BPI and provide a
high-probability upper bound on its sample complexity. Additionally, a similar result holds for \LSVIEnt algorithm. The next result is proved in Appendix~\ref{app:finite_sample_complexity} and Appendix~\ref{app:linear_sample_complexity}.}
\begin{theorem}\label{th:reg_bpi_general_sample_complexity}
    \new{For all $\epsilon > 0$, $\delta \in (0,1)$, the \UCBVIEntp \slash\, \textcolor{blue}{\LSVIEnt} algorithms defined in Appendix~\ref{app:CI_tabular} \slash \textcolor{blue}{Appendix~\ref{app:CI_linear}} are $(\epsilon,\delta)$-PAC for the regularized BPI with sample complexity}
    \[
        \cC(\varepsilon, \delta) = \tcO\left( \frac{H^5 S^2 A}{ \lambda \varepsilon} \right) \text{ (finite)}\qquad \textcolor{blue}{\cC(\varepsilon, \delta) = \tcO\left( \frac{H^5 d^2}{\lambda \varepsilon} \right)\text{ (linear)}}\,.
    \]
\new{Additionally, assume that the expert policy is \smath{$\epsilonexp=\varepsilon/2$}-optimal and satisfies Assumption~\ref{ass:behavior_policy_linear} in the linear case. Let \smath{$\piBC$} be the behavior cloning policy obtained using corresponding function sets described in Section~\ref{sec:behavior-cloning}. Then demonstration-regularized RL based on \UCBVIEntp\slash\,\textcolor{red}{\LSVIEnt}  with parameters \smath{$\epsilonRL = \varepsilon/4,\, \delta_{\RL} = \delta/2$} and \smath{$\lambda = \tcO\left( \Nexp \varepsilon / (SAH) \right)$}\,\slash\,\textcolor{blue}{\smath{$\tcO\left( \Nexp \varepsilon / (dH) \right)$}} is \smath{$(\varepsilon,\delta)$}-PAC for BPI with demonstration in finite \slash\, \textcolor{blue}{linear} MDPs and has sample complexity of order}
\[
        \cC(\varepsilon, \Nexp, \delta) = \tcO\left( \frac{H^6 S^3 A^2}{ \Nexp \varepsilon^2} \right) \text{ (finite)}\qquad \textcolor{blue}{\cC(\varepsilon, \Nexp, \delta) = \tcO\left( \frac{H^6 d^3}{ \Nexp \varepsilon^2} \right)\text{ (linear)}}\,.
    \]
\end{theorem}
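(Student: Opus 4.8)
The plan is to prove Theorem~\ref{th:reg_bpi_general_sample_complexity} in two stages: first the sample complexity guarantee for the regularized BPI solver itself, and then the reduction from BPI with demonstration to regularized BPI via Theorem~\ref{th:dem_reg_general}. For the first stage, I would follow the standard BPI scheme built around optimism and a gap-based stopping rule. The key object is the gap $\uQ^{\,t}_h(s,a) - \lQ^{\,t}_h(s,a)$ between the upper and lower confidence bounds on the optimal regularized $Q$-value. I would first establish that the bonus terms $b^{p,t}$ are well-calibrated, so that on a high-probability event the true regularized value lies between $\lV^{\,t}$ and $\uV^{\,t}$ (optimism/pessimism sandwich). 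The stopping rule fires when the estimated optimality gap at the initial state drops below $\epsilon$; correctness of the output policy then follows because $\uV^{\,t}_1(s_1) - \lV^{\,t}_1(s_1) \leq \epsilon$ controls $\Vstar_{\piBC,\lambda,1}(s_1) - V^{\bpi^t}_{\piBC,\lambda,1}(s_1)$.

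The heart of the first stage is bounding the number of episodes until the gap shrinks below $\epsilon$. Here I would unroll the per-step gap through the regularized Bellman recursion, picking up a transition-estimation error term plus the bonus, and crucially exploit the exploratory sampling rule $\pi^{t,(h')}$, which at step $h'$ deterministically selects the action maximizing the confidence width. The mixture over $h' \in [0,H]$ lets me relate the cumulative sum of squared gaps $\sum_t (\uQ^{\,t}_h - \lQ^{\,t}_h)^2$ to the information actually collected along sampled trajectories. The decisive improvement over \UCBVIEnt, yielding $\tcO(1/(\lambda\varepsilon))$ rather than $\tcO(1/\varepsilon^2)$, comes from the strong convexity of the KL regularizer: the $\lambda$-strong convexity of $\pi \mapsto \lambda\KL(\pi\Vert\tpi_h)$ converts a squared-gap summation into a linear-gap summation, effectively trading one factor of $1/\varepsilon$ for a factor of $1/\lambda$. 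I would make this precise by a regret-to-gap conversion, combining the strong-convexity lower bound on the Bregman divergence with a pigeonhole/union-of-counts argument over state-action visitations, giving the $H^5 S^2 A$ (resp.\ $H^5 d^2$) numerator. For the linear case I would substitute elliptical-potential/covering arguments in place of the tabular visitation counts, which replaces $S^2 A$ by $d^2$.

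For the second stage, the reduction is essentially bookkeeping once Theorem~\ref{th:dem_reg_general} and the behavior cloning corollaries are in hand. I would instantiate Theorem~\ref{th:dem_reg_general} with $\epsilonexp = \varepsilon/2$ and $\epsilonRL = \varepsilon/4$; by Corollary~\ref{cor:il-tab} (finite) or Corollary~\ref{cor:lin-BC} (linear) the behavior cloning error satisfies $\epsilon^2_{\KL} = \KL_{\traj}(\piexp\Vert\piBC) = \tcO(SAH/\Nexp)$ (resp.\ $\tcO(dH/\Nexp)$). The optimal regularization parameter from Theorem~\ref{th:dem_reg_general} is $\lambda^\star = \epsilonRL/\epsilon^2_{\KL} = \tcO(\Nexp\varepsilon/(SAH))$ (resp.\ $\tcO(\Nexp\varepsilon/(dH))$), which matches the stated choice of $\lambda$, and guarantees $\lambda^\star\epsilon^2_{\KL} \leq \epsilonRL$ so that $\piRL$ is $(2\epsilonRL + \epsilonexp) = \varepsilon$-optimal. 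Substituting this $\lambda^\star$ into the regularized sample complexity $\tcO(H^5 S^2 A/(\lambda\varepsilon))$ gives $\tcO(H^5 S^2 A \cdot SAH/(\Nexp\varepsilon\cdot\varepsilon)) = \tcO(H^6 S^3 A^2/(\Nexp\varepsilon^2))$, and likewise $\tcO(H^6 d^3/(\Nexp\varepsilon^2))$ in the linear case, as claimed. A minor care point is the probability budget: I would allocate $\delta/2$ to the BPI solver and $\delta/2$ to behavior cloning and take a union bound.

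I expect the main obstacle to be the first stage, specifically the strong-convexity argument that produces the $1/\lambda$ (rather than $1/\varepsilon^2$) dependence. Concretely, the challenge is to rigorously convert the sum of per-episode confidence widths at the initial state into a bound on the number of episodes by leveraging that each width is controlled by a KL-Bregman term whose curvature is at least $\lambda$; this is where the regularization pays off, and getting the constants and the interaction with the transition-estimation bonuses right is delicate. The reduction in the second stage is, by contrast, routine once the black-box result and the behavior cloning rates are invoked.
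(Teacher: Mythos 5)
Your proposal follows essentially the same route as the paper: the first part reconstructs the \UCBVIEntp\slash\LSVIEnt analysis (calibrated bonuses giving an optimism\slash pessimism sandwich, a gap-based stopping rule, the $1/\lambda$-smoothness dual to the $\lambda$-strong convexity of the KL term turning the value gap into squared confidence widths that sum harmonically, the exploratory mixture policy $\pi^{t,(h')}$, and counts-to-pseudo-counts\slash elliptical-potential arguments), and the second part is exactly the paper's reduction via Theorem~\ref{th:dem_reg_general} combined with Corollaries~\ref{cor:il-tab} and~\ref{cor:lin-BC}, with the same parameter choices $\epsilonexp=\varepsilon/2$, $\epsilonRL=\varepsilon/4$, $\lambda^\star=\epsilonRL/\epsilon_{\KL}^2$ and the same arithmetic. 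The only sketch-level imprecisions—describing the stopping certificate as $\uV^{\,t}_1(s_1)-\lV^{\,t}_1(s_1)$ (the paper instead uses the recursively defined $G^t_1(s_1)$, whose $\tfrac{1}{2\lambda}\max_a(\uQ^{\,t}_h-\lQ^{\,t}_h)^2$ term is precisely what yields the fast rate) and the inverted phrasing "squared-gap summation into linear-gap summation"—do not amount to a different approach or a genuine gap.
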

In the finite setting, \UCBVIEntp improves the previous fast-rate sample complexity result of order \smath{$\tcO(H^8 S^4 A / (\lambda \varepsilon))$} by \cite{tiapkin3023fast}. For the linear setting, we would like to acknowledge that \LSVIEnt is the first algorithm that achieves fast rates for exploration in regularized linear MDPs.

\section{Demonstration-regularized RLHF}
\label{sec:dem-reg-rlhf}

In this section, we consider the problem of reinforcement learning with human feedback. We assume that the MDP is finite, i.e., \smath{$|\cS| < +\infty$} to simplify the manipulations with the trajectory space. However, the state space could be arbitrarily large. 
In this setting, we do not observe the true reward function \smath{$r^\star$} but have access to an oracle that provides a preference feedback between two trajectories.
We assume that the preference is a random variable with parameters that depend on the cumulative rewards of the trajectories as detailed in Assumption~\ref{ass:preference_model}. Given a reward function \smath{$r = \{ r_h \}_{h=1}^{H},$} we define the reward of a trajectory \smath{$\tau \in (\cS \times \cA)^{H}$} as the sum of rewards collected over this trajectory \smath{$r(\tau) \triangleq \sum_{h=1}^H r_h(s_h,a_h).$}  

\begin{assumption}[Preference-based model]\label{ass:preference_model}
    Let \smath{$\tau_0, \tau_1$} be two trajectories. The preference for \smath{$\tau_1$} over \smath{$\tau_0$} is a Bernoulli random variable \smath{$o$} with a parameter
    \smath{$
        q_{\star}(\tau_0, \tau_1) = \sigma\left( r^{\star}(\tau_1) - r^{\star}(\tau_0) \right),
    $}
    where \smath{$\sigma \colon \R \to [0,1]$} is a monotone increasing link function that satisfies \smath{$\inf_{x\in[-H, H]}\sigma'(x) = 1/\zeta$} for \smath{$\zeta > 0$}.
\end{assumption}

The main example of the link function is a sigmoid function \smath{$\sigma(x) = 1/(1+\exp(-x))$} that leads to the Bradley-Terry-Luce (BTL) model \citep{bradley1952rank} widely used in the literature  \citep{wirth2017survey,saha2023dueling}. We now introduce the learning framework.

\paragraph{Preference-based  BPI with demonstration} We assume, as in Section~\ref{sec:behavior-cloning}, that the agent observes \smath{$\Nexp$} independent trajectories \smath{$\Dexp$} sampled from an expert policy \smath{$\piexp$}. Then the learning is divided in two phases: 

\ 1) \textit{Preference collection.} Based on the observed expert trajectories \smath{$\Dexp$}, the agent selects a sampling policy \smath{$\pi^{\mathrm{S}}$} to generate a \textit{data set of preferences} {\small$\cD_{\RM} = \{ (\tau^k_0, \tau^k_1, o^k) \}_{k=1}^{N^{\RM}}$} consisting of pairs of trajectories and the sampled preferences. Specifically, both trajectories of the pair \smath{$(\tau^k_0, \tau^k_1)$} are sampled with the policy \smath{$\pi^{\mathrm{S}}$} and the associated  preference \smath{$o^k$} is obtained according to the preference-based model described in Assumption~\ref{ass:preference_model}. 

\ 2) \textit{Reward-free interaction.} Next, the agent interacts with the reward-free MDP as follows: at episode \smath{$t$}, the agent selects a policy \smath{$\pi^t$} based on \textit{the collected transitions up to time \smath{$t$}, demonstrations and preferences}. Then a new trajectory (reward-free) is sampled following the policy~\smath{$\pi^t$} and is observed by the agent. At the end of each episode, the agent can decide to stop according to a stopping rule \smath{$\iota$} and outputs a policy \smath{$\pi^\RLHF$}.

\begin{definition}[PAC algorithm for preference-based BPI with demonstration] An algorithm \smath{$((\pi^t)_{t\in\N},\pi^\mathrm{S},\iota,\pi^\RLHF)$} is \smath{$(\epsilon,\delta)$}-PAC for preference-based BPI with demonstrations and sample complexity \smath{$\cC(\varepsilon, \Nexp,  \delta)$} if  
\smath{$
\P\Big(  \Vstar_{1}(s_1)- V^{\pi^\RLHF}_{1}(s_1) \leq \epsilon, \, \iota \leq \cC(\varepsilon, \Nexp, \delta) \Big) \geq 1-\delta,$}
where the unknown true reward function $r^{\star}$ is used in the value-function $\Vstar$.
\end{definition} 

For the above setting, we provide a natural approach that combines demonstration-regularized RL with the maximum likelihood estimation of the reward given preferences dataset.

\paragraph{Demonstration-regularized RLHF}
During the preference collection phase, the agent generates a dataset comprising trajectories and observed preferences, denoted as \smath{$\mathcal{D}_{\text{RM}} = \{(\tau_0^k, \tau_1^k, o^k)\}_{k=1}^{N^{\text{RM}}}$} by executing the previously computed policy \smath{$\pi^{\text{BC}}$}. Using this dataset, the agent can infer the reward via maximum likelihood estimation (MLE).

\begin{algorithm}[ht]
\centering
\caption{Demonstration-regularized RLHF}
\label{alg:dem_reg_rlhf}
\begin{algorithmic}[1]
  \STATE {\bfseries Input:} Precision parameter \smath{$\epsilon_{\RLHF}$}, probability parameter \smath{$\delta_{\RLHF}$}, demonstrations \smath{$\Dexp$}, preferences budget \smath{$N^{\RM}$}, regularization parameter \smath{$\lambda$}.
    \STATE Compute behavior cloning policy \smath{$\piBC = \texttt{BehaviorCloning}(\Dexp)$};
    \STATE Select sampling policy \smath{$\pi^{\mathrm{S}} = \piBC$} and collect preference dataset \smath{$\cD_{\RM}$};
    \STATE Compute reward estimate \smath{$\hat{r} = \texttt{RewardMLE}(\cG_r, \cD_{\RM})$};
    \STATE Perform regularized BPI using \smath{$\hat{r}$} as reward:
    \smath{$\pi^\RLHF = \texttt{RegBPI}(\piBC,\lambda,\epsilon_{\RLHF},\delta_{\RLHF}; \hat{r})$}
   \STATE \textbf{Output:} policy \smath{$\pi^\RLHF$}.
\end{algorithmic}
\end{algorithm}

The core idea behind this approach is to simplify the problem by transforming it into a regularized BPI problem. The agent starts with behavior cloning applied to the expert dataset, resulting in the policy \smath{$\piBC$}. During the preference collection phase, the agent generates a dataset comprising trajectories and observed preferences, denoted as \smath{$\cD_{\RM} = \{(\tau_0^k, \tau_1^k,o^k)\}_{k=1}^{N^{\RM}}$} 
by executing the previously computed policy $\piBC$. Using this dataset, the agent can infer the reward via MLE:
\begin{equation*}
     \hat r \triangleq \argmax_{r \in \cG} \sum_{k=1}^{N^{\RM}} o^k \log\bigg( \sigma\big( r(\tau^k_1) - r(\tau^k_0) \big)\bigg) + (1-o^k) \log\bigg(1-\sigma\big( r(\tau^k_1) -  r(\tau^k_0)\big)\bigg)\,,
\end{equation*}
where \smath{$\cG$} is a function class for trajectory reward functions\footnote{For the theoretical guarantees on MLE estimate of rewards \smath{$\hat{r}$} we refer to Appendix~\ref{app:mle_reward_model}.}. Finally, the agent computes \smath{$\piRL$} by performing regularized BPI with policy \smath{$\piBC$}, a properly chosen regularization parameter \smath{$\lambda$} and the estimated reward \smath{$\hat r$}. The complete procedure is outlined in Algorithm~\ref{alg:dem_reg_rlhf}.

For this algorithm, we use the behavior cloning policy \smath{$\piBC$} for two purposes. First, it allows efficient offline collection of the preference dataset \smath{$\cD_{\RM}$}, from which a high-quality estimate of the reward can be derived. 
Second, a regularization towards the behavior cloning policy \smath{$\piBC$} enables the injection of information obtained from the demonstrations, while also avoiding the direct introduction of pessimism in the estimated reward as in the previous works that handle offline datasets \citep{zhu2023principled,zhan2023provable}.

\begin{remark}
    \citet{zhan2023query} propose a similar two-stage setting of preference collection and reward-free interaction without prior demonstrations and propose an algorithm for this setup. However, as compared to their result, our pipeline is adapted to any parametric function approximation of rewards and does not require solving any (non-convex) optimization problem during the preference collection phase.
\end{remark}


\begin{remark} Our approach to solve BPI with demonstration within the preference-based model framework draws inspiration from well-established methods for large language model RL fine-tuning \citep{stiennon2020learning,ouyang2022training,lee2023rlaif}. Specifically, our algorithm's policy learning phase is similar to solving an RL problem with policy-dependent rewards
\[
    r^{\mathrm{RLHF}}_h(s,a) = \hat r_h(s,a) - \lambda \log\left( \pi^{\RLHF}_h(a|s) / \piBC_h(a|s) \right)\,.
\]
This formulation, coupled with our prior stages of behavior cloning, akin to supervised fine-tuning (SFT), and reward estimation through MLE based on trajectories generated by the SFT policy, mirrors a simplified version of the three-phase RLHF pipeline. 
\end{remark}

The following sample complexity bounds for tabular and linear MDPs is a simple corollary of Theorem~\ref{th:dem_reg_rlhf_oracle} and Theorem~\ref{th:reg_bpi_general_sample_complexity} and its proof is postponed to Appendix~\ref{app:dem_reg_rlhf}.
\begin{corollary}[Demonstration-regularized RLHF]\label{cor:final_bound_demreg_rlhf}
    Let Assumption~\ref{ass:preference_model} hold.
    For $\varepsilon > 0$ and $\delta \in (0,1)$, assume that an expert policy $\epsilonexp$ is $\varepsilon/15$-optimal and satisfies Assumption~\ref{ass:behavior_policy_linear} in the linear case. Let $\piBC$ be the behavioral cloning policy obtained using function sets described in Section~\ref{sec:behavior-cloning} and let the set $\cG$ be defined in Lemma~\ref{lem:bracketing_finite_MDP} for finite and in Lemma~\ref{lem:bracketing_linear_MDP} for linear setting, respectively.
    
    If the following two conditions hold
    \smath{\[
        (1)\,  N^{\RM} \geq \widetilde{\Omega}\left( \zeta \widetilde{D}/\varepsilon \right); \qquad (2)\,\Nexp \geq \widetilde{\Omega}\left(H^2 \widetilde{D}/\varepsilon\right)
    \]}
    \!\!for $\widetilde{D} = SA\,\slash\,\textcolor{blue}{d}$ in finite\,\slash\! \textcolor{blue}{linear} MDPs, 
    then demonstration-regularized RLHF based on \UCBVIEntp\slash\,\textcolor{red}{\LSVIEnt}  with parameters \smath{$\epsilonRL = \varepsilon/15,\, \delta_{\RL} = \delta/3$} and $\lambda  = \lambda^\star =  \tcO\left( \Nexp \varepsilon / (SAH) \right)$\,\slash\,\textcolor{blue}{$\tcO\left( \Nexp \varepsilon / (dH) \right)$} is \smath{$(\varepsilon,\delta)$}-PAC for BPI with demonstration in finite\,\slash\! \textcolor{blue}{linear} MDPs with sample complexity
    \smath{
     \[
        \cC(\varepsilon, \Nexp, \delta) = \tcO\left( \frac{H^6 S^3 A^2}{ \Nexp \varepsilon^2} \right) \text{ (finite)}\qquad \textcolor{blue}{\cC(\varepsilon, \Nexp, \delta) = \tcO\left( \frac{H^6 d^3}{ \Nexp \varepsilon^2} \right)\text{ (linear)}},.
    \]}
\end{corollary}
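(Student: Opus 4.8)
The plan is to derive Corollary~\ref{cor:final_bound_demreg_rlhf} by combining three ingredients already available in the paper: the behavior cloning guarantees (Corollary~\ref{cor:il-tab} and Corollary~\ref{cor:lin-BC}), the reward-estimation guarantee for the MLE \smath{$\hat r$} (Theorem~\ref{th:dem_reg_rlhf_oracle}), and the regularized-BPI sample complexity (Theorem~\ref{th:reg_bpi_general_sample_complexity}). The overarching idea mirrors the analysis of demonstration-regularized RL: the only new source of error compared to Theorem~\ref{th:reg_bpi_general_sample_complexity} is that the agent optimizes against an estimated reward \smath{$\hat r$} rather than the true reward \smath{$r^\star$}, so I first control this reward-estimation error and then fold it into the suboptimality decomposition from Theorem~\ref{th:dem_reg_general}.

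First I would invoke Theorem~\ref{th:dem_reg_rlhf_oracle} to translate the preference-dataset size \smath{$N^{\RM}$} into a bound on the value gap induced by using \smath{$\hat r$} instead of \smath{$r^\star$}; this is where condition~(1), \smath{$N^{\RM} \geq \tilde\Omega(\zeta \tilde D/\varepsilon)$}, enters, the factor \smath{$\zeta$} coming from the lower bound on \smath{$\sigma'$} in Assumption~\ref{ass:preference_model} and \smath{$\tilde D = SA$}\,\slash\,\smath{$d$} being the effective statistical dimension of the reward class \smath{$\cG$} supplied by Lemma~\ref{lem:bracketing_finite_MDP}\,\slash\,Lemma~\ref{lem:bracketing_linear_MDP}. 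Crucially, because the sampling policy is \smath{$\pi^{\mathrm{S}}=\piBC$} and the final policy is regularized toward \smath{$\piBC$}, the reward error need only be controlled under trajectories drawn from \smath{$\piBC$} rather than uniformly; this is exactly what lets us avoid explicit pessimism and is the content of the oracle theorem. I would then apply Corollary~\ref{cor:il-tab}\,\slash\,Corollary~\ref{cor:lin-BC} to obtain \smath{$\KL_{\traj}(\piexp\Vert\piBC)\leq \epsilon_{\KL}^2$} with \smath{$\epsilon_{\KL}^2 = \tcO(SAH/\Nexp)$}\,\slash\,\smath{$\tcO(dH/\Nexp)$}, which under condition~(2) is of order \smath{$\varepsilon/(H^2\cdot \text{const})$}, small enough to make the regularization bias term manageable.

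Next I would assemble the error budget. Following Theorem~\ref{th:dem_reg_general} but with the estimated reward, the final suboptimality splits into: the expert gap \smath{$\epsilonexp=\varepsilon/15$}; the regularized-BPI solver error \smath{$\epsilonRL=\varepsilon/15$}; the regularization bias \smath{$\lambda\epsilon_{\KL}^2$}; and the reward-estimation contribution from step~one. With the choice \smath{$\lambda=\lambda^\star=\tcO(\Nexp\varepsilon/(SAH))$}\,\slash\,\smath{$\tcO(\Nexp\varepsilon/(dH))$}, the product \smath{$\lambda\epsilon_{\KL}^2$} collapses to order \smath{$\varepsilon$} (this is precisely the balancing from Theorem~\ref{th:dem_reg_general}), and condition~(2) ensures each piece is at most a constant fraction of \smath{$\varepsilon$}; summing the fifteenths and the reward term yields total error \smath{$\leq\varepsilon$} with the probabilities \smath{$\delta/3$} from each of the three stages (BC, reward MLE, BPI) combining by a union bound to \smath{$\delta$}. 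Finally I would read off the sample complexity: since the reward-free interaction phase is governed entirely by \smath{$\texttt{RegBPI}$} with parameters \smath{$\epsilonRL=\varepsilon/15,\ \delta_\RL=\delta/3$} and this particular \smath{$\lambda^\star$}, I substitute into the \smath{$\tcO(H^5S^2A/(\lambda\varepsilon))$}\,\slash\,\smath{$\tcO(H^5d^2/(\lambda\varepsilon))$} bound of Theorem~\ref{th:reg_bpi_general_sample_complexity}; the \smath{$1/\lambda=\tcO(SAH/(\Nexp\varepsilon))$}\,\slash\,\smath{$\tcO(dH/(\Nexp\varepsilon))$} factor produces the advertised \smath{$\tcO(H^6S^3A^2/(\Nexp\varepsilon^2))$}\,\slash\,\smath{$\tcO(H^6d^3/(\Nexp\varepsilon^2))$}.

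The main obstacle I anticipate is bookkeeping the reward-estimation error correctly rather than any deep new argument: I must verify that the value gap from using \smath{$\hat r$} is measured in a way compatible with the regularized value functions \smath{$V_{\piBC,\lambda,1}$}, and that it is controlled \emph{along trajectories generated by} \smath{$\piBC=\pi^{\mathrm{S}}$}, so that the concentration from Theorem~\ref{th:dem_reg_rlhf_oracle} applies without a change-of-measure blow-up. The delicate point is that the solver \smath{$\texttt{RegBPI}$} explores the environment and may visit states far from \smath{$\piBC$}'s support, yet the KL-regularization keeps the \emph{output} policy \smath{$\pi^\RLHF$} close to \smath{$\piBC$}, so the reward error only needs to be accurate where \smath{$\pi^\RLHF$} and the comparator \smath{$\pi^\star_{\piBC,\lambda}$} place mass; making this rigorous is exactly the role of the oracle theorem, and the corollary is then a matter of inserting the stated parameter choices and checking that conditions~(1)--(2) suffice to drive every term below its allotted share of \smath{$\varepsilon$}.
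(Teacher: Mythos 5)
Your proposal is correct and follows essentially the same route as the paper: the behavior cloning corollaries and the MLE reward bound (controlled under $q^{\piBC}$, via conditions (1)--(2)) feed into the oracle decomposition of Theorem~\ref{th:dem_reg_rlhf_oracle}, the error budget is balanced with $\lambda^\star$, a union bound over the three $\delta/3$ events is taken, and the complexity is read off Theorem~\ref{th:reg_bpi_general_sample_complexity} with $1/\lambda^\star$ substituted in. The only bookkeeping the paper makes explicit that you leave implicit is verifying $\lambda^\star = \varepsilon/(5\varepsilon_{\KL}^2) \geq H$ (guaranteed precisely by the $H^2$ factor in condition (2)), which is required for Theorem~\ref{th:dem_reg_rlhf_oracle} to be applicable, and the factor-two symmetry identity converting the MLE bound on squared reward differences into the variance $\Var_{q^{\piBC}}[r^\star - \hat r]$ that the oracle theorem takes as input.
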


The conditions (1) and (2) control two different terms in the reward estimation error presented in Theorem~\ref{th:dem_reg_rlhf_oracle}. In particular, to avoid direct pessimism injection, one needs to simultaneously keep both datasets (expert and reward modeling) large enough. 

Additionally, one should notice that additional expert information and the bound on the minimal size of the expert dataset allows to avoid the notion of concentrability coefficients by directly keeping the RL policy very close to the data-generation policy. Thus, the lower bound in Theorem~3 by \cite{zhan2023provable} is non-applicable in our setting.

The presented bound is non-trivial in the sense that, under this assumption on the expert dataset, imitation learning over a \textit{stochastic and suboptimal expert} allows to obtain only a $\cO(\sqrt{\varepsilon})$-optimal policy (see Appendix~\ref{app:behavior_cloning_value} and Lemma~\ref{lem:performance_diff_KL}). At the same time, Lemma~\ref{lem:policy_difference_bound} shows that in the case of a deterministic expert, this amount of demonstrations is enough to ensure that $\piBC$ is $\cO(\varepsilon)$-optimal, so additional fine-tuning with reward modeling would not improve upon the behavior cloning policy. The same situation holds under the assumption of an optimal expert ($\epsilonexp = 0$) by using the algorithm by \citet{rajaraman2020toward} to perform imitation learning. However, in practical situations, we cannot assume either the expert's deterministic nature or its optimality.


\section{Conclusion}
\label{sec:conlcusion}

In this study, we introduced the BPI with demonstration framework and showed that demonstration-regularized RL, a widely employed technique, is not just practical but also theoretically efficient for this problem. Additionally, we proposed a novel preference-based BPI with demonstration approach, where the agent gathers demonstrations offline. Notably, we proved that a demonstration-regularized RL method can also solve this problem efficiently without explicit pessimism injection. A compelling direction for future research could involve expanding the feedback mechanism in the preference-based setting, transitioning from pairwise comparison to preference ranking \citep{zhu2023principled}. Additionally, it would be interesting to explore scenarios where the assumption of a white-box preference-based model, as proposed by \citet{wang2023rlhf}, is relaxed.

\section*{Acknowledgments}
D. Belomestny acknowledges the financial support from Deutsche Forschungsgemeinschaft (DFG), Grant Nr.497300407.  The work of  D. Belomestny and A. Naumov was supported by the grant for research centers in the field of AI provided by the Analytical Center for the Government of the Russian Federation (ACRF) in accordance with the agreement on the provision of subsidies (identifier of the agreement 000000D730321P5Q0002) and the agreement with HSE University No. 70-2021-00139. The work of D. Tiapkin has been supported by the Paris Île-de-France Région in the framework of DIM AI4IDF.
\bibliography{ref}
\bibliographystyle{_iclr_example/iclr2024_conference}

\newpage
\appendix

\part{Appendix}
\parttoc
\newpage

\section{Notation}\label{app:notation}

\begin{table}[h!]
	\centering
	\caption{Table of notation use throughout the paper}
	\begin{tabular}{@{}l|l@{}}
		\toprule
		{Notation} & \thead{Meaning} \\ \midrule
	$\cS$ & state space of size $S$\\
	$\cA$ & action space of size $A$\\
    $d$ & dimension of linear MDP \\
	$H$ & length of one episode\\
    $s_1$ & initial state \\ 
    $\iota$ & stopping time \\
    $\cT$ & trajectory space, $\cT \triangleq (\cS \times \cA)^H$ \\
    $\varepsilon$ & desired accuracy of solving the problem \\
    $\delta$ & desired upper bound on failure probability \\
    \hline 
	$p_h(s'|s,a)$ & probability transition \\
    $r_h(s,a)$ & reward function \\
    $V^\pi_h, V^\star_h$ & value of policy $\pi$ and optimal value \\
    $Q^\pi_h, Q^\star_h$ & Q-value of policy $\pi$ and optimal Q-value \\
    $V^\pi_{\tpi, \lambda,h}, V^\star_{\tpi, \lambda,h}$ & regularized value of policy $\pi$ and optimal regularized value \\
    $Q^\pi_{\tpi, \lambda, h}, Q^\star_{\tpi, \lambda, h}$ & regularized Q-value of policy $\pi$ and optimal regularized Q-value \\
    \hline 
    $\Pi, \Pi_h$ & space of all policies and space of policies on step $h$ \\
    $\Pi_{\gamma}, \Pi_{h,\gamma}$ & space of all policies and policies on step $h$ with minimal probability $\gamma$ \\ 
    \hline
    $\Dexp $ & expert dataset of size $\Nexp$: $\Dexp \triangleq \{\ring{\tau}_i =(s_1^i,a_1^i,\ldots, s_H^i,a_H^i),\, i\in[\Nexp]\}$\\
    $\piexp$ & expert policy \\
    $\pi^{\mathrm{E},\kappa}$ & $\kappa$-greedy version of the expert policy \\
    $\epsilonexp$ & sub-optimality gap of the expert policy: $\Vstar_1(s_1) - V^{\piexp}_1(s_1) \leq \epsilonexp$ \\
    $\piBC$ & behavior cloning policy \\
    $\cR_h$ & regularizer for behavior cloning \\
    $\cF$ & class of policies for behavior cloning \\
    $d_{\cF}$ & covering dimension of one-step policy class for behavior cloning \\
	\hline
	$s^{\,t}_h$ & state that was visited at $h$ step during $t$ episode \\
	$a^{\,t}_h$ & action that was picked at $h$ step during $t$ episode \\
    \hline
    $r^\star$ & true reward function in a preference-based model \\
    $\sigma$ & link function, see Assumption~\ref{ass:preference_model} \\
    $\zeta$ & linearity measure of link function, see Assumption~\ref{ass:preference_model} \\
    $\pi^{\mathrm{S}}$ & sampling policy for generation preference dataset \\ 
    $\cD_{\RM}$ & preference dataset of size $N^{\RM}: \cD_{\RM} \triangleq \{ (\tau^k_0, \tau^k_1, o^k) \}$ \\
    $\cG$ & class of trajectory rewards for reward modeling \\ 
    $d_{\cG}$ & bracketing dimension of the induced preference models \\
    \hline
    $\pi^{\RL}$ & policy for BPI with demonstration \\ 
    $\pi^{\RLHF}$ & policy for preference-based BPI with demonstration \\
    \hline 
    $\cC(\varepsilon, \Nexp, \delta)$ & sample complexity for BPI with demonstration \\
    $\cC(\varepsilon, \lambda, \delta)$ & sample complexity for regularized BPI \\
    \hline
    $\R_+$ & non-negative real numbers  \\
    $\N_+$ & positive natural numbers \\
    $[n]$ & set $\{1,2,\ldots, n\}$\\
    $\rme$ & Euler's number \\
    $\simplex_d$ & $d-1$-dimensional probability simplex: $\simplex_d \triangleq \{x \in \R_{+}^{d}: \sum_{j=1}^{d} x_j = 1 \}$ \\ 
    $\simplex_{\cX}$ & set of distributions over a finite set $\cX$ : $\simplex_\cX = \simplex_{\vert \cX \vert}$. \\
    $\clip(x,m,M)$ & clipping procedure $\clip(x,m,M) \triangleq \max(\min(x,M), m)$ \\
    \bottomrule
	\end{tabular}
\end{table}

Let $(\Xset,\Xsigma)$ be a measurable space and $\Pens(\Xset)$ be the set of all probability measures on this space. For $p \in \Pens(\Xset),$ we denote by $\E_p$ the expectation w.r.t. $p$. For a random mapping  $\xi: \Xset \to \R$ notation $\xi \sim p$ means $\operatorname{Law}(\xi) = p$. For any measures $p,q \in \Pens(\Xset),$ we denote their product measure by $p \otimes q$. We also write $\E_{\xi \sim p}$ instead of $\E_{p}$.  For any $p, q \in \Pens(\Xset),$ the Kullback-Leibler divergence between \(p\) and \(q\)  is given by
$$
\KL(p, q) = \begin{cases}
\E_{p}[\log \frac{\rmd p}{\rmd q}], & p \ll q\,, \\
+ \infty, & \text{otherwise}\,.
\end{cases} 
$$
For any $p \in \Pens(\Xset)$ and $f: \Xset \to \R$, we denote $p f = \E_p[f]$. In particular, for any $p \in \simplex_d$ and $f: \{0, \ldots, d\}   \to  \R$, we use $pf =  \sum_{\ell = 0}^d f(\ell) p(\ell)$. Define $\Var_{p}(f) = \E_{s' \sim p} \big[(f(s')-p f)^2\big] = p[f^2] - (pf)^2$. For any $(s,a) \in \cS$, transition kernel $p(s,a) \in \Pens(\cS)$ and $f \colon \cS \to \R,$ define $pf(s,a) = \E_{p(s,a)}[f]$ and $\Var_{p}[f](s,a) = \Var_{p(s,a)}[f]$. For any $s\in \cS$, policy $\pi(s) \in \Pens(\cS)$ and $f \colon \cS \times \cA \to \R,$ set $\pi f(s) = \E_{a \sim \pi(s)}[f(s,a)]$ and $\Var_{\pi} f(s) = \Var_{a \sim \pi(s)}[f(s,a)]$.
For a MDP $\cM$, a policy $\pi$ and a sequence of function $(f_{h},\,h\in [H]),$ define \(\E_{\pi}[ \sum_{h'=h}^H f(s_{h'}, a_{h'}) | s_h] \) as a conditional expectation of $\sum_{h'=h}^H f(s_{h'}, a_{h'})$ with respect to the sigma-algebra $\cF_h = \sigma\{ (s_{h'}, a_{h'}) | h' \leq h \}$, where for any $h\in[H],$ we have $a_h \sim \pi(s_h), s_{h+1} \sim p_h(s_h, a_h)$.

We define trajectory KL-divergence between two policies $\pi= \{\pi_h\}_{h\in[H]}, \pi' = \{\pi_h\}_{h\in[H]}$ as follows
\[
\KLtraj(\pi,\pi') = \E_{\pi}\left[\sum_{h=1}^H \KL(\pi_h(s_h), \pi'_h(s_h))\right]\,.
\]

We write $f(S,A,H,\varepsilon) = \cO(g(S,A,H,\varepsilon,\delta))$ if there exist $ S_0, A_0, H_0, \varepsilon_0, \delta_0$ and constant $C_{f,g}$ such that for any $S \geq S_0, A \geq A_0, H \geq H_0, \varepsilon < \varepsilon_0, \delta < \delta_0, f(S,A,H,T,\delta) \leq C_{f,g} \cdot g(S,A,H,T,\delta)$. We write $f(S,A,H,\varepsilon,\delta) = \tcO(g(S,A,H,\varepsilon,\delta))$ if $C_{f,g}$ in the previous definition is poly-logarithmic in $S,A,H,1/\varepsilon,1/\delta$.

For any symmetric positive definite matrix $A,$ we define the corresponding $A$-scalar product and $A$-norm as follows
\[
    \langle x, y \rangle_A = \langle x, Ay \rangle, \qquad \norm{x}_A = \sqrt{\langle x, x\rangle_A}\,.
\]
Notice that if $\norm{A}_2 \leq c$, then $\norm{x}_A \leq \sqrt{c} \norm{x}_2$.

\paragraph{Coverings, packings, and bracketings}

A pair $(\cX, \rho)$ is called pseudometric space with a metric $\rho \colon \cX \times \cX \to \R_+$ if $\rho$  satisfies $\rho(x,x) = 0$ for all $x\in \cX$, \(\rho\) is symmetric, that is, $\forall x,y \in \cX: \rho(x,y) = \rho(y,x)$, and \(\rho\) satisfies triangle inequality $\forall x,y,z: \rho(x,y) + \rho(y,z) \geq \rho(x,z)$.

\begin{definition}[$\varepsilon$-covering and packing]\label{def:eps_covering_packing}
    Let $(\cX, \rho)$ be a (pseudo)metric space with a metric $\rho \colon \cX \times \cX \to \R_+$. The $\varepsilon$-covering number $\cN(\varepsilon, \cX, \rho)$ is the size of the minimal $\varepsilon$-cover of $(\cX,\rho),$ that is, 
\[
    \cN(\varepsilon, \cX, \rho) = \min_{X \subseteq \cX}\{ |X| : \forall y \in \cX \ \exists x \in X: \rho(y,x) \leq \varepsilon \}\,.
\]
The $\varepsilon$-packing number $\cP(\varepsilon, \cX, \rho)$ is the size of the maximal $\varepsilon$-separated set of $(\cX, \rho),$
\[
    \cP(\varepsilon, \cX, \rho) = \max_{X \subseteq \cX}\{ |X| : \forall x\not = y \in X : \rho(x,y) > \varepsilon  \}\,.
\]    
\end{definition}

\begin{definition}[$\varepsilon$-bracketing]\label{def:eps_bracketing}

Let $\cF \colon \cX \to \R$ be a function class endowed with a norm $\norm{\cdot}$.
Given two functions $\ell, u \colon \cX \to \R$, a bracket $[\ell,u]$ is a set of all functions $f \in \cF$ such that $\ell(x) \leq f(x) \leq u(x)$ for all $x\in \cX$. A $\varepsilon$-bracket is a bracket $[\ell,u]$ such that $\norm{\ell-u} \leq \varepsilon$. The $\varepsilon$-bracketing number $\cN_{[]}(\varepsilon, \cF, \norm{\cdot})$ is the cardinality of the minimal set of $\varepsilon$-brackets needed to cover $\cF,$
\[
    \cN_{[]}(\cF, \norm{\cdot}) = \min_{N} \left\{ |N| \mid \forall f \in \cF\ \exists [\ell,u] \in N : \ell(x) \leq f(x) \leq u(x) \forall x \in \cX, \norm{\ell - u} \leq \varepsilon  \right\}\,.
\]
    
\end{definition}

\newpage

\section{Behavior cloning}\label{app:imitation_learning}

In this appendix, we gather the proofs of the results for behavior cloning presented in Section~\ref{sec:behavior-cloning}.

\subsection{Proof for General setting}

In this appendix, we provide the proof of Theorem~\ref{th:il_rates_general}.

\begin{theorem*}[Restatement of Theorem~\ref{th:il_rates_general}] Assume Assumptions~\ref{ass:regularity_of_hypothesis_class}-\ref{ass:regularity_of_behaviour_policy} and that $0 \leq \cR_h(\pi_h) \leq M$ for all $h\in[H]$, for any policy $\pi \in \cF_h$. Let $\piBC$ be a solution to $\eqref{eq:imitation_learning_erm}$. Then with probability at least $1-\delta$ the behavior policy $\piBC$ satisfies
    \begin{align*}
        \KL_{\traj}(\piexp \Vert \piBC) &\leq  \frac{6 d_{\cF}H \cdot ( \log(A\rme^3/(A\gamma \wedge \kappa))\cdot \log(2H\Nexp R_{\cF}/(\gamma \delta))}{\Nexp} + \frac{2HM}{\Nexp} +  \frac{18 \kappa}{1-\kappa}\,.
    \end{align*}
\end{theorem*}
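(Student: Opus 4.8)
The plan is to exploit the fact that both the trajectory Kullback--Leibler divergence and the regularized negative-loglikelihood objective~\eqref{eq:imitation_learning_erm} decompose across the horizon, reducing the problem to $H$ independent instances of conditional density estimation and then summing the per-step guarantees. \textbf{Decomposition.} I would write $\KL_{\traj}(\piexp \Vert \piBC) = \sum_{h=1}^{H} \E_{\piexp}\bigl[\KL(\piexp_h(s_h) \Vert \piBC_h(s_h))\bigr]$ and note that the objective separates into $H$ problems, one over each $\cF_h$. Fixing a step $h$, the pairs $(s_h^i, a_h^i)_{i\in[\Nexp]}$ are i.i.d.\ with law equal to the product of the step-$h$ state marginal of $\piexp$ and the conditional $\piexp_h(\cdot\mid s)$, so for each $h$ we face estimating $\piexp_h$ under the log-loss, whose population excess risk over a candidate $\pi_h$ equals $\E_{\piexp}[\KL(\piexp_h(s_h)\Vert \pi_h(s_h))]$ --- exactly the quantity to bound. \textbf{Smoothing bias.} Since $\piexp$ need not lie in $\cF$ but its $\kappa$-smoothing $\pi^{\mathrm{E},\kappa}$ does (Assumption~\ref{ass:regularity_of_behaviour_policy}), I would take $\pi^{\mathrm{E},\kappa}$ as the in-class comparator and split each per-step KL through $\pi^{\mathrm{E},\kappa}_h$; this isolates an approximation term $\E_{\piexp}[\KL(\piexp_h(s_h)\Vert \pi^{\mathrm{E},\kappa}_h(s_h))]$, controlled by $\log\tfrac{1}{1-\kappa}\le \tfrac{\kappa}{1-\kappa}$ via $\pi^{\mathrm{E},\kappa}_h \ge (1-\kappa)\piexp_h$, which is the origin of the final $\tfrac{\kappa}{1-\kappa}$-type summand.

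\textbf{Fast rates via the Bernstein condition.} The core is to show the empirical log-loss minimizer attains a fast $O(1/\Nexp)$ rate rather than $O(1/\sqrt{\Nexp})$. Consider the excess-loss functions $f_{\pi_h}(s,a) = \log\bigl(\pi^{\mathrm{E},\kappa}_h(a\mid s)/\pi_h(a\mid s)\bigr)$. Because every policy in $\cF_h$ has mass at least $\gamma$ and $\pi^{\mathrm{E},\kappa}$ has mass at least $\kappa/A$, the log-ratios are uniformly bounded by a quantity of order $\log(A/(A\gamma\wedge\kappa))$; this boundedness is precisely what lets me verify the Bernstein (variance) condition, namely that the second moment of $f_{\pi_h}$ is dominated by a logarithmic constant times its expectation. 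This condition is what converts empirical excess risk into population KL at the fast rate.

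\textbf{Concentration and covering.} Starting from the basic inequality for the regularized minimizer, $\sum_i f_{\piBC_h}(s_h^i,a_h^i) \le \cR_h(\pi^{\mathrm{E},\kappa})-\cR_h(\piBC_h)\le M$, I would apply a localized (exponential or PAC-Bayes style) empirical-process inequality, uniform over $\cF_h$, using the covering bound $\log\cN(\varepsilon,\cF_h,\norm{\cdot}_\infty)\le d_{\cF}\log(R_{\cF}/\varepsilon)$ from Assumption~\ref{ass:regularity_of_hypothesis_class}. The $\gamma$-lower bound makes $\log\pi_h$ Lipschitz in $\pi_h$, so discretizing at resolution of order $1/(\Nexp R_{\cF})$ costs only the factor $\log(R_{\cF}\Nexp/(\gamma\delta))$. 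Combining this with the Bernstein condition yields a per-step bound of order $d_{\cF}\log(\cdots)\log(\cdots)/\Nexp + M/\Nexp + \kappa/(1-\kappa)$; a union bound over the $H$ steps and summation over $h$ then produce the stated inequality.

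\textbf{Main obstacle.} The delicate part is the combination of the two preceding steps: KL is an \emph{unbounded} loss, so obtaining the fast rate genuinely requires the uniform lower bound $\gamma$ both to control the log-ratio and to verify the Bernstein condition, and the argument must be run as a \emph{misspecified} maximum-likelihood analysis (the data are generated by $\piexp$ while the best in-class comparator is $\pi^{\mathrm{E},\kappa}$). The care needed is to track that the smoothing discrepancy enters only additively, rather than corrupting the leading fast-rate term, while ensuring the variance-to-mean comparison survives the localization at the correct resolution.
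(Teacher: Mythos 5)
Your proposal is correct and follows essentially the same route as the paper's proof: the same per-step decomposition of the trajectory KL, the same use of the $\kappa$-smoothed expert as in-class comparator (giving the additive $\kappa/(1-\kappa)$ bias), the same basic inequality bounding the empirical excess loss by $M$, and the same uniform Bernstein-type concentration over an $\norm{\cdot}_\infty$-net in which the variance of the log-ratio is dominated by a $\log(1/\gamma)$-factor times the KL (the Bernstein condition), leading to the self-bounding quadratic that yields the fast $1/\Nexp$ rate. The only cosmetic differences are the discretization resolution (the paper uses $\varepsilon = \gamma\kappa/(1-\kappa)$) and that the paper's variance-to-KL comparison is carried out via an explicit $f$-divergence ratio argument, but these do not change the structure of the argument.
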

\begin{proof}
    We commence by defining the one-step trajectory KL-divergence as follows:
    \[
        \KL_{\traj}(\piexp_h \Vert \piBC_h) = \E_{\piexp}\left[\log\left( \frac{\piexp_h(a_h|s_h)}{\piBC_h(a_h|s_h)} \right)\right]\,.
    \]
    In particular, by the linearity of expectation, the following holds
    \[
        \KL_{\traj}(\piexp \Vert \piBC) = \sum_{h=1}^H \KL_{\traj}(\piexp_h \Vert \piBC_h)\,.
    \]
    Recall the definition of the $\kappa$-greedy version of the expert policy
    \[
        \piexpsmooth_h(a|s) = (1-\kappa) \piexp_h(a|s) + \frac{\kappa}{A} = (1-\kappa) \cdot \left(\piexp_h(a|s) + \frac{\kappa}{(1-\kappa)A} \right)\,.
    \]
    Next, we can decompose the one-step trajectory KL-divergence as follows
    \[  
        \KL_{\traj}(\piexp_h \Vert \piBC_h) = \E_{\piexp}\left[\log\left( \frac{\piexpsmooth_h(a_h|s_h)}{\piBC_h(a_h|s_h)} \right)\right] + \E_{\piexp}\left[\log\left( \frac{\piexp_h(a_h|s_h)}{\piexpsmooth_h(a_h|s_h)} \right)\right]\,.
    \]
    For the second term, we have
    \[
        \E_{\piexp}\left[\log\left( \frac{\piexp_h(a_h|s_h)}{\piexpsmooth_{h}(a_h|s_h)} \right)\right] = \E_{\piexp}\left[\underbrace{\log\left( \frac{\piexp_h(a_h|s_h)}{\piexp_{h}(a_h|s_h) + \kappa/(A(1-\kappa))} \right)}_{\leq 0}\right] - \log(1 - \kappa) \leq \frac{\kappa}{1-\kappa}\,,
    \]
    where the last inequality follows from the fact that $(1-x) \log(1-x) \geq -x$ for any $x<1$, by convexity of the function $x \mapsto x \log x$. Next, we decompose the smoothed version of the one-step trajectory KL to the sum of stochastic and empirical terms, 
    \begin{align*}
         \E_{\piexp}\left[\log\left( \frac{\piexpsmooth_{h}(a_h|s_h)}{\piBC_h(a_h|s_h)} \right)\right] &=  \frac{1}{\Nexp}\sum_{t=1}^{\Nexp} \left( \E_{\piexp}\left[\log\left( \frac{\piexpsmooth_{h}(a_h|s_h)}{\piBC_h(a_h|s_h)} \right)\right] - \log\left( \frac{\piexpsmooth_{h}(a^t_h|s^t_h)}{\piBC_h(a^t_h|s^t_h)} \right) \right) \\
         &+ \frac{1}{\Nexp} \sum_{t=1}^{\Nexp} \log\left( \frac{\piexpsmooth_{h}(a^t_h|s^t_h)}{\piBC_h(a^t_h|s^t_h)} \right)\,.
    \end{align*}
    To upper bound the first term we apply Lemma~\ref{lem:uniform_bernstein_traj_KL} and obtain with probability at least $1-\delta$
    \begin{align*}
             \frac{1}{\Nexp}\sum_{t=1}^{\Nexp} \Biggl( \E_{\piexp}&\left[\log\left( \frac{\piexpsmooth_{h}(a_h|s_h)}{\piBC_h(a_h|s_h)} \right)\right] - \log\left( \frac{\piexpsmooth_{h}(a^t_h|s^t_h)}{\piBC_h(a^t_h|s^t_h)} \right) \Biggl)  \\
            &\leq \sqrt{\frac{2\log(\rme^2 / \gamma )  \KL_{\traj}(\piexp_h \Vert \piBC_h) \cdot  d ( \log(2\Nexp R_{\cF}/\gamma) + \log(1/\delta)) }{\Nexp}} \\
        &+ \frac{5 ( \log(A\rme^3/(A\gamma \wedge \kappa))\cdot d_{\cF}(\log(2\Nexp R_{\cF}/\gamma) + \log(1/\delta)))}{3\Nexp} + \frac{8\kappa}{1-\kappa}\,.
    \end{align*}

    To control the second term, we first notice that since $\cF$ has a product structure, then by a simple observation
    \[
        \{ \pi_h\}_{h=1}^H = \argmin_{\pi_1 \in \cF_1, \ldots, \pi_H \in \cF_H} \sum_{h=1}^H \cL_h(\pi_h) \iff \forall h \in [H]: \pi_h = \argmin_{\pi_h \in \cF_h} \cL_h(\pi_h)
    \]
    for any functions $\{\cL_h\}_{h=1}^H$, the MLE estimation \eqref{eq:imitation_learning_erm} implies
    \[
        \piBC_h \in \argmin_{\pi_h \in \cF_h} \sum_{i=1}^{\Nexp} \log \frac{1}{\pi_h(a^i_h | s^i_h)} + \cR_h(\pi_h)\,,
    \]
    therefore the following holds
    \begin{align*}
        \sum_{t=1}^{\Nexp} \log\left( \frac{\piexpsmooth_{h}(a^t_h|s^t_h)}{\piBC_h(a^t_h|s^t_h)} \right) \leq M &+ \left\{ \sum_{t=1}^{\Nexp} \log\left( \frac{1}{\piBC_h(a^t_h|s^t_h)} \right) + \cR_h(\piBC_h) \right\} \\
        & - \left\{ \sum_{t=1}^{\Nexp} \log\left(\frac{1}{\piexpsmooth_{h}(a^t_h | s^t_h)} \right) + \cR_h(\piexpsmooth_{h})\right\} \leq M\,.
    \end{align*}
    Thus, we have
    \begin{align*}
        \KL_{\traj}(\piexp_h \Vert \piBC_h) &\leq \sqrt{\frac{2\log(\rme^2 / \gamma )  \KL_{\traj}(\piexp_h \Vert \piBC_h) \cdot  d_{\cF} ( \log(2\Nexp R_{\cF}/\gamma) + \log(1/\delta)) }{\Nexp}} \\
        &+ \frac{5 ( \log(A\rme^3/(A\gamma \wedge \kappa))\cdot d_{\cF}(\log(2\Nexp R_{\cF}/\gamma) + \log(1/\delta)))}{3\Nexp} + \frac{M}{\Nexp} +  \frac{9\kappa}{1-\kappa}\,.
    \end{align*}
    This means that $\sqrt{\KL_{\traj}(\piexp_h \Vert \piBC_h)}$ satisfies a quadratic inequality of the form $x^2 \leq ax+b$. Since $ax\leq (a^2+x^2)/2$, we further have $x^2 \leq a^2 + 2b$. 
    As a result
    \begin{align*}
        \KL_{\traj}(\piexp_h \Vert \piBC_h) &\leq \frac{6d_{\cF} \log(A\rme^3/(A\gamma \wedge \kappa))\cdot \log(2\Nexp R_{\cF}/(\gamma \delta))}{\Nexp} + \frac{2M}{\Nexp} +  \frac{18\kappa}{1-\kappa}\,.
    \end{align*}
    To conclude the statement, we apply a union bound over $h\in[H]$ and sum over the final upper bound.
\end{proof}

\subsection{Proofs for Finite setting}\label{app:BC_finite_proof}

We recall that for finite MDPs we chose a logarithmic regularizer $\cR_h(\pi_h) = \sum_{s,a} \log(1/\pi_h(a|s))$ and the policy class $\cF=\{\pi\in\Pi: \pi_h(a|s) \geq 1/(\Nexp +A)\}$. One can check that 
Assumptions~\ref{ass:regularity_of_hypothesis_class}-\ref{ass:regularity_of_behaviour_policy} holds for these choices and that $0\leq \cR_h(\pi_h) \leq SA\log(A)$. Then we can apply Theorem~\ref{th:il_rates_general} to obtain the following bound for finite MDPs.

\begin{corollary*}[Restatement of Corollary~\ref{cor:il-tab}]
    For all $\Nexp \geq A$, for function class $\cF$ and regularizer $(\cR_h)_{h\in[H]}$ defined above, with probability at least $1-\delta$,
    \begin{align*}
        \KL_{\traj}(\piexp \Vert \piBC) &\leq  \frac{6SAH \cdot \log(2\rme^4 \Nexp )\cdot \log(12 H (\Nexp)^2 / \delta)}{\Nexp}  + \frac{18 AH}{\Nexp}\,.
    \end{align*}
\end{corollary*}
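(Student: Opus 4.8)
The statement is a direct specialization of Theorem~\ref{th:il_rates_general}, so the plan is to (i) verify Assumptions~\ref{ass:regularity_of_hypothesis_class}--\ref{ass:regularity_of_behaviour_policy} for the prescribed finite-MDP class $\cF$ and regularizer $\cR_h$, (ii) read off the associated constants $d_{\cF}, R_{\cF}, \gamma, \kappa, M$, and (iii) substitute them into the general bound and simplify under $\Nexp \geq A$. By construction $\cF_h = \{\pi_h : \pi_h(a|s) \geq 1/(\Nexp+A)\}$, so the minimum-probability requirement of Assumption~\ref{ass:regularity_of_hypothesis_class} holds with $\gamma = 1/(\Nexp+A)$. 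For Assumption~\ref{ass:regularity_of_behaviour_policy} I would choose $\kappa = A/(\Nexp+A)$: the $\kappa$-greedy expert then has minimal mass $\kappa/A = 1/(\Nexp+A) = \gamma$, hence lies in $\cF$. Finally the regularizer satisfies $0 \leq \cR_h(\pi_h) = \sum_{s,a}\log(1/\pi_h(a|s)) \leq SA\log(\Nexp+A) =: M$, since each coordinate is lower bounded by $1/(\Nexp+A)$.

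The genuinely technical ingredient is the covering bound in Assumption~\ref{ass:regularity_of_hypothesis_class}. Here $\cF_h$ is a product of $S$ truncated simplices and therefore sits inside the box $[\gamma,1]^{\cS\times\cA} \subseteq \R^{SA}$. A standard volumetric (grid) argument in $\norm{\cdot}_\infty$ yields $\log \cN(\varepsilon,\cF_h,\norm{\cdot}_\infty) \leq SA\log(3/\varepsilon)$, so I would take $d_{\cF} = SA$ and $R_{\cF} = 3$. This is the step where one must be careful with the constant $R_{\cF}$; the intrinsic dimension $S(A-1)$ could be used instead, but the ambient dimension $SA$ already matches the $6SAH$ factor of the claim.

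It then remains to substitute and simplify. Since $A\gamma = \kappa$, one has $A\gamma \wedge \kappa = A/(\Nexp+A)$, whence $\log(A\rme^3/(A\gamma\wedge\kappa)) = \log(\rme^3(\Nexp+A)) \leq \log(2\rme^4\Nexp)$ using $A \leq \Nexp$; and $1/\gamma = \Nexp+A \leq 2\Nexp$ turns $2H\Nexp R_{\cF}/(\gamma\delta)$ into at most $12H\Nexp^2/\delta$. This produces the leading term $6SAH\log(2\rme^4\Nexp)\log(12H\Nexp^2/\delta)/\Nexp$. The regularizer contribution $2HM/\Nexp = 2SAH\log(\Nexp+A)/\Nexp$ carries one fewer logarithmic factor and is therefore absorbed into this leading term. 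Lastly, the bias term $18\kappa/(1-\kappa)$ equals $18A/\Nexp$ per step, and summing the per-step guarantee over the horizon (as in the proof of Theorem~\ref{th:il_rates_general}) yields the stated $18AH/\Nexp$. The main obstacle throughout is the covering-number computation; the remainder is routine arithmetic using $\Nexp \geq A$.
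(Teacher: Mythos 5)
Your proposal is correct and follows essentially the same route as the paper's proof: both verify Assumptions~\ref{ass:regularity_of_hypothesis_class}--\ref{ass:regularity_of_behaviour_policy} with the identical constants $d_{\cF}=SA$, $R_{\cF}=3$, $\gamma = 1/(\Nexp+A)$, $\kappa = A/(\Nexp+A)$, $M = SA\log(\Nexp+A)$ (the paper cites a standard covering bound for $\ell_\infty$-balls where you give a volumetric argument — the same fact), then substitute into Theorem~\ref{th:il_rates_general} and use $\Nexp \geq A$ so that the slack between $\log(2\rme^3\Nexp)$ and $\log(2\rme^4\Nexp)$ absorbs the regularizer term $2HM/\Nexp$. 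Your handling of the per-step bias term $18\kappa/(1-\kappa)$, summed over the horizon to give $18AH/\Nexp$, also matches the paper's argument.
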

\begin{proof}
    Let us start by observing that $\cF_h \subseteq \simplex_{\cA}^{\cS}$ is a subset of a unit ball in $\ell_\infty$-norm. Therefore by a standard result in the covering numbers for finite-dimensional Banach spaces (see i.e. Problem 5.5 by \cite{vanhandel2016probability})
    \[
        \log \cN( \varepsilon, \cF_h, \norm{\cdot}_\infty )  \leq SA \log(3/\varepsilon)\,.
    \]
    Thus, the parametric classes $\{\cF_h\}_{h\in[H]}$ satisfies Assumption~\ref{ass:regularity_of_hypothesis_class} with constants $d_{\cF}=SA, R_{\cF}=3, \gamma = 1/(\Nexp+A)$. Next, we notice that for \textit{any} expert policy, Assumption~\ref{ass:regularity_of_behaviour_policy} is satisfied with $\kappa = A/(\Nexp+A)$ for this parametric family.
    Thus, we can apply Theorem~\ref{th:il_rates_general} and get
    \begin{align*}
        \KL_{\traj}(\piexp \Vert \piBC) &\leq  \frac{6SAH \cdot \log((\Nexp+A)\rme^3)\cdot \log(6 H\Nexp(\Nexp+A) / \delta)}{\Nexp} \\
        &+ \frac{2SAH\log(A)}{\Nexp} + \frac{18 AH}{\Nexp}\,.
    \end{align*}
By upper bounding the first and the second terms under the assumption $\Nexp \geq A$ we conclude the statement.
\end{proof}

\subsection{Proofs for Linear setting}\label{app:BC_linear}

We start from a natural example when Assumption~\ref{ass:behavior_policy_linear} is fulfilled, and the sub-optimality error $\epsilonexp$ is small. 

\begin{lemma}\label{lem:example_expert_policy_linear}
    Assume that the MDP $\cM$ is linear (see Definition~\ref{def:linear_mdp}) and consider the regularized MDP with uniform policy $\tpi(a|s) = \Unif[A]$ and with a coefficient $\lambda$ (see Appendix~\ref{app:reg_bpi_linear} for more exposition).
    Then the optimal regularied policy $\pistar_{\tpi, \lambda, h}$ satisfies Assumption~\ref{ass:behavior_policy_linear} with a constant $R = H \sqrt{d} / \lambda$. Moreover, this policy is $\lambda H \log(A)$-optimal.
\end{lemma}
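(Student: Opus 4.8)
The plan is to prove Lemma~\ref{lem:example_expert_policy_linear} in two parts: first that the optimal regularized policy has log-linear form (so Assumption~\ref{ass:behavior_policy_linear} holds), and second that it is $\lambda H \log(A)$-optimal. For the first part, I would start from the closed-form solution of the per-step regularized optimization problem. Recall that the optimistic/exact Bellman update
\[
\pistar_{\tpi,\lambda,h}(s) = \argmax_{\pi \in \simplex_\cA} \left\{ \pi Q^\star_{\tpi,\lambda,h}(s) - \lambda \KL(\pi \Vert \tpi_h(s)) \right\}
\]
is solved by the Gibbs/softmax distribution
\[
\pistar_{\tpi,\lambda,h}(a|s) = \frac{\tpi_h(a|s)\exp\!\big(Q^\star_{\tpi,\lambda,h}(s,a)/\lambda\big)}{\sum_{a'\in\cA}\tpi_h(a'|s)\exp\!\big(Q^\star_{\tpi,\lambda,h}(s,a')/\lambda\big)}\,.
\]
With the uniform reference policy $\tpi_h(a|s) = 1/A$, the $\tpi_h$ factors cancel in numerator and denominator, leaving a pure softmax of $Q^\star_{\tpi,\lambda,h}(s,a)/\lambda$. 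So it suffices to show that $Q^\star_{\tpi,\lambda,h}(s,a)$ is linear in the feature map $\feat(s,a)$, i.e. equals $\feat(s,a)^\top \wexp_h$ for some $\wexp_h$ with the required norm bound.

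For that linearity, I would argue by backward induction on $h$ using the regularized Bellman equations together with the linear-MDP structure from Definition~\ref{def:linear_mdp}. The base case $V^\star_{\tpi,\lambda,H+1}=0$ is trivially linear. For the inductive step, $Q^\star_{\tpi,\lambda,h}(s,a) = r_h(s,a) + p_h V^\star_{\tpi,\lambda,h+1}(s,a)$; the reward term is $\feat(s,a)^\top\theta_h$ by definition, and for the transition term I would write $p_h V^\star_{\tpi,\lambda,h+1}(s,a) = \feat(s,a)^\top \int_\cS V^\star_{\tpi,\lambda,h+1}(s')\,\rmd\mu_h(s')$, which is linear in $\feat(s,a)$ regardless of whether $V^\star_{\tpi,\lambda,h+1}$ itself is linear. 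Hence $Q^\star_{\tpi,\lambda,h}(s,a) = \feat(s,a)^\top \wexp_h$ with $\wexp_h = \theta_h + \int_\cS V^\star_{\tpi,\lambda,h+1}(s')\,\rmd\mu_h(s')$, and setting $\wexp_h \leftarrow \wexp_h/\lambda$ matches the softmax normalization in Assumption~\ref{ass:behavior_policy_linear}. To bound the norm, I would use $\|\theta_h\|_2 \le \sqrt{d}$ and $\|\mu_h(\cS)\|_2 \le \sqrt{d}$ together with the crude bound $0 \le V^\star_{\tpi,\lambda,h+1} \le H$ (since rewards lie in $[0,1]$ and the KL penalty only decreases the value relative to the unregularized one, which is at most $H$); this gives $\|\wexp_h\|_2 \le \sqrt{d} + H\sqrt{d} \le H\sqrt{d}$ before dividing by $\lambda$, yielding $R = H\sqrt{d}/\lambda$.

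For the optimality claim, I would bound the gap between the unregularized optimal value $\Vstar_1(s_1)$ and the value of the regularized-optimal policy $V^{\pistar_{\tpi,\lambda,h}}_1(s_1)$ in the \emph{unregularized} MDP. The key observation is that the regularized value satisfies $V^\pi_{\tpi,\lambda,1}(s_1) = V^\pi_1(s_1) - \lambda\KLtraj(\pi,\tpi)$, and since $\KLtraj \ge 0$ we have $V^\pi_{\tpi,\lambda,1}(s_1) \le V^\pi_1(s_1)$. Because $\pistar_{\tpi,\lambda}$ maximizes the regularized value, $\Vstar_{\tpi,\lambda,1}(s_1) \ge V^{\pistar}_{\tpi,\lambda,1}(s_1)$ evaluated at the unregularized optimal $\pi^\star$, so $\Vstar_1(s_1) - \lambda\KLtraj(\pi^\star,\tpi) \le \Vstar_{\tpi,\lambda,1}(s_1)$. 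Combining these, the suboptimality is controlled by $\lambda\KLtraj(\pi^\star,\tpi)$, and with $\tpi$ uniform the per-step KL to any policy is at most $\log A$, so $\KLtraj(\pi^\star,\tpi) \le H\log A$. This delivers the $\lambda H\log A$ bound. The main obstacle I anticipate is being careful with the direction of the regularized-versus-unregularized value inequalities and ensuring the $V \le H$ bound used in the norm estimate is rigorous despite the KL regularization — both are manageable but require attention so the norm constant comes out exactly as $H\sqrt{d}/\lambda$.
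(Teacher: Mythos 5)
Your proposal follows essentially the same route as the paper's proof: the softmax closed form of the per-step regularized argmax with a uniform reference (so the reference weights cancel), linearity of \smath{$\Qstar_{\tpi,\lambda,h}$} obtained from the regularized Bellman equation and the linear-MDP structure (this is exactly the paper's Proposition~\ref{prop:lin_q_func}), and the suboptimality bound obtained by comparing the regularized objective at \smath{$\pistar$} and at \smath{$\pistar_{\tpi,\lambda}$}, using \smath{$\KL(\cdot \Vert \Unif[A]) \leq \log A$} per step and nonnegativity of the KL term. One small slip to fix: the chain \smath{$\norm{\wexp_h}_2 \leq \sqrt{d} + H\sqrt{d} \leq H\sqrt{d}$} is false as written (the left side equals \smath{$(H+1)\sqrt{d}$}); to land exactly on \smath{$R = H\sqrt{d}/\lambda$} you should bound the continuation value by the remaining horizon, \smath{$V^\star_{\tpi,\lambda,h+1} \leq H-h$}, giving \smath{$\norm{w^\star_h}_2 \leq \sqrt{d} + (H-h)\sqrt{d} \leq H\sqrt{d}$} — note the paper itself is loose here, as its Proposition~\ref{prop:lin_q_func} only states the cruder bound \smath{$2H\sqrt{d}$}.
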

\begin{proof}
    At first, by Proposition~\ref{prop:lin_q_func}, it holds that for an optimal policy $\pistar_{\tpi,\lambda, h}$ there are some weights $w^\star_h$ such that $Q^\star_h(s,a) = \langle \feat(s,a), w^\star_h \rangle$ and, moreover, $\norm{w^\star_h} \leq H\sqrt{d}$.
    
    Then we notice that from the regularized Bellman equations, it holds
    \begin{align*}
        \pistar_{\tpi, \lambda, h}(a|s) &= \argmax_{\pi} \left\{ \pi \Qstar_{\tpi,\lambda,h}(s) - \lambda  \KL(\pi \Vert \Unif[A]) \right\} \\
        &= \argmax_{\pi} \left\{ \pi\left[ \frac{1}{\lambda} \Qstar_{\tpi,\lambda,h}\right](s) - \KL(\pi \Vert \Unif[A]) \right\} = \frac{\exp\left( \langle \feat(s,a), \frac{1}{\lambda} w^{\star}_h \rangle \right)}{Z(s)}\,.
    \end{align*}
    Therefore, Assumption~\ref{ass:behavior_policy_linear} is satisfied with $R =  H \sqrt{d} / \lambda$ for $\piexp = \pistar_{\lambda}$.

    To verify the suboptimality of this policy, we notice that $\pistar_{\tpi,\lambda}$ satisfies
    \[
        \pistar_{\tpi,\lambda} = \argmax_{\pi \in \Pi} \{ V^\pi_1(s_1)-  \lambda \KL_{\traj}(\pi \Vert \tpi)\}\,,
    \]
    therefore
    \begin{align*}
        \Vstar_1(s_1) - \lambda \KL_{\traj}(\pistar \Vert \tpi)\} &\leq V^{\pistar_{\tpi,\lambda}}_1(s_1) - \lambda \KL_{\traj}(\pistar_{\tpi, \lambda} \Vert \Unif[A]) \\
        &\Rightarrow \Vstar_1(s_1)  - V^{\pistar_{\tpi, \lambda}}_1(s_1) \leq \lambda H \log(A)\,.
    \end{align*}
\end{proof}

Next, we provide the result for linear MDPs under Assumption~\ref{ass:behavior_policy_linear}, using the parametric assumption given in \eqref{eq:linear_policy_class}.
\begin{corollary*}[Restatement of Corollary~\ref{cor:lin-BC}]
   Under Assumption~\ref{ass:behavior_policy_linear}, for all $\Nexp \geq A$, for the function class $\cF$ defined in \eqref{eq:linear_policy_class} and regularizer $\cR_h=0$, for all $h\in[H]$, with probability at least $1-\delta$,
    \begin{align*}
         \KL_{\traj}(\piexp \Vert \piBC)&\leq  \frac{6dH \cdot  \log(2\rme^3 \Nexp) \cdot \log(48 H (\Nexp)^2 R /\delta)}{\Nexp} +  \frac{18 AH}{\Nexp}\,.
    \end{align*}
\end{corollary*}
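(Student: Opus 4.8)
The plan is to verify the three hypotheses of Theorem~\ref{th:il_rates_general} for the linear policy class $\cF_h$ defined in \eqref{eq:linear_policy_class}, identify the corresponding constants $d_{\cF}, R_{\cF}, \gamma, \kappa, M$, and then substitute them into the general bound. The main work is entirely in the covering-number computation for the parametric family, since everything else follows by direct inspection.

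First I would establish the easy constants. Since each $\pi_h \in \cF_h$ is a $\kappa$-greedy softmax with $\kappa = A/(\Nexp+A)$, every action receives probability at least $\kappa/A = 1/(\Nexp+A)$, so Assumption~\ref{ass:regularity_of_hypothesis_class}'s minimal-probability condition holds with $\gamma = 1/(\Nexp+A)$. Assumption~\ref{ass:regularity_of_behaviour_policy} is satisfied \emph{by construction}: the class is precisely the set of $\kappa$-greedy softmax policies with $\norm{w_h}_2 \leq R$, and Assumption~\ref{ass:behavior_policy_linear} guarantees $\piexp_h$ is exactly the softmax with parameter $\wexp_h$, $\norm{\wexp_h}_2 \leq R$; hence its $\kappa$-greedy version $\pi^{\mathrm{E},\kappa}_h$ lies in $\cF_h$. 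Finally, since no regularizer is used, $\cR_h \equiv 0$ and we may take $M=0$, which kills the $2HM/\Nexp$ term entirely.

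The key step is bounding $\log \cN(\varepsilon, \cF_h, \norm{\cdot}_\infty)$. The plan is to reduce covering $\cF_h$ to covering the parameter ball $\{w \in \R^d : \norm{w}_2 \leq R\}$, by showing the map $w \mapsto \pi_h^{(w)}$ is Lipschitz in $\norm{\cdot}_\infty$. Concretely, I would bound the operator norm of the Jacobian of the softmax-with-$\kappa$-smoothing map and use $\norm{\feat(s,a)}_2 \leq 1$ to get $\norm{\pi^{(w)}_h - \pi^{(w')}_h}_\infty \leq L \norm{w - w'}_2$ for an explicit constant $L$ (of order $(1-\kappa) \leq 1$, coming from the softmax gradient bound). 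A standard volumetric covering of the Euclidean ball of radius $R$ has size at most $(3R/\varepsilon')^d$, so choosing $\varepsilon' = \varepsilon/L$ yields $\log \cN(\varepsilon, \cF_h, \norm{\cdot}_\infty) \leq d \log(3RL/\varepsilon)$, giving $d_{\cF} = d$ and $R_{\cF}$ of order $R$ (absorbing $L$). This is the only genuinely nontrivial estimate.

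Having obtained $d_{\cF} = d$, $R_{\cF} = \cO(R)$, $\gamma = 1/(\Nexp+A)$, $\kappa = A/(\Nexp+A)$, and $M = 0$, I would plug these into Theorem~\ref{th:il_rates_general}. The leading term becomes $6dH \log(A\rme^3/(A\gamma \wedge \kappa)) \log(2H\Nexp R_{\cF}/(\gamma\delta))/\Nexp$; using $A\gamma = A/(\Nexp+A) = \kappa$ so that $A\gamma \wedge \kappa = A/(\Nexp+A)$, the first logarithm simplifies to $\log(\rme^3(\Nexp+A)) = \cO(\log(\rme^3 \Nexp))$ under $\Nexp \geq A$, and the second to $\cO(\log(\Nexp^2 R/\delta))$. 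The last term $18\kappa/(1-\kappa) = 18 A/\Nexp$, and multiplying by the union-bound factor $H$ over steps gives $18AH/\Nexp$. Collecting constants and simplifying logarithms under $\Nexp \geq A$ reproduces the stated bound. The only obstacle worth care is getting the Lipschitz constant $L$ clean enough that it disappears into the poly-logarithmic $R_{\cF}$ without affecting the $dH/\Nexp$ leading order.
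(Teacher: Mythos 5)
Your proposal is correct and follows essentially the same route as the paper: verify Assumptions~\ref{ass:regularity_of_hypothesis_class}--\ref{ass:regularity_of_behaviour_policy} with $\gamma = 1/(\Nexp+A)$, $\kappa = A/(\Nexp+A)$, $M=0$, reduce the covering of $\cF_h$ to a covering of the parameter ball via Lipschitzness of the smoothed softmax map (giving $d_{\cF}=d$, $R_{\cF}=\cO(R)$), and substitute into Theorem~\ref{th:il_rates_general}. The only difference is cosmetic: you bound the Jacobian of $w \mapsto \pi_h^{(w)}$ to get a global Lipschitz constant of order one, whereas the paper bounds the exponential ratios directly (via $|1-\rme^x|\leq 2|x|$), obtaining a constant $4$ valid only for $\norm{w_h-w_h'}_2\leq 1$ — either version suffices since the constant is absorbed into the poly-logarithmic factor $R_{\cF}$.
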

\begin{proof}
    We start by checking that Assumption~\ref{ass:regularity_of_hypothesis_class} holds. By construction of $\cF_h$ in \eqref{eq:linear_policy_class}, we have
    \[
        \inf_{\pi_h \in \cF_h} \inf_{(s,a) \in \cS \times \cA} \pi_h(a|s) \geq \frac{1}{\Nexp+A}\,.
    \]
    Next, we have to consider the covering dimension of the hypothesis set. First, we notice that for any two policies $\pi_h, \mu_h \in \cF_h$ we have
    \[
        | \pi_h(a|s) - \mu_h(a|s) | = \left| (1-\kappa) \pi'_h(a|s) - (1-\kappa) \mu'_h(a|s) \right| = (1-\kappa) |\pi'_h(a|s) -  \mu'_h(a|s)|\,,
    \]
    where $\pi'_h, \mu'_h \in \cF'_h$ for $\cF'_h$ defined as follows
    \[
        \cF'_h= \left\{\pi_h(a|s) =  \frac{\exp( \feat(s,a)^\top w_h) }{\sum_{a'\in \cA} \exp(\feat(s,a')^\top w_h)}:  \norm{w_h}_2 \leq R \right\}\,.
    \]
    Thus, it is sufficient to compute the covering number for $\cF'_h$. Let us define
    \[
        \Phi(a|s,w_h) = \exp\{ \langle \feat(s,a), w_h \rangle \}, \quad Z(s, w_h) = \sum_{a\in \cA} \Phi(a|s,w_h)\,.
    \]
    Then, let $w_h, w_h'$ be weight vectors corresponding to $\pi'_h$ and $\mu'_h$, respectively. Then
    \begin{align*}
        | \pi'_h(a|s) - \mu'_h(a|s) | &= \left| \frac{\Phi(a|s, w_h)}{Z(s,w_h)} - \frac{\Phi(a|s, w_h')}{Z(s,w_h')} \right| \\
        &= \left| \frac{\Phi(a|s, w_h) - \Phi(a|s, w_h')}{Z(s,w_h)} - \Phi(a|s, w_h') \left[\frac{1}{Z(s,w_h')} -  \frac{1}{Z(s,w_h)}\right] \right| \\
        &\leq \frac{\Phi(a|s,w_h)}{Z(s,w_h)} \left|1 - \frac{\Phi(a|s,w_h')}{\Phi(a|s,w_h)} \right| + \frac{\Phi(a|s,w_h')}{Z(s, w_h')} \left|1 -  \frac{Z(s,w_h')}{Z(s,w_h)} \right|\,.
    \end{align*}
    Next, we analyze both terms separately. For the first term, we have
    \[
        \left| 1 - \frac{\Phi(a|s,w_h')}{\Phi(a|s,w_h)} \right| = | 1 - \exp\left\{ \langle \feat(s,a), w_h' - w_h \rangle \right\} |\,.
    \]
    We notice that the absolute value of the expression under exponent is upper-bounded by $\norm{w_h - w_h'}_2$. Let us assume that $\norm{w_h - w_h'}_2 \leq 1$, then by the inequality $| 1 - e^x | \leq 2|x|$ for any $|x| \leq 1$, we have
    \begin{equation}\label{eq:phi_ratio_bound}
        \left| 1 - \frac{\Phi(a|s,w_h')}{\Phi(a|s,w_h)} \right| \leq 2\norm{w_h - w_h'}_2\,.
    \end{equation}
    
    For the second term, we have by the definition of the normalization constant
    \begin{align*}
        \left\vert 1 -  \frac{Z(s,w_h')}{Z(s,w_h)} \right\vert &= \left\vert  \frac{\sum_{a'} \Phi(a'|s,w_h) [1 - \Phi(a'|s,w_h') / \Phi(a'|s,w_h)]}{Z(s,w_h)} \right\vert \\
        &\leq \frac{\sum_{a'} \Phi(a'|s,w_h) \cdot |1 - \Phi(a'|s,w_h') / \Phi(a'|s,w_h) | }{Z(s,w_h)} \leq 2\norm{w_h - w_h'}_2\,,
    \end{align*}
    where in the end we applied \eqref{eq:phi_ratio_bound}.
    Finally, we have, for any policies $\pi'_h$ and $\mu'_h$ such that the corresponding weights $w_h$ and $w_h'$ satisfies $\norm{w_h - w_h'}_2 \leq 1$, that
    \begin{equation}\label{eq:policy_difference_upper bound}
        | \pi_h(a|s) - \mu_h(a|s) | \leq | \pi'_h(a|s) - \mu'_h(a|s) | \leq 4 \norm{w_h - w_h'}_2\,.
    \end{equation}

    Now we construct an $\varepsilon$-net for $\varepsilon \in (0,1)$. Let $\cN_{\varepsilon/4}(W, \norm{\cdot}_2)$ be a $\varepsilon/4$-net in the space of weights $W = \{ w_h \in \R^d : \norm{w_h}_2 \leq R\}$. It satisfies (see i.e. \cite{vanhandel2016probability})
    \[
        \log \cN(\varepsilon/4, W, \norm{\cdot}_2) | \leq d \log(12R / \varepsilon)\,.
    \]
    Next, we show that policies with weights that correspond to a covering of size $\cN( \varepsilon/4, W_h, \norm{\cdot}_2)$ forms an $\varepsilon$-net in $\cF_h$. Let $\pi_h \in \cF_h$ be an arbitrary policy with parameter $w_h$. Let $w_h'$ be in the covering of size $\cN(\varepsilon/4, W, \norm{\cdot}_2)$ be a parameter that satisfies $\norm{w_h - w_h'}_2 \leq \varepsilon/4 \leq 1$. Let us fix $\mu_h$ as a policy corresponding to $w_h'$. Since $\norm{w_h - w_h'}_2 \leq 1$, \eqref{eq:policy_difference_upper bound} is applicable. Thus
    \[
        \norm{\pi_h - \mu_h}_\infty = \sup_{(s,a) \in \cS \times \cA} | \pi_h(a|s) - \mu_h(a|s) | \leq 4 \norm{w_h - w_h'}_2  \leq \varepsilon\,.
    \]
    Therefore, policies that correspond to an $\varepsilon/4$-net in $w_h$ form an $\varepsilon$-net in $\cF$ and we have an upper bound on the size of the $\varepsilon$-net. As a result, $\cF_h$ satisfies Assumption~\ref{ass:regularity_of_hypothesis_class} with a dimension $d_{\cF} = d$, a scaling factor $R_{\cF} = 12R$ and $\gamma = 1/(\Nexp+A)$. Additionally, by construction of $\cF_h$ and Assumption~\ref{ass:behavior_policy_linear}, the last Assumption~\ref{ass:regularity_of_behaviour_policy} holds with $\kappa = A/(\Nexp + A)$. Therefore, we can apply Theorem~\ref{th:il_rates_general} and obtain with probability at least $1-\delta$
    \begin{align*}
        \KL_{\traj}(\piexp \Vert \piBC)&\leq  \frac{6dH \cdot ( \log(\rme^3(\Nexp + A))\cdot (\log(24 H \Nexp (\Nexp + A)R /\delta)))}{\Nexp} +  \frac{18 AH}{\Nexp}\,.
    \end{align*}
    Using of $A \leq \Nexp$ concludes the statement.
\end{proof}

\subsection{Concentration Results}
In this section, we state important results on the concentration of the stochastic error for the risk estimates.

Recall the definition of the $\kappa$-greedy version of the expert policy as follows
\[
    \piexpsmooth_h(a|s) = (1-\kappa) \piexp_h(a|s) + \frac{\kappa}{A} = (1-\kappa) \cdot \left(\piexp_h(a|s) + \frac{\kappa}{(1-\kappa)A} \right)\,.
\]

\begin{lemma}\label{lem:bernstein_kl_traj}
    Let $\piexp$ be a fixed expert policy. Let $(s^t_h, a^t_h)_{t=1}^N$ be an i.i.d. sequence of state-action pairs generated by following the policy $\piexp$ at step $h$. For $\gamma \in (0,1/A) $ let $\pi$ a policy such that for all $(s,a) \in \cS \times \cA$ it holds $\pi_h(a|s) \geq \gamma$. Then for any $\delta \in (0,1)$ and any $\kappa < 1/2$ with probability at least $1-\delta$
    {\small
    \begin{align*}
        \left|\frac{1}{N} \sum_{t=1}^N   \log\left( \frac{\piexp_{h,\sigma}(a^t_h|s^t_h)}{\pi_h(a^t_h|s^t_h)}\right) - \E_{\piexp}\left[ \log\left( \frac{\piexpsmooth_{h}(a_h|s_h)}{\pi_h(a_h|s_h)}\right)   \right]  \right|&\leq \sqrt{\frac{2\log(\rme^2 / \gamma )  \KL_{\traj}(\piexp_h \vert \pi_h) \log(2/\delta)}{N}} \\
        &+ \frac{2  \log(A\rme^3/(A\gamma \wedge \kappa) \cdot \log(2/\delta)}{3N} + \frac{5\kappa}{1-\kappa}\,.
    \end{align*}
    }
\end{lemma}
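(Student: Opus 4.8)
The plan is to read the displayed inequality as a direct two-sided Bernstein concentration for the i.i.d.\ random variables $X_t \triangleq \log\bigl(\piexpsmooth_h(a_h^t|s_h^t)/\pi_h(a_h^t|s_h^t)\bigr)$, $t\in[N]$, sampled under $\piexp$, whose common mean is $\mu \triangleq \E_{\piexp}[\log(\piexpsmooth_h(a_h|s_h)/\pi_h(a_h|s_h))]$; the left-hand side is exactly $|\frac1N\sum_t X_t-\mu|$, so no bias handling is needed here. I need two ingredients: an almost-sure bound on $|X_t|$ and a second-moment bound $\E[X_t^2]$ in terms of $\KL_{\traj}(\piexp_h\Vert\pi_h)$. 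For the range, since $\kappa/A\le\piexpsmooth_h\le1$ and $\gamma\le\pi_h\le1$ one has $X_t\in[\log(\kappa/A),\log(1/\gamma)]$, hence $|X_t|\le\max(\log(A/\kappa),\log(1/\gamma))=\log\bigl(A/(A\gamma\wedge\kappa)\bigr)\le\log\bigl(A\rme^3/(A\gamma\wedge\kappa)\bigr)=:b$, the constant in the linear term (the $\rme^3$ factor leaving slack to absorb lower-order contributions).

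The crux is the second-moment bound, where I verify the Bernstein condition. Fixing a state $s$ and writing $p=\piexp_h(\cdot|s)$, $u=\piexpsmooth_h(\cdot|s)$, $q=\pi_h(\cdot|s)$, set $z_a\triangleq q(a)/u(a)$, so that $\log(u/q)=-\log z_a$ and $z_a\ge\gamma$ because $q\ge\gamma,\ u\le1$. The key deterministic inequality is that $(\log x)^2\le\log(\rme^2/\gamma)\,(x-1-\log x)$ for all $x\ge\gamma$; this reduces, via $x=e^y$ with $y\ge-\log(1/\gamma)$, to $y^2\le(2+\log(1/\gamma))(e^y-1-y)$, which holds on $y\ge0$ since $e^y-1-y\ge y^2/2$, and on $[-\log(1/\gamma),0]$ using the elementary bound $-\log\gamma\ge 2(1-\gamma)/(1+\gamma)$. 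Applying it pointwise and taking $\E_p$ gives $\E_p[(\log(u/q))^2]\le\log(\rme^2/\gamma)\,\E_p[z_a-1-\log z_a]$.

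Next I control $\E_p[z_a-1-\log z_a]=\E_p[z_a]-1+\E_p[\log(u/q)]$. Using $u\ge(1-\kappa)p$ gives $\E_p[z_a]=\sum_a p(a)q(a)/u(a)\le\frac{1}{1-\kappa}\sum_a q(a)=\frac{1}{1-\kappa}$, while $\E_p[\log(u/q)]\le\KL(p\Vert q)$ because $\E_p[\log(u/p)]\le\log\E_p[u/p]=0$ by Jensen. Averaging over $s\sim\piexp$ then yields the variance proxy $\E[X_t^2]\le\log(\rme^2/\gamma)\bigl(\KL_{\traj}(\piexp_h\Vert\pi_h)+\tfrac{\kappa}{1-\kappa}\bigr)=:\sigma^2$. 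Finally I would invoke the two-sided Bernstein inequality with variance $\sigma^2$ and a.s.\ bound $b$, obtaining with probability at least $1-\delta$ that $|\frac1N\sum_t X_t-\mu|\le\sqrt{2\sigma^2\log(2/\delta)/N}+\frac{2b\log(2/\delta)}{3N}$. Writing $L=\log(2/\delta)$ and splitting $\sqrt{2\sigma^2L/N}\le\sqrt{2\log(\rme^2/\gamma)\KL_{\traj}(\piexp_h\Vert\pi_h)L/N}+\sqrt{2\log(\rme^2/\gamma)\tfrac{\kappa}{1-\kappa}L/N}$ via $\sqrt{a+b}\le\sqrt a+\sqrt b$, and bounding the second summand by AM--GM as $\tfrac{\kappa}{2(1-\kappa)}+\tfrac{\log(\rme^2/\gamma)L}{N}$, the $1/N$ piece is absorbed into the linear term (here the $\rme^3$ slack in $b$ and the factor $2$ in $\tfrac{2bL}{3N}$ are used) and the $\kappa$-pieces collect into $\tfrac{5\kappa}{1-\kappa}$, giving exactly the claimed bound.

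I expect the main obstacle to be this second-moment step: establishing the sharp deterministic inequality $(\log x)^2\le\log(\rme^2/\gamma)(x-1-\log x)$ with precisely the constant $\log(\rme^2/\gamma)$, and carefully tracking the smoothing mismatch between $\piexpsmooth_h$ inside the random variable and $\piexp_h$ inside the target divergence, which is the origin of the additive $\kappa/(1-\kappa)$ contributions.
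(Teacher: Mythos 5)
Your second-moment bound is correct, and it takes a genuinely different route from the paper: you treat the smoothed ratio in one shot via the deterministic inequality $(\log x)^2\le\log(\rme^2/\gamma)\,(x-1-\log x)$ for $x\ge\gamma$ (which is true; it follows, e.g., from $\rme^{-t}-1+t\ge t^2/(2+t)$ on $[0,\log(1/\gamma)]$), whereas the paper first splits $\log(\piexpsmooth_h/\pi_h)=\log(\piexp_h/\pi_h)+\log(\piexpsmooth_h/\piexp_h)$ by Minkowski in $L^2$ and bounds the two second moments separately. But precisely this difference breaks your final step. Your variance proxy is $\log(\rme^2/\gamma)\bigl(\KL_{\traj}(\piexp_h\Vert\pi_h)+\tfrac{\kappa}{1-\kappa}\bigr)$, i.e.\ the smoothing error is \emph{coupled} to the factor $\log(\rme^2/\gamma)$; after $\sqrt{a+b}\le\sqrt{a}+\sqrt{b}$ you must dispose of the cross term $\sqrt{2\log(\rme^2/\gamma)\tfrac{\kappa}{1-\kappa}\tfrac{L}{N}}$ with $L=\log(2/\delta)$. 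Your equal-weight AM--GM turns it into $\tfrac{\kappa}{2(1-\kappa)}+\tfrac{\log(\rme^2/\gamma)L}{N}$, and you claim the $1/N$ piece is absorbed by the linear term. It is not: the slack the $\rme^3$ factor buys is $\tfrac{2\log(A\rme^3/(A\gamma\wedge\kappa))L}{3N}-\tfrac{2\log(A/(A\gamma\wedge\kappa))L}{3N}=\tfrac{2L}{N}$, so absorption would require $\log(\rme^2/\gamma)\le 2$, i.e.\ $\gamma\ge 1$, which is false for every admissible $\gamma<1/A$. Even granting the sharper Bernstein form $\sqrt{2\sigma^2L/N}+\tfrac{b_0L}{3N}$ with the true range $b_0=\log(A/(A\gamma\wedge\kappa))$, so that the factor $2$ also contributes slack, the requirement becomes $3\log(1/\gamma)\le b_0$, which fails exactly in the intended application $\gamma=1/(\Nexp+A)$, $\kappa=A/(\Nexp+A)$, where $b_0=\log(1/\gamma)$ and the offending coefficient $\log(\rme^2/\gamma)$ grows like $\log\Nexp$.

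The paper's Minkowski split is designed to avoid this coupling: its $\kappa$-contribution to the standard deviation is $\sqrt{8\kappa/(1-\kappa)+2\kappa^2/(1-\kappa)^2}$, with no $\gamma$-dependence (obtained from $x\log^2(1+\varepsilon/x)\le 4\varepsilon$), so its AM--GM split yields $5\kappa/(1-\kappa)$ plus a $\gamma$-free $2L/N$ that the $\rme^3$ slack covers exactly. Your argument can in fact be repaired while keeping your variance bound, but it needs two changes you did not make: (i) use the sharp Bernstein form $\sqrt{2\sigma^2L/N}+\tfrac{b_0L}{3N}$ with $b_0$ the true range, and (ii) split the cross term with the \emph{weighted} AM--GM $\sqrt{uv}\le\tfrac{5u}{2}+\tfrac{v}{10}$, where $u=\tfrac{2\kappa}{1-\kappa}$ and $v=\tfrac{\log(\rme^2/\gamma)L}{N}$; this gives $\tfrac{5\kappa}{1-\kappa}+\tfrac{\log(\rme^2/\gamma)L}{10N}$, and the last piece fits because $\log(\rme^2/\gamma)\le b_0+2$ implies $\tfrac{\log(\rme^2/\gamma)}{10}\le\tfrac{b_0}{3}+2$. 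As written, however, the combination step is quantitatively wrong and the lemma's stated constants are not established.
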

\begin{proof}
    As a first step, we can apply Bernstein inequality
    {\small
    \begin{align*}
        \Bigg|\frac{1}{N} \sum_{t=1}^N  \log\left( \frac{\piexpsmooth_{h}(a^t_h|s^t_h)}{\pi_h(a^t_h|s^t_h)}\right) - \E_{\piexp}\left[ \log\left( \frac{\piexpsmooth_{h}(a_h|s_h)}{\pi_h(a_h|s_h)}\right)   \right]   \Bigg| &\leq \sqrt{\frac{2 \Var_{\piexp}\left[\log\left( \frac{\piexpsmooth_{h}(a_h|s_h)}{\pi_h(a_h|s_h)} \right)\right] \log(2/\delta)}{N}} \\
        & + \frac{2 ( \log(1/\gamma) \vee \log(A/\kappa)\cdot \log(2/\delta)}{3N}\,.
    \end{align*}}
    Next, we want to upper bound a variance in terms of $\KL_{\traj}(\piexp_h\Vert \pi_h)$. We start from a bound of square root variance in terms of the second moment and Minkowski inequality
    {\small
    \begin{align*}
        \sqrt{\Var_{\piexp}\left[\log\left( \frac{\piexpsmooth_{h}(a_h|s_h)}{\pi_h(a_h|s_h)} \right)\right]} &\leq \sqrt{\E_{\piexp}\left[ \left( \log\left( \frac{\piexpsmooth_{h}(a_h|s_h)}{\pi_h(a_h|s_h)} \right) \right)^2  \right]}\\
        &= \sqrt{\E_{\piexp}\left[  \left( \log\left( \frac{\piexp_{h}(a|s)}{\pi_h(a|s)} \right) + \log\left( \frac{\piexpsmooth_{h}(a|s)}{\piexp_h(a|s)} \right) \right)^2 \right]} \\
        &\leq  \sqrt{\E_{\piexp}\left[ \left( \log\left( \frac{\piexp_{h}(a_h|s_h)}{\pi_h(a_h|s_h)} \right) \right)^2\right]}
        &=\sqrt{\termA} \\
        & + \sqrt{\E_{\piexp}\left[ \left( \log\left( 1 + \frac{\kappa}{(1-\kappa) A \piexp_h(a|s)} \right) + \log(1 - \kappa)\right)^2\right]}\,. &=\sqrt{\termB}
    \end{align*}
    }

    \paragraph{Term $\termA$.} The result below directly follows from Lemma 4 of \cite{yang1998asymptotic}. However, for completeness, we prove it here.
    
    First, we notice that
    \[
        \termA = \E_{\piexp}\left[\sum_{a \in \cA} \piexp_h(a|s_h) \log^2\left( \frac{\piexp_{h}(a|s_h)}{\pi_h(a|s_h)} \right) \right]\,.
    \]
    To analyze this term, we define an $f$-divergence \citep{sason2016fdivergence} for a function $f$ as follows
    \[
        D_f(\piexp_h(s) \Vert \pi_h(s) ) = \sum_{a\in \cA} f\left(\frac{\piexp_h(a|s)}{\pi_h(a|s)}\right) \pi_h(a|s)\,.
    \]
    In particular, $\KL(\piexp_h(s), \pi_h(s)) =  D_g(\piexp_h(s) \Vert \pi_h(s) ) $ for $g(t) = t \log t + (1-t)$ and, moreover for $f(t) = t \log^2(t)$
    \[
         D_f(\piexp_h(s) \Vert \pi_h(s) )  = \sum_{a \in \cA} \piexp_h(a|s) \log^2\left( \frac{\piexp_{h}(a|s)}{\pi_h(a|s)}  \right)\,.
    \]
    Then we notice that $g$ and $f$ are non-negative function and, moreover, its argument $t$ takes values in $(0, 1) \cup (1, 1/\gamma]$ since for $t=1$ both functions are zero. First, we analyze the ratio for $f$ and $g$ for any $t\in(0,1)$
    \[
        r(t) = \frac{f(t)}{g(t)} = \frac{t \log^2(t)}{t \log t + (1-t)}\,.
    \]
    To bound this function, let us prove that it is monotone for all $t\in(0,1)$
    \[
        r'(t) = \frac{\overbrace{\log(t)}^{\leq 0} \cdot \overbrace{((t+1)\log(t) + 2(1-t))}^{\leq 0}}{(t \log t + (1-t))^2} \geq 0\,.
    \]
    Thus for any $t\in(0,1)$
    \[
        r(t) \leq \lim_{t\to 1} \frac{t\log^2(t)}{t\log t + (1-t)} = 2\,.
    \]

    Next, we analyze the segment $t \in (1,1/\gamma]$. 
    \[
        r(t) = \frac{t \log^2(t)}{t \log t + (1-t)} \leq \log(t) + 2 \leq \log(1/\gamma) + 2 = \log(\rme^2 / \gamma)\,.
    \]
    since 
    \[
        t \log^2(t) \leq t \log^2(t) + (1-t) \log t + 2 t \log (t) + 2(1-t) \iff (t+1) \log (t) + 2(1-t) \geq 0 \quad \forall t > 1\,.
    \]
    Therefore we have for any $t\in(0,1)\cup (1,1/\gamma] $ and as a simple corollary
    \[
        f(t) \leq \log(\rme^2/\gamma) \cdot g(t) \Rightarrow D_f(\piexp_h(s) \Vert \pi_h(s) )   \leq  \log(\rme^2/\gamma) \cdot \KL(\piexp_h(s) \Vert \pi_h(s))\,.
    \]
    Finally, we have
    \[
        \termA \leq \log(\rme^2/\gamma) \E_{\piexp}\left[\KL(\piexp_h(s_h), \pi_h(s_h))\right] =  \log(\rme^2/\gamma) \KL_{\traj}(\piexp_h \Vert \pi_h)\,.
    \]

    \paragraph{Term $\termB$.} 
    We can rewrite this term as follows using inequality $(a+b)^2 \leq 2 a^2 + 2b^2$
    \[
        \termB \leq 2 \E_{\piexp} \left[ \sum_{a: \piexp_h(a|s_h) > 0} \piexp_h(a|s_h) \log^2\left( 1 + \frac{\kappa}{(1-\kappa) A \piexp_h(a_h|s_h)} \right) \right] + 2 \left(\frac{\kappa}{1-\kappa}\right)^2\,.
    \]
    
    Next we analyze the function $g(x) = x \log^2(1+\varepsilon/x)$. Its derivative is equal to
    \[
        g'(x) = \log\left(1 + \frac{\varepsilon}{x} \right) \cdot \left( \log\left(1 + \frac{\varepsilon}{x} \right) - \frac{2\varepsilon}{x + \varepsilon}\right)\,.
    \]
    Since $\varepsilon > 0$. we can define $x^\star$ as a root of equation $g'(x) = 0$ for $x > 0$. Notice that it will be maximum of $g(x)$, thus for $\varepsilon > 0$
    \[
        g(x) \leq g(x^\star) = x^\star \left( \log\left(1 + \frac{\varepsilon}{x^\star} \right) \right)^2 = x^\star \frac{4 \varepsilon^2}{(x^\star + \varepsilon)^2} \leq \frac{4\varepsilon^2}{x^\star+\varepsilon} \leq 4 \varepsilon\,.
    \]
    
    Therefore
    \[
        \termB \leq \frac{8\kappa}{1-\kappa} + 2 \left( \frac{\kappa}{1-\kappa} \right)^2\,.
    \]

    \paragraph{Final bound on variance}
    Combining these two bounds, we have
    {\small
    \[
        \sqrt{\Var_{\piexp}\left[\log\left( \frac{\piexpsmooth_{h}(a_h|s_h)}{\pi_h(a_h|s_h)} \right)\right]} \leq \sqrt{\log(\rme^2 / \gamma) \cdot \KL_{\traj}(\piexp_h \Vert \pi_h)} + \sqrt{8 \kappa/(1-\kappa) + 2 \kappa^2/(1-\kappa)^2}\,.
    \]
    }
    \!\!Using this bound on variance, we can bound the main stochastic term
    \begin{align*}
       \sqrt{\frac{2 \Var_{\piexp}\left[\log\left( \frac{\piexpsmooth_{h}(a_h|s_h)}{\pi_h(a_h|s_h)} \right)\right] \log(2/\delta)}{N}}  &\leq  \sqrt{\frac{2 \log(\rme^2/\gamma) \KL_{\traj}(\piexp_h \Vert \pi_h)   \log(2/\delta)}{N}} \\
       &+ 2\sqrt{ \frac{(4 \kappa/(1-\kappa) +  \kappa^2/(1-\kappa)^2)\log(2/\delta)}{N}}\,.
    \end{align*}
    Next, we use an inequality $2ab \leq a^2 + b^2$ to obtain
    \[
        2\sqrt{ (4 \kappa/(1-\kappa) +  \kappa^2/(1-\kappa)^2) \cdot \frac{\log(2/\delta)}{N}}
        \leq (4 \kappa/(1-\kappa) +  \kappa^2/(1-\kappa)^2) + \frac{2 \log(2/\delta)}{N}\,.
    \]
    Finally, since $k/(1-\kappa) \leq 1$, we conclude the statement.
\end{proof}

Next we define $\Pi_{\gamma,h}$ a set of all one-step policies $\pi_h(a|s)$ such that
\[
    \inf_{(s,a) \in \cS \times \cA} \pi_h(a| s) \geq \gamma\,.
\]
This set forms a metric space with a metric induced by $\ell_\infty$-norm
\[
    \norm{\pi_h - \pi'_h}_\infty = \sup_{(s,a) \times \cS \times \cA} | \pi_h(a|s) - \pi'_h(a|s) |\,.
\]

\begin{lemma}\label{lem:uniform_bernstein_traj_KL}
    Let $\cF$ be sub-space  of $\Pi_{\gamma,h}$ with an induced metric, such that it satisfies for all $\varepsilon \in (0,1)$
    \[
        \log \vert \cN(\varepsilon, \cF, \norm{\cdot}_\infty) \vert \leq d \log(R/\varepsilon)\,.
    \]
    for some positive constants $R,d > 0$. Then with probability at least $1-\delta$ the following holds for all $\pi_h \in \cF$ simultaneously
    \begin{align*}
            \biggl|\frac{1}{N} \sum_{t=1}^N \biggl(  \log\left( \frac{\piexpsmooth_{h}(a^t_h|s^t_h)}{\pi_h(a^t_h|s^t_h)}\right) &- \E_{\piexp}\left[ \log\left( \frac{\piexpsmooth_{h}(a_h|s_h)}{\pi_h(a_h|s_h)}\right)   \right] \biggl)  \biggl| \\
            & \leq \sqrt{\frac{2\log(\rme^2 / \gamma )  \KL_{\traj}(\piexp_h \Vert \pi_h) \cdot  d ( \log(2NR/\gamma) + \log(1/\delta)) }{N}} \\
        &+ \frac{5 ( \log(A\rme^3/(\gamma \wedge \sigma))\cdot d(\log(2NR/\gamma) + \log(1/\delta)))}{3N} + \frac{8\kappa}{1-\kappa}\,.
    \end{align*}
\end{lemma}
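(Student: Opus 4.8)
The plan is to lift the pointwise Bernstein bound of Lemma~\ref{lem:bernstein_kl_traj} to a statement that holds uniformly over $\cF$ by a standard covering-plus-union-bound argument, the only delicate point being that the instance-dependent quantity $\KL_{\traj}(\piexp_h \Vert \pi_h)$ appearing under the square root must \emph{localize} correctly after discretization, so that the leading rate remains governed by the KL radius at the target policy.

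First I would fix a resolution $\varepsilon = \gamma/(2N)$ and take a minimal $\varepsilon$-net $\cN_\varepsilon \subseteq \cF$ in $\norm{\cdot}_\infty$; by the covering assumption $\log |\cN_\varepsilon| \leq d \log(R/\varepsilon) = d\log(2NR/\gamma)$. Since $\cN_\varepsilon \subseteq \cF \subseteq \Pi_{\gamma,h}$, every net point is lower bounded by $\gamma$, so Lemma~\ref{lem:bernstein_kl_traj} applies at each $\tilde\pi_h \in \cN_\varepsilon$. Applying it with failure probability $\delta/|\cN_\varepsilon|$ and union bounding yields, simultaneously over all net points, the pointwise inequality with $\log(2/\delta)$ replaced by $\log(2|\cN_\varepsilon|/\delta) \leq d\log(2NR/\gamma) + \log(1/\delta) + \log 2$. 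This already produces the $d(\log(2NR/\gamma) + \log(1/\delta))$ factors that appear in both the slow and the fast terms of the target bound.

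Next I would transfer the bound from the net to an arbitrary $\pi_h \in \cF$. Let $\tilde\pi_h \in \cN_\varepsilon$ be a nearest net point, so $\norm{\pi_h - \tilde\pi_h}_\infty \leq \varepsilon$, and write the centered deviation as $G(\pi_h) = \frac1N\sum_{t} \log(\piexpsmooth_h / \pi_h)(a_h^t|s_h^t) - \E_{\piexp}[\log(\piexpsmooth_h/\pi_h)]$. In the difference $G(\pi_h) - G(\tilde\pi_h)$ the $\piexpsmooth_h$ contributions cancel, leaving only differences of $\log \pi_h$ and $\log\tilde\pi_h$; since $x \mapsto \log(1/x)$ is $1/\gamma$-Lipschitz on $[\gamma,1]$, each such difference is at most $\varepsilon/\gamma$, whence $|G(\pi_h) - G(\tilde\pi_h)| \leq 2\varepsilon/\gamma = 1/N$. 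The localizing radius changes by at most $|\KL_{\traj}(\piexp_h\Vert\tilde\pi_h) - \KL_{\traj}(\piexp_h\Vert\pi_h)| \leq \E_{\piexp}[|\log\tilde\pi_h - \log\pi_h|] \leq \varepsilon/\gamma \leq 1/(2N)$, so using $\sqrt{a+b}\leq\sqrt a + \sqrt b$ I can replace $\KL_{\traj}(\piexp_h\Vert\tilde\pi_h)$ by $\KL_{\traj}(\piexp_h\Vert\pi_h)$ in the slow term at the cost of an additive residual of order $1/N$ (up to the logarithmic factors already present). Collecting this residual together with the $1/N$ discretization error of $G$ into the fast $1/N$ term accounts for the increase of its numerical constant from $2/3$ to $5/3$ (and likewise absorbs the small shift in the deterministic $\kappa/(1-\kappa)$ bias), yielding the stated inequality after renaming constants.

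The main obstacle I anticipate is precisely this joint discretization of the deviation functional $G$ and of the instance-dependent KL radius: the slow term must remain \emph{local}, i.e. expressed through $\KL_{\traj}(\piexp_h\Vert\pi_h)$ at the target policy rather than at the net point, and one must verify that every discretization residual is genuinely absorbable into the fast $1/N$ term without degrading the $\sqrt{1/N}$ leading rate. The choice $\varepsilon = \gamma/(2N)$ is what makes the bookkeeping close, since it simultaneously keeps $\log(R/\varepsilon) = \log(2NR/\gamma)$ at the right scale and forces the Lipschitz residuals below $1/N$.
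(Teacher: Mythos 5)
Your proposal is correct and follows essentially the same route as the paper's proof: a minimal $\varepsilon$-net with a union bound over Lemma~\ref{lem:bernstein_kl_traj}, Lipschitz transfer (via $\pi_h \geq \gamma$) of both the centered deviation and the instance-dependent KL radius from the net point back to the target policy, and absorption of the discretization residuals into the lower-order terms. The only difference is cosmetic: you set $\varepsilon = \gamma/(2N)$ and absorb the residuals into the fast $1/N$ term, whereas the paper sets $\varepsilon = \gamma\kappa/(1-\kappa)$ and absorbs them into the bias term, which is why its final statement carries $8\kappa/(1-\kappa)$ rather than the pointwise $5\kappa/(1-\kappa)$.
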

\begin{proof}
    Let $\cN_{\varepsilon}$ be a minimal $\varepsilon$-net of $\cF$ for $\varepsilon$ that will be specified later. 
    Combining Lemma~\ref{lem:bernstein_kl_traj} with a union bound over $\cN_{\varepsilon}$ we have for any $\pi'_h \in \cN_{\varepsilon}$ with probability at least $1-\delta$
    \begin{align}
        \begin{split}\label{eq:bernstein_eps_net_apply}
            \biggl|\frac{1}{N} \sum_{t=1}^N \biggl(  \log\left( \frac{\piexpsmooth_{h}(a^t_h|s^t_h)}{\pi'_h(a^t_h|s^t_h)}\right) &- \E_{\piexp}\left[ \log\left( \frac{\piexpsmooth_{h}(a_h|s_h)}{\pi'_h(a_h|s_h)}\right)   \right] \biggl)  \biggl| \\
            &\leq \sqrt{\frac{2 \log(\rme^2 / \gamma )  \KL_{\traj}(\piexp_h \Vert \pi'_h)\cdot d \log(2R/(\varepsilon\delta))}{N}} \\
        &+ \frac{2 ( \log(A\rme^3/(A\gamma \wedge \kappa))\cdot d\log(2R/(\varepsilon\delta))}{3N} + \frac{5\kappa}{1-\kappa}\,.
        \end{split}
    \end{align}
    
    Next, we select an arbitrary policy $\pi_h \in \cF$ and let $\pi'_h \in \cN_{\varepsilon}$ be $\varepsilon$-close policy to $\pi_h$. Then
    \begin{align*}
        \biggl|\frac{1}{N} \sum_{t=1}^N \biggl(  \log\left( \frac{\piexpsmooth_{h}(a^t_h|s^t_h)}{\pi_h(a^t_h|s^t_h)}\right) &- \E_{\piexp}\left[ \log\left( \frac{\piexpsmooth_{h}(a_h|s_h)}{\pi_h(a_h|s_h)}\right)   \right] \biggl)  \biggl| \\
        &\leq  \biggl|\frac{1}{N} \sum_{t=1}^N \biggl(  \log\left( \frac{\piexpsmooth_{h}(a^t_h|s^t_h)}{\pi'_h(a^t_h|s^t_h)}\right) - \E_{\piexp}\left[ \log\left( \frac{\piexpsmooth_{h}(a_h|s_h)}{\pi'_h(a_h|s_h)}\right)   \right] \biggl)  \biggl| \\
        &+  \biggl|\frac{1}{N} \sum_{t=1}^N \biggl(  \log\left( \frac{\pi'_{h}(a^t_h|s^t_h)}{\pi_h(a^t_h|s^t_h)}\right) - \E_{\piexp}\left[ \log\left( \frac{\pi'_{h}(a_h|s_h)}{\pi_h(a_h|s_h)}\right)   \right] \biggl)  \biggl|\,.
    \end{align*}
    We start from bounding the second term, which could be done as follows
    \[
        \ \biggl|\frac{1}{N} \sum_{t=1}^N \biggl(  \log\left( \frac{\pi'_{h}(a^t_h|s^t_h)}{\pi_h(a^t_h|s^t_h)}\right) - \E_{\piexp}\left[ \log\left( \frac{\pi'_{h}(a_h|s_h)}{\pi_h(a_h|s_h)}\right)   \right] \biggl)  \biggl| \leq 2 \max_{s,a} \biggl| \log\left( \frac{\pi'_h(a|s)}{\pi_h(a|s)} \right) \biggl|\,.
    \]
    Next, we use simple inequalities
    \[
        \log\left( \frac{\pi'_h(a|s)}{\pi_h(a|s)} \right) = \log\left( 1 + \frac{\pi'_h(a|s) - \pi_h(a|s)}{\pi_h(a|s)}\right) \leq \frac{\vert \pi'_h(a|s) - \pi_h(a|s) \vert }{\gamma} \leq \frac{\varepsilon}{\gamma}\,,
    \]
    and, in the opposite direction, we can use the same reasoning
    \[
        \log\left( \frac{\pi'_h(a|s)}{\pi_h(a|s)} \right) = -\log\left( \frac{\pi_h(a|s)}{\pi'_h(a|s)} \right) \geq - \frac{\varepsilon}{\gamma}\,.
    \]
    Thus, applying \eqref{eq:bernstein_eps_net_apply} for the first term we obtain
    \begin{align*}
            \biggl|\frac{1}{N} \sum_{t=1}^N \biggl(  \log\left( \frac{\piexpsmooth_{h}(a^t_h|s^t_h)}{\pi_h(a^t_h|s^t_h)}\right) &- \E_{\piexp}\left[ \log\left( \frac{\piexpsmooth_{h}(a_h|s_h)}{\pi_h(a_h|s_h)}\right)   \right] \biggl)  \biggl| \\
            &\leq \sqrt{\frac{2\log(\rme^2 / \gamma )  \KL_{\traj}(\piexp_h\Vert \pi'_h) \cdot d \log(2R/(\varepsilon\delta))}{N}} \\
        &+ \frac{2 ( \log(A\rme^3/(A\gamma \wedge \kappa))\cdot d\log(2R/(\varepsilon\delta))}{3N} + \frac{5\kappa}{1-\kappa} + 2\varepsilon/\gamma\,.
    \end{align*}

    Next, we use a similar inequality to  obtain
    \begin{align*}
        \KL_{\traj}(\piexp_h\Vert \pi'_h)  &= \E_{\piexp}\left[\log\left(  \frac{\piexp_h(a_h|s_h)}{\pi'_h(a_h|s_h)}\right) \right]\\
        &= \KL_{\traj}(\piexp_h\Vert \pi_h) + \E_{\piexp}\left[ \log\left(  \frac{\pi_h(a_h|s_h)}{\pi'_h(a_h|s_h)}\right) \right] \\
        &\leq \KL_{\traj}(\piexp_h\Vert \pi_h) + \frac{\varepsilon}{\gamma}\,.
    \end{align*}

    Finally, applying inequalities $\sqrt{a+b} \leq \sqrt{a} + \sqrt{b}$ and $2ab \leq a^2 + b^2$
    \begin{align*}
            \biggl|\frac{1}{N} \sum_{t=1}^N \biggl(  \log\left( \frac{\piexpsmooth_{h}(a^t_h|s^t_h)}{\pi_h(a^t_h|s^t_h)}\right) &- \E_{\piexp}\left[ \log\left( \frac{\piexpsmooth_{h}(a_h|s_h)}{\pi_h(a_h|s_h)}\right)   \right] \biggl)  \biggl| \\
            &\leq \sqrt{\frac{2d \log(\rme^2 / \gamma )  \KL_{\traj}(\piexp_h\Vert \pi_h) \log(2R/(\varepsilon\delta))}{N}} \\
            &+ \frac{d \log(\rme^2/\gamma) \cdot  \log(2R/(\varepsilon\delta))}{N} + \frac{\varepsilon}{\gamma} \\
        &+ \frac{2 ( \log(A\rme^3/(A\gamma \wedge \kappa))\cdot d\log(2R/(\varepsilon\delta))}{3N} + \frac{5\kappa}{1-\kappa} + 2\varepsilon/\gamma\,.
    \end{align*}
    We conclude the statement by rearranging the terms and taking $\varepsilon = \gamma \cdot \kappa /(1-\kappa)$.
\end{proof}

\subsection{Proof of Lower Bounds}\label{app:BC_lower_bound}

\subsubsection{General setup}

In this section, we provide a lower bound on estimation in KL-divergence using a framework of Chapter 2 by \cite{tsybakov2008introduction}. Our goal is to obtain a lower bound on minimax risk that is defined as follows
\[
    \inf_{\hpi} \sup_{\pi \in \cF} \E_{\tau_1,\ldots,\tau_N \sim \pi}\left[ \KL_{\traj}(\pi \Vert \hpi) \right]\,,
\]
where infimum is taken over all estimators that map the sampled trajectories $(\tau_1,\ldots, \tau_N)$ to a policy from the hypothesis class $\cF \triangleq \cF_1 \times \ldots \cF_H$.

Let us consider a specific type of MDPs where the transition kernel $p_h(s,a)$ does not depend on a state-action pair $(s,a)$: $\forall (s,a,h) \in \cS \times \cA \times [H], \forall A \in \cF_{\cS}: p_h(A | s,a) = \mu_h(A)$ for fixed measures $\mu_h$. In particular, for $h=1$ we always have $\mu_h = \delta_{s_1}$ is a Dirac measure at initial state $s_1$.

Then, we define over the space of all policies the following specific distance defined through the Hellinger distance
\[
    \rho_h(\pi_h, \pi'_h) = \sqrt{\E_{s \sim \mu_h}\left[ d_{\cH}^2(\pi_h, \pi'_h)  \right]}, \qquad d^2_{\cH}(\pi_h, \pi'_h) = \sum_{a\in \cA} \left(  \sqrt{\pi_h(a|s)} - \sqrt{\pi'_h(a|s)}\right)^2
\]
and we define the following distance for the space of full policies (the triangle inequality follows from Minkowski inequality)
\begin{equation}\label{eq:averaged_helinger}
    \rho(\pi , \pi') = \sqrt{\sum_{h=1}^H \rho^2_h(\pi_h, \pi'_h)}\,.
\end{equation}

Next, we impose the following metric-specific assumption for our hypothesis classes
\begin{assumption}\label{ass:lower_bound_specific_metric}
    For all $h \in [H]$ a of the function class $\cF_h$ with respect to the metric $\rho_h$ satisfies
    \[
        \forall \varepsilon \in (0,1) : \log  \cP(\varepsilon, \cF_h, \rho_h) \geq  d_h \log(R / \varepsilon)
    \]
    for constants $d_h \geq 0$ and $R > 0$.
\end{assumption}
In particular, Lemma~\ref{lem:product_covering_dimension} implies that
$
    \log \cP(\varepsilon, \cF, \rho) \geq \sum_{h=1}^H d_h \log(R/\varepsilon).
$

\begin{theorem}\label{th:BC_general_lower_bound}
    Let Assumption~\ref{ass:lower_bound_specific_metric} holds and let us define $D = \sum_{h=1}^H d_h$. Also we assume that for any $\pi \in \cF$ it holds $\pi_h(s,a) \geq \gamma$ for $\gamma \in (0,1/A)$. Let us assume $D \geq 5$ and $n \geq \rme^2 D / R^2$. Then, the following minimax lower bound holds
    \[
        \min_{\hpi} \max_{\pi \in \cF} \E\left[ \KL_{\traj}(\pi \Vert \hpi) \right] \geq \frac{D}{16 N\log(\rme^2/\gamma )}\,.
    \]
\end{theorem}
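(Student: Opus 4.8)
The plan is to pass from the Kullback--Leibler loss to the (squared) Hellinger pseudo-metric $\rho$ and then run the standard reduction-to-testing machinery of Chapter~2 in \cite{tsybakov2008introduction}. First I would exploit the structure of the transition-free MDPs considered here: since $p_h(\cdot\mid s,a)=\mu_h$ does not depend on $(s,a)$, the state $s_h$ is distributed as $\mu_h$ under \emph{every} policy, so both the trajectory KL and the averaged Hellinger distance factorize, $\KLtraj(\pi\Vert\pi')=\sum_{h=1}^H\E_{s\sim\mu_h}[\KL(\pi_h(s)\Vert\pi'_h(s))]$ and $\rho^2(\pi,\pi')=\sum_{h=1}^H\E_{s\sim\mu_h}[d_{\cH}^2(\pi_h(s),\pi'_h(s))]$. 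The pointwise bound $d_{\cH}^2\le\KL$ then yields $\rho^2(\pi,\pi')\le\KLtraj(\pi\Vert\pi')$, so it suffices to lower bound the minimax $\rho^2$-risk $\inf_{\hpi}\sup_{\pi\in\cF}\E[\rho^2(\pi,\hpi)]$, and $\rho$, built from a genuine metric via \eqref{eq:averaged_helinger}, is a legitimate recovery metric.

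The second ingredient is a two-sided comparison of the two losses. The lower direction $\rho^2\le\KLtraj$ converts a small estimation loss into proximity in $\rho$; for the information side I would establish the reverse inequality $\KLtraj(\pi\Vert\pi')\le\log(\rme^2/\gamma)\,\rho^2(\pi,\pi')$, valid because every policy in $\cF$ assigns mass at least $\gamma$. This follows from the pointwise $f$-divergence comparison $r\log r\le\log(\rme^2/\gamma)(\sqrt r-1)^2$ for $r=\pi_h(a\mid s)/\pi'_h(a\mid s)\in(0,1/\gamma]$, in the same spirit as the monotonicity argument used in Lemma~\ref{lem:bernstein_kl_traj}. Crucially, in the transition-free MDP the KL between the laws of $N$ observed trajectories equals $N\,\KLtraj(\pi\Vert\pi')$, since the transition factors cancel in the likelihood ratio, so this reverse bound controls exactly the divergence between the candidate data distributions.

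The heart of the proof is the construction of the hypothesis family. A \emph{global} maximal $\delta$-packing is useless here, because its $\rho$-diameter is $\Theta(H)$ and the resulting average pairwise KL swamps $\log M$; instead I would build, at a tunable scale $\delta$, a \emph{local} Varshamov--Gilbert ensemble. Concretely, from Assumption~\ref{ass:lower_bound_specific_metric} I extract at each step $h$ about $d_h$ roughly-independent perturbation directions of $\rho_h$-size $\delta$, producing $m=\sum_h d_h=D$ binary coordinates spread across the horizon; a Gilbert--Varshamov selection then gives $M\ge\rme^{cD}$ policies whose pairwise $\rho$-distances are all of the same order $\sqrt{D}\,\delta$, so that separation and diameter are comparable. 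This balancing is exactly what removes any spurious factor of $H$ and yields $\log M\asymp D$ together with $\max_{j,k}\rho^2(\pi^{(j)},\pi^{(k)})\lesssim\delta^2 D$. I expect this to be the main obstacle: Assumption~\ref{ass:lower_bound_specific_metric} supplies only a \emph{lower} bound on packing cardinality, so genuine care is needed to localize it and to make the coordinate perturbations simultaneously separated and mutually close.

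Finally I would assemble the Fano bound. Pairwise separation $\gtrsim\delta$, the relation $\rho^2\le\KLtraj$, and Markov's inequality reduce the minimax risk to a testing error that Fano lower-bounds by $1-(I+\log2)/\log M$, where the mutual information obeys $I\le N\,\overline{\KLtraj}\le N\log(\rme^2/\gamma)\,\overline{\rho^2}\lesssim N\log(\rme^2/\gamma)\,\delta^2 D$. Choosing $\delta^2\asymp 1/\big(16 N\log(\rme^2/\gamma)\big)$ makes the information term a fixed fraction of $\log M\asymp D$, so the test error stays bounded away from zero and the risk is $\gtrsim\delta^2 D\asymp D/\big(16 N\log(\rme^2/\gamma)\big)$. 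The hypotheses $D\ge5$ and $N\ge\rme^2 D/R^2$ enter precisely to guarantee that this scale is admissible, namely that $\log(R/\delta)\ge1$ so the packing bound $\log\cP(\delta,\cF_h,\rho_h)\ge d_h\log(R/\delta)$ is effective, and that $\log M$ dominates the additive $\log 2$, which delivers the stated constant.
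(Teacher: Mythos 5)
Your proposal shares its skeleton with the paper's proof: both pass from $\KL_{\traj}$ to the averaged Hellinger metric $\rho$ via the two-sided comparison $\rho^2(\pi,\pi')\le\KL_{\traj}(\pi\Vert\pi')\le\log(\rme^2/\gamma)\,\rho^2(\pi,\pi')$ (Lemma~\ref{lem:traj_kl_vs_natural_metric}), both use the chain rule so that the KL between the laws of the $N$ trajectories is $N\,\KL_{\traj}$, and both reduce the minimax risk to a testing problem via Markov's inequality and a minimum-distance test. The genuine gap is in your central step, the construction of the hypothesis family. Assumption~\ref{ass:lower_bound_specific_metric} is only a \emph{cardinality} lower bound on packings: it says nothing about where the packing points sit, so it yields neither ``$d_h$ roughly-independent perturbation directions'' nor any subset with $\log M\asymp D$ whose separation and diameter are comparable. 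Since your argument invokes only the metric content of the assumption, it must work for every space satisfying it, and it does not: take, for instance, an ultrametric space consisting of infinitely many two-point clusters, cluster $i$ having width $2^{-i}$ and distinct clusters sitting at distance $1$; picking one point per cluster shows the packing condition holds at every scale $\varepsilon\in(0,1)$, yet any subset of diameter below $1$ contains at most two points, so the local Gilbert--Varshamov ensemble you need simply does not exist. Hence the step you yourself flag as ``the main obstacle'' is not a matter of care --- it cannot be derived from the stated hypotheses, and your proposal never closes it; completing it would require importing structure beyond the assumption (e.g., the simplex-product geometry that underlies Corollary~\ref{cor:BC_tabular_lower_bound}).

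For comparison, the paper does not attempt any local construction. It takes the \emph{global} maximal $\zeta$-packing of $\cF$ at the single critical scale $\zeta=\sqrt{D/(4N\log(\rme^2/\gamma))}$, reduces to testing exactly as you do, and then applies Proposition~2.3 of \cite{tsybakov2008introduction} (an average-KL testing bound) in place of Fano; the required estimate $\frac{N}{M}\sum_{i=1}^M\KL_{\traj}(\pi_i\Vert\pi_0)\le D$ is obtained from the upper half of Lemma~\ref{lem:traj_kl_vs_natural_metric} together with the remark that a maximal packing is also a $\zeta$-net. It is worth noting that your diameter objection applies verbatim to that step: the net property controls the distance from an arbitrary policy to its nearest packing point, not the pairwise distances $\rho(\pi_i,\pi_0)$ among packing points, so the paper's inequality $\frac{1}{M}\sum_{i=1}^M\rho^2(\pi_i,\pi_0)\le 4\zeta^2$ is asserted rather than derived as written. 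In short, you correctly located the crux of this theorem --- keeping the KL-diameter of the hypothesis family of order $D/(N\log(\rme^2/\gamma))$ while its log-cardinality stays of order $D$ --- but your proposed remedy is unsupported by the assumptions and left unexecuted, so the proposal does not constitute a proof of the stated bound.
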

\begin{proof}
    First, we notice by the first part of Lemma~\ref{lem:traj_kl_vs_natural_metric}
    \[
        \min_{\hpi} \max_{\pi \in \cF} \E\left[ \frac{\log(\rme^2/\gamma) N}{D} \KL_{\traj}(\pi \Vert \hpi) \right] \geq  \min_{\hpi} \max_{\pi \in \cF} \E\left[ \frac{\log(\rme^2/\gamma) N }{D} \rho^2(\pi, \hpi) \right]\,,
    \]
    where the expectation is taken with respect to a sample $\tau_1, \ldots, \tau_N$.  Next, we can follow the general reduction scheme, see Chapter 2.2 by \cite{tsybakov2008introduction}. By Markov inequality
    \[
        \min_{\hpi} \max_{\pi \in \cF} \E\left[ \frac{n \log(\rme^2 / \gamma)}{D} \rho^2(\pi, \hpi) \right] \geq \frac{1}{4} \min_{\hpi} \max_{\pi \in \cF}  \P\left[ \rho(\pi, \hpi) \geq \sqrt{\frac{D}{4N\log(\rme^2 /\gamma )}} \right]\,.
    \]
    Next, we use the reduction to a finite hypothesis class. Define $\zeta=\sqrt{D/(4 \log(\rme^2/\gamma) N)}$ and take $P$ is a maximal $\zeta$-separated set of size $M+1 =  \cP(\zeta, \cF, \rho)$ and enumerate all the policies in it as $\pi_0, \ldots, \pi_M$.     Therefore we obtain
    \begin{align*}
        \min_{\hpi} \max_{\pi \in \cF}  \P_{\tau_1,\ldots,\tau_N \sim \pi}&\left[ \rho(\pi, \hpi) \geq \sqrt{\frac{D}{4N \cdot \log(\rme^2/\gamma)}} \right] \\
        &\geq \min_{\hpi} \max_{j \in \{0,\ldots,M\}}  \P_{\tau_1,\ldots,\tau_N \sim \pi_j}\left[ \rho(\pi_j, \hpi) \geq \sqrt{\frac{D}{4N \cdot \log(\rme^2/\gamma)}} \right]\,.
    \end{align*}
    Let us define $\psi^\star = \argmin_{j=0,\ldots,M} \rho(\pi_j, \hat \pi)$. Then we have that if $\psi^\star \not = j$, then
    \[
        2\rho(\pi_j, \hpi) \geq \rho(\pi_j, \hpi) + \rho(\pi_{\psi^\star}, \hpi) \geq \rho(\pi_j, \pi_{\psi^\star})\,.
    \]
    Since $j \not= \pi^\star$, then by definition of $\zeta$-separable set we have $\rho(\pi_j, \hpi) \geq \zeta/2$. As a result
    \begin{align*}
        \min_{\hpi} \max_{j \in \{0,\ldots,M\}}  \P_{\tau_1,\ldots,\tau_N \sim \pi_j}&\left[ \rho(\pi_j, \hpi) \geq \sqrt{\frac{D}{4N \cdot \log(\rme^2/\gamma)}} \right] \\
        &\geq \min_{\hpi} \max_{j \in \{0,\ldots,M\}}  \P_{\tau_1,\ldots,\tau_N \sim \pi_j}\left[ \psi^\star \not = j \right]\,.
    \end{align*}
    Finally, taking infimum over all hypothesis tests, we obtain 
    \[
        \min_{\hpi} \max_{\pi \in \cF}  \P\left[ \rho(\pi, \hpi) \geq \sqrt{\frac{D}{4N \cdot \log(\rme^2/\gamma)}} \right] \geq \inf_{\psi} \max_{j=0,\ldots,M} \P_{j}\left[ \psi \not = j \right] \triangleq p_{e,M}\,.
    \]
    To lower bound the right-hand side, we apply Proposition 2.3 by \cite{tsybakov2008introduction}. Notice that the maximal $\varepsilon$-packing is $\varepsilon$-net (see Lemma 4.2.6 by \cite{vershynin2018high}). Therefore, by Lemma~\ref{lem:traj_kl_vs_natural_metric} we have
    \begin{align*}
        \frac{1}{M} \sum_{i=1}^M \KL(\P_{\tau_1,\ldots,\tau_N \sim \pi_i} \Vert \P_{\tau_1,\ldots,\tau_N \sim \pi_0}) &=\frac{N}{M} \sum_{i=1}^M \KL_{\traj}(\pi_i \Vert \pi_0) \\
        &\leq n \log(\rme^2/\gamma ) \frac{1}{M} \sum_{i=1}^M \rho^2(\pi_i, \pi_0) \leq D \triangleq\alpha_\star\,.
    \end{align*}
    Thus by Proposition 2.3 by \cite{tsybakov2008introduction}
    \[
        p_{e,M} \geq \sup_{0 < \tau < 1}\left[ \frac{\tau M}{1 + \tau M} \left(1 + \frac{\alpha_\star + \sqrt{\alpha_\star}}{\log(\tau)} \right) \right]\,.
    \]
    Next, we select $\tau_\star$ in a way such that
    \[
        \frac{\alpha_\star + \sqrt{\alpha_\star/2}}{\log(\tau_\star)} = - \frac{1}{2} \iff \log(\tau_\star) = -\frac{1}{2}\left( \alpha_\star + \sqrt{\alpha_\star/2}\right)\,.
    \]
    Therefore we have
    \[
        p_{e,M} \geq \frac{1}{2} \frac{\exp(\log(M) - 1/2( \alpha_\star + \sqrt{\alpha_\star/2}))}{1 + \exp(\log(M) - 1/2( \alpha_\star + \sqrt{\alpha_\star/2}))}\,.
    \]
    Notice that the function $f(x) = \exp(x)/(1 + \exp(x))$ monotonically increasing. Therefore, it is enough to bound the expression under the exponent from below.
    
    Let us assume that $\alpha_\star \geq 5 \iff D \geq 5$. Then we have
    \begin{align*}
        \log(M) - 1/2( \alpha^\star + \sqrt{\alpha_\star/2}) &
        \geq \log(M+1) - \frac{2 + \sqrt{2}}{4} \alpha_\star - \log(2) \\
        &\geq D \log\left( R \sqrt{\frac{4 \log(\rme^2/\gamma) N}{D}}\right) - D \\
        &\geq \frac{D}{2} \left( \log\left( \frac{4\log(\rme^2/\gamma) R^2  \cdot N}{D}\right) - 2 \right)\,.
    \end{align*}
    
    To show that the expression above is non-negative, it is enough to guarantee
    \[
        \log(N) + \log(R^2/D) \geq 2 \iff N \geq \rme^2 D / R^2\,.
    \]
    Under this condition, we have $p_{e,M} \geq 1/4$ concluding the statement.
\end{proof}

\subsubsection{Finite MDPs}

For the case of finite MDPs, we additionally specialize the distributions $\mu_h$ as a uniform over $\cS$ for all $h > 1$ and $\mu_1 = \delta_{s_1}$.

\begin{lemma}
    Let $\simplex_{A, \gamma} = \{ x \in \R^A : \sum_{i=1}^n x_i = 1, x_i \geq \gamma\}$ with $\gamma < 1/A$. Then we have for any $\varepsilon \in (0,1)$
    \[
        \log \cP(\varepsilon, \simplex_{A,\gamma}, d_\cH) | \geq (A-1) \log((1-A\gamma)/(2\varepsilon))\,.
    \]
\end{lemma}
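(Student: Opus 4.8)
The plan is to pass to square-root coordinates, where the Hellinger distance becomes the Euclidean distance, and then lower bound the resulting Euclidean packing number by a volume argument. Concretely, for $p,q \in \simplex_{A,\gamma}$ one has $d_{\cH}^2(p,q) = \sum_{a=1}^A (\sqrt{p_a}-\sqrt{q_a})^2 = \norm{\sqrt p - \sqrt q}_2^2$, so $p \mapsto \sqrt p$ is an isometry from $(\simplex_{A,\gamma}, d_{\cH})$ onto the spherical region $\{u \in \R^A : \norm{u}_2 = 1,\ u_a \geq \sqrt\gamma\}$. Dropping the last coordinate can only decrease Euclidean distances, so $d_{\cH}(p,q) \geq \norm{v - v'}_2$ where $v = (\sqrt{p_a})_{a<A}$, $v' = (\sqrt{q_a})_{a<A}$. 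Moreover $p \mapsto v$ is a bijection of $\simplex_{A,\gamma}$ onto
\[
    U \triangleq \Big\{ v \in \R^{A-1} : v_a \geq \sqrt\gamma,\ \textstyle\sum_{a<A} v_a^2 \leq 1-\gamma \Big\},
\]
since the freed coordinate $p_A = 1 - \sum_{a<A} v_a^2$ is $\geq \gamma$ exactly on $U$. Hence any $\varepsilon$-separated subset of $U$ in $\norm{\cdot}_2$ lifts to an $\varepsilon$-separated subset of $\simplex_{A,\gamma}$ in $d_{\cH}$, giving $\cP(\varepsilon, \simplex_{A,\gamma}, d_{\cH}) \geq \cP(\varepsilon, U, \norm{\cdot}_2)$.

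Next I would invoke the standard volumetric lower bound: a maximal $\varepsilon$-packing is also an $\varepsilon$-cover, whence $\cP(\varepsilon, U, \norm{\cdot}_2) \geq \mathrm{vol}(U)/\mathrm{vol}(B^{A-1}_\varepsilon) = \mathrm{vol}(U)/(V_{A-1}\varepsilon^{A-1})$, with $V_{A-1}$ the volume of the unit ball in $\R^{A-1}$. Combining the two steps, it suffices to establish the volume estimate $\mathrm{vol}(U) \geq V_{A-1}(s/2)^{A-1}$ with $s = 1-A\gamma$, since this yields $\cP(\varepsilon, \simplex_{A,\gamma}, d_{\cH}) \geq (s/(2\varepsilon))^{A-1}$ and the claim follows by taking logarithms — the bound being vacuous, hence trivially true, when $\varepsilon \geq s/2$.

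Computing $\mathrm{vol}(U)$ is where the work lies. Changing variables back to $p_a = v_a^2$ turns Lebesgue measure into the Dirichlet/Jacobian weight, $\mathrm{vol}(U) = 2^{-(A-1)}\int_{\{p_a \geq \gamma,\ \sum_{a<A} p_a \leq 1-\gamma\}} \prod_{a<A} p_a^{-1/2}\, \rmd p$. The factor $\prod p_a^{-1/2}$ blows up near the simplex boundary, which is precisely why the Hellinger packing of $\simplex_{A,\gamma}$ is far larger than the Euclidean packing of the simplex itself, and why a naive reduction via $d_{\cH} \gtrsim \norm{\cdot}_2$ applied directly to the simplex cannot reach this rate. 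Restricting the domain to $\{q \geq 0,\ \sum_{a<A} q_a \leq s\}$ through $q = p - \gamma\mathbf{1}$ and rescaling $q = s\,r$ reduces the task to showing that the shifted Dirichlet integral $J \triangleq \int_{\{r \geq 0,\ \sum r_a \leq 1\}} \prod_{a<A}(r_a + \gamma/s)^{-1/2}\, \rmd r$ is at least $s^{(A-1)/2}$ times its unshifted value $\int \prod r_a^{-1/2}\, \rmd r = V_{A-1}$.

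I expect this final estimate — controlling how much the cutoff $p_a \geq \gamma$ shrinks the Dirichlet integral in high dimension — to be the main obstacle. A union bound over the $A-1$ deleted slabs $\{p_a < \gamma\}$ is too lossy once $\gamma$ is comparable to $1/A$, as it produces a correction of order $A^{3/2}\sqrt{\gamma}$ that can exceed one, even though the true volume retains a large multiplicative margin. Instead I would exploit the exact product structure of the integrand and the closed form of the Dirichlet integral, controlling the ratio $\prod(r_a+\beta)^{-1/2}/\prod r_a^{-1/2}$ against $s^{(A-1)/2}$ under $\sum r_a \leq 1$ with $\beta = \gamma/s$; this is the careful but routine analytic core. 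The stated lemma is then exactly the packing bound needed to feed the minimax lower bound of Theorem~\ref{th:BC_general_lower_bound} in the finite-MDP specialization.
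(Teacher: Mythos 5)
Your first two reductions are sound and are exactly the skeleton of the paper's proof: the square-root map is an isometry of $(\simplex_{A,\gamma},d_{\cH})$ onto a piece of the unit sphere, dropping a coordinate only decreases Euclidean distances so packings of the projected set lift back, and a maximal $\varepsilon$-packing is an $\varepsilon$-cover, giving $\cP(\varepsilon,\simplex_{A,\gamma},d_{\cH}) \geq \mathrm{vol}(U)/(\mathrm{vol}(B^{A-1}_2)\,\varepsilon^{A-1})$. The genuine gap is the step you set aside as the ``careful but routine analytic core'': the estimate $\mathrm{vol}(U)\geq \mathrm{vol}(B^{A-1}_2)\,(s/2)^{A-1}$ with $s=1-A\gamma$ is not merely unproved — it is false in the regime that matters (small $\gamma$; recall the application takes $\gamma=1/(\Nexp+A)$). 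Already for $A=2$ and any $\gamma\in(0,1/4)$ one has $U=[\sqrt{\gamma},\sqrt{1-\gamma}]$, hence $\mathrm{vol}(U)=\sqrt{1-\gamma}-\sqrt{\gamma}<1-\sqrt{\gamma}<1-2\gamma=\mathrm{vol}(B^{1}_2)\,(s/2)$; numerically, $0.895<0.98$ at $\gamma=0.01$. Equivalently, the shifted Dirichlet inequality $J\geq s^{(A-1)/2}\,\mathrm{vol}(B^{A-1}_2)$ that you reduce to fails. The reason is structural, and it is your own observation read in the opposite direction: because $\prod_a p_a^{-1/2}$ blows up at the boundary, clipping each slab $\{p_a<\gamma\}$ removes \emph{a lot} of Hellinger volume — relative order $\sqrt{A\gamma}$ per coordinate — whereas shrinking the radius from $1$ to $1-A\gamma$ only creates relative room of order $A^{2}\gamma$. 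For $\gamma$ below a constant multiple of $1/A$ the loss dominates the slack, so no sharper evaluation of the integral can close this step; your parenthetical hope that ``the true volume retains a large multiplicative margin'' is exactly what the $A=2$ computation refutes.

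This is also precisely where the paper's argument diverges from yours. Rather than lower-bounding the Euclidean volume of the hard-clipped region, the paper first applies the translation $x\mapsto x-\sqrt{\gamma}\mathbf{1}$ to the spherical piece $\{x:\norm{x}_2=1,\ x_i\geq\sqrt{\gamma}\}$ — a rigid motion, so packing numbers are unchanged — and only then projects and invokes the volume bound against an unconstrained orthant portion of a ball of radius $1-A\gamma$, which is what produces the clean factor $((1-A\gamma)/(2\varepsilon))^{A-1}$. One can question how carefully that congruence is justified (the translate of the cap is $\{y\geq0:\norm{y}_2^2+2\sqrt{\gamma}\langle\mathbf{1},y\rangle=1-A\gamma\}$, not a round sphere), but the moral for your write-up is unambiguous: the target rate is only reachable by an argument that avoids paying the $O(\sqrt{\gamma})$-per-coordinate clipping cost in projected Euclidean volume — e.g.\ by a rigid motion as in the paper, or by retaining part of the intrinsic spherical volume of the cap, which strictly exceeds the Euclidean volume of its projection. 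As proposed, your route requires proving a false inequality and cannot be completed without replacing this step.
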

\begin{proof}
    Let $S = \{ x \in \R^A : \sum_{i=1}^n x_i^2 = 1, x_i \geq 0\}$. Then there is a mapping $\phi \colon (S, \norm{\cdot}_2) \to (\simplex_A, d_{\cH})$ that defines an isometry between these two metric spaces:
    \[
        \forall x,y \in S: \norm{x-y}_2 = d_{\cH}(\phi(x), \phi(y)), \quad \phi(x) = \sqrt{x}\,,
    \]
    where the square root is applied component-wise. Therefore, it is enough to estimate the packing number of the preimage of $\simplex_{A,\gamma}$ that is defined as follows \(S_{\gamma}(1) = \{ x \in \R^A : \sum_{i=1}^A x_i^2 = 1, x_i \geq \sqrt{\gamma} \} \) with the same Euclidean metric. 

    The next step is to proceed with a shift $x \mapsto x + \sqrt{\gamma}$ that will be isometry between $S_{\gamma}(1)$ and $S_0(1-A\gamma)$.
    
    Next, we can lower bound the $\ell_2$-distance over the sphere by the $\ell_2$ distance over the first $A-1$ coordinates and therefore it is enough to consider the packing number of $S^{\circ}_0(1-A\gamma) = \cB(0,1) \cap \{ y \in \R^{A-1} : \sum_{i=1}^{A-1} y_i^2 \leq 1 - A\gamma\}$.
    
    Finally, we apply the volume argument. In particular, it is enough to compute the volume of $S'_0(1-A\gamma)$. To do it, we notice that we can represent the ball of radius $1-A\gamma$ by $2^{A-1}$ copy of $S'_0(1-A\gamma)$. Thus
    \[
        \mathrm{vol}(S'_0(1-A\gamma)) = \frac{(1-A\gamma)^{A-1}}{2^{A-1}} \cdot \mathrm{vol}(B^{A-1}_2)\,.
    \]
    Finally we have by Proposition 4.2.12 by \cite{vershynin2018high}
    \[
        \cP(\varepsilon,\simplex_{A,\gamma}, d_\cH) | \geq \left( \frac{1-A\gamma}{2\varepsilon} \right)^{A-1}\,.
    \]
\end{proof}

\begin{lemma}
    Let $(\cX, \rho)_{i=1}^K$ be a metric space such that $\log | \cP_{\varepsilon}(\cX, \rho) | \geq d \log(R/\varepsilon)$ for $d \geq 1$. and define on the space $\cX^{K}$ the following metric
    $
        \rho(x,y) = \frac{1}{K} \sum_{i=1}^K \rho(x_i, y_i).
    $
    Then 
    \[
        \log \cP(\varepsilon, \cX^K, \rho )  \geq  dK/2 \cdot \log(R/(8\varepsilon))\,.
    \]
\end{lemma}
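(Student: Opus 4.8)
The plan is to reduce this lower bound on the packing number of the product space to a Gilbert--Varshamov volume argument over a finite alphabet.

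First I would fix a maximal $2\varepsilon$-separated subset $P \subseteq \cX$, so that $M \triangleq |P| = \cP(2\varepsilon, \cX, \rho)$ satisfies $\log M \geq d\log(R/(2\varepsilon))$ by the hypothesis applied at scale $2\varepsilon$. Identifying each element of $P$ with a symbol of the alphabet $[M]$, every word $x \in P^K \subseteq \cX^K$ is encoded as an element of $[M]^K$. The observation linking the two metrics is that, since distinct points of $P$ are at distance strictly larger than $2\varepsilon$, for any $x,y \in P^K$ one has
\[
\rho(x,y) = \frac{1}{K}\sum_{i=1}^K \rho(x_i, y_i) \geq \frac{2\varepsilon}{K}\, d_{\mathrm{Ham}}(x,y),
\]
where $d_{\mathrm{Ham}}(x,y) = |\{i : x_i \neq y_i\}|$ is the Hamming distance. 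Hence a code $C \subseteq [M]^K$ whose pairwise Hamming distances all exceed $K/2$ automatically has all pairwise $\rho$-distances strictly larger than $\varepsilon$, so $C$ is an $\varepsilon$-separated subset of $\cX^K$ and $\cP(\varepsilon, \cX^K, \rho) \geq |C|$.

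Second I would invoke the Gilbert--Varshamov bound: a maximal code with minimum Hamming distance exceeding $K/2$ must cover $[M]^K$ by Hamming balls of radius $\lfloor K/2\rfloor$, whence
\[
|C| \geq \frac{M^K}{\sum_{j=0}^{\lfloor K/2\rfloor}\binom{K}{j}(M-1)^j}.
\]
I would then bound the denominator crudely, using $(M-1)^j \le M^j \le M^{K/2}$ for $j \le K/2$ together with $\sum_{j=0}^{\lfloor K/2\rfloor}\binom{K}{j} \leq 2^K$, to get denominator $\le M^{K/2}\,2^K$, which yields
\[
\log |C| \geq K\log M - \tfrac{K}{2}\log M - K\log 2 = \tfrac{K}{2}\log M - K\log 2.
\]
Finally, plugging in $\log M \geq d\log(R/(2\varepsilon))$ and rewriting $\log(R/(2\varepsilon)) = \log(R/(8\varepsilon)) + 2\log 2$ gives
\[
\log|C| \geq \frac{dK}{2}\log(R/(8\varepsilon)) + (d-1)K\log 2 \geq \frac{dK}{2}\log(R/(8\varepsilon)),
\]
where the last step uses exactly $d \geq 1$. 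This is precisely where the assumption $d\geq 1$ and the constant $8$ in the statement are consumed.

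The main obstacle, and the only genuinely nonroutine step, is the passage to the coding problem: one must choose both the separation scale in $\cX$ (here $2\varepsilon$) and the minimum-distance fraction (here $1/2$) so that the coordinatewise separation survives the averaging in the product metric, while the Gilbert--Varshamov volume loss $M^{K/2}2^K$ stays small enough to be absorbed by $d\geq 1$; any larger distance fraction would force a finer separation in $\cX$ (worsening the $\log M$ term) and any coarser one would weaken the separation after averaging. The remaining points --- the strictness of the separation inequalities and the floor $\lfloor K/2\rfloor$ --- are routine and can be handled with the crude bounds above.
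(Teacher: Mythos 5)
Your proof is correct and takes essentially the same route as the paper's: both encode a maximal packing of $\cX$ as a finite alphabet, observe that Hamming separation of words in the product space translates, after averaging, into $\rho$-separation, and then apply the Gilbert--Varshamov bound, consuming $d \geq 1$ to absorb the constant losses (the paper works at scale $\varepsilon$ and rescales at the end, you work at scale $2\varepsilon$ directly). The only difference is how the Gilbert--Varshamov denominator is controlled: the paper bounds the corresponding binomial tail via Chernoff--Hoeffding and a lower bound on the binary relative entropy between $1/2$ and $1/q$, whereas you use the cruder but more elementary estimates $\sum_{j \leq K/2}\binom{K}{j} \leq 2^K$ and $(M-1)^j \leq M^{K/2}$; both yield the same $\tfrac{K}{2}\log(\cdot)$ rate.
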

\begin{proof}
    Consider the maximal $\varepsilon$-separable set $P$ of the space $K$. This set could be considered as a finite alphabet of size $q \geq (R/\varepsilon)^d$. Let us consider the set $P^K$ as the set of words in alphabet $P$ of size $q$ of length $K$ with a Hoeffding distance. Then we notice that if there is two words $(x,y) \in P^K$ that have Hoeffding distance at least $\alpha K$ for some constant $\alpha \in (0,1)$, then
    \[
        \rho(x,y) = \frac{1}{K} \sum_i \rho(x_i, y_i) \geq \alpha \varepsilon.
    \]
    Therefore, if we consider an $\alpha K$ separable set in $P^K$ in terms of Hoeffding distance, it will automatically be a $\alpha \varepsilon$-separable set in the original space $\cX^K$. To find such a set, we use the Gilbert–Varshamov bound from coding theory. As a result
    \[
        \cP( \alpha \varepsilon, P^K, \rho) \geq \frac{1}{\sum_{j=1}^{\lceil \alpha K\rceil} \binom{K}{j} (1-1/q)^j \cdot (1/q)^{K-j}}\,.
    \]
    The denominator could be interpreted as follows: Let $X_1,\ldots,X_K$ be $\Ber(1/q)$ random variables. 
    \[
        \sum_{j=1}^{\lceil \alpha K\rceil} \binom{K}{j} (1-1/q)^j \cdot (1/q)^{K-j} = \P\left[\sum_{i=1}^K X_i  \geq (1-\alpha) K\right]\,.
    \]
    To upper bound the last probability, we can apply the Chernoff–Hoeffding theorem
    \[
        \P\left[\frac{1}{K}\sum_{i=1}^K X_i  \geq 1/q + (1-\alpha-1/q)\right] \leq \exp\left( -\kl(1-\alpha \Vert 1/q) \cdot K \right)\,.
    \]
    Take $\alpha=1/2$, then we have
    \[
        \kl(1/2 \Vert 1/q) = \frac{1}{2} \log\left( \frac{q}{2} \right) + \frac{1}{2} \log\left( \frac{q}{q-1} \right) - \frac{1}{2} \log(2) \geq \frac{1}{2} \log\left( \frac{q}{4} \right)\,.
    \]
    Thus, we have since $d \geq 1$
    \[
        \cP(\varepsilon/2,P^K, \rho)  \geq \exp\{K/2 \cdot \log(q/4)\} \geq \exp\{dK/2 \cdot \log(R/(4\varepsilon))\}\,.
    \]
    By rescaling $\varepsilon$ we conclude the statement.
\end{proof}

\begin{corollary}\label{cor:BC_tabular_lower_bound}
    Assume that $\gamma \leq 1/(2A)$. Let us define $\cF_h = \simplex_{A,\gamma}^S$ and $\cF = \cF_1 \times \ldots \cF_H$.
    Then Assumption~\ref{ass:lower_bound_specific_metric} holds with constants $d_h = (A-1)S/2$ for all $h > 1$, $d_1 = (A-1)$ and $R = 1/32$.

    As a result, as soon as $H \geq 2, A \geq 2$, $HSA \geq 40$ and $n \geq 512 \rme^2 HSA$ the following minimax lower bound holds
    \[
        \min_{\hpi} \max_{\pi \in \cF} \E_{\tau_1,\ldots,\tau_N \sim \pi} \left[  \KL_{\traj}(\pi \Vert \hpi)  \right] \geq \frac{HSA}{128 N \log (\rme^2/\gamma)}\,.
    \]
\end{corollary}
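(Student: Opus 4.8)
The plan is to reduce the statement to the master lower bound of Theorem~\ref{th:BC_general_lower_bound}, whose only nontrivial input is Assumption~\ref{ass:lower_bound_specific_metric}, i.e.\ a packing lower bound for each one-step class $\cF_h=\simplex_{A,\gamma}^S$ in the Hellinger-type metric $\rho_h$. So the first and main task is to verify this assumption with the claimed constants $d_1=A-1$, $d_h=(A-1)S/2$ for $h>1$, and $R=1/32$; everything afterward is substitution and constant bookkeeping.

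First I would treat $h=1$. Since $\mu_1=\delta_{s_1}$, the metric $\rho_1$ only sees the single state $s_1$, so $(\cF_1,\rho_1)$ is isometric to $(\simplex_{A,\gamma},d_{\cH})$ and the first lemma of this subsection gives $\log\cP(\varepsilon,\cF_1,\rho_1)\geq (A-1)\log((1-A\gamma)/(2\varepsilon))$. Using $\gamma\leq 1/(2A)$, hence $1-A\gamma\geq 1/2$, this is at least $(A-1)\log(1/(32\varepsilon))$, yielding $d_1=A-1$ and $R=1/32$. For $h>1$, $\mu_h$ is uniform on $\cS$, so $\rho_h(\pi_h,\pi'_h)^2=\tfrac1S\sum_s d_{\cH}^2(\pi_h(\cdot|s),\pi'_h(\cdot|s))$, i.e.\ $\rho_h$ is the root-mean-square of the per-state Hellinger distances over the product $\simplex_{A,\gamma}^S$. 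This is the one delicate point: the tensorization lemma controls the packing of the product only in the \emph{averaged-distance} metric $\bar\rho_h=\tfrac1S\sum_s d_{\cH}$, whereas the general framework uses $\rho_h$. The resolution is the power-mean inequality $\bar\rho_h\leq\rho_h$, so any $\varepsilon$-packing in $\bar\rho_h$ is automatically an $\varepsilon$-packing in $\rho_h$, giving $\cP(\varepsilon,\cF_h,\rho_h)\geq\cP(\varepsilon,\cF_h,\bar\rho_h)$. Applying the tensorization lemma with $\cX=\simplex_{A,\gamma}$, $\rho=d_{\cH}$, $K=S$, $d=A-1$, $R=(1-A\gamma)/2$ then yields $\log\cP(\varepsilon,\cF_h,\rho_h)\geq\frac{(A-1)S}{2}\log((1-A\gamma)/(16\varepsilon))\geq\frac{(A-1)S}{2}\log(1/(32\varepsilon))$, i.e.\ $d_h=(A-1)S/2$ with the same $R=1/32$. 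This establishes Assumption~\ref{ass:lower_bound_specific_metric}.

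Finally I would set $D=\sum_{h=1}^H d_h=(A-1)+(H-1)(A-1)S/2$ and invoke Theorem~\ref{th:BC_general_lower_bound}. For the displayed bound, discarding $d_1$ and using $H-1\geq H/2$ and $A-1\geq A/2$ gives $D\geq (H/2)\cdot (A/2)S/2=HSA/8$, so the theorem's conclusion $\frac{D}{16N\log(\rme^2/\gamma)}$ is at least $\frac{HSA}{128\,N\log(\rme^2/\gamma)}$. It then remains to check the two hypotheses of the theorem: $D\geq HSA/8\geq 5$ whenever $HSA\geq 40$, and, substituting $R=1/32$ (so $1/R^2=1024$) together with the crude upper bound $D\leq HSA$, the requirement $n\geq \rme^2 D/R^2$ is of the same order as the stated threshold $n\geq 512\,\rme^2 HSA$, the small discrepancy in the universal constant being harmless. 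I expect Step~1 to be the only conceptually interesting part, and within it the passage from the averaged-distance metric of the tensorization lemma to the root-mean-square metric $\rho_h$ of the general framework; all subsequent steps are arithmetic verification of the constants.
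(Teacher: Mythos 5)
Your proposal is correct and follows exactly the route the paper intends: the corollary is stated there without a separate proof, as an immediate combination of the simplex packing lemma, the tensorization lemma, and Theorem~\ref{th:BC_general_lower_bound}, which is precisely your reduction. Your two refinements—patching the mismatch between the tensorization lemma's arithmetic-mean metric and the framework's root-mean-square metric $\rho_h$ via the power-mean inequality, and flagging that the crude bound $D \leq HSA$ with $R = 1/32$ yields the requirement $n \geq 1024\,\rme^2 HSA$ rather than the stated $512\,\rme^2 HSA$—concern details the paper itself glosses over, and neither affects the substance of the argument.
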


\subsubsection{Technical lemmas}

\begin{lemma}\label{lem:product_covering_dimension}
    Let $\{ (\cX_i, \rho_i) \}_{i=1,\ldots,K}$ be a collection of relaxed pseudometric spaces that satisfy
    \[
        \forall \varepsilon \in (0,1) :\log \cP(\varepsilon, \cX_i, \rho_i)  \geq d_i \log(R/\varepsilon) 
    \]
    for some constants $0 \leq d_1 \leq d_2, $
    Then the product space $\cX = \cX_1 \times \ldots \cX_K$ with a pseudometric $\rho((x_1,\ldots,x_K), (y_1, \ldots,y_K)) = \sqrt{\sum_{i=1}^K \rho^2_i(x_i,y_i)}$ satisfies
    \[
        \forall \varepsilon \in (0,1): \log \cP(\varepsilon, \cX, \rho)  \geq \sum_{i=1}^K d_i \log( R/\varepsilon)\,.
    \]
\end{lemma}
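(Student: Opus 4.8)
The plan is to tensorize maximal packings: given an $\varepsilon$-separated set in each factor, their Cartesian product is automatically $\varepsilon$-separated in the Euclidean product pseudometric. First I would, for each factor $i \in [K]$, fix a maximal $\varepsilon$-separated set $P_i \subseteq \cX_i$ achieving the packing number, so that $|P_i| = \cP(\varepsilon, \cX_i, \rho_i)$ and any two distinct $x_i \neq y_i$ in $P_i$ satisfy $\rho_i(x_i, y_i) > \varepsilon$. Then I would form the product set $P = P_1 \times \cdots \times P_K \subseteq \cX$ and argue it is an $\varepsilon$-packing of $(\cX, \rho)$.

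The key step is the coordinate-domination inequality for the product pseudometric. Since all summands are non-negative, for any $x = (x_1,\ldots,x_K)$, $y = (y_1,\ldots,y_K)$ and any index $j$ one has $\rho(x,y) = \sqrt{\sum_{i=1}^K \rho_i^2(x_i,y_i)} \geq \rho_j(x_j, y_j)$. Now take two distinct elements $x \neq y$ of $P$; they must disagree in at least one coordinate $j$, and since $x_j, y_j \in P_j$ are then distinct points of the $\varepsilon$-separated set $P_j$, we get $\rho_j(x_j, y_j) > \varepsilon$, whence $\rho(x,y) \geq \rho_j(x_j,y_j) > \varepsilon$. Thus $P$ is a valid $\varepsilon$-packing of $\cX$.

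Consequently $\cP(\varepsilon, \cX, \rho) \geq |P| = \prod_{i=1}^K |P_i| = \prod_{i=1}^K \cP(\varepsilon, \cX_i, \rho_i)$. Taking logarithms turns the product into a sum, and applying the per-factor lower bound $\log \cP(\varepsilon, \cX_i, \rho_i) \geq d_i \log(R/\varepsilon)$ yields $\log \cP(\varepsilon, \cX, \rho) \geq \sum_{i=1}^K d_i \log(R/\varepsilon)$, which is the claim. The stated hypothesis $0 \leq d_1 \leq d_2$ plays no role in this argument.

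I do not expect a genuine obstacle here: the whole content is the inequality $\rho \geq \rho_j$, which makes a product of separated sets separated. The only subtlety worth flagging is that this tensorization works for \emph{packings} but fails in general for \emph{coverings} under the $\ell_2$-product metric, so it is essential that the lemma and its downstream use (Assumption~\ref{ass:lower_bound_specific_metric} and Theorem~\ref{th:BC_general_lower_bound}) are phrased in terms of $\cP$ rather than $\cN$.
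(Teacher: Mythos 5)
Your proof is correct and follows essentially the same route as the paper: form the Cartesian product of maximal $\varepsilon$-separated sets in each factor, observe that two distinct tuples differ in some coordinate $j$ so that $\rho(x,y) \geq \rho_j(x_j,y_j) > \varepsilon$, and take logarithms of the resulting product bound. Your explicit statement of the coordinate-domination inequality and the remark that the hypothesis $0 \leq d_1 \leq d_2$ is unused are minor clarifications of the paper's argument, not a different method.
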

\begin{proof}
    Let $P_1, \ldots, P_K$ be a maximal $\varepsilon$-separated set in the corresponding spaces $\cM_1, \ldots, \cM_K$. Then we want to show that the set $P_1 \times \ldots \times P_K$ is also $\varepsilon$-separated set in the product space. 

    Let $\mathbf{x} \not = \mathbf{x'}$ be two point in $P$. Then
    \[
        \rho(\mathbf{x}, \mathbf{x'}) = \sqrt{\sum_{i=1}^K \rho^2_i(x_i, x_i')} \geq \varepsilon
    \]
    since $\mathbf{x}$ and $\mathbf{x'}$ are different in at least one coordinate. As a result, we have
    \begin{align*}
        \log \cP(\varepsilon, \cM_i, \rho_i) |  \geq \sum_{i=1}^K \log \cP(\varepsilon,\cM_i, \rho_i)   
        \geq \sum_{i=1}^K d_i \log(R/\varepsilon)\,.
    \end{align*}
\end{proof}

\begin{lemma}\label{lem:traj_kl_vs_natural_metric}
    Let $\pi, \pi' \in \Pi_\gamma$. Let $\rho$ be an averaged Hellinger distance distance defined in \eqref{eq:averaged_helinger}. Then the following inequality holds
    \[
        \rho^2(\pi, \pi')\leq \KL_{\traj}(\pi \Vert \pi') \leq \log(\rme^2 / \gamma) \rho^2(\pi, \pi')\,.
    \]
\end{lemma}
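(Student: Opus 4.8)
The plan is to reduce the trajectory-level statement to a single per-step, per-state comparison between the Kullback--Leibler divergence and the squared Hellinger distance of two action distributions. The crucial structural feature of the MDPs used in this section is that the transition kernel $p_h(\cdot\,|\,s,a)=\mu_h$ is independent of $(s,a)$, so the marginal law of $s_h$ equals $\mu_h$ \emph{regardless of the policy being run}. Both functionals therefore decompose along the steps against the same laws $\mu_h$:
\[
\KL_{\traj}(\pi \Vert \pi') = \sum_{h=1}^H \E_{s\sim\mu_h}\big[\KL(\pi_h(s) \Vert \pi'_h(s))\big], \qquad \rho^2(\pi,\pi') = \sum_{h=1}^H \E_{s\sim\mu_h}\big[d^2_{\cH}(\pi_h(s),\pi'_h(s))\big].
\]
Hence it suffices to prove, for every fixed pair of distributions $p=\pi_h(s)$, $q=\pi'_h(s)$ on $\cA$ whose entries are all at least $\gamma$, the two-sided bound $d^2_{\cH}(p,q) \leq \KL(p\Vert q) \leq \log(\rme^2/\gamma)\,d^2_{\cH}(p,q)$; summing over $a$, $h$ and taking $\E_{s\sim\mu_h}$ then gives the lemma.

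For the lower bound I would use the identity $d^2_{\cH}(p,q) = 2\big(1-\sum_a \sqrt{p(a)q(a)}\big)$ together with $\KL(p\Vert q) = -2\sum_a p(a)\log\sqrt{q(a)/p(a)}$ and the elementary inequality $-\log x \geq 1-x$ applied at $x=\sqrt{q(a)/p(a)}$; this yields $\KL(p\Vert q) \geq 2\sum_a\big(p(a)-\sqrt{p(a)q(a)}\big) = d^2_{\cH}(p,q)$ with no condition on $\gamma$.

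For the upper bound I would pass to the $f$-divergence representation. Writing $t = p(a)/q(a)\in[\gamma,1/\gamma]$, the two divergences are $f$-divergences against $q$ with integrands $f_{\KL}(t)=t\log t - t + 1$ and $f_{\cH}(t)=(\sqrt t-1)^2$, both nonnegative and vanishing at $t=1$. It thus suffices to prove the scalar inequality $f_{\KL}(t)\leq \log(\rme^2/\gamma)\,f_{\cH}(t)$ on $(0,1/\gamma]$, after which multiplying by $q(a)$ and summing closes the argument. On $(0,1]$ the ratio $f_{\KL}/f_{\cH}$ is at most its value $2$ at $t=1$, which is $\leq \log(\rme^2/\gamma)$; on $[1,1/\gamma]$ it is enough to show $f_{\KL}(t)\leq(2+\log t)\,f_{\cH}(t)$, since $\log t \leq \log(1/\gamma)$.

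The main obstacle is this last inequality on $[1,1/\gamma]$. I would handle it by the substitution $u=\sqrt t$, under which it becomes the polynomial/logarithmic inequality $2(2u-1)\log u \leq (3u-1)(u-1)$ for $u\geq 1$. This is then settled by a short derivative analysis: the difference $g(u)=(3u-1)(u-1)-2(2u-1)\log u$ satisfies $g(1)=g'(1)=g''(1)=0$ and $g'''(u)=4/u^2+4/u^3>0$, so $g''$, then $g'$, then $g$ are successively nonnegative on $[1,\infty)$. This mirrors the ratio-monotonicity step already used for the variance term in the proof of Lemma~\ref{lem:bernstein_kl_traj}; everything else is bookkeeping.
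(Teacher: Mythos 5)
Your proposal is correct, and it takes a genuinely more self-contained route than the paper. The paper's own proof is two sentences: it makes the same reduction to the per-step, per-state scalar comparison $d^2_{\cH}(p,q)\leq \KL(p\Vert q)\leq \log(\rme^2/\gamma)\,d^2_{\cH}(p,q)$ and then outsources the lower bound to Lemma 2.4 of \cite{tsybakov2008introduction} and the upper bound to Lemma 4 of \cite{yang1998asymptotic}. You prove both cited facts from scratch. Your reduction step is in fact justified more explicitly than in the paper (policy-independence of the state marginals under state-action-independent transitions is exactly why ``it is enough to show'' the scalar statement); your lower bound via $-\log x\geq 1-x$ is the standard proof of Tsybakov's lemma and needs no condition on $\gamma$; and your upper-bound computation checks out: with $u=\sqrt{t}$ the inequality $f_{\KL}(t)\leq(2+\log t)f_{\cH}(t)$ on $[1,1/\gamma]$ is indeed equivalent to $2(2u-1)\log u\leq(3u-1)(u-1)$, and $g(u)=(3u-1)(u-1)-2(2u-1)\log u$ does satisfy $g(1)=g'(1)=g''(1)=0$ with $g'''(u)=4/u^{2}+4/u^{3}>0$, so the chain $g''\geq 0$, $g'\geq 0$, $g\geq 0$ on $[1,\infty)$ goes through. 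The one step you assert rather than prove is the bound $f_{\KL}(t)\leq 2 f_{\cH}(t)$ on $(0,1]$; it is true but tight (the ratio tends to $2$ as $t\to 1$, where both functions vanish, so ``its value at $t=1$'' must be read as a limit), and since you rely on it you should close it — the same device works: with $u=\sqrt{t}$ it reads $h(u)=(3u-1)(u-1)-2u^{2}\log u\geq 0$, and $h(1)=h'(1)=0$ with $h''(u)=-4\log u\geq 0$ on $(0,1]$, so $h'\leq 0$ there and $h$ decreases to $h(1)=0$. As for what each approach buys: the paper's citation-based proof is shorter and leans on classical results, whereas yours keeps the entire argument inside the paper and reuses the ratio-analysis technique the authors already deploy for the variance term in Lemma~\ref{lem:bernstein_kl_traj}, at the cost of the calculus above.
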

\begin{proof}
    It is enough to show that for two measures $p,q \in \simplex_n$ such that $\min_{i} p_i/q_i \geq \gamma$ the following holds
    \[
        d^2_{\cH}(p,q) \leq \KL(p \Vert q) \leq \log(\rme^2 / \gamma) d^2_{\cH}(p,q)\,.
    \]
    The lower bound holds by Lemma 2.4 of \cite{tsybakov2008introduction}, and the upper bound holds by Lemma 4 of  \cite{yang1998asymptotic}.

\end{proof}

\subsection{Imitation Learning Guarantees}
\label{app:behavior_cloning_value}

In this appendix, we present guarantees that give behavior cloning procedure in the setting of imitation learning for finite MDPs and compare obtained results to \citep{ross2010efficient,rajaraman2020toward}.

\paragraph{General expert}

Using Pinsker inequality in the space of trajectories (see Lemma~\ref{lem:performance_diff_KL}) and the fact the expert policy is $\epsilonexp$-optimal we deduce the following bound on the optimality gap of the behavior cloning policy with probability at least $1-\delta$
\[
\Vstar_1(s_1) -V_1^{\piBC}(s_1) \leq \epsilonexp + \tcO\left(\sqrt{\frac{SAH^3}{\Nexp}}\right)\,.
\]
We remark that we obtain a similar rate as in BPI, see for example \citet{menard2021fast}, where instead of observing $\Nexp$ demonstrations, we collect the same number of trajectories (also observing the rewards) by interacting sequentially with the MDPs. This seems a bit counter-intuitive since we expect to learn faster by directly observing the expert. 

However, we get an improved rate for the deterministic expert using the following variance-aware Pinkser inequality.
\begin{lemma}\label{lem:policy_difference_bound}
    Let $\pi$ and $\pi'$ be arbitrary policies. Then, the following upper bound holds
    \[
        V^{\pi}_1(s_1) - V^{\pi'}_1(s_1) \leq \sqrt{4 \E_{\pi}\left[  \sum_{h=1}^H \Var_{\pi} Q^{\pi'}_h (s_h) \right] \cdot \KL_{\traj}(\pi \Vert \pi')} + 5 H \KL_{\traj}(\pi \Vert \pi')\,.
    \]
\end{lemma}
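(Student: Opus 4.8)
The plan is to combine the value (performance) difference lemma with a per-state, variance-aware refinement of Pinsker's inequality, and then to aggregate the per-step contributions by Cauchy--Schwarz. First I would apply the standard value difference lemma, which yields
\[
V^{\pi}_1(s_1) - V^{\pi'}_1(s_1) = \E_{\pi}\Big[\sum_{h=1}^H \big\langle \pi_h(s_h) - \pi'_h(s_h),\, Q^{\pi'}_h(s_h,\cdot)\big\rangle\Big]\,,
\]
using that $\E_{a\sim\pi_h(s)}[Q^{\pi'}_h(s,a)] - V^{\pi'}_h(s) = \langle \pi_h(s)-\pi'_h(s), Q^{\pi'}_h(s,\cdot)\rangle$. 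This reduces the claim to bounding each inner product by the local variance $\Var_{\pi} Q^{\pi'}_h(s)$ and the local divergence $\KL(\pi_h(s)\Vert\pi'_h(s))$.

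The core step is a per-state bound. Writing $P=\pi_h(s)$, $Q=\pi'_h(s)$, the likelihood ratio $w = Q/P$, and the centered $Q$-value $\widetilde Q = Q^{\pi'}_h(s,\cdot) - \E_P[Q^{\pi'}_h(s,\cdot)]$, one has $\langle P-Q, Q^{\pi'}_h(s,\cdot)\rangle = \E_P[(1-w)\widetilde Q]$ (since $\E_P[w]=1$ and $\E_P[\widetilde Q]=0$), while $\KL(P\Vert Q) = \E_P[\psi(w)]$ with $\psi(w)=w-1-\log w\ge 0$. I would split according to the magnitude of $w$. On $\{w\le 2\}$, Cauchy--Schwarz together with the elementary inequality $(1-w)^2\le 4\,\psi(w)$ gives a contribution bounded by $\sqrt{4\,\Var_{\pi}Q^{\pi'}_h(s)\,\KL(P\Vert Q)}$, since $\E_P[(1-w)^2\ind\{w\le2\}]\le 4\KL(P\Vert Q)$ and $\E_P[\widetilde Q^2\ind\{w\le 2\}]\le \Var_{\pi}Q^{\pi'}_h(s)$. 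On $\{w>2\}$ the chi-square/KL comparison fails, so I would instead use boundedness: $(1-w)\widetilde Q \le (w-1)\lVert \widetilde Q\rVert_\infty$ with $\lVert \widetilde Q\rVert_\infty\le H$, and the elementary inequality $(w-1)\le 5\,\psi(w)$ valid for $w>2$, which yields a contribution bounded by $5H\,\KL(P\Vert Q)$. Combining the two regions gives the per-state estimate $\langle \pi_h(s)-\pi'_h(s), Q^{\pi'}_h(s,\cdot)\rangle \le \sqrt{4\,\Var_{\pi}Q^{\pi'}_h(s)\,\KL(\pi_h(s)\Vert\pi'_h(s))} + 5H\,\KL(\pi_h(s)\Vert\pi'_h(s))$.

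Finally I would aggregate. Summing the per-state bound over $h$ and taking $\E_{\pi}$, the linear terms add up to $5H\,\KL_{\traj}(\pi\Vert\pi')$ by definition of the trajectory divergence. For the square-root terms, applying Cauchy--Schwarz with respect to the joint measure indexed by $(h,\text{trajectory})$,
\[
\E_{\pi}\Big[\sum_{h=1}^H \sqrt{\Var_{\pi}Q^{\pi'}_h(s_h)\,\KL(\pi_h(s_h)\Vert\pi'_h(s_h))}\Big] \le \sqrt{\E_{\pi}\Big[\sum_{h=1}^H \Var_{\pi}Q^{\pi'}_h(s_h)\Big]}\cdot\sqrt{\KL_{\traj}(\pi\Vert\pi')}\,,
\]
which produces exactly the $\sqrt{4\,\E_{\pi}[\sum_h \Var_{\pi}Q^{\pi'}_h(s_h)]\,\KL_{\traj}(\pi\Vert\pi')}$ term and completes the proof. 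I expect the main obstacle to be the per-state inequality, and within it the large-ratio region $\{w>2\}$ (where $\pi'_h\gg\pi_h$): there one cannot control chi-square by KL and must exploit the boundedness $Q^{\pi'}_h\in[0,H]$, so verifying the two scalar inequalities $(1-w)^2\le 4\psi(w)$ on $(0,2]$ and $(w-1)\le 5\psi(w)$ on $(2,\infty)$ is the crux of the argument.
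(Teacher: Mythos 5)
Your route is genuinely different from the paper's — the paper also starts from the performance-difference lemma, but then applies a Bernstein-type change-of-measure inequality (Lemma~\ref{lem:Bernstein_via_kl_upper}, proved via Donsker--Varadhan), which produces the variance under $\pi'$, and must then invoke the variance-switching Lemma~\ref{lem:switch_variance_bis} ($\Var_{\pi'} \le 2\Var_{\pi} + 4H^2\KL$) plus $\sqrt{a+b}\le\sqrt a+\sqrt b$ to convert to $\Var_\pi$; your likelihood-ratio split yields $\Var_\pi$ directly, and your two scalar inequalities $(1-w)^2\le 4\psi(w)$ on $(0,2]$ and $w-1\le 5\psi(w)$ on $(2,\infty)$ do hold, as does the final Cauchy--Schwarz aggregation. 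However, there is one genuine gap: the identity $\langle \pi_h(s)-\pi'_h(s),\,Q^{\pi'}_h(s,\cdot)\rangle = \E_P[(1-w)\widetilde Q]$ rests on $\E_P[w]=1$, and $\E_P[w]=\sum_{a:P(a)>0}Q(a)=Q(\mathrm{supp}(P))$, which equals $1$ only if $\pi'_h(\cdot|s)\ll\pi_h(\cdot|s)$. The lemma is stated for arbitrary policies, and in the application the paper actually cares about — a deterministic expert $\pi=\piexp$ against the full-support behavior-cloning policy $\pi'=\piBC$, precisely the case where the variance term vanishes and the fast rate is claimed — this absolute continuity fails. Concretely, take $P$ a point mass on $a_1$, $Q=(1-\epsilon,\epsilon)$, $Q^{\pi'}_h(s,a_1)=H$, $Q^{\pi'}_h(s,a_2)=0$: the left-hand side is $\epsilon H$, while $\E_P[(1-w)\widetilde Q]=0$ because $\widetilde Q(a_1)=0$, and $\E_P[\psi(w)]=\psi(1-\epsilon)=O(\epsilon^2)$; your per-state bound would then assert $\epsilon H\le O(H\epsilon^2)$, which is false. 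So as written, the proof does not cover the case it is most needed for.

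The hole is repairable. In general $\langle P-Q,\,Q^{\pi'}_h(s,\cdot)\rangle = \E_P[(1-w)\widetilde Q] - \sum_{a:P(a)=0}Q(a)\widetilde Q(a)$, and the correction term is at most $Hm$ with $m=Q(\{a:P(a)=0\})$. By data processing applied to the partition $\{\mathrm{supp}(P),\mathrm{supp}(P)^{c}\}$ one has $\KL(P\Vert Q)\ge -\log(1-m)\ge m$; moreover, your regional bounds hold with $\E_P[\psi(w)]=\KL(P\Vert Q)-m$ in place of $\KL(P\Vert Q)$, so the total becomes $2\sqrt{\Var_P\,\KL(P\Vert Q)} + 5H\bigl(\KL(P\Vert Q)-m\bigr) + Hm \le 2\sqrt{\Var_P\,\KL(P\Vert Q)} + 5H\,\KL(P\Vert Q)$, and even the constant $5$ survives. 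With that correction your argument is complete, and it is arguably more elementary and self-contained than the paper's (no MGF bounds, no separate variance-switching lemma); without it, the proof has a step that fails exactly on deterministic experts.
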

\begin{proof}
    Let us start from Lemma~\ref{lem:policy_difference_decomp} and Lemma~\ref{lem:Bernstein_via_kl_upper}.
    \begin{align*}
        V^{\pi}_1(s_1) - V^{\pi'}_1(s_1) &= \E_{\pi}\left[ \sum_{h=1}^H [\pi_h - \pi'_h] Q^{\pi'}_h(s_h) \right] \\
        &\leq  \E_{\pi}\biggl[ \sum_{h=1}^H \sqrt{2 \Var_{\pi'} Q^{\pi'}_h (s_h) \cdot \KL(\pi_h(s_h) \Vert \pi'_h(s_h) ) } + \frac{H}{3}   \KL(\pi_h(s_h) \Vert \pi'_h(s_h)) \biggl] \\
        &\leq \sqrt{ 2 \E_{\pi}\left[  \sum_{h=1}^H \Var_{\pi'} Q^{\pi'}_h (s_h) \right]} \cdot \sqrt{ \KL_{\traj}(\pi \Vert \pi')} + \frac{H}{3}  \KL_{\traj}(\pi \Vert \pi')\,,
    \end{align*}
    where in the last line we have applied Cauchy-Schwartz inequality. Next we apply Lemma~\ref{lem:switch_variance_bis} and obtain
    \[
        \E_{\pi}\left[  \sum_{h=1}^H \Var_{\pi'} Q^{\pi'}_h (s_h)\right] \leq  2 \E_{\pi}\left[  \sum_{h=1}^H \Var_{\pi} Q^{\pi'}_h (s_h)\right] + 4 H^2 \KL_{\traj}(\pi \Vert \pi')\,.
    \]
    By inequality $\sqrt{a+b} \leq \sqrt{a} + \sqrt{b}$ for any $a,b \geq 0$ we have
    \[
        V^{\pi}_1(s_1) - V^{\pi'}_1(s_1) \leq \sqrt{4 \E_{\pi}\left[  \sum_{h=1}^H \Var_{\pi} Q^{\pi'}_h (s_h) \right] \cdot \KL_{\traj}(\pi \Vert \pi')} + 5 H \KL_{\traj}(\pi \Vert \pi')\,.
    \]    
\end{proof}

\paragraph{Deterministic expert} If we assume that the expert policy is deterministic, for example, a deterministic optimal policy, then we can improve the bound on the optimality gap since the variance term in Lemma~\ref{lem:policy_difference_bound} is zero,
\[\Vstar_1(s_1) -V_1^{\piBC}(s_1) \leq \epsilonexp + \tcO\left(\frac{SAH^2}{\Nexp} \right)\,.\] 

\citet{ross2010efficient} also consider behavior cloning with a deterministic expert and provide a bound in terms of the classification-type error of the behavior cloning policy to imitate the expert
\[\Vstar_1(s_1) -V_1^{\piBC}(s_1) \leq \epsilonexp + e_{\mathrm{E}}(\piBC)\text{ where } e_{\mathrm{E}}(\piBC) = \frac{1}{H} \E^{\piBC}\left[\sum_{h=1}^H\ind\{\piexp(a_h|s_h)\neq 1\}\right]\,.\]
We can easily recover our bound on the optimality gap of the behavior cloning policy from their bound by noting that \smath{$e_{\mathrm{E}}(\piBC) \leq \KLtraj(\piexp \Vert \piBC)/H$}, see Lemma~\ref{lem:det_error_through_KL} for a proof. In particular, we also remark that the bound scales quadratically with the horizon \smath{$H$} and linearly with the number of actions and states \smath{$SA$}.

By comparing this bound to the lower bound in Theorem~1.1 of \citet{rajaraman2020toward} we see that it is optimal in its dependence on \(S,H\) and \(N\). Additional dependence on a number of actions comes from the fact that our behavior cloning algorithm always outputs stochastic policy and obtains additional dependence on a number of actions.

We would like to underline that the bound in Lemma~\ref{lem:policy_difference_bound} directly does not give any insights on the performance of the algorithm in the case of non-deterministic optimal expert, whereas \citet{rajaraman2020toward} provides $\tcO(SH^2/\Nexp)$ guarantees using a non-regularized behavior cloning algorithm. It is connected to the fact that for the optimal policy, it is enough to determine the subset of the support of $\pistar(s)$ to achieve the policy with the same value.

Finally, we would like to emphasize that our approach is directly generalized to arbitrary parametric function approximation setting, whereas the approach of \citet{rajaraman2020toward} could be applied only in the setting of finite MDPs.

\subsubsection{Technical Lemmas for Imitation Learning}

\begin{lemma}\label{lem:performance_diff_KL}
    Let $\cM = (\cS, \cA, \{p_h\}_{h=1}^H, \{r_h\}_{h=1}^H,s_1)$ be a finite MDP and let $\pi$ and $\pi'$ be two any policies in $\cM$. Then 
    \[
        V^\pi_1(s_1) - V^{\pi'}_1(s_1) \leq H\sqrt{\KL_{\traj}(\pi \Vert \pi')/2}\,.
    \]
\end{lemma}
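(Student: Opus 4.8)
The plan is to lift the comparison from the level of policies to the level of trajectory distributions, where the bound becomes an instance of the elementary fact that bounded test functions cannot distinguish close distributions. Write $\P^\pi$ and $\P^{\pi'}$ for the laws on the trajectory space $\cT = (\cS\times\cA)^H$ induced by running $\pi$ and $\pi'$ from $s_1$, and recall that the value of a policy is the expected cumulative reward, $V^\pi_1(s_1) = \E_{\tau\sim\pi}[r(\tau)]$ with $r(\tau) = \sum_{h=1}^H r_h(s_h,a_h)$. Since each $r_h$ takes values in $[0,1]$, the trajectory reward satisfies $r(\tau)\in[0,H]$, so the value gap is the integral of a function of range at most $H$ against the signed measure $\P^\pi - \P^{\pi'}$.

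First I would bound this value gap by the total variation distance: for any measurable $f$ with $0 \le f \le H$ one has $\E_{\P^\pi}[f] - \E_{\P^{\pi'}}[f] \le H\cdot\mathrm{TV}(\P^\pi,\P^{\pi'})$, which applied to $f = r$ gives $V^\pi_1(s_1) - V^{\pi'}_1(s_1) \le H\cdot\mathrm{TV}(\P^\pi,\P^{\pi'})$. Next I would invoke Pinsker's inequality, $\mathrm{TV}(\P^\pi,\P^{\pi'}) \le \sqrt{\KL(\P^\pi\Vert\P^{\pi'})/2}$, which reduces the task to identifying the trajectory-level KL divergence.

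The key step is the chain-rule computation showing $\KL(\P^\pi\Vert\P^{\pi'}) = \KL_{\traj}(\pi\Vert\pi')$. Factorizing both laws as $\P^\pi(\tau) = \prod_{h=1}^H \pi_h(a_h|s_h)\prod_{h=1}^{H-1}p_h(s_{h+1}|s_h,a_h)$ and likewise for $\pi'$, the transition factors are identical because both policies act on the same dynamics, so they cancel in the log-likelihood ratio and
\[
\KL(\P^\pi\Vert\P^{\pi'}) = \E_{\tau\sim\pi}\left[\sum_{h=1}^H \log\frac{\pi_h(a_h|s_h)}{\pi'_h(a_h|s_h)}\right].
\]
Conditioning on $s_h$ and using the tower property, the inner expectation at step $h$ is exactly $\E_{a_h\sim\pi_h(s_h)}[\log(\pi_h(a_h|s_h)/\pi'_h(a_h|s_h))] = \KL(\pi_h(s_h)\Vert\pi'_h(s_h))$; summing over $h$ recovers $\KL_{\traj}(\pi\Vert\pi')$ by its definition. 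Chaining the three displays yields the claim.

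I expect the main subtlety to be the cancellation of the transition dynamics in the trajectory KL: it is precisely what makes the bound depend only on the policy discrepancy and not on the (identical) environment, and it relies crucially on $\pi$ and $\pi'$ sharing the same $p_h$. The boundedness-plus-Pinsker portion is routine.
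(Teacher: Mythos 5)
Your proposal is correct and follows essentially the same route as the paper's proof: both pass to the induced trajectory distributions, bound the value gap by $H$ times the total variation distance via the variational formula, apply Pinsker's inequality, and identify the trajectory-level KL with $\KL_{\traj}(\pi \Vert \pi')$ by the chain rule (where the transition kernels cancel). The only difference is that you spell out the chain-rule cancellation explicitly, which the paper invokes in one line.
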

\begin{proof}
    Let us define the trajectory distribution $q^\pi(\tau)$ for $\tau = (s_1,a_1,\ldots,s_H,a_H)$. Then by the chain rule for KL-divergence we have $ \KL(q^\pi \Vert q^{\pi'}) = \KL_{\traj}(\pi \Vert \pi')$.

    Since $r_h \in [0,1]$, we may apply a variational formula for total variation distance and Pinkser's inequality
    \begin{align*}
        \left| V^\pi_1(s_1) - V^{\pi'}_1(s_1) \right| &\leq \left|\sum_{h=1}^H  \E_{\pi}[r_h(s_h,a_h)] - \E_{\pi'} [r_h(s_h,a_h)] \right| \\
        &\leq  H\TV( q^\pi, q^{\pi'})  \leq H\sqrt{\KL_{\traj}(\pi \Vert \pi')/2}\,.
    \end{align*}
\end{proof}
 Let us assume that a policy $\pi$ is deterministic, then define the following notion of the imitation learning error
\[
    e_{\pi}(\pi') \triangleq \frac{1}{H} \sum_{h=1}^H \E_{\pi}\left[ \sum_{a\in \cA} \pi'(a|s_h) \ind\{ a \not = \pi(s_h) \} \right]\,.
\]
Notice that
\[
    \sum_{a\in \cA} \pi'(a|s_h) \ind\{ a \not = \pi(s_h) \} = \frac{1}{2} \sum_{a\in \cA} \vert \pi'(a|s_h) - \pi(a|s_h) \vert = \TV( \pi(s_h), \pi'(s_h))\,.
\]
Therefore, this quantity could be decomposed as follows
\[
    e_{\pi}(\pi') = \frac{1}{H} \sum_{h=1}^H  \E_{\pi}\left[ \TV( \pi(s_h), \pi'(s_h)) \right]\,.
\]
\begin{lemma}\label{lem:det_error_through_KL}
    Let $\pi$ be a deterministic policy and $\pi'$ be any policy. Then
    \[
        e_{\pi}(\pi') \leq \frac{\KL_{\traj}(\pi \Vert \pi')}{H}\,.
    \]
\end{lemma}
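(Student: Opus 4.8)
The plan is to reduce the trajectory-level statement to a pointwise comparison between total variation and Kullback--Leibler divergence at each state, and then exploit the deterministic structure of $\pi$. First I would unfold both sides using their definitions: the left-hand side is $e_{\pi}(\pi') = \frac{1}{H}\sum_{h=1}^H \E_{\pi}[\TV(\pi_h(s_h),\pi'_h(s_h))]$ and the right-hand side is $\frac{1}{H}\E_{\pi}[\sum_{h=1}^H \KL(\pi_h(s_h)\Vert \pi'_h(s_h))]$. Cancelling the common factor $1/H$ and using linearity of expectation, it suffices to establish, for every step $h$ and every state $s$ reachable under $\pi$, the single-step bound $\TV(\pi_h(s),\pi'_h(s)) \leq \KL(\pi_h(s)\Vert \pi'_h(s))$.

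The key step is to use that $\pi$ is deterministic, so $\pi_h(\cdot\,|s)$ is the point mass $\delta_{a^\star}$ at the action $a^\star = \pi_h(s)$. For such a distribution a direct computation gives $\TV(\pi_h(s),\pi'_h(s)) = 1 - \pi'_h(a^\star|s)$ (each of the two equal contributions to the $\ell_1$ distance equals $1-\pi'_h(a^\star|s)$, and the factor $1/2$ in the definition of total variation cancels the factor $2$), while $\KL(\pi_h(s)\Vert \pi'_h(s)) = -\log \pi'_h(a^\star|s)$, since only the atom $a^\star$ contributes to the KL sum.

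It then remains to verify the elementary scalar inequality $1-x \leq -\log x$ for $x = \pi'_h(a^\star|s) \in (0,1]$, which is exactly the standard bound $\log x \leq x-1$ valid for all $x>0$, with equality at $x=1$. Combining the two displays above with this inequality yields the pointwise bound, and taking the expectation $\E_{\pi}$ and summing over $h$ concludes. I do not expect any genuine obstacle here: the only thing to notice is that the deterministic structure makes the single-step total variation controllable directly by the KL divergence (indeed through a sharper inequality than the generic Pinsker bound $\TV \leq \sqrt{\KL/2}$), so that no square root is introduced and the clean factor $1/H$ survives the argument.
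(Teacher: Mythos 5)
Your proof is correct and takes essentially the same route as the paper: both use determinism to compute the one-step quantities exactly, namely $\TV(\pi_h(s_h),\pi'_h(s_h)) = 1-\pi'_h(\pi_h(s_h)|s_h)$ and $\KL(\pi_h(s_h)\Vert\pi'_h(s_h)) = -\log \pi'_h(\pi_h(s_h)|s_h)$, and then invoke the elementary inequality $\log x \le x-1$ (the paper states it equivalently as $\log(1-x)\le -x$ with $x$ the total variation). Taking expectation under $\pi$ and averaging over $h$ then gives the claim exactly as you describe.
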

\begin{proof}
    If the policy $\pi$ is deterministic, then
    \begin{align*}
        \KL(\pi_h(s_h) \Vert \pi'_h(s_h)) &= \log\left( \frac{1}{\pi'_h(a_h|s_h)} \right) = \log\left( \frac{1}{\pi'_h(\pi_h(s_h)|s_h)} \right) \\
        &= \log\left( \frac{1}{1- \sum_{a\in \cA} \pi_h'(a|s_h) \ind\{ a \not = \pi_h(s_h) \} } \right) \\
        &= -\log\left( 1- \TV( \pi_h(s_h), \pi'_h(s_h))\right)\,.
    \end{align*}
    By an inequality $\log(1-x) \leq -x$ for any $x > 0$ we have $\KL(\pi_h(s_h) \Vert \pi'_h(s_h)) \geq \TV( \pi_h(s_h), \pi'_h(s_h))$.
\end{proof}

\new{The following lemma is known as performance-difference lemma (see, e.g., \cite{kakade2002approximately} for a statement in the discounted setting). We provide proof for completeness.}
\begin{lemma}\label{lem:policy_difference_decomp}
    Let $\pi$ and $\pi'$ be arbitrary policies. Then, the following decomposition holds
    \begin{align*}
        V^{\pi}_1(s_1) - V^{\pi'}_1(s_1) = \E_{\pi}\left[ \sum_{h=1}^H [\pi_h - \pi'_h] Q^{\pi'}_h(s_h) \right]\,.
    \end{align*}
\end{lemma}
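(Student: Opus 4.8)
The plan is to establish a one-step recursion for the value gap $V^\pi_h(s) - V^{\pi'}_h(s)$ and then unroll it along trajectories generated by $\pi$. First I would use the Bellman equations $V^\pi_h(s) = \pi_h Q^\pi_h(s)$ and $V^{\pi'}_h(s) = \pi'_h Q^{\pi'}_h(s)$ and insert the cross term $\pi_h Q^{\pi'}_h(s)$, which yields
\begin{align*}
V^\pi_h(s) - V^{\pi'}_h(s) = \pi_h\bigl(Q^\pi_h - Q^{\pi'}_h\bigr)(s) + [\pi_h - \pi'_h]Q^{\pi'}_h(s)\,.
\end{align*}

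Since the immediate rewards cancel in the difference of $Q$-functions, we have $Q^\pi_h(s,a) - Q^{\pi'}_h(s,a) = p_h(V^\pi_{h+1} - V^{\pi'}_{h+1})(s,a)$, so the first term above equals $\E_{a\sim\pi_h(s),\, s'\sim p_h(s,a)}[V^\pi_{h+1}(s') - V^{\pi'}_{h+1}(s')]$. This produces the recursion
\begin{align*}
V^\pi_h(s) - V^{\pi'}_h(s) = [\pi_h - \pi'_h]Q^{\pi'}_h(s) + \E_{\pi}\bigl[ V^\pi_{h+1}(s_{h+1}) - V^{\pi'}_{h+1}(s_{h+1}) \mid s_h = s \bigr]\,.
\end{align*}
Next I would iterate this identity from $h=1$ up to $h=H$. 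Because at each step the action and the next state are sampled under $\pi$, the nested conditional expectations compose into the full trajectory expectation $\E_\pi$, while the boundary term vanishes thanks to the convention $V^\pi_{H+1} = V^{\pi'}_{H+1} = 0$. Evaluating at $s = s_1$ then gives exactly
\begin{align*}
V^\pi_1(s_1) - V^{\pi'}_1(s_1) = \E_\pi\Bigl[ \sum_{h=1}^H [\pi_h - \pi'_h] Q^{\pi'}_h(s_h) \Bigr]\,.
\end{align*}

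The only point requiring care is the bookkeeping of the expectation operator when unrolling: one must check that the nested conditional expectations generated by the recursion are precisely those induced by rolling out $\pi$, so that each accumulated gap term $[\pi_h - \pi'_h]Q^{\pi'}_h(s_h)$ is integrated against the $\pi$-trajectory law rather than some other distribution. This is a routine induction on $h$ once the one-step identity is in place, and there is no genuine analytic obstacle. Note in particular that only $V^\pi$ is ``rolled forward'' while the $Q$-values appearing in the penalty are those of $\pi'$, which is what makes the decomposition useful for the subsequent variance-based bounds.
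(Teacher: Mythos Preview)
Your proposal is correct and matches the paper's proof essentially line for line: the paper also adds and subtracts $\pi_h Q^{\pi'}_h(s)$ to obtain the one-step identity $V^\pi_h - V^{\pi'}_h = [\pi_h-\pi'_h]Q^{\pi'}_h + \pi_h p_h(V^\pi_{h+1}-V^{\pi'}_{h+1})$ and then closes by (backward) induction using the tower property and the terminal convention $V^\pi_{H+1}=V^{\pi'}_{H+1}=0$. The only cosmetic difference is that the paper phrases the unrolling as a backward induction over $h$, while you describe it as iterating forward from $h=1$; the content is identical.
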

\begin{proof}
    Let us proceed by backward induction over $h\in [H]$. We want to show the following bound
    \[
        V^{\pi}_h(s) - V^{\pi'}_h(s) =  \E_{\pi}\left[ \sum_{h'=1}^H [\pi_{h'} - \pi'_{h'}] Q^{\pi'}_{h'}(s_{h'}) | s_h  = s\right]\,.
    \]
    For $h=H+1$ both sides of the equation above are equal to zero. Let us assume that the statement holds for any $h' > h$. Then we have
    \begin{align*}
        V^{\pi}_h(s) - V^{\pi'}_h(s) &= \pi Q^{\pi}_h(s) - \pi' Q^{\pi'}_h(s) = \pi [Q^\pi_h - Q^{\pi'}_h](s) + [\pi - \pi'] Q^{\pi'}(s) \\
        &= \E_{\pi}\left[ V^{\pi}_{h+1}(s_{h+1}) - V^{\pi'}_{h+1}(s_{h+1}) +  [\pi - \pi'] Q^{\pi'}(s_h)  | s_h=s\right]\,.
    \end{align*}
    By the induction hypothesis and the tower property of mathematical expectation, we conclude the statement.
\end{proof}

\newpage

\section{Proof for Demonstration-regularized RL}\label{app:dem_reg_rl_proof}

\begin{theorem*}[Restatement of Theorem~\ref{th:dem_reg_general}]
    Assume that there are an expert policy \smath{$\piexp$} such that \smath{$\Vstar_1(s_1) - V^{\piexp}_1(s_1) \leq \epsilonexp$} and  a behavior cloning policy \smath{$\piBC$}  satisfying {\small$\sqrt{\KL_{\traj}(\piexp \Vert \piBC)} \leq \epsilon_{\KL}$}. Let \smath{$\piRL$} be \smath{$\epsilonRL$}-optimal policy in \smath{$\lambda$}-regularized MDP with respect to \smath{$\piBC$}, that is, 
    {\small$\Vstar_{\piBC,\lambda,1}(s_1) - V^{\piRL}_{\piBC, \lambda, 1} \leq \epsilonRL\,.
    $}
    Then \smath{$\piRL$} fulfills
    \[
        \Vstar_1(s_1) - V^{\piRL}_1(s_1) \leq \epsilonexp + \epsilonRL + \lambda \epsilon^2_{\KL}.
    \]
    In particular, under the choice \smath{$\lambda^\star = \varepsilon_{\RL}/ \varepsilon^2_{\KL},$} the policy \smath{$\piBC$} is \smath{$(2\varepsilon_{\RL} + \epsilonexp)$}-optimal in the original  (non-regularized) MDP.
\end{theorem*}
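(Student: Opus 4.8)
The plan is to establish the claimed bound through a short chain of inequalities that passes from the unregularized value of $\piRL$ down to the optimal regularized value, and then compares the latter against the regularized value of the expert policy $\piexp$. The only structural facts I need are the definition $V^\pi_{\piBC,\lambda,1}(s_1) = V^\pi_1(s_1) - \lambda\KLtraj(\pi,\piBC)$ and the nonnegativity of the trajectory KL-divergence.

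First I would discard the regularization penalty in the value of $\piRL$: since $\KLtraj(\piRL,\piBC)\ge 0$, we have
\[
V^{\piRL}_1(s_1) = V^{\piRL}_{\piBC,\lambda,1}(s_1) + \lambda \KLtraj(\piRL,\piBC) \geq V^{\piRL}_{\piBC,\lambda,1}(s_1)\,.
\]
Next I invoke the $\epsilonRL$-optimality of $\piRL$ in the regularized MDP, giving $V^{\piRL}_{\piBC,\lambda,1}(s_1) \geq \Vstar_{\piBC,\lambda,1}(s_1) - \epsilonRL$. The key comparison step is then to lower bound the optimal regularized value by the regularized value attained at the expert policy, using that $\piexp$ is a feasible competitor in the maximization defining $\Vstar_{\piBC,\lambda,1}$:
\[
\Vstar_{\piBC,\lambda,1}(s_1) \geq V^{\piexp}_{\piBC,\lambda,1}(s_1) = V^{\piexp}_1(s_1) - \lambda \KLtraj(\piexp,\piBC)\,.
\]

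To finish I would substitute the two hypotheses $V^{\piexp}_1(s_1) \geq \Vstar_1(s_1) - \epsilonexp$ and $\KLtraj(\piexp,\piBC) \leq \epsilon^2_{\KL}$, which together yield $V^{\piexp}_{\piBC,\lambda,1}(s_1) \geq \Vstar_1(s_1) - \epsilonexp - \lambda\epsilon^2_{\KL}$. Concatenating all of the above displays gives
\[
V^{\piRL}_1(s_1) \geq \Vstar_1(s_1) - \epsilonexp - \epsilonRL - \lambda\epsilon^2_{\KL}\,,
\]
which is exactly the first claim after rearranging. The final assertion follows by plugging in $\lambda^\star = \epsilonRL/\epsilon^2_{\KL}$, so that $\lambda^\star \epsilon^2_{\KL} = \epsilonRL$ and the right-hand side becomes $\epsilonexp + 2\epsilonRL$.

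I do not anticipate a genuine obstacle here: the argument is essentially bookkeeping with the definition of the regularized value. The one point requiring care is keeping the inequality directions consistent — dropping the KL penalty must \emph{lower} the value of $\piRL$ (so we move to a smaller quantity we can control), whereas using $\piexp$ as a suboptimal comparator for $\Vstar_{\piBC,\lambda,1}$ introduces the $\lambda\epsilon^2_{\KL}$ term in the correct direction. The remark accompanying the theorem, namely that the KL error appears under a square root because KL controls total variation quadratically via Pinsker, is not needed for this proof itself but explains the eventual scaling when the bound is combined with the behavior-cloning guarantees of Corollaries~\ref{cor:il-tab} and~\ref{cor:lin-BC}.
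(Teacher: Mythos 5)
Your proof is correct and is essentially the same argument as the paper's: both compare $\piRL$ to $\piexp$ via the regularized values, invoke the $\epsilonRL$-optimality of $\piRL$ in the regularized MDP, drop the nonnegative term $\lambda\KLtraj(\piRL \Vert \piBC)$, and bound $\lambda\KLtraj(\piexp \Vert \piBC) \leq \lambda\epsilon^2_{\KL}$. The only difference is presentational — you lower bound $V^{\piRL}_1(s_1)$ while the paper upper bounds the gap $\Vstar_1(s_1) - V^{\piRL}_1(s_1)$ in a single chain of inequalities.
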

\begin{proof}
    We start from the following observation that comes from the assumption on expert policy and a definition of regularized value
    \begin{align*}
        \Vstar_1(s_1) - V^{\piRL}_1(s_1) &\leq \epsilonexp + V^{\piexp}_1(s_1) - V^{\piRL}_1(s_1)  \leq \epsilonexp + V^{\piexp}_{\piBC,\lambda,1}(s_1) + \lambda \KL_{\traj}(\piexp \Vert \piBC) \\
        &- V^{\piRL}_{\piBC,\lambda,1}(s_1) - \lambda \KL_{\traj}(\piRL \Vert \piBC) \leq \epsilonexp + \epsilonRL + \lambda \varepsilon^2_{\KL}\,,
    \end{align*}
    where in the last inequality we apply assumptions on \smath{$\piBC$} and \smath{$\piRL$}.
\end{proof}

\new{Here for completeness we present the proof of Theorem~\ref{th:reg_bpi_general_sample_complexity}.}
\begin{theorem*}[Restatement of Theorem~\ref{th:reg_bpi_general_sample_complexity}]
    For all $\epsilon > 0$, $\delta \in (0,1)$, the \UCBVIEntp \slash \textcolor{blue}{\LSVIEnt} algorithms defined in Appendix~\ref{app:CI_tabular} \slash \textcolor{blue}{Appendix~\ref{app:CI_linear}} are $(\epsilon,\delta)$-PAC for the regularized BPI with sample complexity
    \[
        \cC(\varepsilon, \delta) = \tcO\left( \frac{H^5 S^2 A}{ \lambda \varepsilon} \right) \text{ (finite)}\qquad \textcolor{blue}{\cC(\varepsilon, \delta) = \tcO\left( \frac{H^5 d^2}{\lambda \varepsilon} \right)\text{ (linear)}}.
    \]
    Additionally, assume that the expert policy is $\epsilonexp=\varepsilon/2$-optimal and satisfies Assumption~\ref{ass:behavior_policy_linear} in the linear case. Let \smath{$\piBC$} be the behavior cloning policy obtained using corresponding function sets described in Section~\ref{sec:behavior-cloning}. Then demonstration-regularized RL based on \UCBVIEntp\slash\,\textcolor{red}{\LSVIEnt}  with parameters \smath{$\epsilonRL = \varepsilon/4,\, \delta_{\RL} = \delta/2$} and $\lambda = \tcO\left( \Nexp \varepsilon / (SAH) \right)$\,\slash\,\textcolor{blue}{$\tcO\left( \Nexp \varepsilon / (dH) \right)$} is $(\varepsilon,\delta)$-PAC for BPI with demonstration in finite \slash\, \textcolor{blue}{linear} MDPs and has sample complexity of order
    \[
        \cC(\varepsilon, \Nexp, \delta) = \tcO\left( \frac{H^6 S^3 A^2}{ \Nexp \varepsilon^2} \right) \text{ (finite)}\qquad \textcolor{blue}{\cC(\varepsilon, \Nexp, \delta) = \tcO\left( \frac{H^6 d^3}{ \Nexp \varepsilon^2} \right)\text{ (linear)}}\,.
    \]
\end{theorem*}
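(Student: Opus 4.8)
The statement splits into a core claim --- the regularized-BPI sample complexity of \UCBVIEntp and \LSVIEnt --- and an end-to-end claim that follows from it by the black-box reduction of Theorem~\ref{th:dem_reg_general}. I would dispatch the reduction first. With an $\epsilonexp = \varepsilon/2$-optimal expert and $\piBC$ produced from the function classes of Section~\ref{sec:behavior-cloning}, Corollary~\ref{cor:il-tab} gives $\epsilon_{\KL}^2 = \KL_{\traj}(\piexp \Vert \piBC) = \tcO(SAH/\Nexp)$ in the finite case and Corollary~\ref{cor:lin-BC} gives $\tcO(dH/\Nexp)$ in the linear case (under Assumption~\ref{ass:behavior_policy_linear}). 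Choosing $\epsilonRL = \varepsilon/4$, Theorem~\ref{th:dem_reg_general} makes the returned policy $2\epsilonRL + \epsilonexp = \varepsilon$-optimal, and its optimal regularization $\lambda^\star = \epsilonRL/\epsilon_{\KL}^2$ evaluates to $\tcO(\Nexp\varepsilon/(SAH))$ (resp. $\tcO(\Nexp\varepsilon/(dH))$). Substituting this $\lambda$ into the core complexity $\tcO(H^5S^2A/(\lambda\varepsilon))$ yields $\tcO(H^6S^3A^2/(\Nexp\varepsilon^2))$, and the analogous substitution gives $\tcO(H^6d^3/(\Nexp\varepsilon^2))$ in the linear case. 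So once the core claim is in hand, the end-to-end bounds are one substitution each.

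The core claim is the real work. For the PAC property I would first prove two-sided confidence bounds on the optimal regularized value. With the calibrated bonuses $b^{p,t}$, a backward induction over $h$ shows on a high-probability event (controlling the empirical transition deviations) that the optimistic iterate satisfies $\uV_h^t(s) \geq \Vstar_{\tpi,\lambda,h}(s)$ and that the matching pessimistic iterate built from $\lQ_h^t$ lower-bounds $\Vstar_{\tpi,\lambda,h}(s)$; optimism propagates cleanly because the regularized greedy step has the closed-form softmax solution written in the sampling rule. The stopping rule is designed precisely so that when $\iota = t$ the gap between the optimistic and pessimistic estimates of $\Vstar_{\tpi,\lambda,1}(s_1)$ is at most $\varepsilon$. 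Sandwiching $\Vstar_{\tpi,\lambda,1}(s_1)$ between these two estimates, and noting that the returned $\hpi$ has regularized value at least the pessimistic estimate, yields $\Vstar_{\tpi,\lambda,1}(s_1) - V^{\hpi}_{\tpi,\lambda,1}(s_1) \leq \varepsilon$, i.e.\ the $(\varepsilon,\delta)$-PAC guarantee.

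The sample complexity bound is the main obstacle and is where the fast $1/(\lambda\varepsilon)$ rate originates. The mechanism, adapted from \citet{tiapkin3023fast}, is that the $\lambda$-strong convexity of the KL penalty lets one bound the regularized suboptimality of $\bpi^{t+1}$ by the \emph{squared} confidence widths summed along the exploratory trajectory, divided by $\lambda$ --- it is strong convexity that produces the square rather than the width itself (the same phenomenon by which minimizing a $\lambda$-strongly convex objective with a gradient error $b$ costs only $\tcO(b^2/\lambda)$ in value). Because the episode-sum of squared widths behaves like a bounded potential, $\sum_{t} 1/n_h^t(s,a) = \tcO(SA)$ in the finite case and $\sum_{t}\min(1,\norm{\feat(s,a)}_{(\Lambda_h^t)^{-1}}^2) = \tcO(d)$ in the linear case, the average regularized suboptimality over $T$ episodes is $\tcO(\mathrm{poly}/(\lambda T))$; requiring this to fall below $\varepsilon$ forces only $T = \tcO(\mathrm{poly}/(\lambda\varepsilon))$, in sharp contrast with the $1/\varepsilon^2$ of unregularized BPI, where one instead controls $\sum_t (\text{width})$ and pays a Cauchy--Schwarz square root. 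The improvement over the $H^8 S^4 A$ rate of \citet{tiapkin3023fast} to $H^5 S^2 A$ I would obtain by replacing Hoeffding bonuses with Bernstein bonuses and invoking a law-of-total-variance argument for the mixture policy $\pi^{\mix,t}$, which removes the superfluous factors of $H$ and $S$.

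The linear case runs on the same template, with state-visitation counts replaced by the elliptical potential of the regularized least-squares design matrix $\Lambda_h^t$: optimism follows from a self-normalized concentration bound for $\norm{\feat(s,a)}_{(\Lambda_h^t)^{-1}}$, and the visitation term $\tcO(SA)$ is replaced by $\tcO(d)$ through the standard elliptical-potential sum. The genuinely new difficulty here, absent from prior work, is that no fast-rate exploration analysis for regularized \emph{linear} MDPs existed; the resolution is to carry the strong-convexity-of-KL argument through the feature-space confidence sets, which is why \LSVIEnt is the first such algorithm. The full bonus calibration and the detailed potential sums I would defer to Appendices~\ref{app:finite_sample_complexity} and~\ref{app:linear_sample_complexity}.
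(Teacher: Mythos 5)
Your reduction of the ``BPI with demonstration'' part to the regularized-BPI part is exactly the paper's proof: Theorem~\ref{th:dem_reg_general} with $\epsilonRL=\varepsilon/4$ and $\epsilonexp = \varepsilon/2$, the behavior-cloning bounds $\varepsilon_{\KL}^2=\tcO(SAH/\Nexp)$ (Corollary~\ref{cor:il-tab}) and $\tcO(dH/\Nexp)$ (Corollary~\ref{cor:lin-BC}), and substitution of $\lambda^\star=\epsilonRL/\varepsilon_{\KL}^2$ into the core complexity. Your description of the fast-rate mechanism (strong convexity of the KL penalty turning confidence widths into squared widths over $\lambda$, then count/elliptical potential sums giving $\tcO(1/(\lambda\varepsilon))$) also matches the paper's Lemma~\ref{lem:reg_aware_stopping_rule}, Lemma~\ref{lem:lin_error_decomp} and Theorems~\ref{th:reg_aware_sample_complexity} and~\ref{th:linear_sample complexity}. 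However, two steps of your sketch would fail as written. The first is your PAC argument in the finite case. You stop when the optimistic and pessimistic estimates of $\Vstar_{\tpi,\lambda,1}(s_1)$ are within $\varepsilon$ and then claim the returned policy's value dominates the pessimistic estimate. With the pessimistic value defined by pessimistic planning, $\lV^t_h(s)=\max_{\pi}\{\pi\lQ^t_h(s)-\lambda\KL(\pi\Vert\tpi_h(s))\}$, one only gets $\lV^t_h\leq\Vstar_{\tpi,\lambda,h}$; there is no inequality $\lV^t_h\leq V^{\bpi^{t+1}}_{\tpi,\lambda,h}$, because $\bpi^{t+1}$ is greedy for $\uQ^{\,t}$, not for $\lQ^t$: even granting $\lQ^t_h\leq Q^{\bpi^{t+1}}_{\tpi,\lambda,h}$, the chain gives $\lV^t_h\leq\max_\pi\{\pi Q^{\bpi^{t+1}}_{\tpi,\lambda,h}(s)-\lambda\KL(\pi\Vert\tpi_h(s))\}$, which is an \emph{upper} bound on $V^{\bpi^{t+1}}_{\tpi,\lambda,h}(s)$, so the sandwich does not close. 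Worse, even if repaired, a stopping rule on the plain width $\uV^{\,t}_1-\lV^t_1\leq\varepsilon$ is incompatible with the rate you derive afterwards: widths shrink like $1/\sqrt{n}$, so that criterion fires only after order $1/\varepsilon^2$ episodes. The paper instead stops on the recursive gap $G^t_1(s_1)\leq\varepsilon$ of \eqref{eq:upper_bound_gap}, which itself contains $\frac{1}{2\lambda}\max_a(\uQ^{\,t}_h-\lQ^t_h)^2$ and is shown in Lemma~\ref{lem:reg_aware_stopping_rule} to upper bound $\Vstar_{\tpi,\lambda,1}(s_1)-V^{\bpi^{t+1}}_{\tpi,\lambda,1}(s_1)$ directly; no lower bound on $V^{\hpi}$ is ever needed, and it is precisely this squared-width stopping statistic that can fall below $\varepsilon$ after only $\tcO(1/(\lambda\varepsilon))$ episodes.

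The second issue is your attribution of the improvement from $\tcO(H^8S^4A/(\lambda\varepsilon))$ to $\tcO(H^5S^2A/(\lambda\varepsilon))$ to ``replacing Hoeffding bonuses with Bernstein bonuses and a law-of-total-variance argument.'' The paper does neither: \UCBVIEntp uses plain Hoeffding bonuses \eqref{eq:hoeffding_transition_bonuses} throughout, and $H^5S^2A$ falls out of that analysis directly --- an $H^4/\lambda$ prefactor from rolling out the squared widths, a factor $SAH$ from the count potential summed over $(s,a,h)$ via Lemma~\ref{lem:sum_1_over_n}, and one more factor $S$ from $\beta^{\KL}$ (the KL concentration of categorical distributions), which is where the $S^2$ comes from. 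The gain over \cite{tiapkin3023fast} is algorithmic --- the exploratory policies $\pi^{t,(h')}$ that target the largest confidence width, the uniform mixture sampling, and the $G$-based stopping rule --- not a sharper bonus. Your Bernstein-plus-variance route is an unproven detour; note in particular that per-function Bernstein concentration would remove the $S$ inside $\beta^{\KL}$ and hence would not even reproduce the stated $S^2$ structure, so it cannot be the source of this bound.
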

\begin{proof}
    The first part of the statement is a combination of Theorem~\ref{th:reg_aware_sample_complexity} and Theorem~\ref{th:linear_sample complexity}. The second part of the statement follows from an upper bound on $\varepsilon^2_{\KL}$ by a behavior cloning (see Appendix~\ref{app:BC_finite_proof} and Appendix~\ref{app:BC_linear}) and Theorem~\ref{th:dem_reg_general}.
\end{proof}
\newpage

\section{Best Policy Identification in Regularized Finite MDPs}\label{app:reg_bpi_finite}

In this appendix, we present and analyze the \UCBVIEntp algorithm for regularized BPI.

\subsection{Preliminaries}

First, we will detail the general setting of KL-regularized MDPs.

Given some reference policy $\tpi$ and some regularization parameter $\lambda > 0$, instead of looking at the usual value function of a policy $\pi$, we consider the trajectory Kullback-Leibler divergence regularized value function $V^\pi_{\tpi,\lambda,1}(s_1) \triangleq V^\pi_{1}(s_1) - \lambda \KLtraj(\pi,\tpi)$. In this case, the value function of the policy $\pi$ is penalized for moving too far from the reference policy $\tpi$. Interestingly, we can compute the value of policy $\pi$ and the optimal value thanks to the regularized Bellman equations 
\begin{align}
    Q^{\pi}_{\tpi,\lambda,h}(s,a) &= r_h(s,a) + p_h V^{\pi}_{\tpi,\lambda,h+1}(s,a)\,,\nonumber\\
    V^\pi_{\tpi,\lambda,h}(s) &= \pi_h Q^\pi_{\tpi,\lambda,h}(s) - \lambda \KL(\pi_h(s) \Vert \tpi_h(s))\,,\nonumber\\
    \Qstar_{\tpi,\lambda,h}(s,a) &= r_h(s,a) + p_h \Vstar_{\tpi,\lambda,h+1}(s,a)\,,\label{eq:kl_regularized_bellman}\\
    \Vstar_{\tpi,\lambda,h}(s) &= \max_{\pi\in\simplex_A}\left\{ \pi\Qstar_{\tpi,\lambda,h}(s) - \lambda \KL(\pi_h(s) \Vert \tpi_h(s))\right\}, \nonumber\\ 
    \pistar_{\tpi, \lambda, h}(s) &= \argmax_{\pi\in\simplex_A}\left\{ \pi\Qstar_{\tpi,\lambda,h}(s) - \lambda \KL(\pi_h(s) \Vert \tpi_h(s))\right\}\nonumber\,,
\end{align}
where $V_{\tpi,\lambda,H+1}^\pi = \Vstar_{\tpi,\lambda, H+1} = 0$. Note that for $\pi$ the uniform policy we recover the entropy-regularized Bellman equations.

Next we define a convex conjugate to $\lambda \KL(\cdot \Vert \tpi_h(s))$ as $F_{\tpi_h(s), \lambda, h} \colon \R^{\cA} \to \R$
\[
    F_{\tpi_h(s), \lambda, h} (x) = \max_{\pi \in \simplex_{\cA}} \{ \langle \pi, x \rangle - \lambda \KL(\pi \Vert \tpi_h(s)) \} = \lambda \log\left( \sum_{a \in \cA}  \tpi_h(a|s) \exp\left\{ x_a / \lambda  \right\} \right)\,.
\]
and, with a sight abuse of notation, extend the action of this function to the $Q$-function as follows
\[
    \Vstar_{\tpi, \lambda, h}(s) = F_{\tpi_h(s), \lambda, h}(\Qstar_{\tpi, \lambda,h}(s, \cdot)) = \max_{\pi \in \Delta_{\cA}}\left\{ \pi \Qstar_{\tpi, \lambda, h}(s) - \lambda \KL(\pi \Vert \tpi_h(s)) \right\}\,.
\]

Thanks to the fact that the norm of gradients of $\KL(\pi | \tpi_h(s))$ tends to infinity as $\pi$ tends to a border of simplex, we have an exact formula for the optimal policy by Fenchel-Legendre transform
\[
    \pi^\star_{\tpi, \lambda, h}(s) = \argmax_{\pi \in \Delta_{\cA}}\left\{ \pi \Qstar_{\tpi, \lambda,h}(s) - \lambda \KL(\pi \Vert \tpi_h(s))  \right\} = \nabla F_{\tpi_h(s),\lambda,h}(\Qstar_{\tpi, \lambda,h}(s,\cdot))\,.
\]
Notice that we have $\nabla F_{\tpi_h(s), \lambda,h}(\Qstar_{\tpi,\lambda,h}(s,\cdot)) \in \simplex_{\cA}$ since the gradient of $\Phi$ diverges on the boundary of $\simplex_{\cA}$.

Finally, it is known that the smoothness property of $F_{\tpi_h(s), \lambda,h}$ plays a key role in reduced sample complexity for planning in regularized MDPs \citep{grill2019planning}. For our general setting we have that since $\lambda \KL(\cdot \Vert \tpi_h(s))$ is $\lambda$-strongly convex with respect to $\norm{\cdot}_1$, then $F_{\tpi_h(s), \lambda,h}$ is $1/\lambda$-strongly smooth with respect to the dual norm $\norm{\cdot}_\infty$
\[
    F_{\tpi_h(s), \lambda,h}(x) \leq F_{\tpi_h(s), \lambda,h}(x') + \langle \nabla F_{\tpi_h(s), \lambda,h}(x'), x-x' \rangle + \frac{1}{2\lambda} \norm{x - x'}_\infty^2\,.
\]
We notice that KL-divergence is always non-negative and, moreover, $\Vstar_{\tpi, \lambda, h}(s) \geq 0$ for any $s$ since the value of the reference policy $\tpi$ is non-negative.


\subsection{Algorithm Description}
\label{app:finite_algo}

In this appendix, we present the \UCBVIEntp algorithm, a modification of the algorithm \UCBVIEnt proposed by \cite{tiapkin3023fast}, that achieves better rates in the tabular setting. The \UCBVIEntp algorithm works by sampling trajectory according to an exploratory version of an optimistic solution of the regularized MDP and is characterized by the following rules.

\paragraph{Sampling rule} To obtain the sampling rule at episode $t$, we first compute a policy $\bpi^{t}$ by optimistic planning in a regularized MDP,
\begin{align}
  \uQ_h^{\,t}(s,a) &=  \clip\Big(r_h(s,a)+ \hp^{\,t}_h \uV_{h+1}^{\,t}(s,a)+ b_h^{p,t}(s,a),0,H\Big)\,,\nonumber\\
  \uV_h^{\,t}(s) &= \max_{\pi\in\Delta_A} \left\{ \pi \uQ_h^{\,t}(s)  - \lambda \KL(\pi \Vert \tpi_h(s)) \right\}\,,\label{eq:optimistic_planning_reg}\\
  \bpi_h^{t+1}(s) &= \argmax_{\pi\in\Delta_A}  \left\{ \pi \uQ_h^{\,t} (s) - \lambda \KL(\pi \Vert \tpi_h(s))\right\}\nonumber\,,
\end{align}
with $\uV^{\,t}_{H+1} = 0$ by convention, where $\hp^{t}$ is an estimate of the transition probabilities defined in Appendix~\ref{app:finite_event} and $b^{p,t}$ some bonus term, defined in \eqref{eq:hoeffding_transition_bonuses}, Appendix~\ref{app:CI_tabular}. It takes into account an estimation error for transition probabilities. Then, we define a family of policies aimed to explore actions for which \(Q\)-value is not well estimated at a particular step. That is, for $h'\in [0, H]$, the policy $\pi^{t,(h')}$ first follows the optimistic policy $\bpi^t$ until step $h$ where it selects an action leading to the largest confidence interval for the optimal \(Q\)-value, 
\[
\pi^{t,(h')}_h(a|s) = \begin{cases}
\pi^{t,(h')}_h(a|s) = \bpi^t_h(a|s) &\text{ if }h\neq h'\,, \\
\pi^{t,(h')}_h(a|s) = \ind\left\{ a \in \argmax_{a'\in \cA} (\uQ^{\,t}_h(s,a') - \lQ^t_h(s,a')) \right\} &\text{ if }h = h'\,,
\end{cases}\,
\]
where $\lQ^t$ is a lower bound on the optimal regularized \(Q\)-value function, see Appendix~\ref{app:CI_tabular}. In particular, for $h' = 0$ we have  $\pi^{t,(0)} = \bpi^t$. The sampling rule is obtained by picking up uniformly at random one policy among the family $\pi^{t} = \pi^{t,(h')},$ $h'\sim\Unif[0,H]$\,. Note that it is equivalent to sampling from a uniform mixture policy $\pi^{\mix, t}$ over all $h' \in [0,H]$.

\paragraph{Stopping rule and decision rule}
To define the stopping rule, we first recursively build an upper-bound on the difference between the value of the optimal policy and the value of the current optimistic policy $\bpi^{t}$, 
\begin{align}
\label{eq:upper_bound_gap}
    \begin{split}
        W^t_{h}(s,a) &=  \left( 1 + \frac{1}{H}\right)\hp^{\,t}_h G^{t}_{h+1}(s) + b^{\gap,t}_h(s,a)\,, \\
        G^{t}_{h}(s) &= \clip\biggl( \bpi^{t+1}_h W^t_{h}(s) + \frac{1}{2\lambda} \max_{a\in \cA}\left(\uQ^{\,t}_{h}(s,a) - \lQ^t_{h}(s,a)\right)^2,
        0, H \biggl)\,,
    \end{split}
\end{align}
where $b_h^{\gap,t}$ is a bonus defined in \eqref{eq:tabular_gap_bonus}, Appendix~\ref{app:CI_tabular}, $\lV^t$ is a lower-bound on the optimal value function defined in Appendix~\ref{app:CI_tabular} and $G_{H+1}^t = 0$ by convention.
Then the stopping time $\iota= \inf\{ t \in \N : G^t_{1}(s_1)  \leq \varepsilon \}$ corresponds to the first episode when this upper-bound is smaller than $\epsilon$. At this episode, we return the policy $\hpi = \bpi^{\iota}$.

The complete procedure is described in Algorithm~\ref{alg:UCBVIEnt+}. 

\begin{algorithm}
\centering
\caption{\UCBVIEntp}
\label{alg:UCBVIEnt+}
\begin{algorithmic}[1]
  \STATE {\bfseries Input:} Target precision $\epsilon$, target probability $\delta$, bonus functions $b^{t},b^{t,\KL}$.
      \WHILE{ true}
       \STATE Compute $\bpi^{t}$ by optimistic planning with \eqref{eq:optimistic_planning_reg}.
       \STATE Compute bound on the gap $G_1^{t}(s,a)$ with \eqref{eq:upper_bound_gap}.
       \STATE \textbf{if} $G_1^{t}(s_1)\leq \epsilon$ \textbf{then break}
       \STATE Sample $h' \sim \Unif[H]$ and set $\pi^t = \pi^{t,(h')}$.
      \FOR{$h \in [H]$}
        \STATE Play $a_h^t\sim \pi_h^t(s^t_h)$
        \STATE Observe $s_{h+1}^t\sim p_h(s_h^t,a_h^t)$
      \ENDFOR
    \STATE Update transition estimates $\hp^{\,t}$.
   \ENDWHILE
    \STATE \textbf{Output} policy $\hpi = \bpi^t$.
\end{algorithmic}
\end{algorithm}

\subsection{Concentration Events}
\label{app:finite_event}
We first define an estimate of the transition kernel. The number of times the state action-pair $(s,a)$ was visited in step $h$ in the first $t$ episodes are $n_h^{t}(s,a) \triangleq  \sum_{i=1}^{t} \ind{\left\{(s_h^i,a_h^i) = (s,a)\right\}}$. Let $n_h^{t}(s'|s,a) \triangleq \sum_{i=1}^{t} \ind{\big\{(s_h^i,a_h^i,s_{h+1}^i) = (s,a,s')\big\}}$ be the number of transitions from $s$ to $s'$ at step $h$. 
The empirical distribution is defined as $\hp^{\,t}_h(s'|s,a) = n^{\,t}_h(s'|s,a) / n^{\,t}_h(s,a)$ if $n_h^t(s,a) >0$ and $\hp^{\,t}_h(s'|s,a) \triangleq 1/A$ for all $s'\in \cS$ else.

Following the ideas of \cite{menard2021fast}, we define the following concentration events. First, we define pseudo-counts as a sum of conditional expectations of the random variables that correspond to counts
\[
    \upn^t_h(s,a) = \sum_{i=1}^t d^{\tpi^t}_h(s,a)\,,
\]
where $\pi^{\mix, t}$ is a mixture policy played on $t$-th step. In particular, we have
\[
    d^{\pi^{\mix,t}}_h(s,a) = \frac{1}{H+1} \sum_{h'=0}^H d^{\pi^{t,(h')}}_h(s,a)\,,
\]
where $\pi^{t,(h')}_h(s,a)$ is a greedy-modified policy $\bpi^t$ in $h'$-step:
\[
    \pi^{t,(h')}_h(a|s) = \begin{cases}
    \pi^{t,(h')}_h(a|s) = \bpi^t_h(a|s) &\text{ if }h\neq h'\\
    \pi^{t,(h')}_h(a|s) = \ind\left\{ a = \argmax_{a'\in \cA} (\uQ^{\,t}_h(s,a') - \lQ^t_h(s,a')) \right\} &\text{ if }h = h'\\
    \end{cases}\,,
\]

Let $\beta^{\KL}, \beta^{\cnt}: (0,1) \times \N \to \R_{+}$ be some functions defined later on in Lemma \ref{lem:proba_master_event}. We define the following favorable events
\begin{align*}
\cE^{\KL}(\delta) &\triangleq \Bigg\{ \forall t \in \N, \forall h \in [H], \forall (s,a) \in \cS\times\cA: \, \KL(\hp^{\,t}_h(s,a) \Vert p_h(s,a)) \leq \frac{\beta^{\KL}(\delta, n^{\,t}_h(s,a))}{n^{\,t}_h(s,a)} \Bigg\}\,,\\
\cE^{\cnt}(\delta) &\triangleq \Bigg\{ \forall t \in \N, \forall h \in [H], \forall (s,a) \in \cS\times\cA: \, n^t_h(s,a) \geq \frac{1}{2} \upn^t_h(s,a) - \beta^{\cnt}(\delta) \Bigg\}\,,
\end{align*}
We also introduce an intersection of these events of interest, $\cG(\delta) \triangleq \cE^{\KL}(\delta) \cap \cE^{\cnt}(\delta)$. We prove that for the right choice of the functions $\beta^{\KL}, \beta^{\cnt}$, the above events hold with high probability.
\begin{lemma}
\label{lem:proba_master_event}
For any $\delta \in (0,1)$ and for the following choices of functions $\beta,$
\begin{align*}
    \beta^{\KL}(\delta, n) & \triangleq \log(2SAH/\delta) + S\log\left(\rme(1+n) \right)\,, \\
    \beta^{\cnt}(\delta) &\triangleq \log(2SAH/\delta)\,, 
\end{align*}
it holds that
\begin{align*}
\P[\cE^{\KL}(\delta)]&\geq 1-\delta/2,  
\quad \P[\cE^\cnt(\delta)]\geq 1-\delta/2,
\end{align*}
In particular, $\P[\cG(\delta)] \geq 1-\delta$.
\end{lemma}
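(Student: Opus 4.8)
The plan is to handle the two events separately and combine them at the end by a union bound that splits the failure probability $\delta$ evenly. For each event the heart of the argument is a time-uniform (anytime) concentration inequality proved for a \emph{fixed} triple $(s,a,h)$; a union bound over the $SAH$ such triples then produces the $\log(2SAH/\delta)$ factor in both $\beta^{\KL}$ and $\beta^{\cnt}$.

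For $\cE^{\KL}(\delta)$ I would first fix $(s,a,h)$ and note that, although the sampling policies $\pi^{\mix,t}$ are adaptive, the successor states observed at each visit of $(s,a)$ at step $h$ are conditionally i.i.d. draws from $p_h(\cdot|s,a)$; hence $\hp^{\,t}_h(s,a)$ is exactly the empirical distribution of $n^{\,t}_h(s,a)$ such samples. The key tool, following \cite{menard2021fast}, is the anytime categorical KL concentration inequality obtained by the method of mixtures with a Dirichlet prior over the simplex $\simplex_{\cS}$: with probability at least $1-\delta'$, simultaneously for every sample size $n$,
\[
n \cdot \KL(\hp_h(s,a) \Vert p_h(s,a)) \leq \log(1/\delta') + S \log\bigl(\rme(1+n)\bigr)\,.
\]
Applying this with $\delta' = \delta/(2SAH)$ and union-bounding over $(s,a,h)$ yields $\cE^{\KL}(\delta)$ with the stated $\beta^{\KL}$ and probability at least $1-\delta/2$. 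The one subtlety is transferring the anytime bound, stated for a deterministic sample size $n$, to the data-dependent count $n^{\,t}_h(s,a)$; this is handled by the standard optional-skipping / stopped-martingale argument so that the uniformity in $n$ survives substitution of the count.

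For $\cE^{\cnt}(\delta)$ I would again fix $(s,a,h)$ and set $X_i = \ind\{(s^i_h,a^i_h)=(s,a)\}$, so that $n^{\,t}_h(s,a) = \sum_{i=1}^t X_i$, while by the definition of the pseudo-counts $\upn^{\,t}_h(s,a) = \sum_{i=1}^t \E[X_i\mid\cF_{i-1}]$ with the predictable term $\E[X_i\mid\cF_{i-1}] = d^{\pi^{\mix,i}}_h(s,a)$. Since $X_i\in\{0,1\}$, for any $\lambda>0$ the process
\[
M_t = \exp\Bigl(-\lambda \textstyle\sum_{i=1}^t X_i + (1-\rme^{-\lambda})\sum_{i=1}^t \E[X_i\mid\cF_{i-1}]\Bigr)
\]
is a supermartingale with $M_0=1$, using $\E[\rme^{-\lambda X_i}\mid\cF_{i-1}] \leq \exp\bigl(-(1-\rme^{-\lambda})\E[X_i\mid\cF_{i-1}]\bigr)$. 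Ville's maximal inequality gives $\P[\exists t: M_t \geq 1/\delta'] \leq \delta'$, and on the complementary event, choosing $\lambda=1$ (so that $1-\rme^{-1}\geq 1/2$) and rearranging yields, for all $t$,
\[
n^{\,t}_h(s,a) \geq (1-\rme^{-1})\,\upn^{\,t}_h(s,a) - \log(1/\delta') \geq \tfrac12\,\upn^{\,t}_h(s,a) - \log(1/\delta')\,.
\]
A union bound over $(s,a,h)$ with $\delta'=\delta/(2SAH)$ gives $\cE^{\cnt}(\delta)$ with $\beta^{\cnt}(\delta)=\log(2SAH/\delta)$ and probability at least $1-\delta/2$.

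Finally, $\P[\cG(\delta)] = \P[\cE^{\KL}(\delta)\cap\cE^{\cnt}(\delta)] \geq 1-\delta$ by a union bound on the two complementary events. I expect the main obstacle to lie in the first event: establishing the anytime categorical KL bound, whose $S\log(\rme(1+n))$ dimensional term is precisely the price of the method-of-mixtures argument over the whole simplex, and then rigorously transferring it to the adaptively evolving counts $n^{\,t}_h(s,a)$. By contrast, the count event is comparatively routine once the exponential-supermartingale and Ville's inequality are in place.
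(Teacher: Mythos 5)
Your proposal is correct and follows essentially the same route as the paper: the paper's proof simply invokes its Theorem~\ref{th:max_ineq_categorical} (the anytime categorical KL deviation inequality of \citet{jonsson2020planning}, proved exactly by the Dirichlet method-of-mixtures argument you sketch) together with Theorem~\ref{th:bernoulli-deviation} (the Bernoulli count inequality of \citet{dann2017unifying}, whose proof is precisely your exponential-supermartingale-plus-Ville argument with $\lambda=1$), and then applies the same union bounds over $(s,a,h)$ and over the two events. The only difference is that you re-derive these two concentration results inline rather than citing them, and you make explicit the conditional-i.i.d./optional-skipping step for the adaptive counts that the paper leaves implicit.
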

\begin{proof}
Applying Theorem~\ref{th:max_ineq_categorical} and the union bound over $h \in [H], (s,a) \in \cS \times \cA$ we get $\P[\cE^{\KL}(\delta)]\geq 1-\delta/2$. 

By Theorem~\ref{th:bernoulli-deviation} and union bound,  $\P[\cE^{\cnt}(\delta)]\geq 1 - \delta/2$. The union bound over three prescribed events concludes $\P[\cG_H(\delta)] \geq 1 - \delta$ and $\P[\cG_B(\delta)] \geq 1 - \delta$.
\end{proof}

\begin{lemma}\label{lem:reg_uniform_concentration}
    Assume conditions of Lemma \ref{lem:proba_master_event}. Then on event $\cE^{\KL}(\delta)$, for any $f \colon \cS \to [0,H], t \in \N, h \in [H], (s,a) \in \cS \times \cA$
    \[
        [p_h - \hp_h^t]f(s,a) \leq \sqrt{\frac{2H^2 \beta^{\KL}(\delta, n^t_h(s,a))}{n^t_h(s,a)}}\,.
    \]
\end{lemma}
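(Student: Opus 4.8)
The plan is to reduce the bound on the linear functional $[p_h - \hp_h^t]f(s,a)$ to a bound on the total variation distance between $\hp_h^t(s,a)$ and $p_h(s,a)$, and then to convert the Kullback--Leibler bound granted by the event $\cE^{\KL}(\delta)$ into a total variation bound via Pinsker's inequality. There is no deep obstacle here: the statement is a routine consequence of Pinsker's inequality together with the definition of the favorable event, and the only points requiring care are the choice of the bounding constant and the direction of the KL divergence.

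First I would observe that, since $f \colon \cS \to [0,H]$ is bounded, the triangle inequality (an $\ell^1$--$\ell^\infty$ bound) gives
\[
[p_h - \hp_h^t]f(s,a) \leq \Big| \sum_{s'} \big(p_h(s'|s,a) - \hp_h^t(s'|s,a)\big) f(s') \Big| \leq \norm{f}_\infty \sum_{s'} \big| p_h(s'|s,a) - \hp_h^t(s'|s,a) \big| = 2\norm{f}_\infty \TV(\hp_h^t(s,a), p_h(s,a)),
\]
together with $\norm{f}_\infty \leq H$. I deliberately avoid centering $f$ here: using $\norm{f}_\infty \le H$ directly is no harder and reproduces the stated constant $2H^2$ exactly. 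Next I would apply Pinsker's inequality, $\TV(\hp_h^t(s,a), p_h(s,a)) \leq \sqrt{\KL(\hp_h^t(s,a) \Vert p_h(s,a))/2}$, using that total variation is symmetric so that the direction of the KL divergence is immaterial and the reverse-KL quantity recorded in $\cE^{\KL}(\delta)$ may be used. On the event $\cE^{\KL}(\delta)$ the KL divergence is at most $\beta^{\KL}(\delta, n^t_h(s,a))/n^t_h(s,a)$, so chaining the two displays yields
\[
[p_h - \hp_h^t]f(s,a) \leq 2H \sqrt{\frac{\beta^{\KL}(\delta, n^t_h(s,a))}{2\, n^t_h(s,a)}} = \sqrt{\frac{2H^2 \beta^{\KL}(\delta, n^t_h(s,a))}{n^t_h(s,a)}},
\]
which is the claimed bound.

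Finally I would dispose of the only delicate point, the degenerate case $n^t_h(s,a)=0$. There the right-hand side is interpreted as $+\infty$ (and the defining inequality of $\cE^{\KL}(\delta)$ holds vacuously), so the estimate is trivially true; for $n^t_h(s,a) \geq 1$ the computation above is literal and valid for every fixed $f$, $t$, $h$, $(s,a)$ on the event. Since the event $\cE^{\KL}(\delta)$ already quantifies over all $t$, $h$, $(s,a)$, no additional union bound is needed, and the conclusion holds uniformly as stated.
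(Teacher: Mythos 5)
Your proof is correct and follows essentially the same route as the paper: the paper likewise bounds $[p_h - \hp_h^t]f(s,a)$ by $H\norm{p_h(s,a) - \hp_h^t(s,a)}_1$ (Hölder, equivalent to your $\ell^\infty$--total-variation step) and then applies Pinsker's inequality in the direction $\norm{p-\hp}_1 \le \sqrt{2\KL(\hp \Vert p)}$ before invoking the definition of $\cE^{\KL}(\delta)$. Your additional remark on the degenerate case $n_h^t(s,a)=0$ is a harmless clarification that the paper leaves implicit.
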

\begin{proof}
    By a H\"older and Pinsker inequalities
    \[
        [p_h - \hp_h^t]f(s,a) \leq H \norm{p_h(s,a) - \hp_h^t(s,a)}_1 \leq H \sqrt{2 \KL( \hp^{\,t}_h(s,a) \Vert p_h(s,a))}\,.
    \]
    We conclude the statement by applying the definition of the event $\cE^{\KL}$.
\end{proof}

\begin{lemma}\label{lem:reg_directional_concentration}
      Assume conditions of Lemma \ref{lem:proba_master_event}. Then on event $\cE^{\KL}(\delta)$, for any $f \colon \cS \to [0, H]$, $t \in \N, h \in [H], (s,a) \in \cS \times \cA$,
      \begin{align*}
            [p_h - \hp_h^t]f(s,a) &\leq \frac{1}{H} \hp^{\,t}_h f(s,a) + H\left(\frac{5H \beta^{\KL}(\delta, n^{\,t}_h(s,a))}{n^{\,t}_h(s,a)} \wedge 1 \right), \\
            [\hp_h^t -p_h]f(s,a) &\leq \frac{1}{H} p_h f(s,a) + H \left(\frac{5H \beta^{\KL}(\delta, n^{\,t}_h(s,a))}{n^{\,t}_h(s,a)} \wedge 1 \right)\,. 
      \end{align*}
\end{lemma}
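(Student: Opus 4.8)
The plan is to upgrade the crude Pinsker-type estimate of Lemma~\ref{lem:reg_uniform_concentration} into a variance-aware (Bernstein-type) bound and then trade the variance against the mean using $0 \le f \le H$. Abbreviate $n \triangleq n^{\,t}_h(s,a)$ and $\beta \triangleq \beta^{\KL}(\delta, n)$, so that on $\cE^{\KL}(\delta)$ we have $\KL(\hp^{\,t}_h(s,a) \Vert p_h(s,a)) \le \beta/n$. The starting point is a Bernstein-type transportation inequality: for distributions $q,p$ on $\cS$ with $\KL(q \Vert p) \le \eta$ and any $g \colon \cS \to [0,H]$, one has $|qg - pg| \le \sqrt{2\Var_p(g)\,\eta} + \tfrac{2H}{3}\eta$, which follows from the Donsker--Varadhan variational formula applied to $g = \pm\lambda(f - pf)$ together with the Bernstein bound on the log-MGF of a bounded variable and optimization in $\lambda$. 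Applied with $q = \hp^{\,t}_h(s,a)$ and $p = p_h(s,a)$ this gives, on $\cE^{\KL}(\delta)$, the two-sided estimate $|[\hp^{\,t}_h - p_h]f(s,a)| \le \sqrt{2\Var_{p_h}(f)(s,a)\,\beta/n} + \tfrac{2H}{3}\,\beta/n$.

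For the second claimed inequality I would bound the variance by the mean, $\Var_{p_h}(f)(s,a) \le H\,p_h f(s,a)$ (valid since $0 \le f \le H$), and split the square root with the weighted AM--GM inequality $\sqrt{xy} \le x/(2H) + Hy/2$ at $x = p_h f(s,a)$ and $y = 2H\,\beta/n$. This yields $[\hp^{\,t}_h - p_h]f(s,a) \le \tfrac{1}{H} p_h f(s,a) + H^2\,\beta/n$, and folding in the leftover $\tfrac{2H}{3}\,\beta/n$ keeps the additive remainder below $5H^2\,\beta/n = H\cdot\tfrac{5H\beta}{n}$, which is the target.

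The first inequality is the delicate one and is where I expect the main obstacle: its right-hand side features the \emph{empirical} mean $\hp^{\,t}_h f(s,a)$, whereas controlling $\KL(\hp^{\,t}_h \Vert p_h)$ through Donsker--Varadhan only produces a variance with respect to the \emph{true} kernel $p_h$. To bridge this I would transfer the variance via $\sqrt{\Var_{p_h}(f)(s,a)} \le \sqrt{\Var_{\hp^{\,t}_h}(f)(s,a)} + \sqrt{2}\,H\sqrt{\beta/n}$ (itself a consequence of $\KL(\hp^{\,t}_h \Vert p_h) \le \beta/n$), which turns $\sqrt{2\Var_{p_h}(f)(s,a)\,\beta/n}$ into $\sqrt{2\Var_{\hp^{\,t}_h}(f)(s,a)\,\beta/n} + 2H\,\beta/n$. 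Applying $\Var_{\hp^{\,t}_h}(f)(s,a) \le H\,\hp^{\,t}_h f(s,a)$ and the same weighted AM--GM split then gives $[p_h - \hp^{\,t}_h]f(s,a) \le \tfrac1H \hp^{\,t}_h f(s,a) + 5H^2\,\beta/n$. Alternatively, one can avoid the transfer step by substituting $p_h f(s,a) = \hp^{\,t}_h f(s,a) + [p_h - \hp^{\,t}_h]f(s,a)$ inside $\Var_{p_h}(f)(s,a) \le H\,p_h f(s,a)$ and solving the resulting self-referential inequality for $[p_h - \hp^{\,t}_h]f(s,a)$.

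Finally, both bounds are completed by the trivial observation that $[p_h - \hp^{\,t}_h]f(s,a) \le H$ and $[\hp^{\,t}_h - p_h]f(s,a) \le H$, since $0 \le f \le H$ forces both means into $[0,H]$. Together with the nonnegativity of $\tfrac1H \hp^{\,t}_h f(s,a)$ and $\tfrac1H p_h f(s,a)$, this justifies replacing the additive remainder $\tfrac{5H\beta}{n}$ by its minimum with $1$, yielding exactly the stated form.
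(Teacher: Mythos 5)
Your proposal is correct and takes essentially the same route as the paper's own proof: a Bernstein--KL transportation inequality (the paper's Lemma~\ref{lem:Bernstein_via_kl_upper}), a switch of the variance from $p_h$ to $\hp^{\,t}_h$ (the paper's Lemma~\ref{lem:switch_variance_bis}), the bound $\Var_{\hp^{\,t}_h}(f) \le H \,\hp^{\,t}_h f$, an AM--GM split, and the trivial bound by $H$ to introduce the minimum with $1$. The only soft spot is the constant you assert without proof in the standard-deviation transfer $\sqrt{\Var_{p_h}(f)} \le \sqrt{\Var_{\hp^{\,t}_h}(f)} + \sqrt{2}\,H\sqrt{\beta/n}$; the paper instead uses Lemma~\ref{lem:switch_variance_bis}, i.e.\ $\Var_{p_h}(f) \le 2\Var_{\hp^{\,t}_h}(f) + 4H^2\,\KL(\hp^{\,t}_h(s,a) \Vert p_h(s,a))$, which can be substituted into your argument verbatim, and the slack in your final constant (roughly $4H^2$ against the required $5H^2$) absorbs the resulting change.
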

\begin{proof}
    Let us start from the first statement.  We apply  Lemma~\ref{lem:Bernstein_via_kl_upper} to a function $g = H-f$ and Lemma~\ref{lem:switch_variance_bis} to obtain
    \begin{align*}
        [\hp_h^t - p_h]g(s,a) = [p_h - \hp_h^t]f(s,a) &\leq \sqrt{2\Var_{p_h}[f](s,a) \cdot \KL(\hp_h^t \Vert p_h) } + \frac{H}{3} \KL(\hp_h^t \Vert p_h)  \\
        &\leq 2\sqrt{\Var_{\hp^{\,t}_h}[f](s,a) \cdot \KL(\hp_h^t \Vert p_h) } +  4 H \KL(\hp^{\,t}_h \Vert p_h)\,.
    \end{align*}
    Since $0 \leq f(s) \leq  H$ we get
    \[
        \Var_{\hp^{\,t}_h}[f](s,a) \leq \hp^{\,t}_h[f^2](s,a) \leq  H \cdot \hp^{\,t}_h f(s,a)\,.
    \]
    Finally, applying $2\sqrt{ab} \leq a+b, a, b \geq 0$, we obtain the following inequality
    \begin{align*}
        (p_h - \hp_h^t)f(s,a) &\leq \frac{1}{H} \hp^{\,t}_h f(s,a) + 5H^2 \KL(\hp_h^t \Vert p_h)\,.
    \end{align*}
    The definition of $\cE^{\KL}(\delta)$ implies the first part of the statement. At the same time we have a trivial bound since $f(s) \in [0, H ]$
    \[
        [p_h - \hp^{\,t}_h] f(s,a) \leq H \leq \frac{1}{H} \hp^{\,t}_h f(s,a) + H\,.
    \]
    The second statement holds by the same inequalities.
\end{proof}

\subsection{Confidence Intervals}\label{app:CI_tabular}

Similar to \citet{azar2017minimax,Zanette19Euler,menard2021fast}, we define the upper confidence bound for the optimal regularized  Q-function with Hoeffding bonuses. 

Then we have the following sequences defined as follows
\begin{align*}
    \uQ^{\,t}_{h}(s,a) &= \clip\left( r_{h}(s,a) + \hp^{\,t}_h \uV^{\,t}_{h+1}(s,a) + b^{p,t}_h(s,a),0,H \right)\,, \\
    \bpi^{t+1}_{h}(s) &= \max_{\pi \in \simplex_\cA} \{ \pi \uQ^{\,t}_{h}(s) - \lambda \KL(\pi \Vert \tpi_h(s))\}\,, \\
    \uV^{\,t}_{h}(s) &=  \bpi^{t+1}_h \uQ^{\,t}_{h}(s) -\lambda  \KL(\bpi^{t+1}_h(s) \Vert \tpi_h(s))\,, \\
    \uV^{\,t}_{H+1}(s) &= 0\,,
\end{align*}
and the lower confidence bound as follows
\begin{align*}
    \lQ^{t}_{h}(s,a) &= \clip\left( r_{h}(s,a) + \hp^{\,t}_h \lV^t_{h}(s,a) - b^{p,t}_h(s,a) ,0,H\right)\, \\
    \lV^t_{h}(s) &= \max_{\pi \in \simplex_\cA }\{ \pi \lQ^t_{h}(s) - \lambda\KL(\pi \Vert \tpi_h(s))  \}\,, \\
    \lV^t_{H+1}(s) &= 0\,,
\end{align*}
where we have two types of transition bonuses that will be specified before use. The Hoeffding bonuses are defined as follows
\begin{align}\label{eq:hoeffding_transition_bonuses}
    b^{p,t}_h(s,a) &\triangleq \sqrt{\frac{2H^2 \beta^{\KL}(\delta, n^t_h(s,a))]}{n^t_h(s,a)}}\,.
\end{align}

\begin{proposition}\label{prop:reg_confidence_intervals}
    Let $\delta \in (0,1)$. Assume Hoeffding bonuses \eqref{eq:hoeffding_transition_bonuses}. Then on event $\cG(\delta)$ for any $t \in \N$, $(h,s,a) \in [H]\times \cS \times \cA$ it holds
    \[
        \lQ^t_h(s,a) \leq \Qstar_{\tpi, \lambda,h}(s,a) \leq \uQ^{\,t}_{h}(s,a)\,, \qquad \lV^t_{\lambda,h}(s) \leq \Vstar_{\tpi,\lambda,h}(s) \leq \uV^{\,t}_{h}(s)\,.
    \]
\end{proposition}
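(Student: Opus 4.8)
The plan is to establish both inequality chains simultaneously by backward induction on $h$, running from $h=H+1$ down to $h=1$, on the event $\cG(\delta) \supseteq \cE^{\KL}(\delta)$, via the usual optimism/pessimism principle. Two structural facts drive the argument. First, the conjugate operator is monotone: since $F_{\tpi_h(s),\lambda,h}(x) = \lambda \log\big(\sum_{a} \tpi_h(a|s)\exp(x_a/\lambda)\big)$ is increasing in each coordinate, $x \le x'$ componentwise implies $F_{\tpi_h(s),\lambda,h}(x) \le F_{\tpi_h(s),\lambda,h}(x')$; the same monotonicity holds for $\clip(\cdot,0,H)$, which moreover fixes every point of $[0,H]$. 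Second, I use that the target functions lie in $[0,H]$: the reference policy has non-negative value, so $\Qstar_{\tpi,\lambda,h},\Vstar_{\tpi,\lambda,h} \ge 0$, while regularization can only decrease the value below the unregularized one, giving $\Qstar_{\tpi,\lambda,h},\Vstar_{\tpi,\lambda,h}\le H$. Finally, I record the symmetric form of the concentration bound: applying Lemma~\ref{lem:reg_uniform_concentration} to a fixed $f\colon\cS\to[0,H]$ and to $H-f$ yields $\big|[p_h-\hp^{\,t}_h]f(s,a)\big| \le b^{p,t}_h(s,a)$, i.e. exactly the Hoeffding bonus in \eqref{eq:hoeffding_transition_bonuses}.

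The base case $h=H+1$ is immediate since $\uV^{\,t}_{H+1}=\Vstar_{\tpi,\lambda,H+1}=\lV^t_{H+1}=0$. For the inductive step, assume $\lV^t_{h+1}\le\Vstar_{\tpi,\lambda,h+1}\le\uV^{\,t}_{h+1}$ pointwise. For the upper bound on the $Q$-function I decompose
\[
\hp^{\,t}_h\uV^{\,t}_{h+1}(s,a) - p_h\Vstar_{\tpi,\lambda,h+1}(s,a) = \hp^{\,t}_h\big[\uV^{\,t}_{h+1}-\Vstar_{\tpi,\lambda,h+1}\big](s,a) + [\hp^{\,t}_h-p_h]\Vstar_{\tpi,\lambda,h+1}(s,a)\,.
\]
The first term is non-negative by the induction hypothesis (as $\hp^{\,t}_h$ is a probability kernel), and the second is at least $-b^{p,t}_h(s,a)$ by the concentration bound applied to the fixed function $\Vstar_{\tpi,\lambda,h+1}\in[0,H]$. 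Using the regularized Bellman equation $\Qstar_{\tpi,\lambda,h}=r_h+p_h\Vstar_{\tpi,\lambda,h+1}$, this shows $r_h + \hp^{\,t}_h\uV^{\,t}_{h+1} + b^{p,t}_h \ge \Qstar_{\tpi,\lambda,h}$; since $\Qstar_{\tpi,\lambda,h}\in[0,H]$ and $\clip(\cdot,0,H)$ is monotone and fixes $[0,H]$, applying it preserves the inequality and yields $\uQ^{\,t}_h \ge \Qstar_{\tpi,\lambda,h}$. Applying the monotone operator $F_{\tpi_h(s),\lambda,h}$ to both sides then gives $\uV^{\,t}_h(s) = F_{\tpi_h(s),\lambda,h}(\uQ^{\,t}_h(s,\cdot)) \ge F_{\tpi_h(s),\lambda,h}(\Qstar_{\tpi,\lambda,h}(s,\cdot)) = \Vstar_{\tpi,\lambda,h}(s)$.

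The lower bound is symmetric: the same decomposition with $\lV^t_{h+1}$ in place of $\uV^{\,t}_{h+1}$ has a non-positive first term (induction hypothesis) and a second term bounded above by $b^{p,t}_h$, so $r_h + \hp^{\,t}_h\lV^t_{h+1} - b^{p,t}_h \le \Qstar_{\tpi,\lambda,h}$, and clipping together with monotonicity of $F_{\tpi_h(s),\lambda,h}$ give $\lQ^t_h\le\Qstar_{\tpi,\lambda,h}$ and $\lV^t_h\le\Vstar_{\tpi,\lambda,h}$, completing the induction. The only delicate points — where I would be most careful — are verifying that the concentration event may indeed be applied to the \emph{deterministic} target $\Vstar_{\tpi,\lambda,h+1}$ (so no union bound over a random function class is required), and that the interplay of clipping with the conjugate $F_{\tpi_h(s),\lambda,h}$ never reverses an inequality; both reduce to the monotonicity and fixed-point observations recorded above, so no genuinely hard estimate is needed beyond Lemma~\ref{lem:reg_uniform_concentration}.
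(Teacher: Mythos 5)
Your proof is correct and takes essentially the same route as the paper's: backward induction on $h$, the decomposition $\hp^{\,t}_h \uV^{\,t}_{h+1} - p_h\Vstar_{\tpi,\lambda,h+1} = \hp^{\,t}_h[\uV^{\,t}_{h+1}-\Vstar_{\tpi,\lambda,h+1}] + [\hp^{\,t}_h-p_h]\Vstar_{\tpi,\lambda,h+1}$ controlled by Lemma~\ref{lem:reg_uniform_concentration}, and monotonicity of $F_{\tpi_h(s),\lambda,h}$ (gradients in the simplex) to pass from $Q$- to $V$-inequalities. The only cosmetic difference is how clipping is handled: you use monotonicity of $\clip(\cdot,0,H)$ together with the fact that it fixes $\Qstar_{\tpi,\lambda,h}\in[0,H]$, while the paper dismisses the clipped case as trivial and works with the unclipped equation; both are valid.
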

\begin{proof}
    Proceed by induction over $h$. For $h = H+1$ the statement is trivial. Now we assume that inequality holds for any $h' > h$ for a fixed $h \in [H]$. Fix a timestamp $t \in \N$ and a state-action pair $(s,a)$ and assume that $\uQ^{\,t}_h(s,a) < H$, i.e., no clipping occurs. Otherwise the inequality $\Qstar_{\tpi, \lambda,h}(s,a) \leq \uQ^{\,t}_h(s,a)$ is trivial. In particular, it implies $n^t_h(s,a) > 0$.

    In this case by Bellman equations \eqref{eq:kl_regularized_bellman} we have
    \begin{align*}
        [\uQ^{\,t}_{h} - \Qstar_{\tpi, \lambda,h}](s,a) = \hp^{\,t}_h \uV^{\,t}_{h+1}(s,a) - p_h \Vstar_{\tpi,\lambda,h+1}(s,a) + b^{p,t}_h(s,a)\,.
    \end{align*}
    To show that the right-hand side is non-negative,  we start from the induction hypothesis
    \begin{align*}
        [\uQ^{\,t}_{h} - \Qstar_{\tpi, \lambda,h}](s,a)  &\geq [\hp^{\,t}_h - p_h] \Vstar_{\tpi,\lambda,h+1}(s,a) + b^{p,t}_h(s,a)\,.
    \end{align*}
    The non-negativity of the expression above automatically holds from Lemma~\ref{lem:reg_uniform_concentration}. To prove the second inequality on $Q$-value, we proceed exactly the same.

    Finally, we have to show the inequality for $V$-values. To do it, we use the fact that $V$-value are computed by $F_{\tpi_h(s), \lambda,h}$ applied to $Q$-value
    \[
        \lV^t_{\lambda,h}(s) = F_{\tpi_h(s), \lambda,h}(\lQ^t_h)(s), \ \Vstar_{\tpi,\lambda,h}(s) = F_{\tpi_h(s), \lambda,h}(\Qstar_{\tpi,\lambda,h})(s), \ \uV^{\,t}_{\lambda,h}(s) = F_{\tpi_h(s), \lambda,h}(\uQ^{\,t}_{h})(s)\,.
    \] 
    Notice that $\nabla F_{\tpi_h(s), \lambda,h}$ takes values in a probability simplex; thus, all partial derivatives of $F_{\tpi_h(s), \lambda,h}$ are non-negative and therefore $F_{\tpi_h(s), \lambda,h}$ is monotone in each coordinate. Thus, since $\lQ^t_{h}(s,a) \leq \Qstar_{\tpi, \lambda,h}(s,a) \leq \uQ^{\,t}_{h}(s,a)$, we have the same inequality $\lV^t_{h}(s) \leq \Vstar_{\tpi,\lambda,h}(s) \leq \uV^{\,t}_{h}(s)$.
\end{proof}

\subsection{Sample Complexity Bounds}
\label{app:finite_sample_complexity}
In this section, we provide guarantees for the regularization-aware gap that highly depends on the parameter $\lambda$. 

Let us recall the regularization-aware gap that is defined recursively, starting from $G^{t}_{H+1} \triangleq 0$ and
\begin{align}\label{eq:def_gap_lambda}
    \begin{split}
        W^t_{h}(s,a) &=  \left( 1 + \frac{1}{H}\right)\hp^{\,t}_h G^{t}_{h+1}(s) + b^{\gap,t}_h(s,a)\,, \\
        G^{t}_{h}(s) &= \clip\biggl( \bpi^{t+1}_h W^t_{h}(s) + \frac{1}{2\lambda} \max_{a\in \cA}\left(\uQ^{\,t}_{h}(s,a) - \lQ^t_{h}(s,a)\right)^2,
        0, H \biggl)\,,
    \end{split}
\end{align}
where the additional bonus is defined as 
\begin{equation}\label{eq:tabular_gap_bonus}
     b^{\gap, t}_h(s,a) \triangleq \frac{5H^2 \beta^{\KL}(\delta, n^t_h(s,a))}{n^t_h(s,a)} \wedge H\,,
\end{equation}
and the corresponding stopping time for the algorithm
\begin{align}\label{eq:def_tau_lambda}
    \iota = \inf\{ t \in \N : G^t_{1 }(s_1)  \leq \varepsilon \}\,.
\end{align}

The next lemma justifies this choice of the stopping rule.
\begin{lemma}\label{lem:reg_aware_stopping_rule}
    Assume the choice of Hoeffding bonuses \eqref{eq:hoeffding_transition_bonuses} and let the event $\cG(\delta)$ defined in Lemma~\ref{lem:proba_master_event} holds. Then for any $t\in\N$, $s\in \cS, h \in [H]$ 
    \[
        \Vstar_{\tpi,\lambda,h}(s) - V^{\bpi^{t+1}}_{\tpi,\lambda,h}(s) \leq G^{t}_{h}(s)\,.
    \]
\end{lemma}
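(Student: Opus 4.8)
The plan is to prove the bound by backward induction on $h$, from $h=H+1$ down to $h=1$, with inductive hypothesis that $0 \le \Vstar_{\tpi,\lambda,h+1}(s) - V^{\bpi^{t+1}}_{\tpi,\lambda,h+1}(s) \le G^t_{h+1}(s) \le H$ for all $s$. The base case is immediate since $G^t_{H+1}\equiv 0$ and both terminal values vanish. Throughout I work on the event $\cG(\delta)$ so that the confidence bounds of Proposition~\ref{prop:reg_confidence_intervals} and the directional concentration of Lemma~\ref{lem:reg_directional_concentration} are available, fixing the episode $t$ once and for all.

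The heart of the argument is a single-step comparison exploiting the $1/\lambda$-strong smoothness of the Fenchel conjugate $F_{\tpi_h(s),\lambda,h}$ with respect to $\norm{\cdot}_\infty$. Writing $\Vstar_{\tpi,\lambda,h}(s) = F_{\tpi_h(s),\lambda,h}(\Qstar_{\tpi,\lambda,h}(s,\cdot))$ and $\uV^{\,t}_h(s) = F_{\tpi_h(s),\lambda,h}(\uQ^{\,t}_h(s,\cdot))$, and recalling that the optimistic policy is exactly $\bpi^{t+1}_h(s) = \nabla F_{\tpi_h(s),\lambda,h}(\uQ^{\,t}_h(s,\cdot))$, the smoothness inequality applied at the pair $(\Qstar_{\tpi,\lambda,h}(s),\uQ^{\,t}_h(s))$ yields
\begin{align*}
\Vstar_{\tpi,\lambda,h}(s) &\le \uV^{\,t}_h(s) + \bpi^{t+1}_h\bigl[\Qstar_{\tpi,\lambda,h} - \uQ^{\,t}_h\bigr](s) + \frac{1}{2\lambda}\bigl\Vert \Qstar_{\tpi,\lambda,h}(s) - \uQ^{\,t}_h(s)\bigr\Vert_\infty^2 .
\end{align*}
Substituting $\uV^{\,t}_h(s) = \bpi^{t+1}_h\uQ^{\,t}_h(s) - \lambda\KL(\bpi^{t+1}_h(s)\Vert\tpi_h(s))$ and subtracting the identity $V^{\bpi^{t+1}}_{\tpi,\lambda,h}(s) = \bpi^{t+1}_h Q^{\bpi^{t+1}}_{\tpi,\lambda,h}(s) - \lambda\KL(\bpi^{t+1}_h(s)\Vert\tpi_h(s))$, the entropy terms and the two $\uQ$-terms cancel, leaving
\begin{align*}
\Vstar_{\tpi,\lambda,h}(s) - V^{\bpi^{t+1}}_{\tpi,\lambda,h}(s) \le \bpi^{t+1}_h\bigl[\Qstar_{\tpi,\lambda,h} - Q^{\bpi^{t+1}}_{\tpi,\lambda,h}\bigr](s) + \frac{1}{2\lambda}\bigl\Vert \Qstar_{\tpi,\lambda,h}(s) - \uQ^{\,t}_h(s)\bigr\Vert_\infty^2 .
\end{align*}
Here optimism enters: since Proposition~\ref{prop:reg_confidence_intervals} gives $\lQ^t_h \le \Qstar_{\tpi,\lambda,h} \le \uQ^{\,t}_h$, the last term is at most $\tfrac{1}{2\lambda}\max_{a}(\uQ^{\,t}_h(s,a)-\lQ^t_h(s,a))^2$, which is precisely the strong-smoothness contribution in the definition of $G^t_h$ in \eqref{eq:def_gap_lambda}.

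It then remains to recurse the first term. By the regularized Bellman equations \eqref{eq:kl_regularized_bellman} the rewards cancel and $\Qstar_{\tpi,\lambda,h}(s,a) - Q^{\bpi^{t+1}}_{\tpi,\lambda,h}(s,a) = p_h[\Vstar_{\tpi,\lambda,h+1} - V^{\bpi^{t+1}}_{\tpi,\lambda,h+1}](s,a)$. The inductive hypothesis and monotonicity of the positive operator $p_h$ give $p_h[\Vstar_{\tpi,\lambda,h+1} - V^{\bpi^{t+1}}_{\tpi,\lambda,h+1}](s,a) \le p_h G^t_{h+1}(s,a)$, and since $G^t_{h+1}$ takes values in $[0,H]$, Lemma~\ref{lem:reg_directional_concentration} applied to $f = G^t_{h+1}$ converts $p_h$ into $\hp^{\,t}_h$ at the cost of the self-bounding factor $1/H$ and the bonus $b^{\gap,t}_h$ of \eqref{eq:tabular_gap_bonus}, giving $p_h G^t_{h+1}(s,a) \le (1 + 1/H)\hp^{\,t}_h G^t_{h+1}(s,a) + b^{\gap,t}_h(s,a) = W^t_h(s,a)$. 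Taking expectation under $\bpi^{t+1}_h$ and restoring the smoothness term yields $\Vstar_{\tpi,\lambda,h}(s) - V^{\bpi^{t+1}}_{\tpi,\lambda,h}(s) \le \bpi^{t+1}_h W^t_h(s) + \tfrac{1}{2\lambda}\max_a(\uQ^{\,t}_h - \lQ^t_h)^2(s)$; as the left-hand side is non-negative and bounded by $H$, it is also bounded by the clipped right-hand side, which is exactly $G^t_h(s)$, closing the induction.

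The step I expect to be most delicate is the strong-smoothness comparison, since it is what produces the $\tfrac{1}{2\lambda}\max_a(\uQ^{\,t}_h - \lQ^t_h)^2$ term and is where $\lambda$ controls the second-order error; the identification $\bpi^{t+1}_h = \nabla F_{\tpi_h(s),\lambda,h}(\uQ^{\,t}_h(s))$ and the exact cancellation of the KL terms must be handled carefully, and it is crucial that optimism is used only on the $Q$-values (to bound the $\norm{\cdot}_\infty$ term) rather than directly on the values. A secondary technical point is justifying that the gap $\Vstar_{\tpi,\lambda,h} - V^{\bpi^{t+1}}_{\tpi,\lambda,h}$ lies in $[0,H]$, so that the outer clipping in \eqref{eq:def_gap_lambda} is harmless; this follows from the boundedness of the regularized value functions and is carried along the induction.
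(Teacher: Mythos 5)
Your proposal is correct and follows essentially the same route as the paper's proof: backward induction, the $1/\lambda$-strong smoothness of $F_{\tpi_h(s),\lambda,h}$ applied at $\uQ^{\,t}_h$ with $\bpi^{t+1}_h = \nabla F_{\tpi_h(s),\lambda,h}(\uQ^{\,t}_h(s,\cdot))$ to cancel the KL terms, the regularized Bellman recursion, Lemma~\ref{lem:reg_directional_concentration} to pass from $p_h$ to $(1+1/H)\hp^{\,t}_h$ plus the bonus $b^{\gap,t}_h$, and Proposition~\ref{prop:reg_confidence_intervals} to bound $\norm{\uQ^{\,t}_h-\Qstar_{\tpi,\lambda,h}}_\infty$ by the confidence width $\max_a(\uQ^{\,t}_h-\lQ^{\,t}_h)$. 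Even your handling of the clipping (treating the case $G^t_h(s)=H$ as trivial because the gap lies in $[0,H]$) coincides with the paper's treatment, so there is nothing to add.
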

\begin{proof}
    Let us proceed by induction. For $h=H+1$ the statement is trivial. Assume that for any $h' > h$ the statement holds. Also, assume that $G^t_{h}(s) < H$; otherwise, the inequality on the policy error holds trivially. In particular, it holds that $n^t_h(s,a) > 0$ for all $a \in \cA$.

    We can start analysis from understanding the policy error by applying the smoothness of $F_{\tpi_h(s), \lambda,h}$. 
    \begin{align*}
        \Vstar_{\tpi,\lambda,h}(s) - V^{\bpi^{t+1}}_{\tpi,\lambda,h}(s) &= F_{\tpi_h(s), \lambda,h}(\Qstar_{\tpi,\lambda,h}(s, \cdot)) - \left(\bpi^{t+1}_h Q^{\bpi^{t+1}}_{\tpi,\lambda,h}(s)  -\lambda \KL(\bpi^{t+1}_h(s) \Vert \tpi_h(s)) \right) \\
        &\leq F_{\tpi_h(s), \lambda,h}(\uQ^{t}_h(s,\cdot)) + \langle \nabla F_{\tpi_h(s), \lambda,h}(\uQ^{t}_h(s,\cdot)), \Qstar_{\tpi,\lambda,h}(s,\cdot) - \uQ^{\,t}_h(s,\cdot)  \rangle \\
        &+ \frac{1}{2\lambda} \norm{\uQ^{\,t}_h - \Qstar_{\tpi,\lambda,h}}_{\infty}^2(s) - \left(\bpi^{t+1}_h Q^{\bpi^{t+1}}_{\tpi,\lambda,h}(s)  -\lambda \KL(\bpi^{t+1}_h(s) \Vert \tpi_h(s)) \right)\,.
    \end{align*}
    Next we recall that
    \[
         \bpi^{t+1}_h(s) = \nabla F_{\tpi_h(s), \lambda,h}(\uQ^{t}_{h}(s,\cdot)), \quad F_{\tpi_h(s), \lambda,h}(\uQ^{t}_{h})(s)  =  \bpi^{t+1}_h \uQ^{t}_{h}(s) - \lambda \KL(\bpi^{t+1}_h(s) \Vert \tpi_h(s))\,,
    \]
    thus we have
    \[
        F_{\tpi_h(s), \lambda,h}(\uQ^{\,t}_{h})(s) - \left( \bpi^{t+1}_h Q^{\bpi^{t+1}}_{\tpi,\lambda,h}(s, \cdot) - \lambda \KL(\bpi^{t+1}_h(s) \Vert \tpi_h(s))  \right) = \bpi^{t+1}_h [ \uQ^{\,t}_{h} - Q^{\bpi^{t+1}}_{\tpi,\lambda,h}](s)
    \]
    and, by Bellman equations
    \begin{align*}
        \Vstar_{\tpi,\lambda,h}(s) - V^{\bpi^{t+1}}_{\tpi,\lambda,h}(s) &\leq \bpi^{t+1}_h \left[ \Qstar_{\tpi,\lambda,h} - Q^{\bpi^{t+1}}_{\tpi,\lambda,h} \right] (s) + \frac{1}{2\lambda} \norm{\uQ^{\,t}_h - \Qstar_{\tpi,\lambda,h}}_*^2(s) \\
        &\leq \bpi^{t+1}_h p_h \left[ \Vstar_{\tpi,\lambda,h+1} - V^{\bpi^{t+1}}_{\tpi,\lambda,h+1} \right] (s) + \frac{1}{2\lambda} \norm{\uQ^{\,t}_h - \Qstar_{\tpi,\lambda,h}}_*^2(s)\,.
    \end{align*}
    By induction hypothesis we have
    \[
        \Vstar_{\tpi,\lambda,h}(s) - V^{\bpi^{t+1}}_{\lambda,h}(s) \leq \bpi^{t+1}_h p_h G^{t}_{\lambda,h+1}(s) + \frac{1}{2\lambda} \norm{\uQ^{\,t}_h - \Qstar_{\tpi,\lambda,h}}^2_*(s)\,.
    \]
    Next, we apply Lemma~\ref{lem:reg_directional_concentration}
    \begin{align*}
        p_h G_{\lambda, h+1}(s,a) &= \hp^{\,t}_h G_{\lambda, h+1}(s,a) +  [p_h - \hp^{\,t}_h] G^{t}_{\lambda,h+1}(s,a) \\
        &\leq \left(1 + \frac{1}{H}\right) \hp^{\,t}_h G^{t}_{\lambda,h+1}(s,a) + \frac{5H^2 \beta^{\KL}(\delta, n^t_h(s,a))}{n^t_h(s,a)} \wedge H \triangleq W^t_{h}(s,a)\,,
    \end{align*}
    thus
    \begin{align*}
        \Vstar_{\tpi,\lambda,h}(s) - V^{\bpi^{t+1}}_{\lambda,h}(s) &\leq \bpi^{t+1}_h W^t_{h}(s) + \frac{1}{2\lambda} \norm{\uQ^{\,t}_h - \Qstar_{\tpi,\lambda,h}}^2_\infty(s)\,.
    \end{align*}
    Finally, by the definition of $\norm{\cdot}_\infty$ and Proposition~\ref{prop:reg_confidence_intervals}
    \begin{align*}
        \Vstar_{\tpi,\lambda,h}(s) - V^{\bpi^{t+1}}_{\lambda,h}(s) &\leq \bpi^{t+1}_h W^t_{h}(s) + \frac{1}{2\lambda} \max_{a\in \cA}\left(\uQ^{\,t}_{h}(s,a) - \lQ^t_{h}(s,a)\right)^2 \triangleq G^{t}_{h}(s)\,.
    \end{align*}
\end{proof}

\begin{theorem}\label{th:reg_aware_sample_complexity}
    Let $\varepsilon > 0$, $\delta \in (0,1)$, $S\geq 2$ and $\lambda \leq H$. Then $\UCBVIEntp$ algorithm with Hoeffding bonuses and a stopping rule $\iota$~\eqref{eq:def_tau_lambda} is $(\varepsilon,\delta)$-PAC for the best policy identification in regularized MDPs. 
    
    Moreover, the stopping time $\iota$ is bounded as follows
    \[
        \iota = \cO\left( \frac{H^5 SA \cdot (\log(SAH/\delta) + SL) \cdot L}{\varepsilon \lambda} \right)\,,
    \]
    where $L = \cO(\log( SAH\log(1/\delta) / (\varepsilon\lambda)))$.
\end{theorem}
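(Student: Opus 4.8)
The plan is to split the statement into its two assertions, the $(\varepsilon,\delta)$-PAC guarantee and the bound on the stopping time $\iota$, and to observe that the first is essentially immediate while the whole difficulty lies in the second. On the favorable event $\cG(\delta)$, which holds with probability at least $1-\delta$ by Lemma~\ref{lem:proba_master_event}, Lemma~\ref{lem:reg_aware_stopping_rule} gives $\Vstar_{\tpi,\lambda,1}(s_1) - V^{\bpi^{\iota+1}}_{\tpi,\lambda,1}(s_1) \leq G^{\iota}_1(s_1)$, and the stopping rule \eqref{eq:def_tau_lambda} forces $G^{\iota}_1(s_1) \leq \varepsilon$ at the output episode. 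Hence the policy returned at episode $\iota$ is $\varepsilon$-optimal for the regularized value, which is exactly the PAC requirement.

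For the stopping-time bound, the starting point is that $G^t_1(s_1) > \varepsilon$ for every $t < \iota$, so that $\varepsilon(\iota-1) \leq \sum_{t=1}^{\iota-1} G^t_1(s_1)$, and it suffices to bound this cumulative sum. First I would unfold the recursion \eqref{eq:def_gap_lambda} for $G^t_h$ along trajectories generated by $\bpi^{t+1}$; using $(1+1/H)^H \leq \rme$ and converting the $\hp^t$-expectations into true-kernel expectations via Lemma~\ref{lem:reg_directional_concentration}, this writes $G^t_1(s_1)$ as a sum over $h$ of two kinds of terms: the gap bonuses $b^{\gap,t}_h \sim H^2\beta^{\KL}/n^t_h$, which are already \emph{second order} in the inverse counts, and the squared confidence widths $\tfrac{1}{2\lambda}\max_a(\uQ^t_h - \lQ^t_h)^2$.

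The crux is to show that the squared-width terms are also second order, which is precisely what converts the usual $\tcO(1/\varepsilon^2)$ rate of first-order bonuses into the $\tcO(1/(\lambda\varepsilon))$ rate. Writing $w^t_h(s) = \uV^t_h(s) - \lV^t_h(s)$ and $g^t_h(s,a) = \uQ^t_h(s,a) - \lQ^t_h(s,a)$, one has $g^t_h \leq \hp^t_h w^t_{h+1} + 2 b^{p,t}_h$ and, by $1$-Lipschitzness of the conjugate $F_{\tpi_h(s),\lambda,h}$ in $\norm{\cdot}_\infty$, $w^t_h \leq \max_a g^t_h$. Squaring with $(x+y)^2 \leq (1+1/H)x^2 + (1+H)y^2$ and applying Jensen in the form $(\hp^t_h w^t_{h+1})^2 \leq \hp^t_h (w^t_{h+1})^2$ yields a recursion for the squared width with a benign per-step factor $(1+1/H)$ and a second-order driving term $(1+H)(2b^{p,t}_h)^2 \sim H^3\beta^{\KL}/n^t_h$. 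Unfolding it (again $(1+1/H)^H\leq\rme$) and summing $\max_a (g^t_h)^2$ over $h$ bounds the whole squared-width contribution, in $\bpi^{t+1}$-expectation, by $H^4\beta^{\KL}\,\E_{\bpi^{t+1}}[\sum_{h} 1/n^t_h]$ up to constants, so that after dividing by $\lambda$ everything entering $G^t_1(s_1)$ scales as $1/n^t_h$.

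It then remains to sum over episodes. Passing from the actual play to the mixture sampling rule, the pseudo-counts $\upn^t_h$ together with $\cE^{\cnt}(\delta)$ give $n^t_h \geq \tfrac{1}{2}\upn^t_h - \beta^{\cnt}$, and the standard harmonic pigeonhole bound $\sum_{t=1}^{\iota-1}\E_{\bpi^{t+1}}[1/n^t_h] = \tcO(SA\log\iota)$ controls the inverse-count sums with only logarithmic dependence on $\iota$. Collecting the $H$, $S$, $A$ and $\beta^{\KL}\sim SL$ factors yields $\varepsilon\,\iota \lesssim \tfrac{H^5 S^2 A\, L}{\lambda}\log\iota$, and solving this implicit inequality (absorbing $\log\iota$ into $L$) gives the announced $\iota = \cO\!\big(H^5 SA(\log(SAH/\delta)+SL)L/(\varepsilon\lambda)\big)$. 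The main obstacle is the squared-width recursion: one must keep the per-level blow-up at $(1+1/H)$ rather than a constant strictly larger than one, so that the product over the horizon stays $O(1)$, while simultaneously checking that every term fed into it scales as $1/n^t_h$ and not $1/\sqrt{n^t_h}$.
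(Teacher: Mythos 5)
Your overall skeleton matches the paper's proof: the PAC claim follows from Lemma~\ref{lem:reg_aware_stopping_rule} on the event $\cG(\delta)$, and the stopping-time bound is obtained by summing $G^t_1(s_1)>\varepsilon$ over $t<\iota$, turning the squared confidence widths into second-order (i.e., $1/n^t_h$) quantities, applying the count-to-pseudo-count pigeonhole, and solving the resulting implicit inequality in $\iota$. Your idea of squaring the width recursion directly, via $(x+y)^2\le(1+1/H)x^2+(1+H)y^2$ and Jensen for the empirical kernel, is a legitimate alternative to the paper's route: the paper instead rolls out the width $\uQ^{\,t}_h-\lQ^t_h$ as a sum of \emph{first-order} bonuses $\sqrt{\beta^{\KL}/n^t_{h'}}$ along the future trajectory and only then squares, paying a factor $H$ through Cauchy--Schwarz; both routes yield the same $H^3\beta^{\KL}/n$ scaling.

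There is, however, a genuine gap in your policy bookkeeping. You claim the squared-width contribution is bounded ``in $\bpi^{t+1}$-expectation by $H^4\beta^{\KL}\,\E_{\bpi^{t+1}}[\sum_{h} 1/n^t_h]$'', i.e., with counts evaluated along $\bpi^{t+1}$'s own trajectory. This cannot hold: the term $\max_{a}(\uQ^{\,t}_h(s_h,a)-\lQ^t_h(s_h,a))^2$ is attained at an action $a^\star$ that $\bpi^{t+1}$ may never play, and if $n^t_h(s_h,a^\star)=0$ the left-hand side is of order $H^2$ while your right-hand side can be made arbitrarily small by visiting a different action often. Controlling these maxima is precisely why the sampling rule mixes the exploratory policies $\pi^{t,(h')}$: one must first invoke Lemma~\ref{lem:change_of_policy} to rewrite $\E_{\bpi^{t+1}}[\max_a(\uQ^{\,t}_h-\lQ^t_h)^2(s_h,a)]=\E_{\pi^{t+1,(h)}}[(\uQ^{\,t}_h-\lQ^t_h)^2(s_h,a_h)]$, and the subsequent unfolding must stay inside this single-modification family. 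Your inner bound $w^t_{h'}\le\max_a g^t_{h'}$ at steps $h'>h$ breaks this: unfolding it would require policies greedy with respect to the widths at \emph{several} steps, which are not in the family $\{\pi^{t,(h')}\}$ and whose visitation measures the pseudo-counts $\upn^t_{h'}$ do not dominate, so the pigeonhole step fails. The repair is to bound the value width by the policy average, $w^t_{h'}\le\bpi^{t+1}_{h'}g^t_{h'}$, and apply Jensen in the form $(\bpi^{t+1}_{h'}g^t_{h'})^2\le\bpi^{t+1}_{h'}(g^t_{h'})^2$, so that after the single change of policy at step $h$ every remaining expectation is under the continuation of $\pi^{t+1,(h)}$ (which coincides with $\bpi^{t+1}$); then Lemma~\ref{lem:cnt_pseudo} together with the mixture identity $d^{\pi^{\mix,t}}_{h'}=\frac{1}{H+1}\sum_{h}d^{\pi^{t,(h)}}_{h'}$ (costing the extra factor of $H$ that turns $H^4$ into $H^5$) makes the summation over episodes go through exactly as in the paper.
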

\begin{proof}
    To show that $\UCBVIEntp$ is $(\varepsilon,\delta)$-PAC we notice that on event $\cG(\delta)$ for $\hpi = \pi^{\iota}$ by Lemma~\ref{lem:reg_aware_stopping_rule}
    \[
        \Vstar_{\tpi,\lambda,1}(s_1) - V^{\hat \pi}_{\tpi,\lambda,1}(s_1) \leq G^{\iota}_1(s_1) \leq \varepsilon\,,
    \]
    and the event $\cG(\delta)$ holds with probability at least $1-\delta$. Next, we show that the sample complexity is bounded by the abovementioned quantity.

    \textbf{Step 1. Bound for $G^{t}_{1}(s_1)$}
    First, we start from bounding $W^t_{h}(s,a)$ and $G^t_{h}(s)$. By Lemma~\ref{lem:reg_directional_concentration} we can define the following upper bound for $W^t_{h}(s,a)$
    \begin{align*}
        W^t_{h}(s,a) \leq \left( 1 + \frac{2}{H}\right)p_h G^{t}_{h+1}(s,a) + \frac{10H^2 \beta^{\KL}(\delta, n^t_h(s,a))}{n^t_h(s,a)}\wedge 2H\,.
    \end{align*}
    Therefore we obtain
    \begin{align*}
        G^t_{h}(s) &\leq \E_{\bpi^{t+1}}\bigg[ \left( 1 + \frac{2}{H}\right) G^{t}_{h+1}(s_{h+1}) + \frac{10H^2  \beta^{\KL}(\delta, n^t_h(s_h,a_h))}{n^t_h(s_h,a_h)} \\
        &\qquad + \frac{1}{2\lambda} \max_{a\in \cA }\left(\uQ^{\,t}_{h}(s_h,a) - \lQ^t_{h}(s_h,a)\right)^2 \bigg| s_h = s\bigg]\,,
    \end{align*}
    By rolling out this expression
    \begin{align*}
        G^t_{1}(s_1) &\leq \E_{\bpi^{t+1}} \bigg[ \sum_{h=1}^H \left( 1 + \frac{2}{H} \right)^h \left( \frac{10H^2  \beta^{\KL}(\delta, n^t_h(s_h,a_h))}{n^t_h(s_h,a_h)} \wedge 2H\right) \\
        &\qquad + \left( 1 + \frac{2}{H} \right)^h  \frac{1}{2\lambda} \max_{a\in\cA}\left(\uQ^{\,t}_{h}(s_h,a) - \lQ^t_{h}(s_h,a)\right)^2 \bigg]\,.
    \end{align*}
    Using the fact that $(1+2/H)^h \leq \rme^2$, we have
    \begin{align*}
        G^t_{1}(s_1) &\leq \underbrace{10\rme^2 H^2  \E_{\bpi^{t+1}}\left[\sum_{h=1}^H \left( \frac{\beta^{\KL}(\delta, n^t_h(s_h,a_h))}{n^t_h(s_h,a_h)} \wedge 1\right) \right]}_{\termA} \\
        &+  \underbrace{\frac{\rme^2 }{2\lambda}\E_{\bpi^{t+1}}\left[ \sum_{h=1}^H \max_{a\in \cA }\left( \uQ^{\,t}_{h}(s_h,a) - \lQ^t_{h}(s_h,a) \right)^2  \right]}_{\termB}\,.
    \end{align*}

    \paragraph{Term $\termA$.}
    The analysis of the term $\termA$ follows \cite{menard2021fast}: we switch counts to pseudo-counts by Lemma~\ref{lem:cnt_pseudo} and obtain
    \[
        \termA \leq 40\rme^2 H^2  \sum_{h=1}^H \sum_{(s,a) \in \cS \times \cA} d^{\bpi^{t+1}}_h(s,a)  \frac{\beta^{\KL}(\delta, \upn^t_h(s,a))}{\upn^t_h(s,a) \vee 1}\,. 
    \]
    
    \paragraph{Term $\termB$.} For this term we analyze each summand over $h$ separately. By Lemma~\ref{lem:change_of_policy}
    \[
        \E_{\bpi^{t+1}}\left[ \max_{a\in \cA} \left( \uQ^{\,t}_{h}(s_h,a) - \lQ^t_{h}(s_h,a) \right)^2  \right] = \E_{\pi^{t+1, (h)}}\left[ \left( \uQ^{\,t}_{h}(s_h,a_h) - \lQ^t_{h}(s_h,a_h) \right)^2  \right]\,.
    \]
    Next, we analyze the expression under the square. First, we have
    \[
        \uQ^{\,t}_{h}(s_h,a_h) - \lQ^t_{h}(s_h,a_h) \leq 2 b^{p,t}_h(s_h,a_h) +  \hp^{\,t}_h [\uV^{\,t}_{h+1} - \lV^t_{h+1}](s_h,a_h)\,.
    \]
    By Lemma~\ref{lem:reg_uniform_concentration}
    \begin{align*}
        \uQ^{\,t}_{h}(s_h,a_h) - \lQ^t_{h}(s_h,a_h) &\leq 4  b^{p,t}_h(s_h,a_h) + p_h [\uV^{\,t}_{h+1} - \lV^t_{h+1}](s_h,a_h)\,.
    \end{align*}
    using $\uV^{\,t}_{\lambda, h+1}(s) - \lV^t_{\lambda, h+1}(s) \leq \bpi^{t}_{h+1}\left[ \uQ^{\,t}_{h+1} - \lQ^t_{h+1} \right](s)$ and the definition of Hoeffding bonuses \eqref{eq:hoeffding_transition_bonuses}, thus, rolling out this recursion
    \begin{align*}
        \uQ^{\,t}_{h}(s_h,a_h) - \lQ^t_h(s_h,a_h) &\leq 5 H \cdot \E_{\bpi^{t+1}} \Biggl[ \sum_{h'=h}^H \sqrt{\frac{2\beta^{\KL}(\delta, n^t_{h'}(s_{h'}, a_{h'}))}{n^t_{h'}(s_{h'}, a_{h'})} \wedge 1  }  \bigg| s_h \Biggl]\,.
    \end{align*}
    By Lemma~\ref{lem:cnt_pseudo}, Jensen inequality, and a change of policy $\bpi^{t+1}$ to $\pi^{t+1,(h)}$ by Lemma~\ref{lem:change_of_policy} we have
    \begin{align*}
        \uQ^{\,t}_{h}(s_h,a_h) - \lQ^t_{h}(s_h,a_h) &\leq 20 H^{3/2}\sqrt{ \E_{\pi^{t+1,(h)}} \left[ \sum_{h'=h}^H  \frac{2 \beta^{\KL}(\delta, \upn^t_{h'}(s_{h'}, a_{h'})) }{\upn^t_{h'}(s_{h'}, a_{h'}) \vee 1} | s_h \right]}\,.
    \end{align*}
    By taking the square, we get
    \begin{align*}
        \E_{\bpi^{t+1}}&\left[ \max_{a\in \cA} \left( \uQ^{\,t}_{h}(s_h,a) - \lQ^t_{h}(s_h,a) \right)^2  \right] \\
        &\qquad \leq 400 H^3 \E_{\pi^{t+1,(h)}}\left[ \E_{\pi^{t+1,(h)}}\left[\sum_{h'=h}^H \frac{2\beta^{\KL}(\delta, \upn^t_{h'}(s_{h'}, a_{h'})) }{\upn^t_{h'}(s_{h'}, a_{h'}) \vee 1} \bigg| s_h \right] \right]\,.
    \end{align*}
    The telescoping property of conditional expectation yields the final bound
    \begin{align*}
        \E_{\bpi^{t+1}}&\left[ \max_{a\in \cA} \left( \uQ^{\,t}_{h}(s_h,a) - \lQ^t_{h}(s_h,a) \right)^2 \right]  \leq \sum_{h'=1}^H 400 H^3 \E_{\pi^{t+1,(h)}}\left[  \frac{2\beta^{\KL}(\delta, \upn^t_{h'}(s_{h'}, a_{h'})) }{\upn^t_{h'}(s_{h'}, a_{h'}) \vee 1} \right]\,.
    \end{align*}
    Finally, collecting bounds over all $h\in[H]$ we have
    \[
        \termB \leq \frac{200 \rme^2 H^3 }{\lambda} \sum_{h=1}^H \sum_{s,a} \sum_{h'=1}^H d^{\pi^{t,(h)}}_{h'}(s,a) \frac{\beta^{\KL}(\delta, \upn^t_{h'}(s,a))}{\upn^t_{h'}(s,a) \vee 1}\,.
    \]
    The final bound for an initial gap follows
    \begin{align*}
        G^t_{1}(s_1) &\leq 40\rme^2 H^2 \sum_{h'=1}^H \sum_{(s,a) \in \cS \times \cA} d^{\bpi^{t+1}}_{h'}(s,a)  \frac{\beta^{\KL}(\delta, \upn^t_{h'}(s,a))}{\upn^t_{h'}(s,a) \vee 1} \\
        &+ \frac{200\rme^2 H^3 }{\lambda} \sum_{h=1}^H \sum_{s,a} \sum_{h'=1}^H d^{\pi^{t,(h)}}_{h'}(s,a) \frac{\beta^{\KL}(\delta, \upn^t_{h'}(s,a))}{\upn^t_{h'}(s,a) \vee 1}\,.
    \end{align*}
    Since $\lambda \leq H$, we have that $H^2  \leq H^3 /\lambda$. Using a convention $d^{\bpi^{t+1}}_{h'}(s,a) = d^{\pi^{t+1,(0)}}_{h'}(s,a)$ we have
    \[
        G^{t}_{1}(s_1) \leq \frac{240\rme^2 H^3 }{\lambda} \sum_{h=0}^H \sum_{s,a} \sum_{h'=1}^H d^{\pi^{t,(h)}}_{h'}(s,a) \frac{\beta^{\KL}(\delta, \upn^t_{h'}(s,a))}{\upn^t_{h'}(s,a) \vee 1}\,.
    \]
    By changing the summation order and noticing that
    \[
        d^{\pi^{\mix,t}}_{h'}(s,a) = \frac{1}{H+1}\sum_{h=0}^H d^{\pi^{t,(h)}}_{h}(s,a)
    \]
    for $H+1 \leq 2H$ we get
    \[
        G^{t}_{1}(s_1) \leq \frac{480 \rme^2 H^4 }{\lambda} \sum_{s,a} \sum_{h=1}^H d^{\pi^{\mix, t}}_{h}(s,a) \frac{\beta^{\KL}(\delta, \upn^t_{h}(s,a))}{\upn^t_{h}(s,a) \vee 1}\,.
    \]

    \textbf{Step 2. Sum over $t < \iota$.} Assume $\iota > 0$. In the case $\iota = 0$, the bound is trivially true. Notice that for any $t < \iota$ we have
    \[
        G^t_{\lambda,1}(s_1) > \varepsilon\,,
    \]
    thus, summing upper bounds on $G^t_{\lambda,1}(s_1)$ over all $t < \iota$ we have
    \begin{align*}
        \varepsilon (\iota - 1) < \sum_{t=1}^{\iota-1} G^{t}_{\lambda,1}(s_1) &\leq \frac{480\rme^2 H^4 }{\lambda} \sum_{(s,a,h)} \sum_{t=1}^{\iota - 1} d^{\pi^{\mix, t}}_{h}(s,a) \frac{\beta^{\KL}(\delta, \upn^t_{h}(s,a))}{\upn^t_{h}(s,a) \vee 1}\,.
    \end{align*}

    Notice that $\beta^{\KL}(\delta,\cdot)$ is monotone and maximizes at $\iota-1$, and $d^{\pi^{\mix, t+1}}_h(s,a) = \upn^{t+1}_h(s,a) - \upn^t_h(s,a)$. Thus, applying Lemma~\ref{lem:sum_1_over_n}, we have
    \begin{align*}
        \varepsilon(\iota - 1) &< \frac{1920\rme^2 H^5 SA }{\lambda}\beta^{\KL}(\delta, \iota-1) \log(\iota)\,.
    \end{align*}
    Then by definition of $\beta^{\KL}$
    \[
        \varepsilon(\iota-1) \leq \frac{1920\rme^4 H^5 S A }{\lambda} \cdot (\log(2SAH/\delta) + S \log(\rme \iota)) \cdot \log(\iota)\,.
    \]

    \textbf{Step 3. Solving the recurrence.}
    Define $A = 1920\rme^2 H^5 S A  /(\lambda \varepsilon)$ and $B = \log(2SAH/\delta) + S$. Our goal is to provide an upper bound for solutions to the following inequality
    \[
        \iota \leq 1 + A (S\log(\iota) + B) \cdot \log(\iota)\,.
    \]
    First, we obtain a loose solution by using inequality $\log(\iota) \leq \iota^\beta / \beta$ that holds for any $\iota \geq 1$. Taking $\beta = 1/3$  we have
    \[
        \iota \leq 1 + 3A(3S\cdot \iota^{1/3} + B ) \cdot \iota^{1/3}\,.
    \]
    Also we may assume that $\iota \geq 2$, thus $1 \leq \iota/2$ and we achieve
    \[
        \iota^{2/3} \leq 6 A (3S \iota^{1/3} + B)\,.
    \]
    Solving this quadratic inequality in $\iota^{1/3}$, we have
    \[
        \iota \leq \left(\frac{18AS + \sqrt{(18AS)^2 + 24AB}}{2} \right)^3 \leq \left( 18AS + \sqrt{24 AB} \right)^{3}\,.
    \]
    Define $L = 3 \log\left( 54AS + \sqrt{18AB} \right)$. Then, we can easily upper bound the initial inequality as follows
    \[
        \iota \leq 1 + A(B + SL) L\,.
    \]
\end{proof}

\newpage

\section{Best Policy Identification in Regularized Linear MDPs}
\label{app:reg_bpi_linear}

In this appendix, we first state some useful properties of regularized linear MDPs and describe the \LSVIEnt algorithm. 

\subsection{General Properties of Linear MDPs}\label{app:linear_gen_properties}

Let us start with a description of how to generalize the techniques of regularized MDPs to the setup of linear function approximation \citep{jin2019provably}. Again, we consider the case of KL-regularized MDPs with respect to a reference policy $\tpi$. In this setting, the $Q$- and $V$-values could be defined through regularized Bellman equations
\begin{align*}
\begin{split}
    Q^{\pi}_{\tpi,\lambda,h}(s,a) &= r_h(s,a) + p_h V^{\pi}_{\tpi, \lambda,h+1}(s,a), \\
    V^{\pi}_{\tpi,\lambda, h}(s) &= \pi_h Q^{\pi}_{\tpi,\lambda,h}(s) - \lambda \KL(\pi_h(s) \Vert \tpi_h(s))\,.
\end{split}
\end{align*}

Moreover, for optimal $Q$- and $V$-functions we have
\begin{align*}
\begin{split}
    \Qstar_{\tpi,\lambda,h}(s,a) &= r_h(s,a) + p_h \Vstar_{\tpi,\lambda,h+1}(s,a), \\
    \Vstar_{\tpi,\lambda,h}(s) &= \max_{\pi \in \Delta_{\cA}}\left\{ \pi \Qstar_{\tpi,\lambda,h}(s) - \lambda \KL(\pi \Vert \tpi_h(s)) \right\}\,.
\end{split}
\end{align*}
Note that the value of a policy could be arbitrarily negative, however, we know a priori that the optimal policy has non-negative value $V^\star_{\tpi, \lambda,h} \in [0, H]$, since the policy $\tpi$ itself has non-negative value.

In particular, under this assumption, we have the following simple proposition
\begin{proposition}\label{prop:lin_q_func}
    For a linear MDP, for any policy $\pi$ such that $V^{\pi}_{\tpi,\lambda, h}(s,a) \geq 0$ for any $(s,a,h) \in \cS \times \cA \times [H]$ there exists weights $\{w^\pi_h\}_{h\in[H]}$ such that for any $(s,a,h) \in \cS \times \cA \times [H]$ we have $Q^\pi_{\tpi, \lambda,h}(s,a) = \feat(s,a)^\top w^\pi_h$. Moreover, for any $h \in [H]$ it holds $\norm{w^\pi_h}_2 \leq 2H\sqrt{d}$.
\end{proposition}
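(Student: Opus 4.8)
The plan is to construct the weight vectors explicitly from the regularized Bellman equation and then control their norm; no induction is needed, since the values $V^\pi_{\tpi,\lambda,h}$ are fixed once the policy $\pi$ is given. First I would fix a step $h\in[H]$ and write the regularized Bellman equation $Q^\pi_{\tpi,\lambda,h}(s,a) = r_h(s,a) + p_h V^\pi_{\tpi,\lambda,h+1}(s,a)$. The linear-MDP structure handles the two terms separately: for the reward I use $r_h(s,a) = \feat(s,a)^\top\theta_h$, and for the transition term I expand the expectation as an integral and pull the feature vector out,
\[
p_h V^\pi_{\tpi,\lambda,h+1}(s,a) = \int_{\cS} V^\pi_{\tpi,\lambda,h+1}(s')\, p_h(\rmd s'|s,a) = \feat(s,a)^\top \int_{\cS} V^\pi_{\tpi,\lambda,h+1}(s')\, \mu_h(\rmd s'),
\]
using $p_h(\rmd s'|s,a) = \feat(s,a)^\top\mu_h(\rmd s')$. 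This suggests the definition $w^\pi_h \triangleq \theta_h + \int_{\cS} V^\pi_{\tpi,\lambda,h+1}(s')\,\mu_h(\rmd s')$, for which $Q^\pi_{\tpi,\lambda,h}(s,a) = \feat(s,a)^\top w^\pi_h$ holds for every $(s,a)$, establishing the first claim.

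For the norm bound I would first show that the regularized value lies in $[0,H]$. The lower bound $V^\pi_{\tpi,\lambda,h+1}\ge 0$ is exactly the hypothesis of the proposition. For the upper bound, unrolling the regularized Bellman equations expresses $V^\pi_{\tpi,\lambda,h+1}(s)$ as $\E_{\pi}[\sum_{h'=h+1}^H (r_{h'}(s_{h'},a_{h'}) - \lambda\KL(\pi_{h'}(s_{h'})\Vert\tpi_{h'}(s_{h'})))\mid s_{h+1}=s]$; since each reward is at most $1$ and each KL term is non-negative, this sum is bounded above by $H$. Hence $0\le V^\pi_{\tpi,\lambda,h+1}\le H$ pointwise.

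Finally I would bound $\norm{w^\pi_h}_2 \le \norm{\theta_h}_2 + \norm{\int_{\cS} V^\pi_{\tpi,\lambda,h+1}\, \mu_h(\rmd s')}_2$ by the triangle inequality, use $\norm{\theta_h}_2\le\sqrt d$ from the linear-MDP assumption, and control the second term with the standard linear-MDP estimate $\norm{\int_{\cS} g\, \mu_h(\rmd s')}_2 \le H\norm{\mu_h(\cS)}_2 \le H\sqrt d$ applied to $g = V^\pi_{\tpi,\lambda,h+1}$, yielding $\norm{w^\pi_h}_2 \le \sqrt d + H\sqrt d \le 2H\sqrt d$. The step I expect to require the most care is this measure-integral bound: because $\mu_h$ is only a signed vector-valued measure, the estimate genuinely relies on $V^\pi_{\tpi,\lambda,h+1}$ being uniformly bounded. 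The upper bound comes for free from the rewards, whereas the lower bound $\ge 0$ must be assumed, since the penalty $-\lambda\KL(\pi_h(s)\Vert\tpi_h(s))$ could otherwise drive the regularized value arbitrarily negative and break the boundedness; this is precisely why the non-negativity hypothesis appears in the statement and cannot be dropped.
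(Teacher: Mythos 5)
Your proposal is correct and follows essentially the same argument as the paper: define $w^\pi_h = \theta_h + \int_{\cS} V^\pi_{\tpi,\lambda,h+1}(s')\,\mu_h(\rmd s')$ via the regularized Bellman equation and the linear structure of $r_h$ and $p_h$, then bound $\norm{w^\pi_h}_2 \leq \sqrt{d} + H\sqrt{d} \leq 2H\sqrt{d}$ using $V^\pi_{\tpi,\lambda,h+1} \in [0,H]$. Your explicit justification of the value bounds (upper bound from rewards and non-negative KL penalties, lower bound from the hypothesis) is a point the paper states only tersely, and your closing remark on why the non-negativity assumption cannot be dropped matches the paper's own discussion.
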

\begin{proof}
    By Bellman equations
    \begin{align*}
        Q^\pi_{\tpi, \lambda,h}(s,a) &= r_h(s,a) + p_h V^\pi_{\tpi, \lambda,h}(s,a) = \feat(s,a)^\top \theta_h + \int_{\cS} V^\pi_{\tpi,\lambda,h}(s') \cdot \sum_{i=1}^d \feat(s,a)_i  \mu_{h,i}(\rmd s') \\
        &= \langle \feat(s,a), \theta_h + \int_{\cS}  V^\pi_{\tpi, \lambda,h}(s') \mu_h(\rmd s') \rangle\,.
    \end{align*}

    To show the second part, we use Definition~\ref{def:linear_mdp}. First, we note that $\norm{\theta_h} \leq \sqrt{d}$ and, at the same time
    \[
        \int_{\cS}  V^\pi_{\tpi, \lambda,h}(s') \mu_h(\rmd s') \leq  H \sqrt{d}
    \]
    since the value is bounded by $H$.
    
\end{proof}

\subsection{Algorithm Description}\label{app:linear_algo}

In this appendix, we describe the \LSVIEnt algorithm for regularized BPI in linear MDPs. \LSVIEnt is characterized by the following rules. 

\textbf{Sampling rule} As for the \UCBVIEntp algorithm, we start with regularized optimistic planning under the linear function approximation

\begin{align}\label{eq:optimistic_planning_reg_linear}
    \begin{split}
        \uQ^t_h(s,a) &= \feat_h(s,a)^\top \uw^{t}_h + b_h^t(s,a)\,, \\
        \uV^t_h(s) &= \clip\left( \max_{\pi \in \simplex_{\cA}} \bigl\{ \pi \uQ^t_h(s) - \lambda \KL(\pi,\tpi_h(s)) \bigr\}, 0, H \right), \\
        \bpi^{t+1}_h(s) &= \argmax_{\pi \in \simplex_{\cA}}\bigl\{\pi \uQ^t_h(s) - \lambda \KL(\pi,\tpi_h(s))\bigr\} \,,
    \end{split}
\end{align}
where $b^t$ is some bonus defined as follows
\[
    b^t_h = \cB \cdot \sqrt{ [\feat(s,a)]^\top \left[ \Lambda^{t}_h\right]^{-1} \feat(s,a)}
\]
for $\cB > 0$ a bonus scaling factor, and the parameter $w_h^t$ is obtained by least-square value iteration with Tikhonov regularization parameter $\alpha$ \citep{jin2019provably},
\[
\uw^t_h = \argmin_{w \in \R^d} \sum_{k = 1}^{t} \left[ r_h(s^k_h, a^k_h) + \uV^t_{h+1}(s^{k}_{h+1}) - \feat(s^k_h, a^k_h)^\top w \right]^2 + \alpha \norm{w}^2_2\,.
\]
We notice that there is a closed-form solution to this problem given by
\[
    \uw^t_h = \left[\Lambda^{t}_h\right]^{-1} \left[ \sum_{\tau=1}^{t} \psi^\tau_h \left[ r^\tau_h + \uV^t_{h+1}(s^\tau_{h+1}) \right] \right]\,,
\]
where $\feat^\tau_h = \feat(s^\tau_h, a^\tau_h)$ and $\Lambda^{t}_h = \sum_{\tau=1}^{t} \psi^\tau_h [\psi^\tau_h]^\top + \alpha I$.

Then, we also define a family of exploratory policies by, for all $h'\in[H]$,
\begin{equation}\label{eq:pi_t_h_definition}
\pi^{t,(h')}(a|s) = \begin{cases}
\pi^{t,(h')}(a|s) = \bpi^t_h(a|s) &\text{ if }h\neq h'\\
\pi^{t,(h')}(a|s) = \ind\left\{ a \in \argmax_{a'\in \cA} (\uQ^t_h(s,a') - \lQ^t_h(s,a')) \right\} &\text{ if }h = h'\\
\end{cases}\,,
\end{equation}
where $\lQ^t$ is some lower bound on the optimal regularized Q-value defined as follows
\begin{align}\label{eq:pessimistic_planning_reg_linear}
    \begin{split}
        \lw^t_h &= \left[\Lambda^{t}_h\right]^{-1} \left[ \sum_{\tau=1}^{t} \psi^\tau_h \left[ r^\tau_h + \lV^t_{h+1}(s^\tau_{h+1}) \right] \right]\,,  \\
        \lQ^t_h(s,a) &= [\feat(s,a)]^\top \lw^t_h - \cB \cdot \sqrt{ [\feat(s,a)]^\top \left[ \Lambda^{t}_h\right]^{-1} \feat(s,a)}\,, \\
        \lV^t_h(s) &= \clip\left( \max_{\pi \in \simplex_{\cA}}\left\{ \pi \lQ^t_h(s) - \lambda \KL(\pi \Vert \tpi_h(s)) \right\}, 0, H \right)\,. \\
    \end{split}
\end{align}

The sampling rule is then obtained by picking uniformly at random a policy among the exploratory policies, $\pi^t = \pi^{t,(h')}$ for $h'\sim \Unif[H]$. Notice that it is equivalent to using a non-Markovian mixture policy $\pi^{\mix, t}$ over all $h \in [H]$. Additionally, it would be valuable to mention that computation of this policy could be done on-flight since we can compute $\uQ$ and $\lQ$.

\textbf{Stopping and decision rule} In the linear setting, we use a simple deterministic stopping rule $\tau=T$ for a fixed parameter $T$. In the finite setting, we could define an adaptive stopping rule by leveraging a certain Bernstein-like inequality on the gaps (see Lemma~\ref{lem:reg_directional_concentration}). However, how to adapt such inequality to the linear setting remains unclear.

As decision rule \LSVIEnt returns the non-Markovian policy $\hpi$, the uniform mixture over the optimistic policies  $\{\bpi^t\}_{t\in[T]}$ The complete procedure is described in Algorithm~\ref{alg:LSVIUCBEnt}.

\begin{algorithm}
\centering
\caption{\LSVIEnt}
\label{alg:LSVIUCBEnt}
\begin{algorithmic}[1]
  \STATE {\bfseries Input:} Number of episodes $T$, bonus function $b^t$, Tikhonov regularization parameter $\alpha$.
      \FOR{$t \in [T]$ }
       \STATE Compute $\bpi^{t}$ by regularized optimistic planning with \eqref{eq:optimistic_planning_reg_linear}.
       \STATE Sample $h' \sim \Unif\{1,\ldots,H\}$ and set $\pi^t=\pi^{t,(h')}$
      \FOR{$h \in [H]$}
        \STATE Play $a_h^t\sim \pi_h^t(s^t_h)$
        \STATE Observe $s_{h+1}^t\sim p_h(s_h^t,a_h^t)$
      \ENDFOR
   \ENDFOR
   \STATE \textbf{Output} $\hpi$ the uniform mixture over $\{\pi^t\}_{t\in[T]}$.
\end{algorithmic}
\end{algorithm}

\subsection{Concentration Events}\label{app:linear_concentration}

In this section, the required concentration events for a proof of sample complexity for \LSVIEnt will be described. First, we define several important objects. Let $\feat^\tau_h = \feat(s^\tau_h, a^\tau_h)$  for any $\tau \in \N, h \in [H]$ and define
\[
    \Lambda^t_h = \alpha I_d + \sum_{\tau=1}^t \feat^\tau_h [\feat^\tau_h]^\top\,, \quad \uLambda^t_h = \alpha I_d + \sum_{\tau=1}^t \E_{\pi^{\mix, \tau}}\left[ \feat(s_h, a_h) [\feat(s_h, a_h)]^\top | s_1 \right]\,,
\]
where $\tpi^t$ is a uniform mixture policy of $\pi^{t, (h')}$ defined in \eqref{eq:pi_t_h_definition} over all $h' \in \{0, \ldots, H\}$.

Let $\beta^{\conc} \colon (0,1) \times \N \times \R_+ \times \R_+ \to \R_+ $ and $\beta^{\cnt}\colon (0,1) \times \N \to \R_{+}$ be some functions defined later on in Lemma \ref{lem:linear_proba_master_event}. We define the following favorable events for any fixed values of bonus scaling $\cB > 0$ and Ridge coefficient $\alpha \geq 1$ that will be specified later.
\begin{align*}
\cE^{\conc}(\delta, \cB) &\triangleq \Bigg\{\forall t \in \N, \forall h \in [H]:  \\
&\qquad\left\Vert \sum_{\tau=1}^{t} \feat^\tau_h \left\{ \uV^{t}_{h+1}(s^\tau_{h+1}) - p_h \uV^{t}_{h+1}(s^\tau_{h}, a^\tau_h) \right\} \right\Vert_{[\Lambda^t_h]^{-1}} \leq 2dH \sqrt{\beta^{\conc}(\delta,t, \cB)} \\
&\qquad \left\Vert \sum_{\tau=1}^{t} \feat^\tau_h \left\{ \lV^{t}_{h+1}(s^\tau_{h+1}) - p_h \lV^{t}_{h+1}(s^\tau_{h}, a^\tau_h) \right\} \right\Vert_{[\Lambda^t_h]^{-1}} \leq 2dH \sqrt{\beta^{\conc}(\delta,t, \cB)}
    \Bigg\}\,,\\
\cE^{\cnt}(\delta) &\triangleq \Bigg\{ \forall t \in \N, \forall h \in [H]: \quad \Lambda^t_h \succcurlyeq \frac{1}{2} \uLambda^t_h - \beta^{\cnt}(\delta, t) I_d  \Bigg\}\,.
\end{align*}
We also introduce an intersection of these events of interest, $\cG(\delta, \cB) \triangleq \cE^{\conc}(\delta, \cB) \cap \cE^{\cnt}(\delta)$. We prove that for the right choice of the functions $ \beta^{\conc},\beta^{\cnt}$ the above events hold with high probability.
\begin{lemma}
\label{lem:linear_proba_master_event}
Let $\cB, \alpha \geq 1$ be fixed. For any $\delta \in (0,1)$ and for the following choices of functions $\beta,$
\begin{align*}
    \beta^{\conc}(\delta, t, \cB) &\triangleq 2 \log\left( \frac{H(1+t^2)}{\delta}\right)  + 5  + \log\left( 1 + 8 d^{1/2}t^2\cdot \left(\frac{\cB}{ H d}\right)^2  \right)\,,\\
    \beta^{\cnt}(\delta, t) &\triangleq  4 \log(8\rme H (2t+1)/\delta) + 4 d\log (3t) + 3\,, 
\end{align*}
for any fixed $\alpha \geq 1$ it holds that
\begin{align*}
\quad \P[\cE^{\conc}(\delta, \cB)]\geq 1-\delta/2\,, \quad \P[\cE^\cnt(\delta)]\geq 1-\delta/2\,,
\end{align*}
In particular, $\P[\cG(\delta, \cB)] \geq 1-\delta$.
\end{lemma}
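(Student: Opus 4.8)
The plan is to establish the two events separately---$\P[\cE^{\conc}(\delta,\cB)]\geq 1-\delta/2$ and $\P[\cE^{\cnt}(\delta)]\geq 1-\delta/2$---and then conclude $\P[\cG(\delta,\cB)]\geq 1-\delta$ by a union bound over the two. Each event is of a standard type: the first is a uniform self-normalized concentration inequality, and the second is a Gram-matrix concentration inequality. In both cases the bound must be made uniform over all episodes $t\in\N$ and all steps $h\in[H]$, which is what forces the time-dependent factors inside the $\beta$ functions.

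For $\cE^{\conc}(\delta,\cB)$, the difficulty is that the value functions $\uV^{t}_{h+1}$ and $\lV^{t}_{h+1}$ are data-dependent through the least-squares weights $\uw^{t}_{h+1},\lw^{t}_{h+1}$ and the bonus matrix $[\Lambda^{t}_{h+1}]^{-1}$, so a martingale inequality cannot be applied directly to the sum appearing in the event. I would handle this with a uniform covering argument. First, for a fixed deterministic $V\colon\cS\to[0,H]$, the increments $\feat^\tau_h\{V(s^\tau_{h+1})-p_h V(s^\tau_h,a^\tau_h)\}$ form a martingale-difference sequence adapted to the natural filtration, so the self-normalized (method-of-mixtures) inequality controls $\norm{\sum_{\tau\leq t}\feat^\tau_h\{V(s^\tau_{h+1})-p_h V(s^\tau_h,a^\tau_h)\}}_{[\Lambda^t_h]^{-1}}$ with high probability. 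Next I would build a sup-norm net over the class $\mathcal{V}$ of admissible value functions: by \eqref{eq:optimistic_planning_reg_linear} and \eqref{eq:pessimistic_planning_reg_linear} each such $V$ is a $\clip$ to $[0,H]$ of a KL-regularized soft-maximum $\max_{\pi\in\simplex_{\cA}}\{\pi Q(s)-\lambda\KL(\pi\Vert\tpi_h(s))\}$ of a $Q$-function $\feat(s,a)^\top w+\cB\sqrt{\feat(s,a)^\top A\,\feat(s,a)}$, hence $\mathcal{V}$ is parameterized by a weight $w$ with $\norm{w}_2\leq 2H\sqrt d$ (Proposition~\ref{prop:lin_q_func}) and a symmetric matrix $A\preccurlyeq(1/\alpha)I_d$. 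Using that $\clip$ is $1$-Lipschitz, that the regularized soft-maximum is a log-sum-exp and therefore non-expansive in $\norm{\cdot}_\infty$, and that $\sqrt{\cdot}$ smooths the dependence on $A$, the maps $w\mapsto V$ and $A\mapsto V$ are Lipschitz, so a polynomially fine net controls the discretization error. A union bound of the fixed-$V$ inequality over this net and over $h\in[H]$, made uniform in $t$ (the source of the $(1+t^2)$ factor), together with the approximation error, yields the bound $2dH\sqrt{\beta^{\conc}(\delta,t,\cB)}$; here the summand $\log(1+8d^{1/2}t^2(\cB/(Hd))^2)$ is precisely the covering-number contribution and $2\log(H(1+t^2)/\delta)+5$ collects the union-bound and self-normalized penalties. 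The upper and lower value functions are treated identically.

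For $\cE^{\cnt}(\delta)$, I would use matrix concentration to show that the empirical Gram matrix $\Lambda^t_h-\alpha I_d=\sum_{\tau\leq t}\feat^\tau_h[\feat^\tau_h]^\top$ dominates half of its predictable counterpart $\uLambda^t_h-\alpha I_d=\sum_{\tau\leq t}\E_{\pi^{\mix,\tau}}[\feat\feat^\top]$. The cleanest route reduces the PSD inequality to a scalar statement over directions: for a fixed unit vector $v\in\R^d$, the increments $\langle v,\feat^\tau_h\rangle^2\in[0,1]$ are bounded with predictable mean $\E_{\pi^{\mix,\tau}}[\langle v,\feat\rangle^2]$, and a Bernstein/Freedman inequality gives $\sum_{\tau\leq t}\langle v,\feat^\tau_h\rangle^2\geq\tfrac12\sum_{\tau\leq t}\E[\langle v,\feat^\tau_h\rangle^2]-O(\log(1/\delta))$ with high probability. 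Taking an $\varepsilon$-net of the unit sphere of size $\exp(O(d\log(1/\varepsilon)))$ (choosing $\varepsilon\asymp 1/t$ produces the $4d\log(3t)$ term), union-bounding over the net, over $h\in[H]$ and over $t$ (the source of the $(2t+1)$ factor), and translating the directional bounds back to the ordering $\Lambda^t_h\succcurlyeq\tfrac12\uLambda^t_h-\beta^{\cnt}(\delta,t)I_d$ gives the claim; the residual constant and the $4\log(8\rme H(2t+1)/\delta)$ term are exactly what the Bernstein penalty and union bound contribute.

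The hard part will be the first event: because the target value functions are chosen adaptively, a naive martingale bound fails and one must pay for a covering of the entire admissible class while keeping the discretization error below the target rate. The two delicate points are (i) proving the non-expansiveness of the KL-regularized Bellman update and the Lipschitz dependence of the bonus on $w$ and $A$, so that a net of only polynomial fineness suffices, and (ii) making every inequality time-uniform in $t$ rather than valid at a single fixed horizon, which is what forces the $t^2$ and $(2t+1)$ factors inside the $\beta$ functions. Once both events hold with probability $1-\delta/2$, the final union bound $\P[\cG(\delta,\cB)]\geq 1-\delta$ is immediate.
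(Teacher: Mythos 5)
Your proposal follows essentially the same route as the paper: for $\cE^{\conc}(\delta,\cB)$ a fixed-function self-normalized inequality combined with a sup-norm covering of the clipped, KL-regularized value class (the paper's Lemma~\ref{lem:self_normalized_vector_hoeffding} together with Lemma~\ref{lem:bonus_covering_dim}), and for $\cE^{\cnt}(\delta)$ a directional Bernstein bound plus an $\varepsilon \asymp 1/t$ net on the unit sphere translated back into the PSD ordering (the paper's Lemma~\ref{lem:covariance_deviation}), finished by a union bound over the two events. The only slip is that the weight radius of the adaptive value class is not the $2H\sqrt{d}$ of Proposition~\ref{prop:lin_q_func} (which concerns true $Q$-functions of policies) but the LSVI-weight bound $2H\sqrt{dt/\alpha}$ of Lemma~\ref{lem:weights_lsvi_bound}; this $t$-dependent radius is what your covering term must (and implicitly does) accommodate.
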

\begin{proof}
Let us fix $h \in [H]$. Then for all $t\in \N$ by Lemma~\ref{lem:weights_lsvi_bound} we have $\norm{w^t_h}_2 \leq 2H \sqrt{dt/\alpha}$ and by a construction of $\Lambda^t_h$ we have $\lambda_{\min}(\Lambda^t_h) \geq \alpha$. Therefore, combination of Lemmas~\ref{lem:self_normalized_vector_hoeffding} and \ref{lem:bonus_covering_dim} for any fixed $\varepsilon > 0$ we have with probability at least $1-\delta/H$
\begin{align*}
    \left\Vert \sum_{\tau=1}^{t} \feat^\tau_h \left\{ \uV^{t}_{h+1}(s^\tau_{h+1}) - p_h \uV^{t}_{h+1}(s^\tau_{h}, a^\tau_h) \right\}  \right\Vert^2_{[\Lambda^{t}_h]^{-1}} &\leq 4 H^2 \biggl[ \frac{d}{2} \log\left( \frac{H(t + \alpha)}{\alpha \delta}\right) +  d \log\left(1 + \frac{8H d^{1/2} t^{1/2}}{\varepsilon \alpha^{1/2}} \right) \\
    &\qquad + d^2 \log\left( 1 + \frac{8 d^{1/2} \cB^2}{\alpha \varepsilon^2} \right) \biggl] + \frac{8 t^2 \varepsilon^2}{\alpha}\,.
\end{align*}
Next we take $\varepsilon = Hd/t$ and obtain by using $\alpha \geq 1$ and $d \geq 1$
\begin{align*}
    \left\Vert \sum_{\tau=1}^{t} \feat^\tau_h \left\{ \uV^{t}_{h+1}(s^\tau_{h+1}) - p_h \uV^{t}_{h+1}(s^\tau_{h}, a^\tau_h) \right\}  \right\Vert^2_{[\Lambda^{t}_h]^{-1}} &\leq 4 H^2 d^2 \biggl[2 \log\left( \frac{H(1+t^2)}{\delta}\right)  + 5  \\
    &\qquad + \log\left( 1 + 8 d^{1/2}t^2\cdot \left(\frac{\cB}{ H d}\right)^2  \right) \biggl]\,.
\end{align*}
Taking the square root, we conclude the first half of the statement; the second half is exactly the same since Lemma~\ref{lem:self_normalized_vector_hoeffding} gives a bound uniformly over all value functions.

By Theorem~\ref{lem:covariance_deviation} and union bound over $h \in [H]$,  $\P[\cE^{\cnt}(\delta)]\geq 1 - \delta/2$. The union bound over two prescribed events concludes $\P[\cG(\delta, \cB)] \geq 1 - \delta$.
\end{proof}

The proof of the following lemma remains exactly the same as in \cite{jin2019provably}.
\begin{lemma}\label{lem:weights_lsvi_bound}[Lemma B.2 by \cite{jin2019provably}]
    For any $(t,h) \in \N \times [H]$ the weights $w^t_h$ generated by \LSVIEnt satisfies
    \[
        \norm{w^t_h}_2 \leq 2H \sqrt{dt/\alpha}\,.
    \]
\end{lemma}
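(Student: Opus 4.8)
The plan is to reproduce the standard least-squares value iteration weight bound of Jin et al. First I would control the pseudo-labels $y^\tau \triangleq r^\tau_h + \uV^t_{h+1}(s^\tau_{h+1})$: since $r_h \in [0,1]$ and the optimistic value is clipped to $[0,H]$, each target satisfies $|y^\tau| \leq 1 + H \leq 2H$ (using $H \geq 1$). Starting from the closed form $\uw^t_h = [\Lambda^t_h]^{-1}\sum_{\tau=1}^t \feat^\tau_h\, y^\tau$, the triangle inequality gives
\[
\norm{\uw^t_h}_2 \leq \sum_{\tau=1}^t \norm{[\Lambda^t_h]^{-1}\feat^\tau_h}_2 \cdot |y^\tau| \leq 2H \sum_{\tau=1}^t \norm{[\Lambda^t_h]^{-1}\feat^\tau_h}_2\,.
\]

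The key step is to trade the Euclidean norm for the $[\Lambda^t_h]^{-1}$-norm. Because $\Lambda^t_h \succcurlyeq \alpha I_d$ by construction, we have $[\Lambda^t_h]^{-2} \preccurlyeq \alpha^{-1}[\Lambda^t_h]^{-1}$, whence
\[
\norm{[\Lambda^t_h]^{-1}\feat^\tau_h}_2 = \sqrt{[\feat^\tau_h]^\top [\Lambda^t_h]^{-2}\feat^\tau_h} \leq \alpha^{-1/2}\,\norm{\feat^\tau_h}_{[\Lambda^t_h]^{-1}}\,.
\]
Applying Cauchy--Schwarz over $\tau$ then yields
\[
\sum_{\tau=1}^t \norm{\feat^\tau_h}_{[\Lambda^t_h]^{-1}} \leq \sqrt{t}\,\sqrt{\sum_{\tau=1}^t \norm{\feat^\tau_h}^2_{[\Lambda^t_h]^{-1}}}\,.
\]

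Finally I would invoke the elliptical-potential/trace identity. Writing $\sum_\tau \feat^\tau_h[\feat^\tau_h]^\top = \Lambda^t_h - \alpha I_d \preccurlyeq \Lambda^t_h$ and using cyclicity of the trace,
\[
\sum_{\tau=1}^t \norm{\feat^\tau_h}^2_{[\Lambda^t_h]^{-1}} = \mathrm{tr}\Big([\Lambda^t_h]^{-1}\textstyle\sum_\tau \feat^\tau_h[\feat^\tau_h]^\top\Big) \leq \mathrm{tr}(I_d) = d\,.
\]
Combining the three displays gives $\norm{\uw^t_h}_2 \leq 2H\,\alpha^{-1/2}\sqrt{t}\sqrt{d} = 2H\sqrt{dt/\alpha}$, as claimed. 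The only places requiring any care are the conversion from $\norm{\cdot}_2$ to $\norm{\cdot}_{[\Lambda^t_h]^{-1}}$ (which is where the assumption $\alpha \geq 1$, or more precisely $\Lambda^t_h \succcurlyeq \alpha I_d$, enters) and the trace bound producing the factor $d$; both are routine, so I do not expect a genuine obstacle, and indeed the argument is identical to Lemma~B.2 of \cite{jin2019provably}.
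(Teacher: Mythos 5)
Your proof is correct and is essentially the argument the paper relies on: the paper simply defers to Lemma~B.2 of \cite{jin2019provably}, whose proof uses exactly your ingredients — the bound $|y^\tau|\leq 2H$ from clipping, conversion to the $[\Lambda^t_h]^{-1}$-norm via $\Lambda^t_h \succcurlyeq \alpha I_d$, Cauchy--Schwarz over $\tau$, and the trace bound $\sum_\tau \norm{\feat^\tau_h}^2_{[\Lambda^t_h]^{-1}} \leq d$. Your only cosmetic deviation (using $[\Lambda^t_h]^{-2} \preccurlyeq \alpha^{-1}[\Lambda^t_h]^{-1}$ on the Euclidean norm directly, rather than Jin et al.'s dual pairing $|v^\top w^t_h|$ over unit vectors $v$) is immaterial, and the same argument covers $\lw^t_h$ since $\lV^t_{h+1}$ is likewise clipped to $[0,H]$.
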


\subsection{Confidence Intervals}\label{app:CI_linear}

In this section, we provide the confidence intervals on the optimal Q-function that is required for the proof of sample complexity of \LSVIEnt. 

We start by specifying the required values of $\alpha$ and $\cB$.
\begin{equation}\label{eq:constant_alpha_cB}
    \alpha \triangleq 2(\beta^{\cnt}(\delta, T) + 1), \qquad  \cB = 32 dH \sqrt{\log\left( \frac{24 \rme dHT}{\delta} \right)}\,,
\end{equation}
where $\beta^{\cnt}$ is defined in Lemma~\ref{lem:linear_proba_master_event}.

\begin{proposition}\label{prop:decomposition_q_func}
    Let $\alpha$ and $\cB$ satisfy \eqref{eq:constant_alpha_cB}. Then on the event $\cG(\delta)$ defined in \lemref{lem:linear_proba_master_event} we have
    \begin{align*}
        \langle \feat(s,a), \uw^t_h \rangle  - \Qstar_{\tpi, \lambda, h}(s,a) = p_h[\uV^t_{h+1} - \Vstar_{\tpi, \lambda, h+1}](s,a) + \uDelta^t_h, \\
        \langle \feat(s,a), \lw^t_h \rangle  - \Qstar_{\tpi, \lambda, h}(s,a) = p_h[\lV^t_{h+1} - \Vstar_{\tpi, \lambda, h+1}](s,a) + \lDelta^t_h\,, 
    \end{align*}
    where $\uDelta^t_h(s,a)$ and $\lDelta^t_h(s,a)$ satisfies 
    \[
        \max\{ |\uDelta^t_h(s,a)|, |\lDelta^t_h(s,a)|\} \leq \cB \sqrt{\langle \feat(s,a), [\Lambda^{t}_h]^{-1} \feat(s,a) \rangle }\,.
    \]
\end{proposition}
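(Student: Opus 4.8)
The plan is to follow the standard least-squares value iteration error decomposition for linear MDPs (as in \cite{jin2019provably}), adapted to the \emph{regularized} optimal value $\Vstar_{\tpi,\lambda}$. I will treat the upper bound $\uw^t_h$ in detail; the lower bound $\lw^t_h$ is handled identically by replacing $\uV^t$ with $\lV^t$ and invoking the second inequality in the definition of $\cE^{\conc}(\delta,\cB)$.

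First I would record the linear representation of the regularized quantities. By the regularized Bellman equation and the linear MDP structure (Definition~\ref{def:linear_mdp} and Proposition~\ref{prop:lin_q_func}), one has $\Qstar_{\tpi,\lambda,h}(s,a) = \feat(s,a)^\top w^\star_h$ with $w^\star_h = \theta_h + \mu_h \Vstar_{\tpi,\lambda,h+1}$, and more generally $r_h(s,a) + p_h \uV^t_{h+1}(s,a) = \feat(s,a)^\top \widetilde w^{\,t}_h$ with $\widetilde w^{\,t}_h \triangleq \theta_h + \mu_h \uV^t_{h+1}$. Since $p_h[\uV^t_{h+1} - \Vstar_{\tpi,\lambda,h+1}](s,a) = \feat(s,a)^\top(\widetilde w^{\,t}_h - w^\star_h)$, the asserted main term is exactly $\feat(s,a)^\top(\widetilde w^{\,t}_h - w^\star_h)$, so it remains to show that $\uDelta^t_h(s,a) \triangleq \feat(s,a)^\top(\uw^t_h - \widetilde w^{\,t}_h)$ obeys the claimed bonus bound.

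Next I would expand $\uw^t_h - \widetilde w^{\,t}_h$. Writing $\Lambda^t_h \widetilde w^{\,t}_h = \alpha\widetilde w^{\,t}_h + \sum_\tau \feat^\tau_h [\feat^\tau_h]^\top \widetilde w^{\,t}_h$ and using the closed form of $\uw^t_h$ together with $[\feat^\tau_h]^\top \widetilde w^{\,t}_h = r^\tau_h + p_h \uV^t_{h+1}(s^\tau_h,a^\tau_h)$, the empirical target $r^\tau_h + \uV^t_{h+1}(s^\tau_{h+1})$ collapses to the martingale noise $\uV^t_{h+1}(s^\tau_{h+1}) - p_h\uV^t_{h+1}(s^\tau_h,a^\tau_h)$, giving
\begin{equation*}
\uw^t_h - \widetilde w^{\,t}_h = [\Lambda^t_h]^{-1}\sum_{\tau=1}^t \feat^\tau_h\bigl(\uV^t_{h+1}(s^\tau_{h+1}) - p_h\uV^t_{h+1}(s^\tau_h,a^\tau_h)\bigr) - \alpha[\Lambda^t_h]^{-1}\widetilde w^{\,t}_h.
\end{equation*}
Pairing with $\feat(s,a)$ and applying Cauchy--Schwarz in the $[\Lambda^t_h]^{-1}$-geometry bounds $|\uDelta^t_h(s,a)|$ by $\|\feat(s,a)\|_{[\Lambda^t_h]^{-1}}$ times the noise-term norm, plus $\alpha\|\feat(s,a)\|_{[\Lambda^t_h]^{-1}}\|\widetilde w^{\,t}_h\|_{[\Lambda^t_h]^{-1}}$. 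On $\cE^{\conc}(\delta,\cB)$ the noise-term norm is at most $2dH\sqrt{\beta^{\conc}(\delta,t,\cB)}$; for the second piece, $\Lambda^t_h \succeq \alpha I$ gives $\|\widetilde w^{\,t}_h\|_{[\Lambda^t_h]^{-1}}\leq \|\widetilde w^{\,t}_h\|_2/\sqrt\alpha$, and $0\leq \uV^t_{h+1}\leq H$ with $\|\theta_h\|_2,\|\mu_h(\cS)\|_2\leq\sqrt d$ yields $\|\widetilde w^{\,t}_h\|_2\leq 2H\sqrt d$, whence $\alpha\|\widetilde w^{\,t}_h\|_{[\Lambda^t_h]^{-1}}\leq 2H\sqrt{d\alpha}$.

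Collecting these gives $|\uDelta^t_h(s,a)| \leq \bigl(2dH\sqrt{\beta^{\conc}(\delta,t,\cB)} + 2H\sqrt{d\alpha}\bigr)\|\feat(s,a)\|_{[\Lambda^t_h]^{-1}}$. The final step — and the only delicate point — is to check that the parenthesized factor is at most $\cB$ for the prescribed choices $\alpha = 2(\beta^{\cnt}(\delta,T)+1)$ and $\cB = 32dH\sqrt{\log(24\rme dHT/\delta)}$ in \eqref{eq:constant_alpha_cB}. This is mildly self-referential, since $\beta^{\conc}$ contains $\cB$ inside a logarithm, so I would substitute $\cB/(Hd)=32\sqrt{\log(\cdots)}$, observe that $\beta^{\conc}(\delta,t,\cB)$ is then polylogarithmic in $d,H,T,1/\delta$, and verify (for $t\leq T$) that the constant $32$ leaves enough slack to dominate both $2dH\sqrt{\beta^{\conc}}$ and $2H\sqrt{d\alpha}$. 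I expect this constant-chasing to be the main obstacle, whereas the algebraic decomposition and the Cauchy--Schwarz bounds are routine.
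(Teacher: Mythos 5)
Your proposal is correct and follows essentially the same route as the paper's proof: the same least-squares error decomposition into a martingale-noise term (controlled by $\cE^{\conc}(\delta,\cB)$ via Cauchy--Schwarz in the $[\Lambda^t_h]^{-1}$-geometry) and $\alpha$-regularization terms, followed by the same self-referential constant check for $\cB$. The only difference is cosmetic: by introducing $\widetilde w^{\,t}_h = \theta_h + \mu_h \uV^t_{h+1}$ you merge the paper's two regularization terms $\xi_1$ and $\xi_4$ into the single term $-\alpha[\Lambda^t_h]^{-1}\widetilde w^{\,t}_h$, which in fact gives a marginally tighter constant ($2H\sqrt{d\alpha}$ instead of $4H\sqrt{d\alpha}$) and hence even more slack in the final verification that the bracket is at most $\cB$.
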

\begin{proof}
    We provide the proof only for the first equation since proof of one statement completely reassembles the other. By Proposition~\ref{prop:lin_q_func} and Bellman equations we have
    \[
        \Qstar_{\tpi, \lambda, h}(s,a) = \langle \feat(s,a), w^{\star}_h \rangle = r_h(s,a) + p_h \Vstar_{\tpi, \lambda, h+1}(s,a)\,,
    \]
    therefore
    \begin{align*}
        \uw^t_h - w^{\star}_h &= \left[ \Lambda^{t}_h \right]^{-1} \left[ \sum_{\tau=1}^{t} \feat^\tau_h [r^\tau_h + \uV^t_{h+1}(s^{\tau}_{h+1})] - \sum_{\tau=1}^{t} \feat^\tau_h \langle \feat(s^\tau_h, a^\tau_h), w^{\star}_h \rangle - \alpha w^{\star}_h  \right] \\
        &= \underbrace{-\alpha \left[ \Lambda^{t}_h \right]^{-1} w^{\star}_h}_{\xi_1} + \underbrace{\left[ \Lambda^{t}_h \right]^{-1}  \sum_{\tau=1}^{t} \feat^\tau_h \left[ \uV^t_{h+1}(s^\tau_{h+1}) - p_h \uV^t_{h+1}(s^\tau_h, a^\tau_h)  \right] }_{\xi_2} \\
        &+ \underbrace{\left[ \Lambda^{t}_h \right]^{-1}  \sum_{\tau=1}^{t} \feat^\tau_h p_h\left[ \uV^t_{h+1} - \Vstar_{\tpi, \lambda, h+1} \right](s^\tau_h, a^\tau_h)}_{\xi_3}\,.
    \end{align*}
    Next, we analyze the last term $\xi_3$. By Definition~\ref{def:linear_mdp} we have
    \[
        p_h\left[ \uV^t_{h+1} - \Vstar_{\tpi, \lambda, h+1} \right](s^\tau_h, a^\tau_h) = \left\langle \feat^\tau_h, \int_{\cS} [\uV^t_{h+1} - \Vstar_{\tpi, \lambda, h+1}](s') \mu_h(\rmd  s') \right\rangle\,,
    \]
    thus
    \begin{align*}
        \xi_3 &= \left[ \Lambda^{t}_h \right]^{-1}  \left( \sum_{\tau=1}^{t} \feat^\tau_h [\feat^\tau_h]^\top + \alpha I_d - \alpha I_d\right)\left[ \int_{\cS} [\uV^t_{h+1} - \Vstar_{\tpi, \lambda, h+1}](s') \mu_h(\rmd  s') \right]\\
        &= \int_{\cS} [\uV^t_{h+1} - \Vstar_{\tpi, \lambda, h+1}](s') \mu_h(\rmd  s') \underbrace{- \alpha \left[ \Lambda^{t}_h \right]^{-1}  \int_{\cS} [\uV^t_{h+1} - \Vstar_{\tpi, \lambda, h+1}](s') \mu_h(\rmd  s')}_{\xi_4}\,.
    \end{align*}
    As a result, moving to $Q$-values directly, we have
    \begin{align*}
        \langle \feat(s,a), \uw^t_h \rangle  - Q^\pi_h(s,a) &= \langle \feat(s,a), \uw^t_h - w^{\star}_h  \rangle \\
        &=p_h[\uV^t_{h+1} - \Vstar_{\tpi, \lambda, h+1}](s,a) + \underbrace{\langle \feat(s,a), \xi_1 + \xi_2 + \xi_4\rangle}_{\uDelta^t_h(s,a)}\,.
    \end{align*}
    Next, we compute an upper bound for $\uDelta^t_h(s,a)$. We start from the first term, where we apply Cauchy-Schwartz inequality and the second statement of Proposition~\ref{prop:lin_q_func}
    \begin{align*}
        \vert \langle \feat(s,a), \xi_1 \rangle\vert &=  \vert \langle \feat(s,a),  \alpha w^{\star}_h \rangle_{[\Lambda^{t}_h]^{-1}} \vert \leq  \alpha\norm{\feat(s,a)}_{[\Lambda^{t}_h]^{-1}} \cdot \norm{  w^{\star}_h}_{[\Lambda^{t}_h]^{-1}} \\
        &\leq 2 H \sqrt{d \alpha} \norm{\feat(s,a)}_{[\Lambda^{t}_h]^{-1}}\,,
    \end{align*}
    where we have used that $\norm{[\Lambda^{t}_h]^{-1}}_2 \leq 1/\alpha$. We apply the same construction for the third term and obtain the same upper bound. For the second term, we also apply Cauchy-Schwartz inequality and Lemma~\ref{lem:linear_proba_master_event}
    \begin{align*}
        \vert \langle \feat(s,a), \xi_1 \rangle\vert &\leq  \norm{\feat(s,a)}_{[\Lambda^{t}_h]^{-1}}  \left\Vert\sum_{\tau=1}^{t} \feat^\tau_h \left[ \uV^t_{h+1}(s^\tau_{h+1}) - p_h \uV^t_{h+1}(s^\tau_h, a^\tau_h)  \right]\right\Vert_{[\Lambda^{t}_h]^{-1}} \\
        &\leq 2 dH \sqrt{\beta^{\conc}(\delta, t, \cB)} \norm{\feat(s,a)}_{[\Lambda^{t}_h]^{-1}}\,.
    \end{align*}
    Thus, we have
    \[
        |\uDelta^t_h(s,a)| \leq \left[ 2dH \sqrt{\beta^{\conc}(\delta,T)} + 4 H \sqrt{2 d (\beta^{\cnt}(\delta,T)+1)} \right] \sqrt{[\feat(s,a)]^\top [\Lambda^{t}_h]^{-1} \feat(s,a) }\,.
    \]
    The only part is to show that for our particular choice of $\cB$ it holds
    \[
          2dH \sqrt{\beta^{\conc}(\delta,T, \cB)} + 4 H \sqrt{2 d (\beta^{\cnt}(\delta,T)+1)}\leq \cB\,.
    \]
    First, we notice that
    \[
         4 H \sqrt{2 d (\beta^{\cnt}(\delta,T)+1)} \leq 16 H d \sqrt{ \log\left( \frac{24\rme H T}{\delta} \right)} \leq \cB/2\,.
    \]
    Thus, it is enough to show
    \[
        \beta^{\conc}(\delta, T, \cB) = 2 \log\left( \frac{H(1+T^2)}{\delta} \right) + 5 + \log\left(1 + 8 d^{1/2} T^2 \left( \frac{\cB}{dH} \right)^2 \right) \leq \frac{1}{16}\left( \frac{\cB}{dH} \right)^2\,.
    \]
    First, we notice that since $T \geq 1$ and $\delta \in (0,1)$ then
    \begin{align*}
        2\log\left( \frac{H(1+T^2)}{\delta} \right) + 5 \leq 4\log \left( \frac{2TH\rme^2}{\delta} \right)\,,
    \end{align*}
    and also, using the inequality $\log(1+x) \leq x$ for any $x \geq 0$
    \begin{align*}
      \log\left(1 + 8 d^{1/2} T^2 \left( \frac{\cB}{dH} \right)^2\right) &\leq  \log\left(1 + \frac{1}{32}\left(\frac{\cB}{dH}\right)^2\right) + \log\left(32 \cdot 8 \cdot d^{1/2} T^2 \right) \\
      &\leq \frac{1}{32}\left(\frac{\cB}{dH}\right)^2+ 2\log\left(16 \cdot d T \right)\,.
    \end{align*}
    Thus, it is enough for $\cB$ to satisfy the following inequalities
    \[
        \log\left( \frac{24\rme H T}{\delta}  \right) \leq \left(\frac{\cB}{32 dH}\right)^2, \quad 4\log \left( \frac{2TH\rme^2}{\delta} \right) \leq \left(\frac{\cB}{8dH}\right)^2,  \quad 2 \log(16 dT) \leq \left(\frac{\cB}{8dH}\right)^2\,.
    \]
    It is clear that the choice $\cB$ defined in \eqref{eq:constant_alpha_cB} satisfies all required inequalities.
\end{proof}

\begin{corollary}[Confidence intervals validity]\label{cor:optimism}
    Let constant $\alpha$ and $\cB$ defined in \eqref{eq:constant_alpha_cB}. Then on the event $\cG(\delta, \cB)$ we have  $\uQ^t_h(s,a) \geq \Qstar_{\tpi, \lambda, h}(s,a) \geq \lQ^t_h(s,a)$ and $\uV^t_h(s) \geq \Vstar_{\tpi, \lambda, h}(s) \geq \lV^t_h(s)$ for any $t \in [T], h\in [H], (s,a) \in \cS \times \cA$.
\end{corollary}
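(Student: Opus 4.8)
The plan is to argue by backward induction on $h$, descending from $h=H+1$ to $h=1$, for each fixed episode index $t\in[T]$, establishing simultaneously the two chains $\lQ^t_h \leq \Qstar_{\tpi,\lambda,h} \leq \uQ^t_h$ and $\lV^t_h \leq \Vstar_{\tpi,\lambda,h} \leq \uV^t_h$ everywhere. The base case $h=H+1$ is immediate since all four quantities vanish by convention. Throughout we work on the event $\cG(\delta,\cB)$, which is precisely what licenses the decomposition of Proposition~\ref{prop:decomposition_q_func}.

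For the inductive step I would combine that decomposition with the inductive hypothesis at level $h+1$. For the upper bound, writing $\uQ^t_h = \feat^\top\uw^t_h + b^t_h$ and using Proposition~\ref{prop:decomposition_q_func},
\[
\uQ^t_h(s,a) - \Qstar_{\tpi,\lambda,h}(s,a) = p_h[\uV^t_{h+1} - \Vstar_{\tpi,\lambda,h+1}](s,a) + \uDelta^t_h(s,a) + b^t_h(s,a).
\]
The first term is non-negative because $p_h$ is an expectation (positive) operator and $\uV^t_{h+1} \geq \Vstar_{\tpi,\lambda,h+1}$ by induction; the remaining piece is non-negative because the bonus $b^t_h(s,a)=\cB\sqrt{\feat(s,a)^\top[\Lambda^t_h]^{-1}\feat(s,a)}$ exactly matches the error bound $|\uDelta^t_h(s,a)|\leq \cB\sqrt{\feat(s,a)^\top[\Lambda^t_h]^{-1}\feat(s,a)}$, so $\uDelta^t_h+b^t_h\geq 0$. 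The lower bound is the mirror argument, using $\lV^t_{h+1}\leq \Vstar_{\tpi,\lambda,h+1}$ and $\lDelta^t_h - b^t_h \leq 0$.

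To lift these $Q$-inequalities to the $V$-level I would invoke the monotonicity of the smoothed-max operator $F_{\tpi_h(s),\lambda,h}$: since $\nabla F_{\tpi_h(s),\lambda,h}$ takes values in $\simplex_{\cA}$, all its partial derivatives are non-negative, hence $F$ is coordinatewise monotone. Applying it to the pointwise ordering $\lQ^t_h(s,\cdot)\leq \Qstar_{\tpi,\lambda,h}(s,\cdot)\leq \uQ^t_h(s,\cdot)$ transfers the ordering to the unclipped regularized maxima. Finally I would absorb the clipping using the a priori bound $\Vstar_{\tpi,\lambda,h}(s)\in[0,H]$: since the optimal regularized value already lies in $[0,H]$, $\clip(\cdot,0,H)$ cannot violate the upper inequality (capping at $H$ leaves a value $\geq \Vstar$ because $\Vstar\leq H$, and raising toward $0$ only increases it) nor the lower inequality (the capped value stays $\leq \Vstar$, and if it is pushed up to $0$ we still have $0\leq \Vstar$). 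This yields $\lV^t_h \leq \Vstar_{\tpi,\lambda,h}\leq \uV^t_h$ and closes the induction.

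Most of the analytic weight is already carried by Proposition~\ref{prop:decomposition_q_func}, whose self-normalized concentration calibrates $\cB$ so that the estimation error is dominated by the elliptic bonus; given that, the present statement is essentially bookkeeping. The one genuine care point, and where I would be most careful, is the interaction of the clipping with the sign of each induction step: one must use the bound $\Vstar_{\tpi,\lambda,h}\in[0,H]$ — available because the reference policy $\tpi$ itself has non-negative value and rewards lie in $[0,1]$ — rather than any bound on the estimated values, so that $\clip(\cdot,0,H)$ preserves both directions of the confidence interval.
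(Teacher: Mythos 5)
Your proof is correct and takes essentially the same route as the paper's: backward induction over $h$, with Proposition~\ref{prop:decomposition_q_func} plus the calibration $|\uDelta^t_h|\leq b^t_h$ yielding the $Q$-level inequalities, and coordinatewise monotonicity of $F_{\tpi_h(s),\lambda,h}$ (gradients in the simplex) lifting them to the $V$-level. Your handling of the clipping via the a priori bound $\Vstar_{\tpi,\lambda,h}\in[0,H]$ matches the paper's argument (which dismisses the case where upper clipping occurs as trivial), only spelled out in more detail.
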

\begin{proof}
    Let us prove using backward induction over $h \in [H]$. For $h=H+1$ this statement is trivially true. Let us assume that the statement holds for any $h' > h$.  Thus by Proposition~\ref{prop:decomposition_q_func}
    \[
        \uQ^t_h(s,a) - \Qstar_{\tpi, \lambda, h}(s,a) = \uDelta^t_h(s,a) + \cB \sqrt{[\feat(s,a)]^\top [\Lambda^{t}_h]^{-1} \feat(s,a)} + p_h\left[ \uV^t_{h+1} - \Vstar_{h+1} \right](s,a)\,.
    \]
    Notice that $\uDelta^t_h(s,a) + \cB \sqrt{[\feat(s,a)]^\top [\Lambda^{t}_h]^{-1} \feat(s,a)}  \geq 0$ and by induction hypothesis  $\uV^t_{h+1} - \Vstar_{h+1} \geq 0$ for any $s'$. Thus, we have proven the required statement for $Q$-values. To show it for $V$-values, we notice that if upper clipping in the definition $\uV$ in \eqref{eq:optimistic_planning_reg_linear} occurs, then the statement trivially holds. Otherwise, we have
    \[
        \uV^t_h(s) - \Vstar_{\tpi, \lambda, h}(s) \geq F_{\tpi_h(s), \lambda,h}(\uQ^t_h(s)) - F_{\tpi_h(s), \lambda,h}(\Qstar_{\tpi, \lambda, h}(s))\,.
    \]
    However, the function $F_{\tpi_h(s), \lambda,h}$ is monotone since its gradients lie in a probability simplex. Thus, $\uV^t_h(s) - \Vstar_{\tpi, \lambda, h}(s) \geq 0$. Using exactly the same reasoning, we may show the lower confidence bound.
\end{proof}

\subsection{Sample Complexity Bounds}\label{app:linear_sample_complexity}

In this section, we provide the sample complexity result of the \LSVIEnt algorithm. We start with a general result that does not depend on the properties of linear MDPs but depends on the algorithms' properties.

\begin{lemma}\label{lem:lin_error_decomp}
    Let constants $\alpha,\cB$ be defined by \eqref{eq:constant_alpha_cB}. Then on event $\cG(\delta, \cB)$ defined in Lemma~\ref{lem:linear_proba_master_event} for any $t\in\N$, $s\in \cS, h \in [H]$ 
    \[
        \Vstar_{\tpi,\lambda,h}(s) - V^{\bpi^{t+1}}_{\tpi,\lambda,h}(s) \leq \frac{1}{2\lambda} \E_{\bpi^{t+1}}\left[  \sum_{h=1}^H \max_{a\in \cA}\left( \uQ^t_{h} - \lQ^t_h \right)^2(s_h,a)\right]\,.
    \]
\end{lemma}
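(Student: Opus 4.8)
The plan is to mirror the proof of the corresponding finite-MDP statement (Lemma~\ref{lem:reg_aware_stopping_rule}) but working with the linear confidence intervals established in Proposition~\ref{prop:decomposition_q_func} and Corollary~\ref{cor:optimism} instead of the tabular concentration events. I would proceed by backward induction on $h$, proving the one-step inequality
\[
    \Vstar_{\tpi,\lambda,h}(s) - V^{\bpi^{t+1}}_{\tpi,\lambda,h}(s) \leq \frac{1}{2\lambda} \E_{\bpi^{t+1}}\left[ \sum_{h'=h}^H \max_{a\in \cA}\left( \uQ^t_{h'} - \lQ^t_{h'} \right)^2(s_{h'},a) \,\Big|\, s_h = s\right]\,.
\]
The base case $h = H+1$ is trivial since both sides vanish.

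For the inductive step, the first key move is to exploit the smoothness of the Fenchel conjugate $F_{\tpi_h(s),\lambda,h}$. Since $\bpi^{t+1}_h(s) = \nabla F_{\tpi_h(s),\lambda,h}(\uQ^t_h(s,\cdot))$ and $F$ is $1/\lambda$-strongly smooth with respect to $\norm{\cdot}_\infty$ (as recalled in Appendix~\ref{app:linear_gen_properties}, carried over from the tabular preliminaries), I would write
\begin{align*}
    \Vstar_{\tpi,\lambda,h}(s) - V^{\bpi^{t+1}}_{\tpi,\lambda,h}(s) &= F_{\tpi_h(s),\lambda,h}(\Qstar_{\tpi,\lambda,h}(s,\cdot)) - \bpi^{t+1}_h Q^{\bpi^{t+1}}_{\tpi,\lambda,h}(s) + \lambda \KL(\bpi^{t+1}_h(s)\Vert\tpi_h(s))\\
    &\leq \bpi^{t+1}_h\left[\uQ^t_h - Q^{\bpi^{t+1}}_{\tpi,\lambda,h}\right](s) + \frac{1}{2\lambda}\norm{\uQ^t_h - \Qstar_{\tpi,\lambda,h}}_\infty^2(s)\,,
\end{align*}
exactly as in the finite case, using that the linear conjugate $F$ has the same form and the same strong-smoothness property. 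Then the Bellman equations give $\bpi^{t+1}_h[\uQ^t_h - Q^{\bpi^{t+1}}_{\tpi,\lambda,h}](s) = \bpi^{t+1}_h p_h[\uV^t_{h+1} - V^{\bpi^{t+1}}_{\tpi,\lambda,h+1}](s)$, and I would bound $V^{\bpi^{t+1}}_{\tpi,\lambda,h+1} \geq \lV^t_{h+1}$... but the cleaner route is to insert $\Vstar_{\tpi,\lambda,h+1}$: the difference $\uV^t_{h+1} - V^{\bpi^{t+1}}_{\tpi,\lambda,h+1} = (\uV^t_{h+1} - \Vstar_{\tpi,\lambda,h+1}) + (\Vstar_{\tpi,\lambda,h+1} - V^{\bpi^{t+1}}_{\tpi,\lambda,h+1})$, where the first summand is controlled by the confidence width and the second by the induction hypothesis.

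The main obstacle — and the place where the linear proof diverges from the tabular one — is the treatment of the $p_h[\uV^t_{h+1}-\Vstar_{\tpi,\lambda,h+1}]$ term. In the finite setting this was handled by the directional concentration bound (Lemma~\ref{lem:reg_directional_concentration}) that produces a $(1+1/H)$ inflation factor and an explicit count-based bonus; in the linear setting no such Bernstein-type directional bound is available, which is precisely why the algorithm uses a deterministic stopping rule. Instead, I would bound $p_h[\uV^t_{h+1}-\Vstar_{\tpi,\lambda,h+1}]$ by $\bpi^{t+1}_h p_h[\uQ^t_{h+1}-\lQ^t_{h+1}]$ using Corollary~\ref{cor:optimism} and monotonicity of $F$, and note that since both $\uV^t$ and $\Vstar$ are nonnegative and $\Vstar \geq \lV^t$, everything telescopes without any multiplicative inflation. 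The key simplification is that the entire first-order term collapses: using the smoothness inequality and the confidence interval validity, the contributions from $\uV^t_{h+1}-\Vstar_{\tpi,\lambda,h+1}$ can be absorbed so that only the quadratic gap terms $\frac{1}{2\lambda}\max_a(\uQ^t_{h'}-\lQ^t_{h'})^2$ survive. Rolling the recursion out from $h$ to $H$ under $\E_{\bpi^{t+1}}$ then yields the claimed bound; setting $h=1$, $s=s_1$ and taking full expectation gives the stated form. I would double-check that no $b^{\gap}$-type additive bonus appears (unlike the finite case), which is consistent with the statement containing purely the quadratic confidence-width sum divided by $2\lambda$.
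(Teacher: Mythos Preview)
Your overall skeleton (backward induction, $1/\lambda$-smoothness of $F_{\tpi_h(s),\lambda,h}$, confidence intervals from Corollary~\ref{cor:optimism}) matches the paper, but the handling of the first-order term contains a genuine gap.

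After the smoothness expansion at $\uQ^t_h$, the linear piece is $\langle \nabla F(\uQ^t_h),\,\Qstar_{\tpi,\lambda,h}-\uQ^t_h\rangle = \bpi^{t+1}_h[\Qstar_{\tpi,\lambda,h}-\uQ^t_h]$. Combined with $F(\uQ^t_h)-\bigl(\bpi^{t+1}_h Q^{\bpi^{t+1}}_{\tpi,\lambda,h}-\lambda\KL\bigr)=\bpi^{t+1}_h[\uQ^t_h-Q^{\bpi^{t+1}}_{\tpi,\lambda,h}]$, this yields exactly $\bpi^{t+1}_h[\Qstar_{\tpi,\lambda,h}-Q^{\bpi^{t+1}}_{\tpi,\lambda,h}]$. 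Your displayed inequality instead keeps $\bpi^{t+1}_h[\uQ^t_h-Q^{\bpi^{t+1}}_{\tpi,\lambda,h}]$ (valid but weaker, since $\Qstar\le\uQ^t$ by optimism), and then asserts that this equals $\bpi^{t+1}_h p_h[\uV^t_{h+1}-V^{\bpi^{t+1}}_{\tpi,\lambda,h+1}]$. That equality is false: $\uQ^t_h$ does \emph{not} satisfy a Bellman equation $\uQ^t_h=r_h+p_h\uV^t_{h+1}$ under the true kernel; by Proposition~\ref{prop:decomposition_q_func} the difference carries a bonus and a least-squares error of first order in $\cB\norm{\feat}_{[\Lambda^t_h]^{-1}}$. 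Once those survive, the term $\uV^t_{h+1}-\Vstar_{\tpi,\lambda,h+1}$ you then try to ``absorb'' is itself first-order in the confidence width, and nothing you wrote turns it into the purely quadratic sum $\tfrac{1}{2\lambda}\sum_{h'}\max_a(\uQ^t_{h'}-\lQ^t_{h'})^2$. Your claim that ``the entire first-order term collapses'' is therefore unsupported.

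The paper's route fixes this by \emph{retaining} the gradient term so that the first-order part becomes $\bpi^{t+1}_h[\Qstar_{\tpi,\lambda,h}-Q^{\bpi^{t+1}}_{\tpi,\lambda,h}]$. Since both $\Qstar_{\tpi,\lambda,h}$ and $Q^{\bpi^{t+1}}_{\tpi,\lambda,h}$ obey the exact regularized Bellman equations under the true $p_h$, their difference is precisely $p_h[\Vstar_{\tpi,\lambda,h+1}-V^{\bpi^{t+1}}_{\tpi,\lambda,h+1}]$ with no bonus residue, and the induction hypothesis applies directly. The only accumulating term is then the quadratic $\tfrac{1}{2\lambda}\norm{\uQ^t_h-\Qstar_{\tpi,\lambda,h}}_\infty^2\le\tfrac{1}{2\lambda}\max_a(\uQ^t_h-\lQ^t_h)^2$. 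This is why no additive $b^{\gap}$ bonus appears in the linear statement: not because anything collapses, but because the first-order term was never contaminated with bonuses in the first place. (Incidentally, this is also ``exactly as in the finite case'': Lemma~\ref{lem:reg_aware_stopping_rule} gets $\bpi^{t+1}_h[\Qstar-Q^{\bpi^{t+1}}]$ too; the $b^{\gap}$ there arises only afterwards, from switching $p_h$ to $\hp^t_h$, a step that is absent in the linear proof.)
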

\begin{proof}
    Let us proceed by induction. For $h=H+1$ the statement is trivial. Assume that for any $h' > h$ the statement holds. Also, assume that $G^t_{h}(s) < H$; otherwise, the inequality on the policy error holds trivially. In particular, it holds that $n^t_h(s,a) > 0$ for all $a \in \cA$.

    We can start analysis from understanding the policy error by applying the smoothness of $F_{\tpi_h(s), \lambda,h}$. 
    \begin{align*}
        \Vstar_{\tpi,\lambda,h}(s) - V^{\bpi^{t+1}}_{\tpi, \lambda,h}(s) &= F_{\tpi_h(s), \lambda,h}(\Qstar_{\tpi,\lambda,h}(s, \cdot)) - \left(\bpi^{t+1}_h Q^{\bpi^{t+1}}_{\tpi,\lambda,h}(s, \cdot)  -\lambda \KL(\bpi^{t+1}_h(s) \Vert \tpi_h(s)) \right) \\
        &\leq F_{\tpi_h(s), \lambda,h}(\uQ^{t}_h(s,\cdot)) + \langle \nabla F_{\tpi_h(s), \lambda,h}(\uQ^{t}_h(s,\cdot)), \Qstar_{\tpi,\lambda,h}(s,\cdot) - \uQ^t_h(s,\cdot)  \rangle \\
        &+ \frac{1}{2\lambda} \norm{\uQ^t_h - \Qstar_{\tpi,\lambda,h}}_{\infty}^2(s) - \left(\bpi^{t+1}_h Q^{\bpi^{t+1}}_{\tpi,\lambda,h}(s, \cdot)  -\lambda \KL(\bpi^{t+1}_h(s) \Vert \tpi_h(s)) \right)\,.
    \end{align*}
    Next we recall that
    \[
         \bpi^{t+1}_h(s) = \nabla F(\uQ^{t}_{h}(s,\cdot)), \quad F(\uQ^{t}_{h})(s)  =  \bpi^{t+1}_h \uQ^{t}_{h}(s) - \lambda \KL(\bpi^{t+1}_h(s) \Vert \tpi_h(s))\,,
    \]
    thus we have
    \[
        F(\uQ^t_{h})(s) - \left( \bpi^{t+1}_h Q^{\bpi^{t+1}}_{\tpi,\lambda,h}(s, \cdot) - \lambda \KL(\bpi^{t+1}_h(s) \Vert \tpi_h(s))  \right) = \bpi^{t+1}_h [ \uQ^t_{h} - Q^{\bpi^{t+1}}_{\tpi,\lambda,h}](s)
    \]
    and, by Bellman equations
    \begin{align*}
        \Vstar_{\tpi,\lambda,h}(s) - V^{\bpi^{t+1}}_{\tpi, \lambda,h}(s) &\leq \bpi^{t+1}_h \left[ \Qstar_{\tpi,\lambda,h} - Q^{\bpi^{t+1}}_{\tpi,\lambda,h} \right] (s) + \frac{1}{2\lambda} \norm{\uQ^t_h - \Qstar_{\tpi,\lambda,h}}_\infty^2(s) \\
        &\leq \bpi^{t+1}_h p_h \left[ \Vstar_{\tpi,\lambda,h+1} - V^{\bpi^{t+1}}_{\tpi,\lambda,h+1} \right] (s) + \frac{1}{2\lambda} \norm{\uQ^t_h - \Qstar_{\tpi,\lambda,h}}_\infty^2(s)\,.
    \end{align*}
    Next, we start by changing the norm and using the properties of $\uQ$ and $\lQ$ (see Corollary~\ref{cor:optimism})
    \[
         \norm{\uQ^t_h - \Qstar_{\tpi,\lambda,h}}_{\infty}^2(s) =  \max_{a \in \cA} \left( \uQ^t_h(s,a) - \Qstar_{\tpi, \lambda, h}(s,a) \right)^2 \leq  \max_{a \in \cA} \left( \uQ^t_h(s,a) - \lQ^t_h(s,a) \right)^2\,.
    \]
\end{proof}

\begin{theorem}\label{th:linear_sample complexity}
    Let $\varepsilon > 0$,$\delta \in (0,1)$. Then $\LSVIEnt$ algorithm with a choice of parameters described in \eqref{eq:constant_alpha_cB} is $(\varepsilon,\delta)$-PAC for the best policy identification in regularized MDPs after
    \[
        T = \tcO\left( \frac{H^5 d^2 }{\lambda \varepsilon} \right)
    \]
    iterates.
\end{theorem}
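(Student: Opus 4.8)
The plan is to run the same regret-to-PAC scheme as in the finite case (Theorem~\ref{th:reg_aware_sample_complexity}), with the tabular counting lemma replaced by an elliptical-potential argument. Throughout I work on the favorable event $\cG(\delta,\cB)$ of Lemma~\ref{lem:linear_proba_master_event}, which holds with probability at least $1-\delta$ and on which the confidence bounds $\lQ^t_h\le\Qstar_{\tpi,\lambda,h}\le\uQ^t_h$ are valid (Corollary~\ref{cor:optimism}). Write $G^{t}:=\Vstar_{\tpi,\lambda,1}(s_1)-V^{\bpi^{t+1}}_{\tpi,\lambda,1}(s_1)$ as in Lemma~\ref{lem:lin_error_decomp}. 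Since \LSVIEnt{} outputs the uniform mixture $\hpi$ over $\{\bpi^t\}_{t\in[T]}$ and the regularized value is affine in the mixing, linearity gives, up to a harmless index shift,
\[
\Vstar_{\tpi,\lambda,1}(s_1)-V^{\hpi}_{\tpi,\lambda,1}(s_1)=\frac1T\sum_{t=1}^{T}\bigl(\Vstar_{\tpi,\lambda,1}(s_1)-V^{\bpi^{t}}_{\tpi,\lambda,1}(s_1)\bigr)\le\frac1T\sum_{t}G^{t}\,,
\]
so it suffices to show $\sum_{t\le T}G^t\le\varepsilon T$ once $T$ reaches the claimed order; this certifies $(\varepsilon,\delta)$-PAC with the deterministic stopping $\iota=T$.

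For the per-episode gap, Lemma~\ref{lem:lin_error_decomp} gives $G^t\le\frac{1}{2\lambda}\E_{\bpi^{t+1}}\bigl[\sum_{h=1}^H\max_a(\uQ^t_h-\lQ^t_h)^2(s_h,a)\bigr]$, so everything reduces to controlling the squared confidence width. Using the decomposition of Proposition~\ref{prop:decomposition_q_func} together with $\uV^t_{h+1}-\lV^t_{h+1}\le\bpi^{t+1}_{h+1}[\uQ^t_{h+1}-\lQ^t_{h+1}]$ (monotonicity of $F_{\tpi_h(s),\lambda,h}$) and unrolling the induced recursion, I obtain a purely bonus-driven bound
\[
\uQ^t_h(s,a)-\lQ^t_h(s,a)\;\le\;4\cB\,\E_{\bpi^{t+1}}\Bigl[\textstyle\sum_{h'=h}^{H}\sqrt{\feat(s_{h'},a_{h'})^\top[\Lambda^t_{h'}]^{-1}\feat(s_{h'},a_{h'})}\,\Big|\,s_h=s,a_h=a\Bigr]\,.
\]
Here the $p_h$-propagation is \emph{exact} (no extra concentration error), since $p_hV$ is linear in $\feat$; this is the main simplification over the tabular analysis.

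Next I replace $\max_a(\cdot)^2$ at step $h$ by the exploratory policy $\pi^{t+1,(h)}$ of \eqref{eq:pi_t_h_definition} via the same change-of-policy identity used in the finite case, square the unrolled bound with Cauchy–Schwarz (costing a factor $H$) and Jensen, and sum over $h$, recognizing $\pi^{\mix,t}=\frac{1}{H+1}\sum_{h}\pi^{t,(h)}$ after swapping the order of summation. This yields
\[
\sum_{t=1}^{T}G^t\;\le\;\frac{C\,\cB^2H^2}{\lambda}\sum_{t=1}^{T}\sum_{h=1}^{H}\E_{\pi^{\mix,t}}\bigl[\feat(s_h,a_h)^\top[\Lambda^t_h]^{-1}\feat(s_h,a_h)\bigr]\,.
\]
The remaining sum is an \emph{expected} elliptical potential, which I convert to an empirical one through the covariance-concentration event $\cE^{\cnt}(\delta)$: from $\Lambda^t_h\succcurlyeq\frac12\uLambda^t_h-\beta^{\cnt}I_d$ and the choice $\alpha=2(\beta^{\cnt}+1)$ in \eqref{eq:constant_alpha_cB} (which ensures $\frac12\uLambda^t_h-\beta^{\cnt}I_d\succ0$), each increment $\E_{\pi^{\mix,t}}[\feat\feat^\top]=\uLambda^t_h-\uLambda^{t-1}_h$ becomes comparable to the realized $\Lambda$-increment, after which a standard determinant/elliptical-potential lemma gives $\sum_{t\le T}\E_{\pi^{\mix,t}}[\feat^\top[\Lambda^t_h]^{-1}\feat]=\tcO(d)$ per step $h$, hence $\tcO(dH)$ over the horizon.

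Collecting the horizon and dimension factors and substituting $\cB=\tcO(dH)$ and $\alpha=\tcO(d)$ produces $\sum_{t\le T}G^t=\tcO(H^5d^2/\lambda)$; dividing by $T$ and forcing the average gap below $\varepsilon$ yields the advertised $T=\tcO(H^5d^2/(\lambda\varepsilon))$. The main obstacle is exactly this population-to-empirical step: unlike the finite proof, where visitation counts are lower-bounded by their pseudo-counts through the \emph{scalar} event $\cE^{\cnt}$ and summed by the elementary $\sum 1/n$ bound, here one must control a \emph{matrix} quantity, which is why the Ridge level $\alpha$ is tied to $\beta^{\cnt}$ and why a deterministic stopping rule is used rather than the adaptive gap-based one of the finite setting — adapting a Bernstein-type stopping criterion to the linear geometry remains unclear.
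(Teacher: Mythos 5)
Your proposal is correct and follows the paper's proof essentially step for step: the same reduction of the mixture policy's suboptimality to the average per-episode gap, the same use of Lemma~\ref{lem:lin_error_decomp}, the same rollout of the confidence width via Proposition~\ref{prop:decomposition_q_func} and the monotonicity recursion, the same change-of-policy/Cauchy--Schwarz/Jensen step producing the mixture $\pi^{\mix,t}$, and the same combination of the covariance event $\cE^{\cnt}(\delta)$ (with $\alpha$ tied to $\beta^{\cnt}$) and the determinant/elliptical-potential lemma (Lemma~\ref{lem:sum_1_n_linear}), followed by the identical final accounting. One wording slip worth noting: the Gram-matrix replacement goes in the opposite direction from your phrase ``convert to an empirical one'' --- on $\cE^{\cnt}(\delta)$ one dominates the \emph{empirical} matrix, $[\Lambda^t_h]^{-1}\preccurlyeq 2[\widetilde{\Lambda}^t_h]^{-1}$, so that the entire potential argument runs at the population level, whose increments are exactly $\E_{\pi^{\mix,t}}[\feat\feat^\top]$, which is precisely what your displayed inequalities in fact use.
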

\begin{proof}
    First, we notice that the definition of the output policy $\hpi$ as a mixture policy over $\{\bpi^t\}_{t\in[T]}$ allows us to define the following
    \[
        \Vstar_{\tpi, \lambda, 1}(s_1) - V^{\hpi}_{\tpi, \lambda, 1}(s_1) = \frac{1}{T} \sum_{i=1}^T \{ \Vstar_{\tpi, \lambda, 1}(s_1) - V^{\bpi^t}_{\tpi, \lambda, 1}(s_1)\}\,.
    \]
    Therefore it is enough to compute only the average regret of the presented procedure. In the sequel we assume the event $\cG(\delta, \cB)$ for $\cB$ defined in \eqref{eq:constant_alpha_cB}.
    
    \paragraph{Step 1. Study of sub-optimality gap}
    Let us fix $t \in [T]$, then by Lemma~\ref{lem:lin_error_decomp} we have
    \[
        \Vstar_{\tpi, \lambda, 1}(s_1) - V^{\bpi^{t+1}}_1(s_1) \leq \frac{1}{2\lambda} \sum_{h=1}^H \E_{\bpi^{t+1}}\left[  \max_{a\in \cA}\left( \uQ^t_{h} - \lQ^t_h \right)^2(s_h,a)\right]\,.
    \]
    Next, we analyze each term separately, starting from the difference between Q-values inside. Let us fix $h\in[H]$, then by Proposition~\ref{prop:decomposition_q_func}
    \begin{align}\label{eq:q_interval_rollout}
        \begin{split}
        \uQ^t_{h}(s,a) - \lQ^t_h(s,a) &= [\uQ^t_{h} - \Qstar_{\tpi, \lambda, h}](s,a) - [\lQ^t_h - \Qstar_{\tpi, \lambda, h}](s,a) \\
        &\leq p_h \left[ \uV^t_{h+1} - \lV^t_{h+1} \right](s,a) + 4\cB \norm{\feat(s,a)}_{[\Lambda^t_h]^{-1}}\,.
        \end{split}
    \end{align}
    Next, we have for any $h'\in[H]$ and any $s' \in \cS$
    \[
         \left[\uV^t_{h'} - \lV^t_{h'} \right](s') \leq \bpi^{t+1}_h \left[ \uQ^t_{h'} - \lQ^t_{h'} \right](s')\,,
    \]
    therefore we can roll-out the equation \eqref{eq:q_interval_rollout} and obtain
    \[
        \uQ^t_{h}(s,a) - \lQ^t_h(s,a) \leq 4 \cB \E_{\bpi^{t+1}}\left[ \sum_{h'=h}^H \norm{\feat(s_{h'}, a_{h'})}_{[\Lambda^t_{h'}]^{-1}} \biggl| (s_h,a_h) = (s,a) \right]\,.
    \]
    Next, we apply Lemma~\ref{lem:change_of_policy} for any fixed $h\in[H]$
    \[
        \E_{\bpi^{t+1}}\left[  \max_{a\in \cA}\left( \uQ^t_{h} - \lQ^t_h \right)^2(s_h,a)| s_1\right] = \E_{\pi^{t+1, (h)}}\left[  \left( \uQ^t_{h} - \lQ^t_h \right)^2(s_h,a_h) | s_1\right]
    \]
    and for any $h' \geq h$
    {\small
    \[
        \E_{\bpi^{t+1}}\left[  \norm{\feat(s_{h'}, a_{h'})}_{[\Lambda^t_{h'}]^{-1}} \biggl| (s_h,a_h) = (s,a) \right] = \E_{\pi^{t+1,(h)}}\left[ \norm{\feat(s_{h'}, a_{h'})}_{[\Lambda^t_{h'}]^{-1}} \biggl| (s_h,a_h) = (s,a) \right]\,.
    \]
    }
    Therefore, applying Jensen's inequality to conditional measure and the tower property of conditional expectation
    \begin{align*}
        \E_{\bpi^{t+1}}\left[  \max_{a\in \cA}\left( \uQ^t_{h} - \lQ^t_h \right)^2(s_h,a)| s_1\right] &\leq 16 \cB^2 \E_{\pi^{t+1,(h)}} \left[ \left( \sum_{h'=h}^H \norm{\feat(s_{h'}, a_{h'})}_{[\Lambda^t_{h'}]^{-1}}  \right)^2 | s_1\right] \\
        &\leq 16 \cB^2  H\sum_{h'=h}^H  \E_{\pi^{t+1,(h)}} \left[ \norm{\feat(s_{h'}, a_{h'})}^2_{[\Lambda^t_{h'}]^{-1}}  | s_1\right] \\
        &\leq 16 \cB^2  H\sum_{h'=1}^H  \E_{\pi^{t+1,(h)}} \left[ \norm{\feat(s_{h'}, a_{h'})}^2_{[\Lambda^t_{h'}]^{-1}}  | s_1\right]\,.
    \end{align*}

    Summing over all $h$ and recalling $\tpi^{t+1}$ as a mixture policy of all $\pi^{t+1,(h)}$
    \begin{align*}
        \E_{\bpi^{t+1}}\left[  \max_{a\in \cA}\left( \uQ^t_{h} - \lQ^t_h \right)^2(s_h,a)| s_1\right] &\leq 16 \cB^2  H \sum_{h'=1}^H \sum_{h=1}^H \E_{\pi^{t+1,(h)}} \left[ \norm{\feat(s_{h'}, a_{h'})}^2_{[\Lambda^t_{h'}]^{-1}}  | s_1\right] \\
        &= 16 \cB^2  H^2 \sum_{h'=1}^H  \E_{\pi^{\mix, t+1}} \left[ \norm{\feat(s_{h'}, a_{h'})}^2_{[\Lambda^t_{h'}]^{-1}}  | s_1\right]\,.
    \end{align*}

    \paragraph{Step 2. Summing sub-optimality gaps}

    Next, we sum all sub-optimality gaps and obtain
    \begin{align*}
        \sum_{t=1}^T \{ \Vstar_{\tpi, \lambda, 1}(s_1) - V^{\bpi^t}_{\tpi, \lambda, 1}(s_1) \} &\leq H + \sum_{t=1}^{T-1} \{ \Vstar_{\tpi, \lambda, 1}(s_1) - V^{\bpi^{t+1}}_1(s_1) \} \\
        &\leq H + \frac{16 \cB^2  H^2 }{\lambda} \sum_{h'=1}^H \sum_{t=1}^{T-1}\E_{\pi^{\mix, t+1}} \left[ \norm{\feat(s_{h'}, a_{h'})}^2_{[\Lambda^t_{h'}]^{-1}}  | s_1\right]\,.
    \end{align*}

    Next, we apply the definition of event $\cE^{\cnt}(\delta)$
    \[
        \Lambda^t_h \succcurlyeq \frac{1}{2} \uLambda^t_h - \beta^{\cnt}(\delta, T) I_d \Rightarrow [\Lambda^t_h]^{-1} \preccurlyeq 2\left[ \uLambda^t_h - 2\beta^{\cnt}(\delta, T) I_d \right]^{-1}\,.
    \]
    Notice that by the choice of $\alpha = 2(\beta^{\cnt}(\delta,t) + 1)$ we have
    \[
        \widetilde{\Lambda}^t_h \triangleq \uLambda^t_h - 2\beta^{\cnt}(\delta, T) I_d = 2I_d + \sum_{\tau=1}^t  \E_{\pi^{\mix, \tau}}\left[ \feat(s_h, a_h) [\feat(s_h,a_h)]^\top \right]\,.
    \]
    Thus, we may apply Lemma~\ref{lem:sum_1_n_linear} 
    \begin{align*}
        \sum_{t=1}^{T-1}\E_{\tpi^{t+1}} \left[ \norm{\feat(s_{h'}, a_{h'})}^2_{[\Lambda^t_{h'}]^{-1}}  | s_1\right] &\leq 2 \sum_{t=1}^{T-1}\E_{\pi^{\mix, t+1}} \left[ \norm{\feat(s_{h'}, a_{h'})}^2_{[\widetilde{\Lambda}^t_{h'}]^{-1}}  | s_1\right] \\
        &\leq 4 \log\det\left(  \widetilde{\Lambda}^T_{h'} \right) \,.
    \end{align*}
    To upper bound the determinant, we upper bound the operator norm using the triangle inequality 
    \[
        \Vert \widetilde{\Lambda}^T_h \Vert_2 = \left\Vert 2I_d + \sum_{\tau=1}^{T-1}  \E_{\tpi^{\tau}}\left[ \feat(s_h, a_h) [\feat(s_h,a_h)]^\top \right] \right\Vert_2 \leq 2 + (T-1)\,,
    \]
    therefore, combining with a definition of $\cB$ given in \eqref{eq:constant_alpha_cB} we have
    \[
        \sum_{t=1}^T \{ \Vstar_{\tpi, \lambda, 1}(s_1) - V^{\bpi^t}_{\tpi, \lambda, 1}(s_1) \} \leq H + \frac{64  \cdot 32^2 d^2 H^5  \cdot \log(T+1)}{\lambda} \cdot \log\left( \frac{24\rme dHT}{\delta} \right)\,,
    \]
    yielding
    \[
        \Vstar_{\tpi, \lambda, 1}(s_1) - V^{\hpi}_{\tpi, \lambda, 1}(s_1) \leq \frac{H}{T} + \frac{64 \cdot  32^2 d^2 H^5  \cdot \log(T+1)}{\lambda T} \cdot \log\left( \frac{24\rme dHT}{\delta} \right)\,.
    \]

    In particular, it implies that
    $
        T = \tcO\left( \frac{H^5 d^2 }{\lambda \varepsilon} \right)
    $
    is enough to achieve $\varepsilon$-accurate policy.
\end{proof}

\newpage

\section{Demonstration-Regularized Preference-Based Learning}

\subsection{Maximum Likelihood Estimation for Reward Model}\label{app:mle_reward_model}

In this section, we discuss the maximum likelihood estimation problem for the reward estimation, following \cite{zhan2023provable}. Let $\cG$ be a function class of reward functions that satisfies the following assumption

\begin{assumption}\label{ass:reward_model_regularity}
    For a function class $\cG$, we assume that the true reward belongs to it: $r^\star \in \cG$. Additionally, for the following function family $\cQ = \{ q_r(\tau_0, \tau_1) = \sigma( r(\tau_1) - r(\tau_0)) : r \in \cG \}$ equipped with an $\ell_\infty$-norm has a finite \textit{bracketing dimension}, that means there is a $d_{\cG} > 0$ and $R_{\cG} > 0$ such that  
    \[
        \forall \varepsilon \in (0,1) : \log  \cN_{[]}\left(\varepsilon, \cQ, \norm{\cdot}_\infty \right)  \leq d_{\cG}  \log\left( R_{\cG} / \varepsilon \right)\,.
    \]
\end{assumption}
The bracketing numbers are commonly used in statistics for MLE, M-estimation, and, more generally, in the empirical processes theory, see \citet{geer2000empirical}.
Related to our setting, this assumption is satisfied in the setting of tabular MDPs with a dimension $d_{\cG} = SAH$ and linear MDPs with dimensions $d_{\cG} = dH$, see Lemmas~\ref{lem:bracketing_finite_MDP}-\ref{lem:bracketing_linear_MDP}.

For each $r \in \cG$ and a pair of trajectories $(\tau_0, \tau_1)$  define the induced preference model as follows 
\[
    q_r(\tau_0, \tau_1) \triangleq \sigma(r(\tau_1) - r(\tau_0))\,.
\]
To measure the complexity of the reward class $\cG$, we will use the bracketing numbers of the function class $\cQ = \{ q_r \colon \cT \times \cT \to [0,1] : r \in \cG\}$, where $\cT = (\cS \times \cA)^H$ is a space of all trajectories. See Definition~\ref{def:eps_bracketing} for the definition of bracketing numbers. 

Given the dataset of preferences $\cD^{\RM} = \{ (\tau^k_0, \tau^k_1, o^k) \}_{k=1}^{N^{\RM}}$, we define the maximum likelihood estimate of the reward model as follows
\begin{equation}\label{eq:reward_mle_q}
     \hat r = \argmax_{r \in \cG}\left\{ \sum_{k=1}^{N^{\RM}} o^k \log q_r(\tau^k_0, \tau^k_1) + (1-o^k) \log(1-q_r(\tau^k_0, \tau^k_1)) \right\}\,.
\end{equation}

The following result is a standard result on MLE estimation and, generally, M-estimation, see \cite{geer2000empirical}. The proof heavily uses PAC-Bayes techniques by \cite{zhang2006from}, see also \citet{agarwal2020flambe} for non-i.i.d. extension. We notice that a similar result could be extracted from Lemma 2 by \citet{zhan2023provable}; however, we did not find the proof of exactly this statement in their paper or references within.

\begin{proposition}\label{prop:mle_bound}
    Let Assumptions~\ref{ass:preference_model}-\ref{ass:reward_model_regularity} hold. Let $\cD^{\RM}$ be a preference dataset and assume that trajectories $\tau^k_0$ and $\tau^k_1$ were generated i.i.d. by following the policy $\pi$. Then for any $\delta \in (0,1)$ with probability at least $1-\delta$ the following bound for the MLE reward estimate $\hat r$ given by solution to \ref{eq:reward_mle_q} holds
    \[
        \E_{\tau_0, \tau_1 \sim q^{\pi}}\left[ \big( q_{\star}(\tau_0, \tau_1) - q_{\hat r}(\tau_0, \tau_1)\big)^2 \right] \leq \frac{2 + 2 d_{\cG} \log(R_{\cG} N^{\RM}) + \log(1/\delta)}{N^{\RM}}\,,
    \]
    where $q^{\pi}$ is a distribution over trajectories induced by policy $\pi$
\end{proposition}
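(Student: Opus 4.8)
The plan is to follow the classical maximum-likelihood route to a squared-Hellinger bound, in the spirit of \cite{zhang2006from,geer2000empirical}, and then convert it into the stated bound on the expected squared difference of preference probabilities. Throughout I abbreviate a pair of trajectories by $z=(\tau_0,\tau_1)$, write $z^k=(\tau^k_0,\tau^k_1)$, and view each observation as a Bernoulli likelihood $p_r(o\mid z)=q_r(z)^{o}(1-q_r(z))^{1-o}$ with ground truth $p_\star=p_{r^\star}$. First I would reduce the target quantity to a Hellinger distance: for two Bernoulli laws one has $\big(q_\star(z)-q_r(z)\big)^2=\TV^2(\Ber(q_\star(z)),\Ber(q_r(z)))\leq d_{\cH}^2(\Ber(q_\star(z)),\Ber(q_r(z)))$, so it suffices to control the expected squared Hellinger distance $\E_{z\sim q^\pi}[\,d_{\cH}^2(\Ber(q_\star(z)),\Ber(q_{\hat r}(z)))\,]$ between the induced observation models.

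Second, I would establish the key exponential-moment inequality. For a fixed $r$, taking the expectation over $o\sim p_\star(\cdot\mid z)$ turns the half-log-likelihood-ratio into a Bhattacharyya affinity,
\[
\E_{o\sim p_\star(\cdot\mid z)}\!\left[\sqrt{p_r(o\mid z)/p_\star(o\mid z)}\right]=\sqrt{q_r(z)q_\star(z)}+\sqrt{(1-q_r(z))(1-q_\star(z))}=1-\tfrac{1}{2}d_{\cH}^2(z),
\]
and after averaging over $z\sim q^\pi$, using independence across the $N^{\RM}$ samples and the bound $1-x\leq\rme^{-x}$, one obtains $\E[\exp(\tfrac{1}{2}\sum_k\log(p_r/p_\star)(o^k\mid z^k))]\leq\exp(-\tfrac{N^{\RM}}{2}\E_z[d_{\cH}^2(z)])$.

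Third --- and this is where the real work lies --- I would combine this with the defining property of the MLE and uniformize over the class. Since $r^\star\in\cG$ by Assumption~\ref{ass:reward_model_regularity} and $\hat r$ maximizes the log-likelihood, the basic inequality $\sum_k\log(p_{\hat r}/p_\star)(o^k\mid z^k)\geq 0$ holds. For a single fixed $r$, a Markov/Chernoff step applied to the exponential-moment bound yields, with probability $1-\delta$, $\tfrac{N^{\RM}}{2}\E_z[d_{\cH}^2]\leq\log(1/\delta)-\tfrac{1}{2}\sum_k\log(p_r/p_\star)$; evaluated at $\hat r$ this would immediately give $\E_z[d_{\cH}^2]\leq 2\log(1/\delta)/N^{\RM}$. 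The obstacle is that $\hat r$ is data-dependent, so the fixed-$r$ argument does not apply directly. To remedy this I would invoke the finite bracketing number of $\cQ$: cover $\cQ$ by $M=\cN_{[]}(\varepsilon,\cQ,\norm{\cdot}_\infty)$ brackets $[\ell_j,u_j]$ with $\norm{u_j-\ell_j}_\infty\leq\varepsilon$, dominate the likelihood ratio of any $q_r$ lying in bracket $j$ by the upper-bracket surrogate $\bar p_j(o\mid z)=u_j(z)^{o}(1-\ell_j(z))^{1-o}\geq p_r(o\mid z)$, run the exponential-moment/Markov argument on the $M$ surrogates, and take a union bound. This replaces $\log(1/\delta)$ by $\log(M/\delta)\leq d_{\cG}\log(R_{\cG}/\varepsilon)+\log(1/\delta)$, at the price of a bracket-approximation error that is $\cO(N^{\RM}\varepsilon)$ after summing over samples.

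Finally, choosing $\varepsilon\asymp 1/N^{\RM}$ makes the bracketing error a constant, converts $d_{\cG}\log(R_{\cG}/\varepsilon)$ into $d_{\cG}\log(R_{\cG}N^{\RM})$, and --- after combining with the basic inequality at $\hat r$ and the reduction of the first paragraph --- produces the claimed bound $\E_z[(q_\star-q_{\hat r})^2]\leq(2+2d_{\cG}\log(R_{\cG}N^{\RM})+\log(1/\delta))/N^{\RM}$. I expect the delicate point to be quantifying the bracket-approximation error: the surrogate $\bar p_j$ is unnormalized, with total mass $1+(u_j-\ell_j)$, so one must track carefully how the $\varepsilon$-slack propagates through the affinity computation and verify that it contributes only the additive constant in the numerator rather than degrading the leading $1/N^{\RM}$ rate.
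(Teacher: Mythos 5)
Your overall architecture coincides with the paper's proof: the paper implements your ``union bound over brackets'' via Zhang's PAC-Bayes lemma with a point-mass posterior on the bracket $[\ell^\star,u^\star]$ containing the MLE (the KL term then equals $\log$ of the number of brackets, which is exactly your union-bound penalty), and the exponential-moment/Bhattacharyya-affinity computation, the domination of the MLE likelihood by the upper-bracket surrogate, the basic inequality $\sum_k\log(p_{\hat r}/p_\star)\geq 0$, and the final Hellinger-to-squared-difference reduction are all the same steps.

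However, there is a genuine quantitative gap in your accounting of the bracket-approximation error, and it breaks the claimed rate. The slack $\norm{u_j-\ell_j}_\infty\leq\varepsilon$ enters the affinity \emph{under a square root}: for the bracket containing $q_{\hat r}$ one has $u_j\leq q_{\hat r}+\varepsilon$ and $1-\ell_j\leq 1-q_{\hat r}+\varepsilon$, hence
\[
\sqrt{q_{\star}(z)\,u_j(z)}\;\leq\;\sqrt{q_{\star}(z)\,q_{\hat r}(z)}+\sqrt{q_{\star}(z)\,\varepsilon}\;\leq\;\sqrt{q_{\star}(z)\,q_{\hat r}(z)}+\sqrt{\varepsilon}\,,
\]
and this $\sqrt{\varepsilon}$ cannot be improved to $\cO(\varepsilon)$ without a lower bound on the preference probabilities, which is unavailable here (take $q_{\hat r}(z)=0$ and $u_j(z)=\varepsilon$: the slack is exactly $\sqrt{q_{\star}(z)\varepsilon}$). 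Consequently the surrogate's affinity exceeds the true one by $2\sqrt{\varepsilon}$ per sample, so the error in the exponent is $\cO(N^{\RM}\sqrt{\varepsilon})$, not $\cO(N^{\RM}\varepsilon)$ as you claim. With your choice $\varepsilon\asymp 1/N^{\RM}$ the bracketing term becomes $\cO(1/\sqrt{N^{\RM}})$ and dominates, so the argument only yields a $1/\sqrt{N^{\RM}}$ bound rather than the stated $1/N^{\RM}$ bound. The fix is exactly the paper's choice $\varepsilon=1/(N^{\RM})^2$: then $2\sqrt{\varepsilon}=2/N^{\RM}$ (the additive $2$ in the numerator) and $\log\cN_{[]}(\varepsilon,\cQ,\norm{\cdot}_\infty)\leq d_{\cG}\log\bigl(R_{\cG}(N^{\RM})^2\bigr)\leq 2d_{\cG}\log(R_{\cG}N^{\RM})$, which is precisely where the factor $2$ in front of $d_{\cG}$ in the statement comes from. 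To your credit, you flagged this propagation as the delicate point, but the resolution you sketch does not work.
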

\begin{proof}
    In the sequel, we drop the superscript from $\cD^{\RM}$ and $N^{\RM}$ to simplify the notation.
    
    Let us consider a maximal set of $\varepsilon$-brackets $B$ of size $\cN_{[]}(\varepsilon, \cQ, \norm{\cdot}_{\infty})$ and apply Lemma 2.1 by \cite{zhang2006from} (or Lemma~21 by \cite{agarwal2020flambe}), where consider brackets $[\ell,u]$ from $B$ as parameters $\theta$, prior $\pi$ is a uniform over $B$, and the density $w_{\cD}( [\ell,u])$ is equal to a (properly weighted) Dirac measure on a bracket $[\ell^\star, u^\star]$ that contains the MLE estimate (ties are resolved arbitrary). It implies that for any function $\cL \colon B \times \cD \to \R$ it holds
    \[
        \E_{\cD}\left[ \exp\left\{ \cL(\cD, [\ell^\star, u^\star]) - \log \E_{\cD'}\left[ \rme^{\cL(\cD', [\ell^\star, u^\star]) } \right] - \log \cN_{[]}(\varepsilon,\cQ, \norm{\cdot}_{\infty})  \right\} \right] \leq 1\,,
    \]
    where $\cD'$ is an independent copy of the dataset $\cD$ and the KL-divergence between a Dirac measure on an MLE bracket and the uniform distribution is computed exactly. Since we can control the exponential moment, a simple Chernoff argument implies that with probability at last $1-\delta$
    \[
         - \log \E_{\cD'}\left[ \rme^{\cL(\cD', [l^\star, u^\star]) } \right] \leq - \cL(\cD, [l^\star, u^\star]) + \log \cN_{ []}(\varepsilon, \cQ, \norm{\cdot}_{\infty})  + \log(1/\delta)\,.
    \]
    Next we choose $\cL(\cD, [\ell, u])$ as log-likelihood ratio
    \[
        \cL(\cD, [\ell,u]) = - \frac{1}{2}  \sum_{k=1}^{N} \left\{ \frac{o^k \log q_{\star}(\tau^k_0, \tau^k_1) + (1-o^k) \log (1 -  q_{\star}(\tau^k_0, \tau^k_1) )}{o^k \log u (\tau^k_0, \tau^k_1) + (1-o^k) \log (1 -  \ell (\tau^k_0, \tau^k_1) )} \right\}\,.
    \]
    By the choice of a brackets $[\ell, u]$ it holds $\ell^\star(\tau_0, \tau_1) \leq q_{\hat r}(\tau_0, \tau_1) \leq u^\star(\tau_0, \tau_1)$. Using the fact that  $\hat r$ is a solution to \eqref{eq:reward_mle_q} 
    \begin{align*}
        -\cL(\cD, [\ell^\star,u^\star]) &= \frac{1}{2}  \sum_{k=1}^{N} \left\{ \frac{o^k \log q_{\star}(\tau^k_0, \tau^k_1) + (1-o^k) \log (1 -  q_{\star}(\tau^k_0, \tau^k_1) )}{o^k \log u (\tau^k_0, \tau^k_1) + (1-o^k) \log (1 -  \ell(\tau^k_0, \tau^k_1) )} \right\} \\
        &\leq \frac{1}{2}  \sum_{k=1}^{N} \left\{ \frac{o^k \log q_{\star}(\tau^k_0, \tau^k_1) + (1-o^k) \log (1 -  q_{\star}(\tau^k_0, \tau^k_1) )}{o^k \log q_{\hat{r}} (\tau^k_0, \tau^k_1) + (1-o^k) \log (1 - q_{\hat r}(\tau^k_0, \tau^k_1) )} \right\} \leq 0\,.
    \end{align*}
    At the same time
    \begin{align*}
        - \log \E_{\cD'}\left[ \rme^{\cL(\cD', [l^\star, u^\star]) } \right] &= - N\log \E\left[ \exp\left\{ -\frac{1}{2}\frac{o \log q_{\star}(\tau_0, \tau_1) + (1-o) \log (1 -  q_{\star}(\tau_0, \tau_1) )}{o \log u^{\star} (\tau_0, \tau_1) + (1-o) \log (1 -  \ell^{\star} (\tau_0, \tau_1) )} \right\} \right]\,,
    \end{align*}
    where in the last expectation $\tau_0, \tau_1 \sim q^{\pi}, o \sim \Ber(q_{\star}(\tau_0, \tau_1))$. 

    By Fubini's theorem, we have
    \begin{align*}
        - \frac{1}{N} \log \E_{\cD'}\left[ \rme^{\cL(\cD', [l^\star, u^\star]) } \right] = - \log\ &\E_{\tau_0,\tau_1}\biggl[\sqrt{q_{\star}(\tau_0, \tau_1) u(\tau_0, \tau_1) } \\
        &\quad+ \sqrt{(1 - q_{\star}(\tau_0, \tau_1))(1 - \ell(\tau_0, \tau_1)) }  \biggl]\,.
    \end{align*}
    Next, we study the expression under the square root. By the definition of the $\varepsilon$-bracket we have $\ell^\star(\tau_0, \tau_1) \leq q_{\hat r}(\tau_0, \tau_1) \leq u^\star(\tau_0, \tau_1)$ and $u^\star(\tau_0, \tau_1) - \ell^\star(\tau_0, \tau_1) \leq \varepsilon$, therefore
    \[
        \sqrt{q_{\star}(\tau_0, \tau_1) u(\tau_0, \tau_1) }  \leq \sqrt{q_{\star}(\tau_0, \tau_1) ( q_{\hat r}(\tau_0, \tau_1) + \varepsilon) } \leq \sqrt{q_{\star}(\tau_0, \tau_1) q_{\hat r}(\tau_0, \tau_1)} + \sqrt{\varepsilon}\,.
    \]
    and the similar bound for the second term.
    Applying inequality $-\log(x) \geq 1-x$ we have
    \begin{align*}
        - \frac{1}{N} \log \E_{\cD'}\left[ \rme^{\cL(\cD', [l^\star, u^\star]) } \right] \geq 1 -\ &\E_{\tau_0, \tau_1}\biggl[  \sqrt{q_{\star}(\tau_0, \tau_1) q_{\hat r}(\tau_0, \tau_1) } \\
        &\quad + \sqrt{(1 - q_{\star}(\tau_0, \tau_1))(1 - q_{\hat r}(\tau_0, \tau_1)) }\biggl] - 2\sqrt{\varepsilon}\,.
    \end{align*}
    By the properties of the Hellinger distance $d_{\cH}$ (see Section 2.4 and Lemma 2.3 by \citealt{tsybakov2008introduction}) we have
    \begin{align*}
        1 &- \E_{\tau_0, \tau_1}\biggl[\sqrt{q_{\star}(\tau_0, \tau_1) q_{\hat r}(\tau_0, \tau_1) }  + \sqrt{(1 - q_{\star}(\tau_0, \tau_1))(1 - q_{\hat r}(\tau_0, \tau_1)) }\biggl] \\
        &= \frac{1}{2} \E_{\tau_0, \tau_1} \left[ d^2_{\cH}(\Ber(q_{\star}(\tau_0, \tau_1)), \Ber(q_{\hat r}(\tau_0, \tau_1))\right] \geq \E_{\tau_0, \tau_1} \left[ \left( q_{\star}(\tau_0, \tau_1) - q_{\hat r}(\tau_0, \tau_1)\right)^2\right]\,.
    \end{align*}
    Overall, we obtain
    \[
         \E_{\tau_0, \tau_1} \left[ \left( q_{\star}(\tau_0, \tau_1) - q_{\hat r}(\tau_0, \tau_1)\right)^2\right] \leq 2\sqrt{\varepsilon} + \frac{\log\cN_{[]}(\varepsilon, \cQ, \norm{\cdot}_{\infty}) + \log(1/\delta)}{N}\,.
    \]
    Taking $\varepsilon = 1/N^2$ and applying the upper bound on the bracketing number by Assumption~\ref{ass:reward_model_regularity} we conclude the statement.
\end{proof}

We also have a simple corollary of this result that shows convergence of the reward models.
\begin{theorem}\label{th:mle_reward_bound}
    Let Assumptions~\ref{ass:preference_model}-\ref{ass:reward_model_regularity} hold. Let $\cD^{\RM}$ be a preference dataset and assume that trajectories $\tau^k_0$ and $\tau^k_1$ were generated i.i.d. by following the policy $\pi$. Then for any $\delta \in (0,1)$ with probability at least $1-\delta$ the following bound holds
    \[
        \E_{\tau_0, \tau_1 \sim q^{\pi}}\left[ \big( r^{\star}(\tau_1) - r^{\star}(\tau_0) - \hat r(\tau_1) + \hat r(\tau_1)\big)^2 \right] \leq \frac{2 \zeta^2 d_{\cG} \log(R_{\cG}/N^{\RM}) + \zeta^2 \log(\rme^2 / \delta) }{N^{\RM}}\,,
    \]
     where $\zeta = 1/(\inf_{x\in[-H,H]} \sigma'(x))$ the non-linearity measure of the link function $\sigma$ defined in Assumption~\ref{ass:preference_model}.
\end{theorem}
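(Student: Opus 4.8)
The plan is to derive Theorem~\ref{th:mle_reward_bound} from Proposition~\ref{prop:mle_bound} by a change of variables that relates the squared error in the \emph{preference probabilities} $q$ to the squared error in the \emph{reward differences}. Proposition~\ref{prop:mle_bound} already controls
\[
    \E_{\tau_0,\tau_1\sim q^\pi}\left[\big(q_\star(\tau_0,\tau_1)-q_{\hat r}(\tau_0,\tau_1)\big)^2\right]\,,
\]
and since $q_r(\tau_0,\tau_1)=\sigma(r(\tau_1)-r(\tau_0))$ for every $r$ in the class, the quantity we want to bound differs only by composing with $\sigma$. The key observation is that the link function is bi-Lipschitz on the relevant domain: by Assumption~\ref{ass:preference_model} we have $\inf_{x\in[-H,H]}\sigma'(x)=1/\zeta$, so $\sigma$ expands differences by at least a factor $1/\zeta$.

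First I would argue that for two reward functions $r,r'\in\cG$, both reward differences $x=r(\tau_1)-r(\tau_0)$ and $x'=r'(\tau_1)-r'(\tau_0)$ lie in $[-H,H]$, because rewards are bounded in $[0,1]$ over the $H$ steps of a trajectory (this is implicit in the construction of $\cG$; I would invoke Lemma~\ref{lem:bracketing_finite_MDP} / Lemma~\ref{lem:bracketing_linear_MDP} which pin down the class). Then by the mean value theorem applied to $\sigma$, there is a point $\xi$ between $x$ and $x'$ with
\[
    q_r-q_{r'}=\sigma(x)-\sigma(x')=\sigma'(\xi)\,(x-x')\,,
\]
and $\xi\in[-H,H]$, so $|\sigma(x)-\sigma(x')|\ge (1/\zeta)\,|x-x'|$. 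Squaring and rearranging yields the pointwise bound $(x-x')^2\le \zeta^2\,(q_r-q_{r'})^2$. Applying this with $r=r^\star$ (which belongs to $\cG$ by Assumption~\ref{ass:reward_model_regularity}) and $r'=\hat r$, and taking expectations over $\tau_0,\tau_1\sim q^\pi$, gives
\[
    \E\left[\big(r^\star(\tau_1)-r^\star(\tau_0)-\hat r(\tau_1)+\hat r(\tau_0)\big)^2\right]\le \zeta^2\,\E\left[\big(q_\star-q_{\hat r}\big)^2\right]\,.
\]

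Finally I would substitute the bound from Proposition~\ref{prop:mle_bound} on the right-hand side. That proposition gives an upper bound of $(2+2d_{\cG}\log(R_{\cG}N^{\RM})+\log(1/\delta))/N^{\RM}$; multiplying through by $\zeta^2$ and absorbing the constant $2$ into the logarithmic term (using $2+\log(1/\delta)\le\log(\rme^2/\delta)$) produces exactly the claimed bound, valid on the same high-probability event. I expect no serious obstacle here: the only subtlety is the domain check ensuring $\xi\in[-H,H]$ so that the uniform lower bound on $\sigma'$ applies, and being careful that the bi-Lipschitz inequality goes in the correct direction (we need $\sigma$ to \emph{separate} rewards, i.e. a \emph{lower} bound on $\sigma'$, which is precisely what Assumption~\ref{ass:preference_model} supplies through $\zeta$). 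The remaining work is purely the bookkeeping of folding constants into the $\log$ term.
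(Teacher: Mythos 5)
Your proposal is correct and matches the paper's proof in essence: the paper likewise combines Proposition~\ref{prop:mle_bound} with a mean value theorem argument, the only cosmetic difference being that the paper applies the MVT to $\sigma^{-1}$ (using $(\sigma^{-1})'(\xi)=1/\sigma'(\sigma^{-1}(\xi))\le\zeta$) while you apply it to $\sigma$ and invert the resulting inequality — trivially equivalent, with the same domain check that the reward differences lie in $[-H,H]$. Your constant bookkeeping ($2+\log(1/\delta)=\log(\rme^2/\delta)$, multiply by $\zeta^2$) is also exactly what the paper does.
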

\begin{remark}
    We additionally notice that since two trajectories are i.i.d., we have
    \[
        \E_{\tau_0, \tau_1 \sim q^{\pi}}\left[ \big( [r^\star(\tau_1) - r^\star(\tau_0)] - [\hat r(\tau_1) - \hat r(\tau_0)] \big)^2 \right] = 2 \Var_{q^{\pi}}\left[ r^\star(\tau_1) - \hat r(\tau_1) \right]\,.
    \]
    This means, in particular, that if the reward function is estimated up to a constant shift, then the MLE estimation error is zero.
\end{remark}
\begin{remark}
    In the setting of a sigmoid link function $\sigma(x) = 1/(1 + \exp(-x))$ we have $\zeta = \exp\{\Theta(H)\}$, yields the exponential dependence on the reward scaling. However, for the general distribution of trajectories, the exponential dependence is unavoidable even in the linear setting \citep{hazan2014logistic}.
\end{remark}
\begin{proof}
    Follows from the application of Proposition~\ref{prop:mle_bound} and the following application of mean-value theorem for any two fixed $\tau_0, \tau_1$
    \begin{align*}
        [r^\star(\tau_1) - r^\star(\tau_0)] - [\hat r(\tau_1) - \hat r(\tau_0)] &= \sigma^{-1}(q_{\star}(\tau_0, \tau_1)) - \sigma^{-1}(q_{\hat r}(\tau_0, \tau_1)) \\
        &= (\sigma^{-1})'(\xi)\left[ q_{\star}(\tau_0, \tau_1) - q_{\hat r}(\tau_0, \tau_1) \right]\,,
    \end{align*}
    where $\xi$ is a point between $q_{\star}(\tau_0, \tau_1)$ and $q_{\hat r}(\tau_0, \tau_1)$. The observation $ (\sigma^{-1})'(\xi) = 1/\sigma'(\sigma^{-1}(\xi))$ yields the statement.
\end{proof}

Next, we compute the required quantities $d_{\cG}$ for the case of finite and linear MDPs for a choice of $\sigma = 1/(1+\exp(-x)$ as a sigmoid function.
\begin{lemma}\label{lem:bracketing_finite_MDP}
    Let a reward function $\{r_h(s,a)\}_{h \in [H]}$ be an arbitrary function $r_h \colon \cS \times \cA \to [0,1]$. Let us define $\cG = \{ r(\tau) = \sum_{h=1}^H r_h(s_h,a_h) \}$. Then Assumption~\ref{ass:reward_model_regularity} holds with constants $d_{\cG} = HSA$ and $R_{\cG} = 3H/2$.
\end{lemma}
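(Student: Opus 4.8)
The plan is to bound the $\varepsilon$-bracketing number of $\cQ$ by exploiting that $\cG$, as a set, is nothing more than the cube $[0,1]^{HSA}$: a trajectory reward $r\in\cG$ is determined by its $HSA$ table entries $\{r_h(s,a)\}$, each lying in $[0,1]$, so $r^\star\in\cG$ holds trivially. I would reduce the estimate in three steps: first convert a bracketing bound into a plain $\norm{\cdot}_\infty$-covering bound of $\cQ$; then transfer a covering of the finite-dimensional parameter cube into a covering of $\cQ$ through a Lipschitz estimate; and finally invoke the elementary volumetric covering bound for $[0,1]^{HSA}$.

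For the Lipschitz step I would fix two rewards $r,r'$ with tables $\{r_h(s,a)\}$, $\{r'_h(s,a)\}$ and write $\Delta\triangleq\max_{h,s,a}|r_h(s,a)-r'_h(s,a)|$. The identity $r(\tau)-r'(\tau)=\sum_{h=1}^H(r_h(s_h,a_h)-r'_h(s_h,a_h))$ gives $|r(\tau)-r'(\tau)|\le H\Delta$ for every trajectory $\tau$. Since the sigmoid satisfies $\sigma'(x)=\sigma(x)(1-\sigma(x))\le 1/4$, it is $\tfrac14$-Lipschitz, so for any pair $(\tau_0,\tau_1)$,
\[
|q_r(\tau_0,\tau_1)-q_{r'}(\tau_0,\tau_1)|\le \tfrac14\big(|r(\tau_1)-r'(\tau_1)|+|r(\tau_0)-r'(\tau_0)|\big)\le \tfrac{H}{2}\Delta.
\]
Hence the map from the parameter cube $[0,1]^{HSA}$ (with $\ell_\infty$ distance) into $\cQ$ (with $\norm{\cdot}_\infty$) is $\tfrac{H}{2}$-Lipschitz, so an $\eta$-cover of the cube induces a $\tfrac{H}{2}\eta$-cover of $\cQ$.

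For the final count I would use $\cN(\eta,[0,1]^{HSA},\norm{\cdot}_\infty)\le\lceil 1/(2\eta)\rceil^{HSA}$ together with the covering-to-bracketing reduction: surrounding each center $q_i$ of an $(\varepsilon/2)$-cover of $\cQ$ by the bracket $[\,q_i-\varepsilon/2,\ q_i+\varepsilon/2\,]$ produces an $\varepsilon$-bracketing of $\cQ$, so $\cN_{[]}(\varepsilon,\cQ,\norm{\cdot}_\infty)\le\cN(\varepsilon/2,\cQ,\norm{\cdot}_\infty)$. Chaining these with the Lipschitz bound at $\eta=\varepsilon/H$ yields $\cN_{[]}(\varepsilon,\cQ,\norm{\cdot}_\infty)\le\lceil H/(2\varepsilon)\rceil^{HSA}$, and since $\varepsilon<1\le H$ one has $\lceil H/(2\varepsilon)\rceil\le H/(2\varepsilon)+1\le 3H/(2\varepsilon)$, giving exactly $\log\cN_{[]}(\varepsilon,\cQ,\norm{\cdot}_\infty)\le HSA\log(3H/(2\varepsilon))$, i.e.\ $d_{\cG}=HSA$ and $R_{\cG}=3H/2$.

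I do not expect a deep obstacle: the computation is routine once the reduction is set up. The two points requiring care are (i) the covering-to-bracketing step, where the bracket endpoints $q_i\pm\varepsilon/2$ need not themselves belong to $\cQ$ — which is permitted by Definition~\ref{def:eps_bracketing} — and where one must track the factor $2$ so as not to lose a constant; and (ii) pinning the constant down exactly, since $R_{\cG}=3H/2$ arises solely from the rounding inequality $\lceil H/(2\varepsilon)\rceil\le 3H/(2\varepsilon)$, valid precisely because $\varepsilon<1\le H$ in the MDP setting.
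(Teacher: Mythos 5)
Your proof is correct and yields exactly the paper's constants, but it is organized differently from the paper's argument. The paper stays in the world of bracketing numbers throughout: it chains four modular lemmas — composition with a monotone $1/4$-smooth link (Lemma~\ref{lem:bracketing_monotone}), passage to the difference class $r(\tau_1)-r(\tau_0)$ (Lemma~\ref{lem:bracketing_diff}), decomposition of the trajectory reward as a sum of per-step rewards (Lemma~\ref{lem:bracketing_sum}), and a Lipschitz-parametrization bound applied to each one-step class $\cF_h \cong [0,1]^{SA}$ (Lemma~\ref{lem:bracketing_lipschitz}) — arriving at $\cN_{[]}(\varepsilon,\cQ,\norm{\cdot}_\infty) \le \prod_{h}\cN_{[]}(2\varepsilon/H,\cF_h,\norm{\cdot}_\infty) \le (3H/(2\varepsilon))^{HSA}$. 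You instead collapse all of this into a single direct estimate: the composite map $[0,1]^{HSA} \ni \{r_h(s,a)\} \mapsto q_r$ is $\tfrac{H}{2}$-Lipschitz in sup norms (the factor $\tfrac14$ from the sigmoid, $2$ from the trajectory difference, $H$ from the sum over steps), and then you do one covering-to-bracketing conversion at the level of $\cQ$ followed by the volumetric cover of the cube. The ingredients and constants are identical — your single Lipschitz constant $H/2$ is precisely the product of the paper's three per-lemma factors, and your step $\cN_{[]}(\varepsilon,\cQ)\le\cN(\varepsilon/2,\cQ)$ is the same mechanism as the paper's Lemma~\ref{lem:bracketing_lipschitz}, just applied globally rather than per step. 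What your route buys is self-containedness and brevity: no auxiliary bracketing calculus is needed, and the two delicate points you flag (bracket endpoints need not lie in $\cQ$; the rounding that produces $R_{\cG}=3H/2$) are exactly the right ones. What the paper's modularity buys is reuse: the same four lemmas are invoked verbatim in Lemma~\ref{lem:bracketing_linear_MDP} for the linear case, where the parameter set is a Euclidean ball rather than a cube and only the final covering estimate changes.
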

\begin{proof}
    Let us define a functional class of interest
    $
        \cQ = \{ q_r(\tau_1, \tau_2) = \sigma(r(\tau_1) - r(\tau_2)) \mid r \in \cG \}\,.
    $
    Since $\sigma$ is a monotonically increasing function that satisfies $\sigma'(x) \leq 1/4$. Thus, by combination of Lemma~\ref{lem:bracketing_monotone} and Lemma~\ref{lem:bracketing_diff} we have
    \[
        \cN_{[]}(\varepsilon, \cQ, \norm{\cdot}_\infty) \leq \cN_{[]}(2\varepsilon, \cG, \norm{\cdot}_\infty)\,.
    \]
    Next we define a function classes $\cF_h = \{ r_h \colon \cS \times \cA \to [0,1] \}$ of one-step rewards. By Lemma~\ref{lem:bracketing_sum} it holds
    \[
        \cN_{[]}(2\varepsilon, \cG, \norm{\cdot}_\infty) \leq \prod_{h=1}^H \cN_{[]}(2\varepsilon/H, \cF_h, \norm{\cdot}_\infty)\,.
    \]
    Then we can associate a function space $\cF_h$ with a parameters $\Theta_h = [0,1]^{SA}$ and by Lemma~\ref{lem:bracketing_lipschitz} and a standard results in bounding of covering numbers of balls in normed spaces, see \citet{vanhandel2016probability},
    \[
        \cN_{[]}(\varepsilon, \cF_h, \norm{\cdot}_\infty) \leq \cN(\varepsilon/2, [0,1]^{SA}, \norm{\cdot}_\infty) \leq (3/\varepsilon)^{SA}\,.
    \]
    As a result, we have
    \[
        \log \cN_{[]}(\varepsilon, \cQ, \norm{\cdot}_\infty) \leq HSA \log(3H/(2\varepsilon))\,.
    \]
\end{proof}

\begin{lemma}\label{lem:bracketing_linear_MDP}
    Let a reward function $\{r_h(s,a)\}_{h\in[H]}$ be parametrized as $r_h(s,a) = \feat(s,a)^\top \theta_h $ for  $\feat \colon \cS \times \cA \to \R^d$ that satisfies $\norm{\feat(s,a)}_2 \leq 1$, and $\theta_h \in \Theta_h$ for $\Theta_h = \{ \theta_h \in \R^d \mid  \norm{\theta_h} \leq \sqrt{d}\}$. Let us define $\cG = \{ r(\tau) = \sum_{h=1}^H r_h(s_h,a_h) \}$. Then Assumption~\ref{ass:reward_model_regularity} holds with constants $d_{\cG} = dH$ and $R_{\cG} = 3H\sqrt{d}/2$.
\end{lemma}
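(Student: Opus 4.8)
The plan is to follow the same route as the proof of Lemma~\ref{lem:bracketing_finite_MDP}, replacing the tabular one-step reward class by its linear counterpart and controlling the latter through a Euclidean covering of the parameter ball. Concretely, I first reduce the bracketing number of the preference class $\cQ = \{q_r(\tau_0,\tau_1) = \sigma(r(\tau_1) - r(\tau_0)) : r \in \cG\}$ to that of the reward class $\cG$. Since the sigmoid link is monotone increasing with $\sigma'(x) \leq 1/4$, the combination of Lemma~\ref{lem:bracketing_monotone} and Lemma~\ref{lem:bracketing_diff} gives $\cN_{[]}(\varepsilon, \cQ, \norm{\cdot}_\infty) \leq \cN_{[]}(2\varepsilon, \cG, \norm{\cdot}_\infty)$, exactly as in the finite case; nothing about the linear structure enters at this stage.

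Next I factorize over the horizon. Writing $\cG$ as the sum over $h$ of the one-step reward classes $\cF_h = \{(s,a) \mapsto \feat(s,a)^\top \theta_h : \theta_h \in \Theta_h\}$ with $\Theta_h = \{\theta_h \in \R^d : \norm{\theta_h}_2 \leq \sqrt d\}$, Lemma~\ref{lem:bracketing_sum} yields $\cN_{[]}(2\varepsilon, \cG, \norm{\cdot}_\infty) \leq \prod_{h=1}^H \cN_{[]}(2\varepsilon/H, \cF_h, \norm{\cdot}_\infty)$. The only genuinely new step is bounding $\cN_{[]}(\eta, \cF_h, \norm{\cdot}_\infty)$. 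Here I use that the parametrization is $1$-Lipschitz in $\ell_\infty$: for any $\theta, \theta'$ one has $\sup_{(s,a)} |\feat(s,a)^\top(\theta - \theta')| \leq \sup_{(s,a)} \norm{\feat(s,a)}_2 \norm{\theta - \theta'}_2 \leq \norm{\theta - \theta'}_2$ by Cauchy--Schwarz and the normalization $\norm{\feat(s,a)}_2 \leq 1$. Lemma~\ref{lem:bracketing_lipschitz} then transfers a Euclidean covering of $\Theta_h$ at scale $\eta/2$ into an $\eta$-bracketing of $\cF_h$, so it remains to bound $\cN(\eta/2, \Theta_h, \norm{\cdot}_2)$ by the standard volumetric estimate for a ball of radius $\sqrt d$ in $\R^d$, namely $\cN(\eta/2, \Theta_h, \norm{\cdot}_2) \leq (3\sqrt d/\eta)^d$ (see \citet{vanhandel2016probability}).

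Combining the three bounds gives $\log \cN_{[]}(\varepsilon, \cQ, \norm{\cdot}_\infty) \leq dH \log(R_{\cG}/\varepsilon)$ with $d_{\cG} = dH$, and a careful propagation of the constants through the factor-$2$ losses in Lemmas~\ref{lem:bracketing_diff}, \ref{lem:bracketing_sum}, and \ref{lem:bracketing_lipschitz}, together with the radius $\sqrt d$ of $\Theta_h$, produces $R_{\cG} = 3H\sqrt d/2$. The computation is entirely analogous to the finite case, where the cube $[0,1]^{SA}$ contributed $SA$ to the dimension; here the ball $\Theta_h \subset \R^d$ contributes $d$ per step, for a total of $dH$. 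I expect the only delicate point to be the bookkeeping of the numerical constants---in particular making sure the Lipschitz constant is exactly $1$ (which hinges on $\norm{\feat(s,a)}_2 \leq 1$) and that the radius $\sqrt d$ is carried correctly into the logarithm---so that the claimed $R_{\cG} = 3H\sqrt d/2$ emerges rather than a slightly larger absolute constant; there is no conceptual obstacle beyond this.
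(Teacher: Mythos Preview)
Your proposal is correct and follows essentially the same approach as the paper: reduce $\cQ$ to $\cG$ via Lemmas~\ref{lem:bracketing_monotone}--\ref{lem:bracketing_diff}, factor over $h$ via Lemma~\ref{lem:bracketing_sum}, use the $1$-Lipschitz property $|\feat(s,a)^\top(\theta-\theta')|\le\norm{\theta-\theta'}_2$ together with Lemma~\ref{lem:bracketing_lipschitz}, and finish with the standard covering bound $(3\sqrt d/\varepsilon)^d$ for the radius-$\sqrt d$ ball. The constant bookkeeping you flag is exactly what the paper does to arrive at $R_{\cG}=3H\sqrt d/2$.
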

\begin{proof}
    Let us define one-step rewards as follows $\cF_h = \{ r_h(s,a) =\feat(s,a)^\top \theta_h \mid \theta_h \in \R^d, \norm{\theta_h} \leq \sqrt{d} \}$. Then, following exactly the same reasoning as in Lemma~\ref{lem:bracketing_finite_MDP}
    \[
        \cN_{[]}(\varepsilon, \cQ, \norm{\cdot}_\infty) \leq \prod_{h=1}^H \cN_{[]}(2\varepsilon/H, \cF_h, \norm{\cdot}_\infty)\,.
    \]
    Next we notice that $\cF_h$ is Lipschtiz in $\ell_2$-norm with respect to parameters $\theta_h$ with a constant $1$:
    \[
        | r_h(s,a) - r'_h(s,a) | = | \feat(s,a)^\top (\theta_h - \theta'_h) | \leq \norm{\theta_h - \theta'_h}_2\,.
    \]
    Therefore, since $\Theta_h$ is a ball of radius $\sqrt{d}$ we obtain
    \[
        \cN_{[]}(\varepsilon, \cF_h, \norm{\cdot}_\infty) \leq \cN(\varepsilon/2, \Theta_h, \norm{\cdot}_2) \leq (3 \sqrt{d}/\varepsilon)^{d}\,.
    \]
    As a result, we have
    \[
        \log \cN_{[]}(\varepsilon, \cQ, \norm{\cdot}_\infty) \leq dH \log(3H\sqrt{d}/(2\varepsilon))\,.
    \]
\end{proof}

\subsection{Properties of Bracketing numbers}
In this section, we provide a list of elementary properties of bracketing numbers for completeness. See Section~7 of \cite{dudley2014uniform} for additional information.

\begin{lemma}\label{lem:bracketing_monotone}
    Let $(\cG, \norm{\cdot}_{\infty})$ be a normed space of functions over a set $\cX$ that takes values in the interval $I$ and let $\sigma \colon I \to [0,1]$ be a monotonically increasing link function that satisfies $\sup_{x\in I} \sigma'(x) \leq C$ for $C > 0$. Then for $\cF = \sigma \circ \cG = \{ f = \sigma \circ g \mid g \in \cG\}$ it holds for any $\varepsilon > 0$
    \[
        \cN_{[]}(\varepsilon, \cF, \norm{\cdot}_{\infty}) \leq \cN_{[]}( \varepsilon / C, \cG, \norm{\cdot}_{\infty})\,. 
    \]
\end{lemma}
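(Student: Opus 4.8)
The plan is to transport an optimal bracketing of $\cG$ through $\sigma$ and check that the two defining properties of a bracket — the sandwiching inequality and the smallness of the width — are preserved. Concretely, fix $\varepsilon > 0$ and let $\{[\ell_i, u_i]\}_{i=1}^N$ be a minimal $(\varepsilon/C)$-bracketing of $\cG$, so that $N = \cN_{[]}(\varepsilon/C, \cG, \norm{\cdot}_\infty)$ and for each $g \in \cG$ there is an index $i$ with $\ell_i(x) \le g(x) \le u_i(x)$ for all $x$ and $\norm{u_i - \ell_i}_\infty \le \varepsilon/C$. First I would, without loss of generality, replace each endpoint by its clip to the interval $I$; since every $g \in \cG$ takes values in $I$, this preserves the sandwiching property and does not increase the sup-norm width, and it guarantees $\sigma \circ \ell_i$ and $\sigma \circ u_i$ are well defined.

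Next I would define the candidate brackets for $\cF$ as $[\sigma \circ \ell_i, \sigma \circ u_i]$, $i \in [N]$. The sandwiching property follows from monotonicity: if $\ell_i(x) \le g(x) \le u_i(x)$, then applying the increasing map $\sigma$ pointwise gives $\sigma(\ell_i(x)) \le \sigma(g(x)) \le \sigma(u_i(x))$, i.e. $f = \sigma \circ g$ lies in the $i$-th bracket. For the width, I would invoke the mean value theorem together with the hypothesis $\sup_{x \in I}\sigma'(x) \le C$: for every $x$,
\[
    0 \le \sigma(u_i(x)) - \sigma(\ell_i(x)) \le C\,\bigl(u_i(x) - \ell_i(x)\bigr) \le C\,\norm{u_i - \ell_i}_\infty \le C \cdot \frac{\varepsilon}{C} = \varepsilon\,,
\]
so $\norm{\sigma\circ u_i - \sigma \circ \ell_i}_\infty \le \varepsilon$. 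Hence $\{[\sigma \circ \ell_i, \sigma \circ u_i]\}_{i=1}^N$ is a valid $\varepsilon$-bracketing of $\cF$ of cardinality $N$, which yields $\cN_{[]}(\varepsilon, \cF, \norm{\cdot}_\infty) \le N = \cN_{[]}(\varepsilon/C, \cG, \norm{\cdot}_\infty)$ and proves the claim.

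I do not expect a genuinely hard step here: the argument is a routine pushforward of brackets. The only point requiring a little care is the domain issue — bracket endpoints are a priori arbitrary real-valued functions and need not take values in $I$, so $\sigma$ might not be defined on them. This is handled cleanly by the clipping remark above, using the fact that every member of $\cG$ is $I$-valued so clipping the endpoints to $I$ loses nothing. A secondary caveat is that the derivative bound should be read as a Lipschitz estimate on $\sigma$ (valid whenever $\sigma$ is, say, absolutely continuous on $I$), which is exactly what the mean value theorem delivers; if one prefers to avoid differentiability assumptions one can simply posit that $\sigma$ is $C$-Lipschitz on $I$, and the same chain of inequalities goes through verbatim.
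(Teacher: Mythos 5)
Your proof is correct and follows essentially the same route as the paper: push a minimal bracketing of $\cG$ through $\sigma$, using monotonicity for the sandwiching property and the mean value theorem with $\sup_{x\in I}\sigma'(x) \leq C$ for the width bound. The only differences are cosmetic — the paper starts from an $\varepsilon$-bracketing of $\cG$ and rescales at the end, while you start from an $(\varepsilon/C)$-bracketing directly, and your clipping of the bracket endpoints to $I$ handles a domain subtlety that the paper's proof leaves implicit.
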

\begin{proof}
    Let $G$ be a minimal set of $\varepsilon$ brackets that covers $\cG$. Then let us take a set of brackets $F = \{ [\sigma \circ \ell, \sigma \circ u] : [l,u] \in G\}$. The set of $F$ consists of brackets since $\sigma$ is monotone and covers all the space $\cF$. Additionally, we have
    \[
        |\sigma \circ u(x) - \sigma \circ \ell(x)| = \sigma'(\xi) |(u(x) - \ell(x)) | \leq C \varepsilon\,,
    \]
    therefore the set $F$ consists of $C\varepsilon$-brackets. By rescaling, we conclude the statement.
\end{proof}

\begin{lemma}\label{lem:bracketing_diff}
    Let $(\cG, \norm{\cdot}_\infty)$ be a normed space of functions over a set $\cX$ and let us define a set of functions over $\cX \times \cX$ as 
    $
        \cF = \{ f(x,x') = g(x) - g(x') \mid g \in \cG \}\,.
    $
    Then it holds
    \[
        \cN_{[]}(\varepsilon, \cF, \norm{\cdot}_\infty) \leq \cN_{[]}(\varepsilon/2, \cG, \norm{\cdot}_\infty)\,.
    \]
\end{lemma}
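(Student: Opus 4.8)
The plan is to transform a minimal $\varepsilon/2$-bracketing of $\cG$ into an $\varepsilon$-bracketing of $\cF$ of the same cardinality, by building, for each bracket on $\cG$, a single bracket on $\cX \times \cX$ that captures all the corresponding difference functions. Concretely, I would first let $G$ be a minimal collection of $\varepsilon/2$-brackets $[\ell, u]$ that cover $\cG$, so that $|G| = \cN_{[]}(\varepsilon/2, \cG, \norm{\cdot}_\infty)$ and every $g \in \cG$ lies in some $[\ell, u] \in G$ with $\norm{u - \ell}_\infty \le \varepsilon/2$.

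Next, for each such bracket $[\ell, u]$ I would define functions $L, U \colon \cX \times \cX \to \R$ by
\[
    L(x, x') = \ell(x) - u(x'), \qquad U(x, x') = u(x) - \ell(x')\,,
\]
and collect these into a set $F = \{ [L, U] : [\ell, u] \in G \}$ of the same cardinality as $G$. The key verification is that $F$ is a valid $\varepsilon$-bracketing of $\cF$. For the covering property: given any $f(x, x') = g(x) - g(x') \in \cF$, pick the bracket $[\ell, u] \in G$ containing $g$; then from $\ell(x) \le g(x) \le u(x)$ and $\ell(x') \le g(x') \le u(x')$ I would subtract the inequalities in the right order to obtain
\[
    L(x, x') = \ell(x) - u(x') \le g(x) - g(x') \le u(x) - \ell(x') = U(x, x')\,,
\]
so $f$ belongs to $[L, U]$. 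For the width bound I would compute
\[
    U(x, x') - L(x, x') = \bigl(u(x) - \ell(x)\bigr) + \bigl(u(x') - \ell(x')\bigr) \le \tfrac{\varepsilon}{2} + \tfrac{\varepsilon}{2} = \varepsilon\,,
\]
uniformly in $(x, x')$, hence $\norm{U - L}_\infty \le \varepsilon$. Combining these two facts yields $\cN_{[]}(\varepsilon, \cF, \norm{\cdot}_\infty) \le |F| = |G| = \cN_{[]}(\varepsilon/2, \cG, \norm{\cdot}_\infty)$, as claimed.

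There is no real obstacle here; the only point that requires care is the asymmetric construction of the difference bracket, namely that the lower endpoint pairs $\ell$ on the first coordinate with $u$ on the second (and vice versa for the upper endpoint), which is exactly what is needed both for $[L, U]$ to sandwich the difference and for the two bracket widths to add rather than cancel. The factor of $2$ in passing from $\varepsilon/2$ to $\varepsilon$ is precisely this additive effect of two coordinates.
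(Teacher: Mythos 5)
Your proof is correct and follows essentially the same route as the paper: the same asymmetric bracket construction $[\ell(x)-u(x'),\,u(x)-\ell(x')]$, the same covering verification, and the same additive width bound, with the only cosmetic difference that you start from an $\varepsilon/2$-bracketing of $\cG$ while the paper starts from an $\varepsilon$-bracketing and rescales at the end.
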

\begin{proof}
    Let $G$ be a minimal set of $\varepsilon$ brackets that covers $\cG$. Let us define the following set of brackets
    \[
        F = \{ [f_\ell(x,x') = \ell(x) - u(x'), f_{u}(x,x') = u(x) - \ell(x')] \mid [\ell, u] \in G \}\,.
    \]
    We may check that elements cover all the set $\cF$. Let us take $f \in \cF$ and the corresponding $g \in \cG$. Then let us take a bracket $[\ell, u]$ that covers $g$. In this case, we have
    \[ 
        f_{\ell}(x,x') = \ell(x) - u(x') \leq g(x) - g(x') \leq u(x) - \ell(x') = f_u(x,x')\,.
    \]
    At the same time, we have
    \[
        \norm{f_{\ell} - f_{u}}_\infty \leq 2 \norm{u - \ell}_\infty \leq 2\varepsilon\,.
    \]
    Thus, $F$ is a set of $2\varepsilon$-brackets that covers $\cF$.
\end{proof}

\begin{lemma}\label{lem:bracketing_sum}
    Let $\{ \cG_k \}_{k=1}^K$ be a sequence of spaces of functions over a set $\cX$ equipped with a norm $\norm{\cdot}_\infty$ and let us define a set $\cF$ of functions over $\cX^K$ as follows
    \[
        \cF = \left\{ f(x_1,\ldots,x_k) = \sum_{k=1}^K g_k(x_k) \mid g_k \in \cG \right\}\,.
    \]
    Then the following bound holds
    \[
        \cN_{[]}(\varepsilon, \cF, \norm{\cdot}_\infty) \leq \prod_{k=1}^K \cN_{[]}(\varepsilon/K, \cG, \norm{\cdot}_\infty)\,.
    \]
\end{lemma}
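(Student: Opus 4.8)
The plan is to build an $\varepsilon$-bracketing of $\cF$ out of products of brackets for the individual classes $\cG_k$. First I would fix, for each $k\in[K]$, a minimal $(\varepsilon/K)$-bracketing of $\cG_k$, that is, a collection of brackets $\{[\ell_k^{(j)}, u_k^{(j)}]\}_j$ of cardinality $\cN_{[]}(\varepsilon/K, \cG_k, \norm{\cdot}_\infty)$ such that every $g_k\in\cG_k$ lies in some bracket of the collection with $\norm{u_k^{(j)}-\ell_k^{(j)}}_\infty\leq\varepsilon/K$.

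Next I would form, for each tuple $([\ell_1,u_1],\ldots,[\ell_K,u_K])$ consisting of one bracket chosen from each of the $K$ collections, the pair of functions on $\cX^K$ given by $\ell(x_1,\ldots,x_K)=\sum_{k=1}^K \ell_k(x_k)$ and $u(x_1,\ldots,x_K)=\sum_{k=1}^K u_k(x_k)$. I claim these pairs form an $\varepsilon$-bracketing of $\cF$. For the sandwiching property: given $f=\sum_k g_k\in\cF$, select for each $k$ the bracket $[\ell_k,u_k]$ containing $g_k$; then $\ell_k(x_k)\leq g_k(x_k)\leq u_k(x_k)$ pointwise, and summing over $k$ yields $\ell\leq f\leq u$, so $f$ belongs to the corresponding product bracket.

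Then I would verify the width. Since $u_k-\ell_k\geq 0$ for every $k$ and the coordinates $x_1,\ldots,x_K$ range independently over $\cX$, the supremum over the product space decouples into a sum of suprema:
\[
    \norm{u-\ell}_\infty = \sup_{x_1,\ldots,x_K}\sum_{k=1}^K\bigl(u_k(x_k)-\ell_k(x_k)\bigr) = \sum_{k=1}^K\norm{u_k-\ell_k}_\infty \leq K\cdot\frac{\varepsilon}{K}=\varepsilon\,,
\]
so each product bracket is genuinely an $\varepsilon$-bracket. As the number of such tuples equals $\prod_{k=1}^K\cN_{[]}(\varepsilon/K,\cG_k,\norm{\cdot}_\infty)$, this collection covers $\cF$ and has exactly the asserted cardinality, giving the bound.

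The argument is entirely elementary and I anticipate no genuine obstacle; the only points needing a moment of care are that the product of brackets is again a bracket (which uses the nonnegativity $u_k\geq\ell_k$) and that the widths add rather than being controlled only by $K\max_k\norm{u_k-\ell_k}_\infty$ — it is precisely the decoupling of the $\sup$ over the product space that makes the additive bound $\sum_k\norm{u_k-\ell_k}_\infty\leq\varepsilon$ available, which is what the factor $1/K$ in the accuracy is designed to absorb.
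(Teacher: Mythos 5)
Your proposal is correct and follows essentially the same route as the paper: both build brackets for $\cF$ by summing one bracket from each class $\cG_k$, verify the sandwiching by summing the pointwise inequalities, bound the width by the sum of the individual widths, and count the tuples. The only cosmetic difference is that the paper starts from $\varepsilon$-brackets of each $\cG_k$, obtains width $K\varepsilon$, and rescales at the end, while you build in the factor $\varepsilon/K$ from the outset.
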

\begin{proof}
    For any $k \in [K]$ let $G_k$ be a minimal set of $\varepsilon$ brackets that covers $\cG_k$. Then we construct the set $F$ as follows
    \[
        F = \left\{ \left[ f_{\ell}(x) = \sum_{k=1}^K \ell_k(x_k), f_u(x) = \sum_{k=1}^k u_k(x_k) \right] \mid \forall k \in [K]: [\ell_k, u_k] \in \cG_k \right\}\,.
    \]
    It holds that $|F| \leq \prod_{k=1}^K |G_k|$ and also it is clear that $F$ consists of brackets and covers all the set $\cF$. Additionally, we notice that
    \[
        \norm{f_\ell - f_u}_\infty \leq \sum_{k=1}^K \norm{\ell_k - u_k}_\infty \leq K \varepsilon\,.
    \]
    By rescaling, we conclude the statement.
\end{proof}

\begin{lemma}\label{lem:bracketing_lipschitz}
    Let $\Theta$ be a set of parameters and let $\cF = \{ f_{\theta}(x) \mid \theta \in \Theta \}$ be a set of functions over $\cX$. Assume that $f_\theta(x)$ is $L$-Lipschitz in $\theta$ with respect to an arbitrary norm $\norm{\cdot}$:
    \[
        \forall x \in \cX: | f_\theta(x) - f_{\theta'}(x)| \leq L \norm{\theta - \theta'}\,.
    \]
    Then we have
    \[
        \cN_{[]}\left( \varepsilon, \cF, \norm{\cdot}_\infty \right) \leq \cN\left( \varepsilon/(2L), \Theta, \norm{\cdot}\right)\,.
    \]
\end{lemma}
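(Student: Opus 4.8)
The plan is to build an explicit family of brackets indexed by a covering of the parameter space, exploiting the Lipschitz dependence of $f_\theta$ on $\theta$. First I would fix a minimal $\varepsilon/(2L)$-net $C \subseteq \Theta$ of the pseudometric space $(\Theta, \norm{\cdot})$, so that $|C| = \cN(\varepsilon/(2L), \Theta, \norm{\cdot})$ and every $\theta' \in \Theta$ admits some $\theta \in C$ with $\norm{\theta - \theta'} \leq \varepsilon/(2L)$.

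Next I would associate to each center $\theta \in C$ the bracket $[\ell_\theta, u_\theta]$ defined pointwise by
\[
    \ell_\theta(x) = f_\theta(x) - \varepsilon/2\,, \qquad u_\theta(x) = f_\theta(x) + \varepsilon/2\,,
\]
and verify two things. For the width, $\norm{u_\theta - \ell_\theta}_\infty = \sup_{x\in\cX} |u_\theta(x) - \ell_\theta(x)| = \varepsilon$, so each is genuinely an $\varepsilon$-bracket. For the covering property, take any $f_{\theta'} \in \cF$ and pick $\theta \in C$ with $\norm{\theta - \theta'} \leq \varepsilon/(2L)$; then the Lipschitz hypothesis gives, for every $x \in \cX$,
\[
    |f_{\theta'}(x) - f_\theta(x)| \leq L \norm{\theta - \theta'} \leq \varepsilon/2\,,
\]
which is exactly the statement that $\ell_\theta(x) \leq f_{\theta'}(x) \leq u_\theta(x)$ for all $x$, i.e. $f_{\theta'}$ lies in the bracket $[\ell_\theta, u_\theta]$.

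Finally I would conclude that the collection $\{[\ell_\theta, u_\theta] : \theta \in C\}$ is a set of $\varepsilon$-brackets covering $\cF$, whence $\cN_{[]}(\varepsilon, \cF, \norm{\cdot}_\infty) \leq |C| = \cN(\varepsilon/(2L), \Theta, \norm{\cdot})$, as claimed. There is no real obstacle in this argument; the only point requiring care is bookkeeping of the constants, namely choosing the net radius $\varepsilon/(2L)$ and the half-width $\varepsilon/2$ so that the Lipschitz slack of size $L \cdot \varepsilon/(2L) = \varepsilon/2$ on each side fits inside a bracket of total width $\varepsilon$.
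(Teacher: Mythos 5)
Your proof is correct and follows essentially the same argument as the paper: cover $\Theta$ by a net, center brackets $[f_\theta - \varepsilon/2,\, f_\theta + \varepsilon/2]$ at the net points, and use Lipschitzness to show every $f_{\theta'}$ falls inside the bracket of its nearest center. The only cosmetic difference is that you fix the net radius $\varepsilon/(2L)$ from the outset, while the paper constructs $2\varepsilon L$-brackets from an $\varepsilon$-net and rescales at the end.
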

\begin{proof}
    Let $X$ be a $\varepsilon$-covering of $\Theta$ and let us define a set $F = \{  [\ell = f_{\theta} - \varepsilon L, u = f_{\theta} + \varepsilon L ] \mid \theta \in X\}$. Let us show that $F$ consists of brackets. Let $f_{\theta} \in \cF$, then there is $\hat \theta \in X$. By Lipchitzness
    \[
        \forall x \in \cX: | f_{\theta}(x) - f_{\hat \theta}(x) | \leq L \norm{\theta - \hat \theta} = L \varepsilon\,,
    \]
    therefore a bracket that corresponding $\ell$ and $u$ indeed satisfy $\ell(x) \leq f_\theta(x) \leq u(x)$ for any $x \in \cX$. Also, we notice that $\norm{\ell - u}_\infty = 2\varepsilon L$; therefore, we conclude the statement by rescaling.
\end{proof}




\subsection{Proof for Demonstration-regularized RLHF}\label{app:dem_reg_rlhf}

During this section, we assume that the MDP is finite, i.e., $|\cS| < +\infty$, to simplify the manipulations with the trajectory space. However, the state space could be arbitrarily large.

In this section, we provide the proof for the demonstration-regularized RLHF pipeline defined in Algorithm~\ref{alg:dem_reg_rlhf}. We start from the general oracle version of this inequality. For a reward function $r$ let the value with respect to this value be defined as follows
\[
    V^{\pi}_h(s; r) = \E_{\pi}\left[ \sum_{h'=h}^H r_{h'}(s_{h'}, a_{h'}) \mid s_h = s \right], \quad V^{\pi}_{\tpi, \lambda, h}(s; r) = V^{\pi}_h(s; r) - \lambda \KL_{\traj}(\tpi \Vert \pi)\,.
\]
A similar definition holds for $Q$-values.

\begin{theorem}\label{th:dem_reg_rlhf_oracle}
    Let us assume that there is an underlying reward function $r^\star$ such that
    \begin{enumerate}
        \item There is an expert policy $\piexp$ such that 
        $
            V^{\star}_1(s_1; r^\star) - V^{\piexp}_1(s_1; r^\star) \leq \epsilonexp
        $;
        \item There is a behavior cloning policy $\piBC$ that satisfies $  \sqrt{\KL_{\traj}(\piexp \Vert \piBC)} \leq \epsilon_{\KL}$;
        
        \item There is an estimate of reward function $\hat r$ that satisfies $  \sqrt{\Var_{q^{\piBC}}\left[ r^{\star}(\tau) - \hat r(\tau) \right]} \leq \epsilon_{\RM};$
        
    \end{enumerate}
    Let $\piRL$ be a $\epsilonRL$-optimal policy in the $\lambda$-regularized MDP with $\piBC$ and using $\hat r$ as rewards:
    \[
        V^{\star}_{\piBC, \lambda, 1}(s_1; \hat r) - V^{\piRL}_{\piBC, \lambda, 1}(s_1; \hat r) \leq \epsilonRL\,.
    \]
    Then, for any $\lambda \geq H$ we have the following optimality guarantees for $\piRL$ in the unregularized MDP equipped with the true reward function
    \[
        \Vstar_1(s_1; r^\star) - V^{\piRL}_1(s_1; r^\star) \leq 3(\epsilonexp + \epsilonRL) + 9/H \cdot \varepsilon^2_{\RM}  + (\lambda + 4H) \varepsilon^2_{\KL}\,.
    \]
\end{theorem}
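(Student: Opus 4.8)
The plan is to mimic the reduction of Theorem~\ref{th:dem_reg_general}, but to carry the reward mismatch $g(\tau)\triangleq r^\star(\tau)-\hat r(\tau)$ as an additive correction and then control the change-of-measure terms it produces by the variational (Donsker--Varadhan) formula for the Kullback--Leibler divergence. First I would apply expert suboptimality, $\Vstar_1(s_1;r^\star)-V^{\piRL}_1(s_1;r^\star)\leq \epsilonexp + V^{\piexp}_1(s_1;r^\star)-V^{\piRL}_1(s_1;r^\star)$, and split the remaining gap along $r^\star=\hat r+g$ as $V^{\piexp}_1(s_1;r^\star)-V^{\piRL}_1(s_1;r^\star)=[V^{\piexp}_1(s_1;\hat r)-V^{\piRL}_1(s_1;\hat r)]+\E_{q^{\piexp}}[g]-\E_{q^{\piRL}}[g]$, using that $V^\pi_1(s_1;\cdot)$ is linear in the reward. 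For the $\hat r$-part I would pass to the regularized values; since the $\lambda\KL_{\traj}(\pi\Vert\piBC)$ penalty is reward-independent, $\piexp$ is feasible, and $\piRL$ is $\epsilonRL$-optimal for $\hat r$, this yields $V^{\piexp}_1(s_1;\hat r)-V^{\piRL}_1(s_1;\hat r)\leq \epsilonRL+\lambda\KL_{\traj}(\piexp\Vert\piBC)-\lambda\KL_{\traj}(\piRL\Vert\piBC)$, where the KL of policies equals the KL of trajectory laws by the chain rule (as in Lemma~\ref{lem:performance_diff_KL}). Crucially I would \emph{retain} the negative term $-\lambda\KL_{\traj}(\piRL\Vert\piBC)$ rather than discard it.

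The decisive observation is that Assumption~3 controls $g$ only through its variance under $q^{\piBC}$, so I would center $g$ at $c\triangleq\E_{q^{\piBC}}[g]$; the constant cancels in the difference, leaving $\E_{q^{\piexp}}[g-c]-\E_{q^{\piRL}}[g-c]$ where $g-c$ is mean-zero under $q^{\piBC}$, has variance at most $\varepsilon^2_{\RM}$, and satisfies $|g-c|\leq 2H$ (trajectory rewards of both $r^\star$ and any $\hat r\in\cG$ lie in $[0,H]$). Each centered expectation I would bound by the variational inequality $\E_{q^\pi}[f]\leq \tfrac1\eta\KL_{\traj}(\pi\Vert\piBC)+\tfrac1\eta\log\E_{q^{\piBC}}[\rme^{\eta f}]$. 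For the expert term I take a fixed $\eta\asymp 1/H$ and invoke $\KL_{\traj}(\piexp\Vert\piBC)\leq\varepsilon^2_{\KL}$, producing a $H\varepsilon^2_{\KL}$ term. For the $\piRL$ term I instead take $\eta=1/\lambda$, so that $\tfrac1\eta\KL_{\traj}(\piRL\Vert\piBC)=\lambda\KL_{\traj}(\piRL\Vert\piBC)$ \emph{exactly cancels} the negative regularization term kept above. The log-moment-generating factors I would estimate by a Bernstein-type bound: as $g-c$ is mean-zero, bounded by $2H$, of variance $\leq\varepsilon^2_{\RM}$, and $\eta\cdot 2H\leq 2$ since $\lambda\geq H$, one has $\log\E_{q^{\piBC}}[\rme^{\eta(g-c)}]\lesssim \eta^2\varepsilon^2_{\RM}$, giving contributions of order $\varepsilon^2_{\RM}/\lambda\leq\varepsilon^2_{\RM}/H$ and $\varepsilon^2_{\RM}/H$.

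Collecting the pieces gives $\Vstar_1(s_1;r^\star)-V^{\piRL}_1(s_1;r^\star)\leq \epsilonexp+\epsilonRL+\lambda\varepsilon^2_{\KL}+O(H)\varepsilon^2_{\KL}+O(1/H)\varepsilon^2_{\RM}$; tracking the Bernstein constants and absorbing slack yields the stated $3(\epsilonexp+\epsilonRL)+9\varepsilon^2_{\RM}/H+(\lambda+4H)\varepsilon^2_{\KL}$, where the generous numerical factors simply leave room for the clean choices $\epsilonexp,\epsilonRL\asymp\varepsilon$ made in Corollary~\ref{cor:final_bound_demreg_rlhf}. The hard part is precisely the term $-\E_{q^{\piRL}}[g]$: because regularized BPI certifies $\piRL$ only through the \emph{estimated} reward, there is no independent smallness guarantee for $\KL_{\traj}(\piRL\Vert\piBC)$ (the regularized value only gives $\KL_{\traj}(\piRL\Vert\piBC)=O(1/\lambda)$), and a naive Cauchy--Schwarz change of measure would demand a chi-square divergence that KL cannot bound from above. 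The resolution is the exact cancellation at $\eta=1/\lambda$, which converts this otherwise uncontrolled divergence into a benign log-MGF factor; checking that the cancellation stays within the Bernstein regime ($\eta\cdot 2H\leq 2$) is the single place where the hypothesis $\lambda\geq H$ is genuinely used.
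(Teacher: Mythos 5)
Your proposal is correct, and it takes a genuinely different route from the paper at the one place where the argument is delicate. The paper's proof treats the $\hat r$-part exactly as you do but \emph{discards} the term $-\lambda \KL_{\traj}(\piRL \Vert \piBC)$, splits the mismatch term through $\piBC$ into $\termC = V^{\piexp}_1(s_1; r^\star - \hat r) - V^{\piBC}_1(s_1; r^\star - \hat r)$ and $\termD = V^{\piBC}_1(s_1; r^\star - \hat r) - V^{\piRL}_1(s_1; r^\star - \hat r)$, bounds each with the packaged transfer inequality (Lemma~\ref{lem:Bernstein_via_kl_upper}), and then must control $\KL_{\traj}(\piRL \Vert \piBC)$ through the $\epsilonRL$-optimality of $\piRL$; since that bound itself involves the mismatch term, the paper ends up with a self-referential inequality in $\termB$, resolved via $2\sqrt{ab}\le a+b$ and the fact that $\lambda \ge H$ gives $1 - 2H/(3\lambda) \ge 1/3$ (which is where the factor $3$ on $\epsilonexp + \epsilonRL$ comes from). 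You instead retain the negative KL term and inline the Donsker--Varadhan-plus-Bennett argument (which is precisely the proof of Lemma~\ref{lem:Bernstein_via_kl_upper}) at the \emph{fixed} temperature $\eta = 1/\lambda$ rather than optimizing over $\eta$, so that $\tfrac1\eta\KL_{\traj}(\piRL\Vert\piBC)$ cancels the retained term exactly and no recursion ever appears; the hypothesis $\lambda \ge H$ is used only to keep $2H\eta \le 2$ inside the Bernstein MGF regime. Your centering at $c = \E_{q^{\piBC}}[g]$ is also the right way to exploit that Assumption~3 controls only the variance of $g$, and your explicit range $|g-c|\le 2H$ is actually more careful than the paper's application of the lemma with coefficient $H/3$. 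The trade-off: the paper's route reuses its lemma as a black box, while yours is structurally cleaner and yields strictly sharper constants --- roughly $\epsilonexp + \epsilonRL + (\lambda + H)\varepsilon^2_{\KL} + 3\varepsilon^2_{\RM}/H$, which immediately implies the stated $3(\epsilonexp+\epsilonRL) + (\lambda+4H)\varepsilon^2_{\KL} + 9\varepsilon^2_{\RM}/H$.
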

\begin{proof}
    We start from the following decomposition, using the assumption on the expert policy
    \[
        \Vstar_1(s_1; r^\star) - V^{\piRL}_1(s_1; r^\star) \leq V^{\piexp}_1(s_1; r^\star) - V^{\piRL}_1(s_1; r^\star) + \epsilonexp\,.
    \]
    Next, we change the optimal reward function to its reward-shaped version $r^\star$ and apply the following decomposition
    \begin{align*}
        V^{\piexp}_1(s_1; r^\star) - V^{\piRL}_1(s_1; r^\star) &= \underbrace{V^{\piexp}_1(s_1; \hat r) - V^{\piRL}_1(s_1; \hat r)}_{\termA} + \underbrace{V^{\piexp}_1(s_1; r^\star - \hat  r) - V^{\piRL}_1(s_1; r^\star - \hat r)}_{\termB}\,.
    \end{align*}
    We start from the analysis of the term $\termA$. By properties of the behavior cloning policy and the BPI policy $\piRL$
    \begin{align*}
        \termA &= V^{\piexp}_{\piBC, \lambda,1}(s_1; \hat r)  + \lambda \KL_{\traj}(\piexp \Vert \piBC) - V^{\piRL}_{\piBC,\lambda,1}(s_1; \hat r) - \lambda \KL_{\traj}(\piRL \Vert \piBC) \\
        &\leq \Vstar_{\piBC, \lambda,1}(s_1; \hat r) - V^{\piRL}_{\piBC,\lambda,1}(s_1; \hat r) + \lambda \varepsilon^2_{\KL} \leq \epsilonRL + \lambda \epsilon^2_{\KL}\,.
    \end{align*}

    Next, we have to analyze the second term $\termB$. We decompose it as follows
    \begin{align*}
        \termB &= \underbrace{V^{\piexp}_1(s_1; r^\star - \hat  r) - V^{\piBC}_1(s_1; r^\star - \hat r )}_{\termC} + \underbrace{V^{\piBC}_1(s_1; r^\star - \hat r)  - V^{\piRL}_1(s_1; r^\star - \hat r )}_{\termD}\,.
    \end{align*}
    We start from the analysis of $\termD$. We can apply Lemma~\ref{lem:Bernstein_via_kl_upper} since the space of the trajectories is finite
    \begin{align*}
        \termD &=  \E_{q^{\piBC}}[r^\star(\tau) - \hat r(\tau) ] - \E_{q^{\piRL}}[r^\star(\tau) - \hat r(\tau) ] \\
        &\leq \sqrt{2 \Var_{q^{\piBC}}\left[ r^\star(\tau) - \hat r(\tau) \right] \cdot \KL_{\traj}(\piRL \Vert \piBC) } + \frac{H}{3}\KL_{\traj}(\piRL \Vert \piBC)\,.
    \end{align*}
     Next, we notice that by an assumption on the reward estimate, we have
    \[
        \termD \leq \sqrt{2 \varepsilon^2_{\RM} \cdot \KL_{\traj}(\piRL \Vert \piBC)} + \frac{H}{3}\KL_{\traj}(\piRL \Vert \piBC)\,.
    \]

    Next, we have to estimate the trajectory KL-divergence between $\piRL$ and $\piBC$. 

    Let us use the $\varepsilon$-optimality of the policy $\piRL$ with respect to reward $\hat r$ in the regularized MDP
    \[
        V^{\piexp}_{1}(s_1; \hat r)  - \lambda \KL_{\traj}(\piexp \Vert \piBC) \leq V^{\piRL}_{1}(s_1; \hat r) + \epsilonRL - \lambda \KL_{\traj}(\piRL \Vert \piBC)\,.
    \]
By rerranging the terms we have
\begin{align*}
    \KL_{\traj}(\piRL \Vert \piBC) &\leq \frac{1}{\lambda} \left(  V^{\piRL}_{1}(s_1; \hat r) - V^{\piexp}_{1}(s_1; \hat r) + \epsilonRL \right) + \varepsilon^2_{\KL}  \\
    &\leq \frac{\epsilonexp + \epsilonRL}{\lambda} + \frac{1}{\lambda}\left(V^{\piRL}_{1}(s_1;  \hat r - r^\star) - V^{\piexp}_{1}(s_1; \hat r - r^\star) \right) + \varepsilon^2_{\KL} \\
    &= \frac{1}{\lambda}\left(V^{\piexp}_{1}(s_1; r^\star - \hat r) - V^{\piRL}_{1}(s_1;  r^\star - \hat r )\right) + \varepsilon^2_{\KL} + \frac{\epsilonexp + \epsilonRL}{\lambda}\,.
\end{align*}
Recalling that $\termB \triangleq V^{\piexp}_{1}(s_1; r^\star - \hat r) - V^{\piRL}_{1}(s_1;  r^\star - \hat r)$, we obtain the following recursion
\[
    \termB \leq \termC + \sqrt{\frac{2\varepsilon^2_{\RM}}{\lambda} \cdot \bigg( \termB + \epsilonexp + \epsilonRL\bigg) + 2\varepsilon^2_{\RM} \cdot \varepsilon^2_{\KL} } + \frac{H \varepsilon^2_{\KL}}{3} +  \frac{H}{3\lambda }\bigg( \termB + \epsilonexp + \epsilonRL \bigg) \,.
\]
Next, we apply the bound
$2\sqrt{ab} \leq a + b$ for $a,b > 0$ 
\[
    \sqrt{2\varepsilon^2_{\RM} \left( \frac{1}{\lambda} \cdot \bigg( \termB + \epsilonexp + \epsilonRL\bigg) +  \varepsilon^2_{\KL} \right) } \leq \frac{H}{3\lambda}\bigg( \termB + \epsilonexp + \epsilonRL \bigg) + \frac{H}{3} \varepsilon^2_{\KL} + \frac{3\varepsilon^2_{\RM}}{2H}\,.
\]
Overall, we have
\[
    \bigg(1 - \frac{2H}{3\lambda}\bigg)\termB  \leq \termC  + \frac{3\varepsilon^2_{\RM}}{2H} + \frac{2H \varepsilon^2_{\KL}}{3} + \frac{2H}{3\lambda} (\epsilonexp + \epsilonRL)\,.
\]
Next, using the assumption that $\lambda \geq H$ we get $1-2H/(3\lambda) \geq 1/3$ and thus
\[
    \termB \leq 3 \termC + \frac{9\varepsilon^2_{\RM}}{2H} + 2H \varepsilon^2_{\KL} + 2(\epsilonexp + \epsilonRL)\,.
\]

Next, we analyze the term $\termC$.
    By Lemma~\ref{lem:Bernstein_via_kl_upper} we have
    \begin{align*}
        \termC &= \E_{q^{\piexp}}\left[ r^\star(\tau) - \hat r(\tau) \right]  - \E_{q^{\piBC}}\left[ r^\star(\tau) - \hat r(\tau) \right] \\
        &\leq \sqrt{2 \Var_{q^{\piBC}}(r^\star - \hat r) \cdot \KL_{\traj}\left( \piexp \Vert \piBC  \right)} + \frac{H}{3} \KL_{\traj}\left( \piexp \Vert \piBC\right) \leq \frac{3 \varepsilon^2_{\RM}}{2H} + \frac{2H}{3} \varepsilon^2_{\KL}\,,
    \end{align*}
    where for the last inequality we applied $2\sqrt{ab}\leq a+ b$.
    Overall, we have the final rates for any $\lambda \geq H$
    \[
        \Vstar_1(s_1; r^\star) - V^{\piRL}_1(s_1; r^\star) \leq 3(\epsilonexp + \epsilonRL) + \lambda \epsilon^2_{\KL}  + 9/H \cdot \varepsilon^2_{\RM}  + 4H \varepsilon^2_{\KL}\,.
    \]
\end{proof}

\begin{corollary*}[Restatement of Corollary~\ref{cor:final_bound_demreg_rlhf}]
    Let Assumption~\ref{ass:preference_model} hold.
    For $\varepsilon > 0$ and $\delta \in (0,1)$, assume that an expert policy $\epsilonexp$ is $\varepsilon/15$-optimal and satisfies Assumption~\ref{ass:behavior_policy_linear} in the linear case. Let $\piBC$ be the behavioral cloning policy obtained using function sets described in Section~\ref{sec:behavior-cloning} and let the set $\cG$ be defined in Lemma~\ref{lem:bracketing_finite_MDP} for finite and in Lemma~\ref{lem:bracketing_linear_MDP} for linear setting, respectively.
    
    If the following two conditions hold
    \[
        (1)\,  N^{\RM} \geq \widetilde{\Omega}\left( \zeta \widetilde{D}/\varepsilon \right); \qquad (2)\,\Nexp \geq \widetilde{\Omega}\left(H^2 \widetilde{D}/\varepsilon\right)
    \]
    for $\widetilde{D} = SA\,\slash\,\textcolor{blue}{d}$ in finite\,\slash\! \textcolor{blue}{linear} MDPs, 
    then demonstration-regularized RLHF based on \UCBVIEntp\slash\,\textcolor{red}{\LSVIEnt}  with parameters \smath{$\epsilonRL = \varepsilon/15,\, \delta_{\RL} = \delta/3$} and $\lambda  = \lambda^\star =  \tcO\left( \Nexp \varepsilon / (SAH) \right)$\,\slash\,\textcolor{blue}{$\tcO\left( \Nexp \varepsilon / (dH) \right)$} is \smath{$(\varepsilon,\delta)$}-PAC for BPI with demonstration in finite\,\slash\! \textcolor{blue}{linear} MDPs with sample complexity
     \[
        \cC(\varepsilon, \Nexp, \delta) = \tcO\left( \frac{H^6 S^3 A^2}{ \Nexp \varepsilon^2} \right) \text{ (finite)}\qquad \textcolor{blue}{\cC(\varepsilon, \Nexp, \delta) = \tcO\left( \frac{H^6 d^3}{ \Nexp \varepsilon^2} \right)\text{ (linear)}}\,.
    \]
\end{corollary*}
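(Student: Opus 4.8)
The plan is to instantiate the oracle guarantee of Theorem~\ref{th:dem_reg_rlhf_oracle} with the three error quantities $\epsilonexp$, $\varepsilon_{\KL}$, $\varepsilon_{\RM}$ controlled by the behavior-cloning, reward-MLE, and regularized-BPI components of Algorithm~\ref{alg:dem_reg_rlhf}, and then to optimize over $\lambda$. First I would fix the budget split $\epsilonexp = \epsilonRL = \varepsilon/15$ and $\delta_{\RL} = \delta/3$, and introduce three favorable events, each of probability at least $1-\delta/3$: (i) the behavior-cloning event of Corollary~\ref{cor:il-tab} (finite) or Corollary~\ref{cor:lin-BC} (linear), on which $\varepsilon_{\KL}^2 = \KL_{\traj}(\piexp \Vert \piBC) = \tcO(\widetilde D H/\Nexp)$; (ii) the reward-MLE event of Theorem~\ref{th:mle_reward_bound}, on which $\varepsilon_{\RM}^2 = \Var_{q^{\piBC}}[r^\star(\tau) - \hat r(\tau)] = \tcO(\zeta^2 d_{\cG}/N^{\RM})$ with $d_{\cG} = \widetilde D H$ by Lemma~\ref{lem:bracketing_finite_MDP}/Lemma~\ref{lem:bracketing_linear_MDP}; and (iii) the $(\epsilonRL,\delta_{\RL})$-PAC event for \UCBVIEntp/\LSVIEnt from Theorem~\ref{th:reg_bpi_general_sample_complexity} applied to the MDP with reward $\hat r$. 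A union bound places us on all three events simultaneously with probability at least $1-\delta$.

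A key point in (ii) is that preferences are collected with the sampling policy $\pi^{\mathrm S} = \piBC$, so the distribution $q^{\piBC}$ governing the variance bound of Theorem~\ref{th:mle_reward_bound} is exactly the one appearing in the third hypothesis of Theorem~\ref{th:dem_reg_rlhf_oracle}; the factor-of-two relation between the squared prediction error and the variance (noted in the remark after Theorem~\ref{th:mle_reward_bound}) lets me identify $\varepsilon_{\RM}^2$ with the variance directly.

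Next I would choose $\lambda = \lambda^\star = \tcO(\Nexp \varepsilon/(\widetilde D H))$, tuned so that $\lambda^\star \varepsilon_{\KL}^2 = \cO(\varepsilon)$ after substituting the behavior-cloning bound, and verify that condition (2), $\Nexp \geq \widetilde\Omega(H^2 \widetilde D/\varepsilon)$, forces $\lambda^\star \geq H$, the standing hypothesis of Theorem~\ref{th:dem_reg_rlhf_oracle}. Substituting into the oracle bound
\[
\Vstar_1(s_1; r^\star) - V^{\piRL}_1(s_1; r^\star) \leq 3(\epsilonexp + \epsilonRL) + \tfrac{9}{H}\varepsilon_{\RM}^2 + (\lambda + 4H)\varepsilon_{\KL}^2,
\]
I would control each remaining contribution to be at most $\varepsilon/5$ beyond the $6\varepsilon/15$ from the first term: $\tfrac9H\varepsilon_{\RM}^2 = \tcO(\zeta^2 \widetilde D/N^{\RM})$ is $\cO(\varepsilon)$ under condition (1); $\lambda^\star \varepsilon_{\KL}^2$ is $\cO(\varepsilon)$ by the choice of $\lambda^\star$; and $4H\varepsilon_{\KL}^2 = \tcO(\widetilde D H^2/\Nexp)$ is $\cO(\varepsilon)$ again under condition (2). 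Summing the five $\varepsilon/15$-fractions gives the claimed $\varepsilon$-optimality. For the sample complexity I would plug $\lambda = \lambda^\star$ and $\epsilonRL = \varepsilon/15$ into the regularized-BPI rate $\tcO(H^5 S^2 A/(\lambda\varepsilon))$ (finite) / $\tcO(H^5 d^2/(\lambda\varepsilon))$ (linear) from Theorem~\ref{th:reg_bpi_general_sample_complexity}, producing $\tcO(H^6 S^3 A^2/(\Nexp\varepsilon^2))$ and $\tcO(H^6 d^3/(\Nexp\varepsilon^2))$ after cancellation.

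The main obstacle is the bookkeeping around the single parameter $\lambda^\star$, which must simultaneously satisfy competing constraints: it has to be large enough that $\lambda^\star \geq H$ (so the oracle theorem applies) and that the regularized-BPI complexity $\propto 1/\lambda^\star$ stays bounded, yet the product $\lambda^\star\varepsilon_{\KL}^2$ must still be $\cO(\varepsilon)$. This is where condition (2) enters twice — once to guarantee $\lambda^\star \geq H$ and once to kill the unavoidable $4H\varepsilon_{\KL}^2$ term — so the delicate part is verifying that the same lower bound on $\Nexp$ discharges both roles. The reward-estimation step is essentially off-the-shelf once the distribution match in (ii) is in place.
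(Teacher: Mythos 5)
Your proposal is correct and follows essentially the same route as the paper's proof: both instantiate Theorem~\ref{th:dem_reg_rlhf_oracle} with $\epsilonexp=\epsilonRL=\varepsilon/15$, use the symmetry/factor-of-two identity to turn the MLE guarantee of Theorem~\ref{th:mle_reward_bound} into the variance hypothesis $\varepsilon_{\RM}^2$, invoke the behavior-cloning corollaries and condition (2) to bound $\varepsilon_{\KL}^2$ and to verify $\lambda^\star \geq H$, take a union bound over the three $\delta/3$ events, and read off the sample complexity from Theorem~\ref{th:reg_bpi_general_sample_complexity} with $\lambda=\lambda^\star$. The only cosmetic difference is that you make the third (regularized-BPI PAC) event explicit where the paper leaves it implicit via $\delta_{\RL}=\delta/3$; the term-by-term $\varepsilon$-budget ($2\varepsilon/5$ plus three terms of $\varepsilon/5$) matches the paper's accounting exactly.
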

\begin{proof}
    By symmetry of the distribution $q^{\piBC} \otimes q^{\piBC}$, we have
    \begin{equation}\label{eq:reward_error_var}
        \E_{\tau^0,\tau^1 \sim \piBC}\left[ \big( r^\star(\tau_0) - r^\star(\tau_1) - (r(\tau_0) - r(\tau_1))\big)^2\right] = 2\Var_{q^{\piBC}}\left[ r^\star - \hat r \right]\,.
    \end{equation}

    First of all, notice that under the assumption $N^{\RM} \geq \widetilde{\Omega}\left( \zeta \widetilde{D}/\varepsilon \right)$, Theorem~\ref{th:mle_reward_bound} combined with \eqref{eq:reward_error_var} imply $\varepsilon^2_{\RM} \leq H\varepsilon/45$ with probability at least $1-\delta/3$.

    Next, notice that under the assumption $\Nexp \geq \widetilde{\Omega}\left(H^2 \widetilde{D}/\varepsilon\right)$, behavior cloning guarantees imply $4H\varepsilon^2_{\KL} \leq \varepsilon/5$ with probability at least $1-\delta/3$ (see Appendix~\ref{app:BC_finite_proof} and Appendix~\ref{app:BC_linear}). Then, the optimal choice $\lambda^\star$ for demonstration-regularized RL (see Theorem~\ref{th:dem_reg_general}) implies $\lambda^\star = \varepsilon/(5\varepsilon^2_{\KL}) \geq H$, thus Theorem~\ref{th:dem_reg_rlhf_oracle} is applicable. By a union bound, we have with probability at least $1-\delta$
    \begin{align*}
        \Vstar_1(s_1; r^\star) - V^{\piRL}_1(s_1; r^\star) &\leq 3(\underbrace{\epsilonexp}_{\leq \varepsilon/15} + \underbrace{\epsilonRL}_{\leq \varepsilon/15}) + \underbrace{\lambda^\star \epsilon^2_{\KL}}_{\leq \varepsilon/5}  + \underbrace{9/H \cdot \varepsilon^2_{\RM}}_{\leq \varepsilon/5}  + \underbrace{4H \varepsilon^2_{\KL}}_{\leq \varepsilon/5} \leq \varepsilon\,.
    \end{align*}
\end{proof}

\newpage

\section{Deviation Inequalities}\label{app:deviation_ineq}

\subsection{Deviation inequality for categorical distributions}

Next, we state the deviation inequality for categorical distributions by \citet[Proposition 1]{jonsson2020planning}.
Let $(X_t)_{t\in\N^\star}$ be i.i.d.\,samples from a distribution supported on $\{1,\ldots,m\}$, of probabilities given by $p\in\simplex_{m-1}$, where $\simplex_{m-1}$ is the probability simplex of dimension $m-1$. We denote by $\hp_n$ the empirical vector of probabilities, i.e., for all $k\in\{1,\ldots,m\},$
 \[
 \hp_{n,k} \triangleq \frac{1}{n} \sum_{\ell=1}^n \ind\left\{X_\ell = k\right\}\,.
 \]
 Note that  an element $p \in \simplex_{m-1}$ can be seen as an element of $\R^{m-1}$ since $p_m = 1- \sum_{k=1}^{m-1} p_k$. This will be clear from the context. 

 \begin{theorem} \label{th:max_ineq_categorical}
 For all $p\in\simplex_{m-1}$ and for all $\delta\in[0,1]$,
 \begin{align*}
     \P\left(\exists n\in \N^\star,\, n\KL(\hp_n, p)> \log(1/\delta) + (m-1)\log\left(e(1+n/(m-1))\right)\right)\leq \delta\,.
 \end{align*}
\end{theorem}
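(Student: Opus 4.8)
\textbf{Plan of proof for Theorem~\ref{th:max_ineq_categorical}.}
The statement is a uniform-in-time deviation bound for the empirical KL divergence of a categorical distribution, so the natural strategy is the method of mixtures combined with a Doob martingale / Ville's maximal inequality. First I would recall the Chernoff-type change-of-measure identity for categorical distributions: for any fixed comparison point $q\in\simplex_{m-1}$, the likelihood ratio $\prod_{\ell=1}^n q_{X_\ell}/p_{X_\ell}$ is a nonnegative supermartingale (in fact a martingale) under $p$ with expectation $1$. Equivalently, writing things in terms of empirical counts, $\exp\bigl(n\,\hp_n^\top \log(q/p)\bigr)$ integrates to $1$. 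The whole difficulty is converting the pointwise-in-$q$ statement into the variational quantity $\KL(\hp_n,p)=\sup$ over a suitable family, and then making the supremum uniform over $n$.

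The key steps, in order, are as follows. (1) Observe that $n\KL(\hp_n,p)=\sup_{\lambda\in\R^{m}} \bigl(n\hp_n^\top\lambda - n\log(p^\top e^{\lambda})\bigr)$ is the Legendre dual representation, so bounding $n\KL(\hp_n,p)$ amounts to controlling a whole family of exponential martingales simultaneously. (2) Apply the method of mixtures: integrate the martingale $M_n(q)=\exp\bigl(n\hp_n^\top\log(q/p) - (\text{normalizer})\bigr)$ against a Dirichlet prior over $q\in\simplex_{m-1}$, obtaining a mixed process $\overline{M}_n=\int M_n(q)\,\mathrm{d}\nu(q)$ which is again a nonnegative supermartingale with $\E[\overline{M}_0]\le 1$. (3) Evaluate the Dirichlet integral in closed form using Beta-function identities; with the symmetric Dirichlet$(1/2,\dots,1/2)$ (or $\mathrm{Dir}(1,\dots,1)$, i.e.\ uniform) prior this produces a ratio of Gamma functions in the counts $n\hp_{n,k}$, and Stirling/Gamma bounds convert $-\log\overline{M}_n$ into $n\KL(\hp_n,p)$ minus a penalty term of order $(m-1)\log(e(1+n/(m-1)))$. (4) Finally apply Ville's inequality $\P(\exists n: \overline{M}_n\ge 1/\delta)\le\delta$ to the mixed supermartingale, and rearrange so the event $\{\overline{M}_n\ge1/\delta\}$ contains exactly the event $\{n\KL(\hp_n,p)>\log(1/\delta)+(m-1)\log(e(1+n/(m-1)))\}$.

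Since this is precisely Proposition~1 of \citet{jonsson2020planning}, I would either cite it directly or reproduce the mixture computation; the forward-looking plan is to reduce everything to their bound rather than re-derive the Gamma-function asymptotics from scratch. The main obstacle, and the only genuinely technical part, is Step (3): controlling the ratio of Gamma functions arising from the Dirichlet normalizing constant and showing that the logarithmic correction is exactly $(m-1)\log\bigl(e(1+n/(m-1))\bigr)$ rather than some looser polynomial-in-$n$ factor. This requires careful two-sided Stirling estimates that are uniform over all count vectors summing to $n$, and getting the constant $e$ and the $1+n/(m-1)$ scaling correct is where the delicacy lies. The martingale and Ville's-inequality steps are routine once the prior is fixed; the supremum-over-$\lambda$ duality step is standard convex analysis; so I would budget essentially all the effort for the closed-form Dirichlet integral and its Stirling-type bound.
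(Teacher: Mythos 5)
Your proposal takes essentially the same route as the paper: the paper does not prove this statement at all, but imports it verbatim as \citet[Proposition~1]{jonsson2020planning}, which is exactly the reduction you settle on. Your sketch of the underlying argument (mixing the likelihood-ratio martingales over a Dirichlet prior, evaluating the resulting Gamma-function ratios, and applying Ville's maximal inequality to get uniformity in $n$) is a faithful outline of how that cited result is established, so nothing further is required.
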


\subsection{Deviation inequality for sequence of Bernoulli random variables}

Below, we state the deviation inequality for Bernoulli distributions by \citet[Lemma F.4]{dann2017unifying}.
Let $\mathcal F_t$ for $t\in\N$ be a filtration and $(X_t)_{t\in\N^\star}$ be a sequence of Bernoulli random variables with $\P(X_t = 1 | \mathcal F_{t-1}) = P_t$ with $P_t$ being $\mathcal F_{t-1}$-measurable and $X_t$ being $\mathcal F_{t}$-measurable.

\begin{theorem}\label{th:bernoulli-deviation}
	For all $\delta>0$,
	\begin{align*}
	\P \left(\exists n : \,\, \sum_{t=1}^n X_t < \sum_{t=1}^n P_t / 2 -\log\frac{1}{\delta}  \right) \leq \delta\,.
	\end{align*}
\end{theorem}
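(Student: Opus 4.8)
The plan is to construct a nonnegative exponential supermartingale indexed by $n$ and invoke Ville's maximal inequality, which delivers the uniformity over all $n$ directly, without any union bound. We may assume $\delta \in (0,1)$, since for $\delta \geq 1$ the claimed bound is trivial. Fix a parameter $\lambda > 0$ to be calibrated later and define, for $n \geq 0$,
\[
	M_n \triangleq \exp\left( -\lambda \sum_{t=1}^n X_t + (1-e^{-\lambda}) \sum_{t=1}^n P_t \right),
\]
with empty sums equal to zero so that $M_0 = 1$. Because $X_t$ is $\mathcal{F}_t$-measurable, $P_t$ is $\mathcal{F}_{t-1}$-measurable, and $X_t \in \{0,1\}$ with $\E[X_t \mid \mathcal{F}_{t-1}] = P_t$, the conditional exponential moment is computed exactly: $\E[e^{-\lambda X_t} \mid \mathcal{F}_{t-1}] = 1 - P_t(1-e^{-\lambda})$.

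First I would establish the supermartingale property. Applying the elementary inequality $1-x \leq e^{-x}$ with $x = P_t(1-e^{-\lambda}) \geq 0$ gives $\E[e^{-\lambda X_t} \mid \mathcal{F}_{t-1}] \leq \exp(-P_t(1-e^{-\lambda}))$, whence
\[
	\E[M_n \mid \mathcal{F}_{n-1}] = M_{n-1}\, e^{(1-e^{-\lambda})P_n}\, \E[e^{-\lambda X_n} \mid \mathcal{F}_{n-1}] \leq M_{n-1}.
\]
Thus $(M_n)_{n\geq 0}$ is a nonnegative supermartingale with $\E[M_0] = 1$, and Ville's maximal inequality yields $\P(\sup_n M_n \geq 1/\delta) \leq \delta$. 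On the complementary event, which has probability at least $1-\delta$, we have $M_n < 1/\delta$ for every $n$ simultaneously; taking logarithms and dividing by $\lambda$ this reads
\[
	\sum_{t=1}^n X_t \geq \frac{1-e^{-\lambda}}{\lambda} \sum_{t=1}^n P_t - \frac{1}{\lambda}\log\frac{1}{\delta} \qquad \text{for all } n.
\]

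The remaining step, and the only point needing care, is to calibrate $\lambda$ so that the two coefficients match the statement. I would choose $\lambda = \lambda^\star$ as the unique positive root of $(1-e^{-\lambda})/\lambda = 1/2$, equivalently $1-e^{-\lambda} = \lambda/2$. A one-line monotonicity check shows that $f(\lambda) \triangleq (1-e^{-\lambda})/\lambda$ is strictly decreasing on $(0,\infty)$ (its derivative has numerator $e^{-\lambda}(\lambda+1)-1 < 0$), with $f(1) = 1-e^{-1} > 1/2$; hence $\lambda^\star > 1$, so $1/\lambda^\star < 1$. Since $\log(1/\delta) \geq 0$, this gives $\sum_{t=1}^n X_t \geq \tfrac12 \sum_{t=1}^n P_t - (1/\lambda^\star)\log(1/\delta) \geq \tfrac12 \sum_{t=1}^n P_t - \log(1/\delta)$ for all $n$ on the good event. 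Complementing, the event $\{\exists n: \sum_t X_t < \tfrac12\sum_t P_t - \log(1/\delta)\}$ has probability at most $\delta$, which is the claim. The main obstacle is therefore not analytic difficulty but this dual calibration: forcing $f(\lambda)=1/2$ to pin the coefficient on $\sum_t P_t$ while simultaneously guaranteeing $\lambda^\star \geq 1$ so the $\log(1/\delta)$ term is not inflated, both of which follow from the strict monotonicity of $f$.
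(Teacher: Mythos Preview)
Your proof is correct: the exponential-supermartingale plus Ville's inequality argument is exactly the standard route, and your calibration of $\lambda^\star$ as the root of $(1-e^{-\lambda})/\lambda = 1/2$ together with the check $\lambda^\star > 1$ cleanly delivers both required coefficients. The paper itself does not prove this statement; it simply cites \citet[Lemma~F.4]{dann2017unifying}, whose proof follows the same supermartingale approach you used.
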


\subsection{Deviation inequality for bounded distributions}
Below, we state the self-normalized Bernstein-type inequality by \citet{domingues2020regret}. Let $(Y_t)_{t\in\N^\star}$, $(w_t)_{t\in\N^\star}$ be two sequences of random variables adapted to a filtration $(\cF_t)_{t\in\N}$. We assume that the weights are in the unit interval $w_t\in[0,1]$ and predictable, i.e. $\cF_{t-1}$ measurable. We also assume that the random variables $Y_t$  are bounded $|Y_t|\leq b$ and centered $\EEc{Y_t}{\cF_{t-1}} = 0$.
Consider the following quantities
\begin{align*}
		S_t \triangleq \sum_{s=1}^t w_s Y_s, \quad V_t \triangleq \sum_{s=1}^t w_s^2\cdot\EEc{Y_s^2}{\cF_{s-1}}, \quad \mbox{and} \quad W_t \triangleq \sum_{s=1}^t w_s
\end{align*}
and let $h(x) \triangleq (x+1) \log(x+1)-x$ be the Cramér transform of a Poisson distribution of parameter~1.

\begin{theorem}[Bernstein-type concentration inequality]
  \label{th:bernstein}
	For all $\delta >0$,
	\begin{align*}
		\PP{\exists t\geq 1,   (V_t/b^2+1)h\left(\!\frac{b |S_t|}{V_t+b^2}\right) \geq \log(1/\delta) + \log\left(4e(2t+1)\!\right)}\leq \delta\,.
	\end{align*}
  The previous inequality can be weakened to obtain a more explicit bound: if $b\geq 1$ with probability at least $1-\delta$, for all $t\geq 1$,
 \[
 |S_t|\leq \sqrt{2V_t \log\left(4e(2t+1)/\delta\right)}+ 3b\log\left(4e(2t+1)/\delta\right)\,.
 \]
\end{theorem}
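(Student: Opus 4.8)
The plan is to follow the standard recipe for time-uniform self-normalized deviation bounds: build an exponential supermartingale out of the bounded martingale increments, integrate it over the free tilting parameter (the method of mixtures), and conclude with Ville's maximal inequality. I would assume only elementary tools here (the Bennett moment-generating-function bound for bounded variables and Ville's inequality for nonnegative supermartingales), since the results stated earlier in the paper concern RL and do not supply these.

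First I would set $X_s \triangleq w_s Y_s$, so that $|X_s|\le b$, $\E[X_s\mid\cF_{s-1}]=0$, and $\E[X_s^2\mid\cF_{s-1}] = w_s^2\,\E[Y_s^2\mid\cF_{s-1}]$, whence $S_t=\sum_{s\le t}X_s$ and $V_t=\sum_{s\le t}\E[X_s^2\mid\cF_{s-1}]$. Since $x\mapsto(e^{x}-1-x)/x^2$ is nondecreasing, for every $\theta\ge0$ one has the pointwise bound $e^{\theta X_s}-1-\theta X_s\le (\theta X_s)^2\,(e^{\theta b}-1-\theta b)/(\theta b)^2$, and taking conditional expectations (using centering) gives $\E[e^{\theta X_s}\mid\cF_{s-1}]\le\exp\!\big(\tfrac{e^{\theta b}-1-\theta b}{b^2}\,\E[X_s^2\mid\cF_{s-1}]\big)$. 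Consequently
\[ M_t(\theta)\triangleq\exp\!\Big(\theta S_t-\tfrac{e^{\theta b}-1-\theta b}{b^2}\,V_t\Big) \]
is, for each fixed $\theta\ge0$, a nonnegative supermartingale with $\E[M_t(\theta)]\le M_0(\theta)=1$.

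Second I would remove the dependence on the unknown optimal tilt $\theta$ and obtain uniformity in $t$ at the same time. Choosing a probability density $\nu$ on $(0,\infty)$ (a Gamma-type prior matched to the Poisson Cramér transform $h$), I would set $\bar M_t\triangleq\int_0^\infty M_t(\theta)\,\nu(d\theta)$; by Tonelli's theorem $\bar M_t$ is again a nonnegative supermartingale with $\E[\bar M_t]\le1$, so Ville's inequality yields $\P(\exists t:\bar M_t\ge1/\delta)\le\delta$. It then remains to lower bound $\bar M_t$ by its mass near the maximizer $\theta^\star_t=\tfrac1b\log(1+bS_t/V_t)$ of $\theta\mapsto\theta S_t-\tfrac{e^{\theta b}-1-\theta b}{b^2}V_t$; an explicit optimization shows this un-mixed exponent equals $\tfrac{V_t}{b^2}h(bS_t/V_t)$, and a Laplace-type estimate of the integral upgrades this to $\log\bar M_t\ge (V_t/b^2+1)\,h\!\big(\tfrac{b S_t}{V_t+b^2}\big)-\log(4e(2t+1))$, the $(2t+1)$ factor arising from normalizing the width of the Laplace peak (whose scale is controlled by $V_t\le t b^2$). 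Intersecting with $\{\bar M_t<1/\delta\}$ gives the one-sided bound; applying the same argument to $(-Y_s)$ and a union bound yields the two-sided statement with $|S_t|$.

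Finally I would derive the weakened explicit Bernstein form by inverting $h$: writing $L\triangleq\log(4e(2t+1)/\delta)$, the event reads $(V_t/b^2+1)\,h\!\big(b|S_t|/(V_t+b^2)\big)\le L$, and using $h(x)\ge x^2/\big(2(1+x/3)\big)$ together with $b\ge1$ and elementary algebra converts this into $|S_t|\le\sqrt{2V_t L}+3bL$. The main obstacle I anticipate is the second step: choosing the prior $\nu$ and carrying out the Laplace approximation so that the mixed supermartingale both retains $\E[\bar M_t]\le1$ and produces exactly the $(V_t/b^2+1)\,h(\cdot)$ transform with the stated $4e(2t+1)$ time dependence. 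Reconciling the time-uniformity delivered by Ville's inequality with the $t$-dependent normalization of the mixing integral is the delicate point, whereas the first and third steps are routine.
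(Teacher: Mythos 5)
This theorem is not proved in the paper at all: it is imported verbatim as an external tool from \citet{domingues2020regret} (``Below, we state the self-normalized Bernstein-type inequality by...''), so there is no internal proof to compare your attempt against. Judged on its own merits, your method-of-mixtures plan is the standard route for exactly this kind of time-uniform self-normalized bound, and two of your three steps are correct as stated: the Bennett-type supermartingale construction in step one is right, and the inversion in step three via $h(x)\ge x^2/(2(1+x/3))$ does turn the event $(V_t/b^2+1)h\bigl(b|S_t|/(V_t+b^2)\bigr)\le L$, with $L=\log(4e(2t+1)/\delta)$, into $|S_t|\le\sqrt{2V_tL}+3bL$.

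The step you flagged as delicate is indeed where the one genuine (but repairable) gap sits. You propose to lower bound the mixture $\bar M_t$ by its mass near $\theta^\star_t=\tfrac1b\log(1+bS_t/V_t)$, the maximizer of $g(\theta)=\theta S_t-\phi(\theta)V_t$ with $\phi(\theta)=(e^{\theta b}-\theta b-1)/b^2$. This centering fails when $V_t$ is small: as $V_t\to0$ the maximizer escapes to infinity (it is not even confined to any fixed compact set), so no fixed prior $\nu$ can guarantee mass of order $1/t$ near it, and the Laplace argument collapses. The fix, which also explains both the ``$+1$'' and the $(2t+1)$, is to first use $\phi\ge 0$ to write $\theta S_t-\phi(\theta)V_t\ge\theta S_t-\phi(\theta)(V_t+b^2)$ and run the Laplace estimate around the maximizer $\tilde\theta_t=\tfrac1b\log\bigl(1+bS_t/(V_t+b^2)\bigr)$ of the \emph{shifted} exponent, whose supremum is exactly $(V_t/b^2+1)h\bigl(bS_t/(V_t+b^2)\bigr)$. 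There the curvature is $(V_t+b^2)e^{\tilde\theta_t b}=V_t+b^2+bS_t\le(2t+1)b^2$ --- note that you need both $V_t\le tb^2$ \emph{and} $|S_t|\le tb$, not just the variance bound you mention --- and the maximizer lies in $[0,\tfrac1b\log(1+t)]$. A window of width of order $1/(b\sqrt{2t+1})$ around $\tilde\theta_t$ then loses only a constant in the exponent, and since a window at location $\theta$ only becomes relevant once $t\ge e^{\theta b}-1$, a fixed prior with density decaying like $e^{-\theta b/2}$ assigns it mass $\gtrsim 1/(2t+1)$; Ville's inequality applied to $\bar M_t$, plus a union bound over the sign of $S_t$, then yields the stated bound. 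With this correction your outline goes through.
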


\subsection{Deviation inequality for vector-valued self-normalized processes}

Next, we state Lemma D.4 by \cite{jin2019provably}. For any symmetric positive definite matrix $A$ we define $\norm{x}_A = \sqrt{x^\top A x}$.
\begin{lemma}[\cite{jin2019provably}]\label{lem:self_normalized_vector_hoeffding}
    Let $\{ s_\tau \}_{\tau=1}^\infty$ be a stochastic process on the state space $\cS$ adapted to a filtration $ \{ \cF_{\tau} \}_{\tau=0}^\infty$. Let $\{ X_\tau \}_{\tau=0}^\infty$ be an $\R^d$-valued stochastic process where $\feat_\tau$ is $\cF$-predictable ($X_\tau$ is $\cF_{\tau-1}$ measurable) and $\norm{X_\tau} \leq 1$. Let $\Lambda_t = \alpha I_d + \sum_{\tau=1}^t X_t X_t^\top$ and let $\cV$ be a family of function over the state-space $\cS$ such that $\forall V \in \cV, \forall s \in \cS: 0 \leq  V(s) \leq H$. Then for any $\delta \in (0,1)$ with probability at least $1-\delta$
    \[
        \forall t \in \N, \forall V \in \cV: \left\Vert \sum_{\tau=1}^t X_\tau \left\{ V(s_\tau) - \E_{\tau-1}[V(s_\tau)] \right\} \right\Vert^2_{\Lambda^{-1}_t} \leq 4 H^2 \left[ \frac{d}{2} \log\left( \frac{t + \alpha}{\alpha} \cdot \frac{\vert \cN_{\varepsilon} \vert}{\delta} \right) \right] + \frac{8 t^2 \varepsilon^2}{\alpha}\,,
    \]
    where $\cN_{\varepsilon}$ is a minimal $\varepsilon$-cover of $\cV$ with respect to the distance $\rho(V,V') = \sup_{s\in\cS} | V(s) - V'(s) |$.
\end{lemma}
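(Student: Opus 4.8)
The plan is to establish the uniform bound (over the horizon index $t$ and over the whole value class $\cV$) by the standard self-normalized-plus-covering recipe, exactly as in \cite{jin2019provably}. First I would treat a single fixed function $V \in \cV$: setting $\eta^V_\tau \triangleq V(s_\tau) - \E_{\tau-1}[V(s_\tau)]$, the sequence $(\eta^V_\tau)_\tau$ is a martingale difference adapted to $\{\cF_\tau\}$ and, since $0 \le V \le H$, is bounded by $H$ and hence conditionally sub-Gaussian; meanwhile $X_\tau$ is $\cF_{\tau-1}$-predictable with $\norm{X_\tau}_2 \le 1$. The self-normalized martingale inequality for vector-valued processes (the method-of-mixtures / Laplace argument) then yields, uniformly in $t$ and with probability at least $1-\delta'$,
\[
    \Big\Vert \sum_{\tau=1}^t X_\tau \eta^V_\tau \Big\Vert^2_{[\Lambda_t]^{-1}} \le 2H^2 \log\!\left( \frac{\det(\Lambda_t)^{1/2}\det(\alpha I_d)^{-1/2}}{\delta'} \right),
\]
and because $\norm{X_\tau}_2 \le 1$ gives $\mathrm{tr}(\Lambda_t) \le d\alpha + t$, the AM--GM bound $\det(\Lambda_t) \le (\alpha + t/d)^d$ turns the log-determinant term into $\tfrac{d}{2}\log\frac{t+\alpha}{\alpha}$.

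Next I would discretize. Let $\cN_\varepsilon$ be a minimal $\varepsilon$-cover of $\cV$ for the sup-metric $\rho$, take $\delta' = \delta/\lvert\cN_\varepsilon\rvert$, and union-bound the fixed-function estimate over the finitely many $V' \in \cN_\varepsilon$. This controls every net function simultaneously. For an arbitrary $V \in \cV$ I would pick $V' \in \cN_\varepsilon$ with $\rho(V,V') \le \varepsilon$, split $\eta^V_\tau = \eta^{V'}_\tau + (\eta^V_\tau - \eta^{V'}_\tau)$, and use $\Vert a+b\Vert^2_{[\Lambda_t]^{-1}} \le 2\Vert a\Vert^2_{[\Lambda_t]^{-1}} + 2\Vert b\Vert^2_{[\Lambda_t]^{-1}}$. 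The residual is deterministic and tiny: $\lvert \eta^V_\tau - \eta^{V'}_\tau\rvert \le 2\varepsilon$ and $\Lambda_t \succcurlyeq \alpha I_d$ forces $[\Lambda_t]^{-1} \preccurlyeq \alpha^{-1} I_d$, so
\[
    \Big\Vert \sum_{\tau=1}^t X_\tau(\eta^V_\tau - \eta^{V'}_\tau) \Big\Vert^2_{[\Lambda_t]^{-1}} \le \frac{1}{\alpha}\Big( \sum_{\tau=1}^t \norm{X_\tau}_2\, 2\varepsilon \Big)^2 \le \frac{4t^2\varepsilon^2}{\alpha}.
\]

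Combining the two pieces, the net term carries a factor $2$ and becomes $4H^2[\tfrac{d}{2}\log\frac{t+\alpha}{\alpha} + \log\frac{\lvert\cN_\varepsilon\rvert}{\delta}]$, while the residual contributes $2\cdot\tfrac{4t^2\varepsilon^2}{\alpha} = \tfrac{8t^2\varepsilon^2}{\alpha}$; folding the two logarithmic terms into the single product $\tfrac{d}{2}\log(\tfrac{t+\alpha}{\alpha}\cdot\tfrac{\lvert\cN_\varepsilon\rvert}{\delta})$ (using $d \ge 1$ and absorbing constants) reproduces the stated inequality. \textbf{The main obstacle} is Step~1: the bound must hold uniformly in $t$, which rules out a naive per-$t$ union bound and forces the supermartingale/method-of-mixtures construction, and one must check that $\eta^V_\tau$ is conditionally sub-Gaussian with variance proxy on the order of $H^2$ so that the $H^2$ prefactor is correct. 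The covering and discretization steps are mechanical, with the only care being the constant bookkeeping that merges the two log terms into the product form.
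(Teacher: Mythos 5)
Your proposal is correct and follows essentially the same route as the source: the paper gives no proof of this lemma, citing it as Lemma~D.4 of \citet{jin2019provably}, whose argument is exactly your recipe --- the method-of-mixtures self-normalized martingale bound for a fixed $V$, the trace/AM--GM determinant control, a union bound over a minimal $\varepsilon$-net, and the $\Lambda_t \succcurlyeq \alpha I_d$ estimate for the discretization residual. The only bookkeeping caveat is that your derivation produces the sum form $4H^2\bigl[\tfrac{d}{2}\log\tfrac{t+\alpha}{\alpha} + \log\tfrac{|\cN_\varepsilon|}{\delta}\bigr] + \tfrac{8t^2\varepsilon^2}{\alpha}$, and folding this into the stated product form requires $d \geq 2$ (for $d=1$ the product form is strictly smaller), a quirk inherited from the paper's restatement of the cited lemma rather than a flaw in your argument.
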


Also, we state the result on the covering dimension of the class of bonus function, see Lemma D.6 by \cite{jin2019provably} for a similar result.
\begin{lemma}\label{lem:bonus_covering_dim}
    Let $\cV$ be a class of functions over $\cS$ such that
    \[
        \forall s \in \cS: V(s) = \clip\left(\max_{\pi \in \simplex_{\cA}}\left\{  \pi\left[ w^\top \feat(s, \cdot) + \beta \sqrt{\feat(s,\cdot)^\top \Lambda^{-1} \feat(s,\cdot)}\right] - \lambda \Phi_{h,s}(\pi) \right\} , 0, H\right)
    \]
    is parameterized by a tuple $(w, \beta, \Lambda)$ such that $\norm{w} \leq L, \beta \in [0,B]$, $\lambda_{\min}(\Lambda) \geq \alpha$. Assume $\norm{\feat(s,a)} \leq 1$ for any $(s,a) \in \cS \times \cA$. Then the covering number $| \cN_{\varepsilon} |$ of the function space $\cV$ with respect to the distance $\rho(V,V') = \sup_{s\in \cS} |V(s) - V'(s)|$ satisfies
    \[
         \log \cN( \varepsilon, \cV  , \rho) \leq d \log\left(1 + \frac{4L}{\varepsilon} \right) + d^2 \log\left( 1 + \frac{8 d^{1/2} B^2}{\alpha \varepsilon^2} \right)\,.
    \]
\end{lemma}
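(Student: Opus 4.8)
The plan is to follow the metric-entropy argument of Lemma~D.6 by \cite{jin2019provably}, adapted to the KL-regularized max-operator. Since $\cV$ is parameterized by the triple $(w,\beta,\Lambda)$ (with $\Phi_{h,s}(\pi)=\KL(\pi\Vert\tpi_h(s))$, so that the inner maximization is exactly the conjugate $F_{\tpi_h(s),\lambda,h}$), I would construct a cover of the parameter space and show that nearby parameters induce value functions that are close in the sup-norm $\rho$. First I would reduce to the unclipped map: since $\clip(\cdot,0,H)$ is $1$-Lipschitz, an $\varepsilon$-cover of the unclipped family dominates the cover of $\cV$. Next, writing $V(s)=F_{\tpi_h(s),\lambda,h}(q_s(\cdot))$ with $q_s(a)=w^\top\feat(s,a)+\beta\sqrt{\feat(s,a)^\top\Lambda^{-1}\feat(s,a)}$, I would invoke the fact established in Appendix~\ref{app:reg_bpi_finite} that $\nabla F_{\tpi_h(s),\lambda,h}$ lies in $\simplex_{\cA}$; hence $F_{\tpi_h(s),\lambda,h}$ is $1$-Lipschitz with respect to $\norm{\cdot}_\infty$ over actions, giving $|V(s)-V'(s)|\le\max_{a\in\cA}|q_s(a)-q_s'(a)|$ for two parameter tuples.

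The core estimate is then to control $\max_a|q_s(a)-q_s'(a)|$ by a distance between parameters that is uniform in $s$. For the linear part, $|(w-w')^\top\feat(s,a)|\le\norm{w-w'}_2$ by Cauchy--Schwarz and $\norm{\feat(s,a)}_2\le 1$. For the bonus part, I would reparameterize $A\triangleq\beta^2\Lambda^{-1}$ (and $A'$ likewise), so that $\beta\sqrt{\feat^\top\Lambda^{-1}\feat}=\sqrt{\feat^\top A\feat}$; then the elementary inequality $|\sqrt{x}-\sqrt{y}|\le\sqrt{|x-y|}$ combined with $|\feat^\top(A-A')\feat|\le\norm{A-A'}_F$ yields $|\sqrt{\feat^\top A\feat}-\sqrt{\feat^\top A'\feat}|\le\sqrt{\norm{A-A'}_F}$. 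Summing the two contributions gives the key Lipschitz-type bound $\rho(V,V')\le\norm{w-w'}_2+\sqrt{\norm{A-A'}_F}$, valid for all $s$.

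Finally I would build the product cover. Taking an $(\varepsilon/2)$-cover of the ball $\{w:\norm{w}_2\le L\}$ in $\norm{\cdot}_2$ (of cardinality at most $(1+4L/\varepsilon)^d$) and an $(\varepsilon^2/4)$-cover of the admissible matrices $\{A=\beta^2\Lambda^{-1}\}$ in $\norm{\cdot}_F$ controls both terms by $\varepsilon/2$ each, hence $\rho\le\varepsilon$. For the matrix cover I would use $\beta\le B$ and $\lambda_{\min}(\Lambda)\ge\alpha$ to bound $\norm{A}_F\le B^2\norm{\Lambda^{-1}}_F\le B^2\sqrt{d}/\alpha$, so the admissible $A$ lie in a Frobenius ball of radius $\sqrt{d}B^2/\alpha$ in dimension $d^2$, whose $(\varepsilon^2/4)$-covering number is at most $(1+8\sqrt{d}B^2/(\alpha\varepsilon^2))^{d^2}$. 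Multiplying the two cardinalities and taking logarithms produces the claimed bound.

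The main obstacle is the bonus term: one must pass through the reparameterization $A=\beta^2\Lambda^{-1}$ and the square-root Lipschitz estimate, which forces the matrix cover to be taken at the finer resolution $\varepsilon^2$ (responsible for the $1/\varepsilon^2$ dependence and the extra $d^2$ factor in the exponent), whereas the linear part and the final volumetric covering bound are routine. Everything else reduces to Cauchy--Schwarz, the $1$-Lipschitzness of the regularized max-operator, and standard covering numbers of Euclidean balls.
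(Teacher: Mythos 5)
Your proposal is correct and follows essentially the same route as the paper's proof: both reduce via the non-expansiveness of $\clip$ and the regularized max-operator, reparameterize $A=\beta^2\Lambda^{-1}$ to obtain the key bound $\rho(V,V')\le\norm{w-w'}_2+\sqrt{\norm{A-A'}_F}$, and then invoke standard covering numbers of Euclidean/Frobenius balls as in Lemma~D.6 of \cite{jin2019provably}. If anything, you spell out the final volumetric covering computation (radii, resolutions, and the resulting cardinalities) more explicitly than the paper, which simply defers that step to the cited reference.
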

\begin{proof}
    Following the approach of \cite{jin2019provably}, we reparametrize the following set by setting $A = \beta^2 \Lambda^{-1}$ and obtain
      \[
        V(s) = \clip\left(\max_{\pi \in \simplex_{\cA}}\left\{  \pi\left[ w^\top \feat(s, \cdot) + \sqrt{\feat(s,\cdot)^\top A \feat(s,\cdot)}\right] - \lambda \Phi_{h,s}(\pi) \right\} , 0, H\right)\,,
    \]
    for $\norm{w}_2 \leq L, \norm{A}_2 \leq B^2 \alpha^{-1}$. Next, let $V_1, V_2 \in \cV$ be two functions that corresponds to parameters $(w_1, A_1)$ and $(w_2, A_2)$. Then, using non-expanding property of $\clip(\cdot, 0,H)$ and $\max_{\pi \in \simplex_{\cA}} \{ \cdot \}$ we have
    \begin{align*}
        \rho(V_1, V_2) &\leq \sup_{s \in \cS} \max_{\pi \in \simplex_{\cA}} \pi\biggl[   (w_1^\top \feat(s, \cdot) + \sqrt{\feat(s,\cdot)^\top A_1 \feat(s,\cdot)}) -  (w_2^\top \feat(s, \cdot) + \sqrt{\feat(s,\cdot)^\top A_2 \feat(s,\cdot)})  \biggl] \\
        &\leq \sup_{s,a \in \cS \times \cA} \biggl[  [w_1-w_2]^\top \feat(s, a) + \sqrt{\feat(s,a)^\top A_1 \feat(s,a)}) - \sqrt{\feat(s,a)^\top A_2 \feat(s,a)}  \biggl]\\
        &\leq \sup_{\feat: \norm{\feat} \leq 1} \vert \langle w_1 - w_2, \feat \rangle  \vert + \sup_{\feat: \norm{\feat} \leq 1} \sqrt{| \feat^\top (A_1 - A_2) \feat|  } \\
        &\leq \norm{w_1 - w_2}_2 + \sqrt{\norm{A_1 - A_2}_F}\,.
    \end{align*}
    The rest of the proof follows Lemma D.6 by \cite{jin2019provably} and uses the result on covering numbers of Euclidean balls in $\R^d$.
\end{proof}

\subsection{Deviation inequality for sample covariance matrices}

The following result generalizes Theorem~\ref{th:bernoulli-deviation} in the case of linear MDPs and generalized counters.

Let $\{X_t\}_{t=1}^\infty $ be a sequence of random vectors of dimension $d$ adapted to a filtration $\{\cF_t\}_{t=1}^\infty$ such that $\norm{X_t}_2 \leq 1$ a.s..  Define a sequence of positive semi-definite matrices $A_t = \E[X_t X_t^\top | \cF_{t-1}]$. Notice that $\norm{A_t}_2 = \sigma_{\max}(A_t) \leq 1$. Also define $\Lambda_t = \lambda I_d + \sum_{j=1}^t X_j X_j^\top$ and $\uLambda_t = \lambda I_d + \sum_{j=1}^t A_j$.

\begin{lemma}\label{lem:covariance_deviation}
    Let $\delta \in (0,1)$. Then, the following event
    \[
        \cE^{\cnt'}(\delta) = \bigg\{ \forall t \geq 1:  \Lambda_t \succcurlyeq \frac{1}{2} \uLambda_t - \beta(\delta, t) I_d \bigg\}
    \]
    under the choice  $\beta(\delta,t) = 4 \log(4\rme (2t+1)/\delta) + 4 d\log (3t) + 3$ holds with probability at least $1-\delta$.
\end{lemma}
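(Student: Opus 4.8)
The plan is to prove the matrix inequality in its variational (scalar) form, reduce to a one-dimensional Bernstein bound along each direction via an $\varepsilon$-net of the sphere, and control the net approximation by exploiting that the regularization cancels. First I would strip off the regularization: since $\Lambda_t - \tfrac12\uLambda_t = \tfrac{\lambda}{2} I_d + M_t$ with $M_t \triangleq \sum_{j=1}^t \bigl(X_j X_j^\top - \tfrac12 A_j\bigr)$ and $\lambda \geq 0$, the claimed bound $\Lambda_t \succcurlyeq \tfrac12\uLambda_t - \beta(\delta,t) I_d$ follows once we show $M_t \succcurlyeq -\beta(\delta,t) I_d$ for all $t$, i.e. once we lower bound $v^\top M_t v$ uniformly over unit vectors $v$. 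I will also record $\norm{M_t}_2 \leq \sum_{j=1}^t\bigl(\norm{X_jX_j^\top}_2 + \tfrac12\norm{A_j}_2\bigr) \leq \tfrac{3t}{2}$, a bound needed for the net step, and note that this operator-norm control is independent of $\lambda$ precisely because the $\tfrac{\lambda}{2}I_d$ term was discarded.

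Next, for a fixed unit vector $v$, I would set $Y_j \triangleq (v^\top X_j)^2 \in [0,1]$, so that $\E[Y_j \mid \cF_{j-1}] = v^\top A_j v =: P_j$ and, because $Y_j^2 \leq Y_j$, the conditional variance obeys $\E[(Y_j - P_j)^2 \mid \cF_{j-1}] \leq P_j$. Applying the time-uniform Bernstein inequality (Theorem~\ref{th:bernstein}) with unit weights and $b=1$ to the centered martingale $\sum_j(Y_j - P_j)$ gives, with probability at least $1-\delta'$ and for all $t$ simultaneously, $\sum_{j=1}^t Y_j \geq \sum_{j=1}^t P_j - \sqrt{2\bigl(\sum_{j=1}^t P_j\bigr)L'} - 3L'$ where $L' = \log(4\rme(2t+1)/\delta')$. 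Since $v^\top M_t v = \sum_j Y_j - \tfrac12\sum_j P_j$, writing $s = \sum_j P_j \geq 0$ and collapsing the cross term with $2\sqrt{ab}\leq a+b$ reduces the right-hand side to $\tfrac12 s - \sqrt{2sL'} - 3L'$, whose minimum over $s \geq 0$ equals $-4L'$. Hence $v^\top M_t v \geq -4L'$ for each fixed direction.

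Then I would discretize: let $\cN_{1/t}$ be a minimal $(1/t)$-net of the unit sphere, so $|\cN_{1/t}| \leq (3t)^d$, and apply the previous bound at each net point with $\delta' = \delta/|\cN_{1/t}|$, so that $4L' \leq 4\log(4\rme(2t+1)/\delta) + 4d\log(3t)$. For an arbitrary unit $v$ and its nearest $v' \in \cN_{1/t}$, the quadratic form perturbs by at most $|v^\top M_t v - v'^\top M_t v'| \leq 2\norm{v-v'}_2\norm{M_t}_2 \leq 2\cdot\tfrac1t\cdot\tfrac{3t}{2} = 3$. Combining the net estimate with this approximation error yields $v^\top M_t v \geq -4\log(4\rme(2t+1)/\delta) - 4d\log(3t) - 3 = -\beta(\delta,t)$, which is exactly the stated constant.

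The delicate point, and the main obstacle, is that Theorem~\ref{th:bernstein} must be invoked for a \emph{fixed} vector before the union bound, whereas the net resolution $1/t$ — and hence the net itself — varies with $t$, so one cannot simply union-bound a single finite net against a time-uniform statement. I would reconcile this using a nested family of nets $\cN_{1} \subseteq \cN_{1/2} \subseteq \cdots$ together with a weighted union bound that assigns to each vector first appearing at scale $t$ a failure budget proportional to $\delta/\bigl(t^2|\cN_{1/t}|\bigr)$, making the total budget summable through $\sum_t t^{-2} < \infty$; the iterated-logarithm factor $\log(4\rme(2t+1))$ already carried by the time-uniform inequality then absorbs the extra $\log t$ incurred by the union over scales, leaving the advertised form of $\beta(\delta,t)$ intact up to constants. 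An alternative that sidesteps netting entirely would be to replace Theorem~\ref{th:bernstein} by a matrix Freedman inequality for the least eigenvalue of the martingale $M_t$, but I would keep the scalar route to remain within the deviation inequalities already established in the paper.
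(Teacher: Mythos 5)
Your proposal is correct and follows essentially the same route as the paper's own proof: scalarize the matrix bound through quadratic forms, apply the time-uniform self-normalized Bernstein inequality (Theorem~\ref{th:bernstein}) to the martingale $\sum_{j\le t}\bigl(\langle v, X_j\rangle^2 - v^\top A_j v\bigr)$ for each fixed unit direction, then pass to all directions via a $(1/t)$-net of the sphere with the operator-norm control of the off-net error. Your local variants are cosmetic but clean: peeling off $\tfrac{\lambda}{2}I_d$ so that only $M_t=\sum_{j\le t}(X_jX_j^\top-\tfrac12 A_j)$ matters, using $b=1$ (since both $(v^\top X_j)^2$ and its conditional mean lie in $[0,1]$), and minimizing $\tfrac12 s-\sqrt{2sL'}-3L'$ exactly at $s=2L'$ to get $-4L'$ instead of invoking $2ab\le a^2+b^2$; your bookkeeping in fact reproduces the stated constants $4,\,4d,\,3$ more consistently than the paper's.

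The ``delicate point'' you flag is real, and it is worth saying plainly that the paper's written proof does not resolve it: there, the union bound is taken over a net $\cN_{\varepsilon}$ of \emph{fixed} resolution, producing an event of probability $1-\delta$ for that single net, and only afterwards is $\varepsilon$ set to $1/t$ --- which silently intersects the events of infinitely many nets at no cost in $\delta$. Your repair (nested nets built from maximal packings, with a weighted union bound assigning budget proportional to $\delta/(t^2\vert\cN_{1/t}\vert)$ to points entering at scale $t$) is sound, but you should state explicitly that it yields $\beta(\delta,t)$ of the advertised \emph{form} with somewhat larger constants (an extra additive $O(\log t)$, absorbable into the $d\log(3t)$ term), not the exact constants in the lemma. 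Since the lemma is consumed downstream only inside $\tcO(\cdot)$ sample-complexity bounds, this discrepancy is immaterial, and with that caveat your argument is a complete (indeed, more careful) proof of the statement.
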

\begin{proof}
    Let us fix a vector $v$ from a unit sphere $\norm{v}_2 = 1$. Next, note that
    \[
        v^\top \Lambda_t v - v^\top \uLambda_t v = \sum_{j=1}^t \underbrace{\langle v, X_j \rangle^2 - \E\left[\langle v, X_j \rangle^2 | \cF_{j-1} \right]}_{\Delta M_j}\,.
    \]
    Notice that $\Delta M_j$ is a martingale-difference sequence that satisfied $\vert \Delta M_j \vert \leq 2 $ and
    \[
        \sum_{j=1}^t \E\left[ \Delta M_j^2 | \cF_{j-1} \right] = \sum_{j=1}^t \E\left[ \Delta M_j^2 | \cF_{j-1} \right] \leq \sum_{j=1}^t \E\left[  \langle v, X_j \rangle^4 | \cF_{j-1} \right] \leq  v^\top \uLambda_t v\,.
    \]
    Therefore, applying self-normalized Bernstein inequality (Theorem~\ref{th:bernstein}) we have that with probability at least $1-\delta$ for all $t \geq 1$
    \[
        \vert v^\top \Lambda_t v - v^\top \uLambda_t v  \vert  \leq \sqrt{2 v^\top \uLambda_t v \cdot \log(4 \rme (2t+1) / \delta ) } + 3\log\left( 4\rme (2t+1)/\delta\right)\,.
    \]
    Next, inequality $2ab \leq a^2 + b^2$ implies that
    \[
        \vert v^\top \Lambda_t v - v^\top \uLambda_t v  \vert \leq \frac{1}{2}v^\top \uLambda_t v + 4 \log(4\rme (2t+1)/\delta)\,.
    \]
    Let us denote by $\cN_{\varepsilon}$  a $\varepsilon$-net over a unit sphere of dimension $d$. By union bound over this net we have with probability at least $1-\delta$
    \[
        \forall \hat v \in \cN_{\varepsilon}: \vert \hat v^\top \Lambda_t \hat v - v^\top \uLambda_t \hat v  \vert \leq \frac{1}{2}\hat v^\top \uLambda_t \hat v + 4 \log(4\rme (2t+1)/\delta) + 4 \log (\vert \cN_{\varepsilon} \vert )\,.
    \]
    Let $v$  be an arbitrary vector on a unit sphere and let $\hat v \in \cN_{\varepsilon}$ be the closest vector to $v$ in the $\varepsilon$-net. Then for any matrix $A \in \R^{d\times d}$ we have
    \[
         \vert v^\top A v - \hat v^\top A \hat v\vert  \leq \vert v^\top A (v - \hat v) \vert  + \vert (v - \hat v)^\top A \hat v \vert \leq 2\varepsilon \norm{A}_2\,.
    \]
    Next we notice that $\norm{\Lambda_t - \uLambda_t}_2 \leq 2t$ and $\norm{\uLambda_t} \leq t$. Then we have
    \[
        \forall v \in \cS^d: \vert  v^\top \Lambda_t  v - v \uLambda_t v  \vert \leq \frac{1}{2} v^\top \uLambda_tv + 4 \log(4\rme (2t+1)/\delta) + 4 \log (\vert \cN_{\varepsilon} \vert ) + 3 t \varepsilon\,.
    \]
    Finally, we have the upper bound on covering number $\vert \cN_{\varepsilon} \vert \leq (3/\varepsilon)^d $ and, taking $\varepsilon = 1/t$ for all fixed $t \geq 1$  we have
    \[
        \forall v \in \cS^d: \vert  v^\top \Lambda_t  v - v \uLambda_t v  \vert \leq \frac{1}{2} v^\top \uLambda_tv + 4 \log(4\rme (2t+1)/\delta) + 4 d\log (3t) + 3\,.
    \]
    Thus, with probability at least $1-\delta$ we have for all $t \geq 1$
    \[
        \frac{3}{2} \uLambda_t + \beta(\delta,t) I_d \succcurlyeq \Lambda_t \succcurlyeq \frac{1}{2} \uLambda_t - \beta(\delta, t) I_d\,.
    \]
    where $\beta(\delta,t) = 4 \log(4\rme (2t+1)/\delta) + 4 d\log (3t) + 3$.
    
\end{proof}
\newpage

\section{Technical Lemmas}\label{app:technical}

\subsection{Counts to pseudo-counts}
\label{app:count}

Here we state Lemma~8 and Lemma~9 by \citet{menard2021fast}.
\begin{lemma}\label{lem:cnt_pseudo} On event $\cE^{\text{\normalfont cnt}}$,  for any $\beta(\delta, \cdot)$ such that $x \mapsto \beta(\delta,x)/x$ is non-increasing for $x\geq 1$,  $x \mapsto \beta(\delta,x)$ is non-decreasing $\forall  h \in [H], (s,a) \in \cS \times \cA$,
\[ \forall t \in \N^\star, \ \frac{\beta(\delta, n_h^t(s,a))}{n_h^t(s,a)}\wedge 1 \leq 4 \frac{\beta(\delta, \bar n_h^t(s,a))}{\bar n_h^t(s,a)\vee 1}\,.\]
\end{lemma}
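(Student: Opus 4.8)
The plan is to fix $h \in [H]$, $(s,a)\in\cS\times\cA$ and $t\in\N^\star$, to work on the event $\cE^{\cnt}(\delta)$, and to abbreviate $n \triangleq n^t_h(s,a)$, $\bar n \triangleq \upn^t_h(s,a)$ and $\beta_0 \triangleq \beta^{\cnt}(\delta)$. On $\cE^{\cnt}(\delta)$ the defining inequality reads $n \geq \tfrac12 \bar n - \beta_0$. The whole argument is a dichotomy on the size of the pseudo-count $\bar n$ relative to the threshold $4\beta_0$, and throughout I would use the two monotonicity hypotheses on $\beta$ together with the fact that in all our applications $\beta(\delta,\cdot) \geq \beta^{\cnt}(\delta) = \beta_0 \geq 1$ (indeed $\beta^{\KL}(\delta,n) = \log(2SAH/\delta) + S\log(\rme(1+n)) \geq \beta^{\cnt}(\delta)$ in the relevant small-$\delta$ regime).

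First, suppose $\bar n \geq 4\beta_0$. Then $\beta_0 \leq \tfrac14 \bar n$, so the event inequality gives $n \geq \tfrac12 \bar n - \tfrac14 \bar n = \tfrac14 \bar n \geq 1$. Since $x \mapsto \beta(\delta,x)/x$ is non-increasing on $[1,\infty)$ and $n \geq \bar n/4 \geq 1$, I would bound
\[
\frac{\beta(\delta,n)}{n} \leq \frac{\beta(\delta,\bar n/4)}{\bar n/4} = \frac{4\,\beta(\delta,\bar n/4)}{\bar n}\,.
\]
Monotonicity of $\beta(\delta,\cdot)$ then yields $\beta(\delta,\bar n/4)\leq \beta(\delta,\bar n)$, and since $\bar n \geq 4 > 1$ we have $\bar n\vee 1=\bar n$, so $\frac{\beta(\delta,n)}{n}\wedge 1 \leq \frac{\beta(\delta,n)}{n}\leq 4\,\frac{\beta(\delta,\bar n)}{\bar n \vee 1}$, which is the claim.

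Second, suppose $\bar n < 4\beta_0$. Here I would simply use the trivial bound $\frac{\beta(\delta,n)}{n}\wedge 1 \leq 1$ on the left-hand side and lower-bound the right-hand side. Since $\beta(\delta,\bar n)\geq \beta_0 > \tfrac14\bar n$ and $\beta(\delta,\bar n)\geq \beta_0 \geq 1 \geq \tfrac14$, we obtain $\beta(\delta,\bar n)\geq \tfrac14(\bar n \vee 1)$, hence $4\,\frac{\beta(\delta,\bar n)}{\bar n\vee 1}\geq 1 \geq \frac{\beta(\delta,n)}{n}\wedge 1$, again giving the claim. Combining the two cases proves the lemma.

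The only delicate point — and the main obstacle — is the small-count regime (Case 2): there the left-hand side can be controlled only by the clipping at $1$, so one must verify that the right-hand side is bounded below by a constant, which is exactly where the hypothesis $\beta(\delta,\cdot)\geq \beta^{\cnt}(\delta)\geq 1$ enters. The large-count regime is a direct consequence of the concentration event $\cE^{\cnt}(\delta)$ and the monotonicity of the confidence function, with the factor $4$ arising from the $\tfrac14$ loss in passing from $\bar n$ to $n$.
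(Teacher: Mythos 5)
Your proof is correct, and it is essentially \emph{the} proof of this lemma: the paper itself does not reprove it (it is quoted verbatim from M\'enard et al., 2021), and the proof there is exactly your dichotomy on whether $\bar n^t_h(s,a)$ exceeds $4\beta^{\cnt}(\delta)$, with the large-count case handled by the event $\cE^{\cnt}$ plus the two monotonicity properties, and the small-count case handled by the clipping at $1$.

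One remark, in your favor. The auxiliary fact you import — $\beta(\delta,\cdot)\geq\beta^{\cnt}(\delta)\geq 1$ — is not a cosmetic convenience but a genuinely necessary hypothesis that is missing from the lemma as stated: for a constant function $\beta(\delta,x)\equiv\epsilon$ with $\epsilon<1/4$ (which satisfies both monotonicity conditions), the event $\cE^{\cnt}$ permits $n_h^t(s,a)=0$ with $\bar n_h^t(s,a)=1$, where the left-hand side equals $1$ while the right-hand side equals $4\epsilon<1$, so the claim fails. Your Case 2 is precisely where this enters, and your Case 1 also uses $\beta^{\cnt}(\delta)\geq 1$ to guarantee $\bar n/4\geq 1$ before invoking the non-increasing property of $x\mapsto\beta(\delta,x)/x$ on $[1,\infty)$. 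Both requirements do hold in every application in the paper: $\beta^{\KL}(\delta,n)=\log(2SAH/\delta)+S\log(\rme(1+n))\geq\beta^{\cnt}(\delta)=\log(2SAH/\delta)\geq 1$ for all $n$ and all $\delta\in(0,1)$ (not only for small $\delta$, as you hedge — the first inequality is unconditional, and the second holds as soon as $SAH\geq 2$). So your proof is the correct proof of the correctly stated lemma, and the flag you raised is the right one to raise.
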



\begin{lemma}
	\label{lem:sum_1_over_n}
	 For $T\in\N^\star$ and $(u_t)_{t\in\N^\star},$ for a sequence where  $u_t\in[0,1]$ and $U_t \triangleq \sum_{l=1}^t u_\ell$, we get
	\[
		\sum_{t=0}^T \frac{u_{t+1}}{U_t\vee 1} \leq 4\log(U_{T+1}+1)\,.
	\]
\end{lemma}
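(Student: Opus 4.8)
The plan is to exploit the telescoping structure of $\log(U_t + 1)$ against the increments $u_{t+1} = U_{t+1} - U_t$. First I would record the elementary bound $U_t \vee 1 \ge (U_t + 1)/2$, which holds in both regimes: when $U_t \ge 1$ it reduces to $2U_t \ge U_t + 1$, and when $U_t < 1$ it reduces to $U_t + 1 \le 2$. Consequently each summand obeys $\frac{u_{t+1}}{U_t \vee 1} \le \frac{2 u_{t+1}}{U_t + 1}$, which removes the awkward $\vee\, 1$ and lets me work uniformly with $U_t + 1$ in the denominator.

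Next I would compare $\frac{u_{t+1}}{U_t+1}$ to a logarithmic increment. Since $U_{t+1} = U_t + u_{t+1}$, one has $\log(U_{t+1}+1) - \log(U_t+1) = \log\!\bigl(1 + \tfrac{u_{t+1}}{U_t+1}\bigr)$, and the ratio $\tfrac{u_{t+1}}{U_t+1}$ lies in $[0,1]$ because $u_{t+1} \le 1 \le U_t + 1$. On $[0,1]$ the concavity of $x \mapsto \log(1+x)$ gives $\log(1+x) \ge x \log 2 \ge x/2$ (using the chord from $(0,0)$ to $(1,\log 2)$ and $\log 2 \ge 1/2$), so that $\frac{u_{t+1}}{U_t+1} \le 2\bigl(\log(U_{t+1}+1) - \log(U_t+1)\bigr)$. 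Chaining with the previous display yields the key per-term inequality $\frac{u_{t+1}}{U_t \vee 1} \le 4\bigl(\log(U_{t+1}+1) - \log(U_t+1)\bigr)$.

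Finally I would sum this from $t = 0$ to $T$; the right-hand side telescopes to $4\bigl(\log(U_{T+1}+1) - \log(U_0+1)\bigr)$, and since $U_0 = 0$ (empty sum) the subtracted term is $\log 1 = 0$, giving the claimed bound $4\log(U_{T+1}+1)$. There is no genuine obstacle here: the only points that need care are verifying that the argument of the logarithm stays in $[0,1]$ (which relies on $u_t \le 1$) and that $U_0 = 0$ so the lower telescoping endpoint vanishes exactly; everything else is elementary.
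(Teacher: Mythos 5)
Your proof is correct: the bound $U_t\vee 1\geq (U_t+1)/2$, the inequality $x\leq 2\log(1+x)$ on $[0,1]$, and the telescoping sum all check out, and the endpoint $U_0=0$ is handled properly. The paper itself states this lemma without proof, citing Lemma~9 of \citet{menard2021fast}, and your argument is essentially the same elementary telescoping proof given there (the cited proof obtains the factor $4$ from $U_{t+1}+1\leq 2(U_t+1)$ together with $1-1/x\leq\log x$, which is just a cosmetic variant of your two factors of $2$).
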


\subsection{Counts to pseudo-counts in linear MDPs}

Let $\{X_t\}_{t=1}^\infty $ be a sequence of random vectors of dimension $d$ adapted to a filtration $\{\cF_t\}_{t=1}^\infty$ such that $\norm{X_t}_2 \leq 1$ a.s..  Define a sequence of positive semi-definite matrices $A_t = \E[X_t X_t^\top | \cF_{t-1}]$. Notice that $\norm{A_t}_2 = \sigma_{\max}(A_t) \leq 1$. Also define $\Lambda_t = \lambda I_d + \sum_{j=1}^t X_j X_j^\top$ and $\uLambda_t = \lambda I_d + \sum_{j=1}^t A_j$.

\begin{lemma}\label{lem:trace_log_det_ineq}
    Let $A \succcurlyeq 0$ be a positive semi-definite matrix such that $\norm{A}_2 \leq 1$. Then
    \[
        \log \det(I+A) \leq \Tr(A) \leq 2\log \det(I + A)\,.
    \]
\end{lemma}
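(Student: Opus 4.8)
The plan is to reduce the matrix inequality to a pair of scalar inequalities via the spectral theorem. Since $A$ is symmetric and positive semi-definite with $\norm{A}_2 \le 1$, I would diagonalize $A = U \Lambda U^\top$, where $\Lambda = \mathrm{diag}(\lambda_1,\ldots,\lambda_d)$ and every eigenvalue satisfies $\lambda_i \in [0,1]$ (non-negativity comes from $A \succcurlyeq 0$, and the upper bound $\lambda_i \le 1$ from $\norm{A}_2 \le 1$). Both quantities then decompose over the spectrum: $\Tr(A) = \sum_{i=1}^d \lambda_i$ and $\log\det(I+A) = \sum_{i=1}^d \log(1+\lambda_i)$. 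Hence it suffices to establish, for each $\lambda \in [0,1]$, the two-sided bound $\log(1+\lambda) \le \lambda \le 2\log(1+\lambda)$ and then sum over $i$.

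The left-hand inequality is the standard bound $\log(1+x) \le x$, valid for all $x > -1$, which immediately yields $\log\det(I+A) \le \Tr(A)$ after summation. For the right-hand inequality I would introduce $g(\lambda) = 2\log(1+\lambda) - \lambda$ and note that $g(0) = 0$ while $g'(\lambda) = \frac{2}{1+\lambda} - 1 = \frac{1-\lambda}{1+\lambda} \ge 0$ for all $\lambda \in [0,1]$. Thus $g$ is non-decreasing on $[0,1]$, so $g(\lambda) \ge g(0) = 0$, giving $\lambda \le 2\log(1+\lambda)$ on this interval; summation over the eigenvalues produces $\Tr(A) \le 2\log\det(I+A)$.

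There is no genuine obstacle here—the argument is elementary—but the one point that must be handled with care is that the upper bound $\lambda \le 2\log(1+\lambda)$ fails for large $\lambda$ (it is already false near $\lambda \approx 2.5$, where $g$ has passed its maximum at $\lambda = 1$ and returned to zero). Consequently the hypothesis $\norm{A}_2 \le 1$, which confines the eigenvalues to $[0,1]$, is used in an essential way, and I would invoke it explicitly at the step where the scalar inequality is applied, making clear that the monotonicity of $g$ is only exploited on $[0,1]$.
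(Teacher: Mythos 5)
Your proof is correct and follows essentially the same route as the paper: the paper's proof is precisely "eigendecomposition for $A$ and the numeric inequality $\log(1+x) \leq x \leq 2\log(1+x)$ for all $x\in[0,1]$," which is exactly your reduction. You simply supply the details of the scalar inequality (monotonicity of $2\log(1+\lambda)-\lambda$ on $[0,1]$) that the paper leaves implicit, and you correctly flag that the hypothesis $\norm{A}_2\leq 1$ is essential for the right-hand bound.
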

\begin{proof}
    Follows from eigendecomposition for $A$ and numeric inequality $\log(1+x) \leq x \leq 2\log(1+x)$ for all $x\in[0,1]$.
\end{proof}

The next result generalized Lemma D.2 by \cite{jin2019provably}; see also \citep{abbasi2011improved}. Also, it could be treated as a generalization of Lemma~\ref{lem:sum_1_over_n} for linear MDPs.

\begin{lemma}\label{lem:sum_1_n_linear}
    Let $\norm{A_t}_{2} \leq 1$ for all $t \geq 1$. Then for any $T \geq 1$
    \[
         \log \frac{\det(\uLambda_T)}{\det(\uLambda_0)} \leq \sum_{t=1}^T \E\left[ X_t^\top [\uLambda_{t-1}]^{-1} X_t | \cF_{t-1}\right] \leq 2 \log \frac{\det(\uLambda_T)}{\det(\uLambda_0)}\,.
    \]
\end{lemma}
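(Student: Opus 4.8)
The plan is to reduce the claim to the determinant-trace inequality of Lemma~\ref{lem:trace_log_det_ineq}, applied term by term after a similarity transform, and then telescope.

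First I would rewrite the middle quantity as a trace. By the tower property and linearity of the trace, for each $t$,
\[
    \E\left[ X_t^\top [\uLambda_{t-1}]^{-1} X_t \mid \cF_{t-1}\right] = \E\left[ \Tr\big([\uLambda_{t-1}]^{-1} X_t X_t^\top\big) \mid \cF_{t-1}\right] = \Tr\big([\uLambda_{t-1}]^{-1} A_t\big)\,,
\]
where I used that $\uLambda_{t-1}$ is $\cF_{t-1}$-measurable and $A_t = \E[X_t X_t^\top \mid \cF_{t-1}]$. Hence the sum in the statement equals $\sum_{t=1}^T \Tr([\uLambda_{t-1}]^{-1} A_t)$.

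Next I would relate a single summand to the increment of the log-determinant. Since $\uLambda_t = \uLambda_{t-1} + A_t$ and $\uLambda_{t-1} \succcurlyeq \lambda I_d \succ 0$, I write
\[
    \uLambda_t = \uLambda_{t-1}^{1/2}\big(I_d + B_t\big)\uLambda_{t-1}^{1/2}, \qquad B_t \triangleq \uLambda_{t-1}^{-1/2} A_t \uLambda_{t-1}^{-1/2}\,,
\]
so that by multiplicativity of the determinant $\det(\uLambda_t)/\det(\uLambda_{t-1}) = \det(I_d + B_t)$, and by the cyclic property of the trace $\Tr(B_t) = \Tr([\uLambda_{t-1}]^{-1} A_t)$. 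The matrix $B_t$ is positive semi-definite, and, provided $\lambda \geq 1$, one has $\norm{[\uLambda_{t-1}]^{-1}}_2 \leq 1/\lambda \leq 1$, whence $\norm{B_t}_2 \leq \norm{[\uLambda_{t-1}]^{-1}}_2 \norm{A_t}_2 \leq 1$. Lemma~\ref{lem:trace_log_det_ineq} then yields
\[
    \log\det(I_d + B_t) \leq \Tr(B_t) \leq 2\log\det(I_d + B_t)\,.
\]

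Finally I would telescope. Summing the last display over $t = 1,\ldots,T$ and using $\log\det(I_d+B_t) = \log\det(\uLambda_t) - \log\det(\uLambda_{t-1})$ gives
\[
    \sum_{t=1}^T \big(\log\det(\uLambda_t) - \log\det(\uLambda_{t-1})\big) \leq \sum_{t=1}^T \Tr(B_t) \leq 2\sum_{t=1}^T \big(\log\det(\uLambda_t) - \log\det(\uLambda_{t-1})\big)\,,
\]
and the telescoping sums collapse to $\log\det(\uLambda_T) - \log\det(\uLambda_0) = \log(\det(\uLambda_T)/\det(\uLambda_0))$, which is exactly the claim. The only genuinely delicate point is verifying $\norm{B_t}_2 \leq 1$ so that Lemma~\ref{lem:trace_log_det_ineq} applies; this is where the regularization $\lambda \geq 1$ (which holds in the applications, e.g. $\lambda = 2$ in the proof of Theorem~\ref{th:linear_sample complexity}) enters.
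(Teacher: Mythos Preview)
Your proof is correct and follows essentially the same route as the paper: rewrite the conditional expectation as $\Tr([\uLambda_{t-1}]^{-1}A_t)$, pass to the symmetrized matrix $B_t = \uLambda_{t-1}^{-1/2}A_t\uLambda_{t-1}^{-1/2}$, apply Lemma~\ref{lem:trace_log_det_ineq}, and telescope via $\det(\uLambda_t)=\det(\uLambda_{t-1})\det(I_d+B_t)$. You are in fact slightly more careful than the paper, which applies Lemma~\ref{lem:trace_log_det_ineq} without explicitly checking the hypothesis $\norm{B_t}_2\leq 1$; your observation that this requires $\lambda\geq 1$ (satisfied in the application, where the regularizer is $2$) is a genuine and necessary point.
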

\begin{proof}
    First, we notice that $\uLambda_{t-1}$ is $\cF_{t-1}$-measurable, thus
    \begin{align*}
        \E\left[ X_t^\top [\uLambda_{t-1}]^{-1} X_t | \cF_{t-1}\right] &= \E\left[ \Tr\left(  [\uLambda_{t-1}]^{-1} X_t X_t^\top\right) | \cF_{t-1}\right] =  \Tr\left(  [\uLambda_{t-1}]^{-1} \E\left[ X_t X_t^\top | \cF_{t-1}\right] \right) \\
        &=\Tr\left(  [\uLambda_{t-1}]^{-1} A_t \right) = \Tr\left( [\uLambda_{t-1}]^{-1/2} A_t [\uLambda_{t-1}]^{-1/2} \right)\,.
    \end{align*}
    Notice that $\Sigma_t = [\uLambda_{t-1}]^{-1/2} A_t [\uLambda_{t-1}]^{-1/2} $ is positive semi-definite matrix. Then by Lemma~\ref{lem:trace_log_det_ineq}
    \[
        \log \det\left( I_d + \Sigma_t \right) \leq \E\left[ X_t^\top [\uLambda_{t-1}]^{-1} X_t | \cF_{t-1}\right] \leq 2 \log \det\left( I_d + \Sigma_t \right)\,.
    \]
    At the same time, we have
    \[
        \det(\uLambda_t) = \det(\uLambda_{t-1} + A_t) = \det(\uLambda_{t-1}) \cdot \det\left( I_d + [\uLambda_{t-1}]^{-1/2} A_t [\uLambda_{t-1}]^{-1/2} \right)\,.
    \]
    Thus by telescoping property
    \[
        \log \frac{\det(\uLambda_T)}{\det(\uLambda_0)} \leq \sum_{t=1}^T \E\left[ X_t^\top [\uLambda_{t-1}]^{-1} X_t | \cF_{t-1}\right] \leq 2 \log \frac{\det(\uLambda_T)}{\det(\uLambda_0)}\,.
    \]
\end{proof}

\subsection{On the Bernstein inequality}
\label{app:Bernstein}
We restate here a Bernstein-type inequality by 
\citet{talebi2018variance}. Remark that the second part of Corollary 11 by \citealp{talebi2018variance} is not correct, i.e., there exist two measures $p,q$ such that
\[
    qf - pf > \sqrt{2 \Var_q(f) \KL(p \Vert q)}\,,
\]
whereas the first inequality still holds. We provide a direct proof in Lemma~\ref{lem:Bernstein_via_kl_upper} for completeness.

\begin{lemma}\label{lem:Bernstein_via_kl_upper}
    Let $p,q\in\simplex_{S},$ where $\simplex_{S}$ denotes the probability simplex of dimension $S$, and assume $p \ll q$. For all functions $f\colon \cS\mapsto[0,b]$ defined on $\cS$,
    \begin{align*}
        p f - q f &\leq  \sqrt{2\Var_{q}(f)\KL(p \Vert q)}+\frac{1}{3} b \KL(p \Vert q)\,, \\
        q f - p f &\leq  \sqrt{2\Var_{q}(f)\KL(p \Vert q)}+\frac{1}{3} b \KL(p \Vert q)\,,
    \end{align*}
    where use the expectation operator defined as $pf \triangleq \E_{s\sim p} f(s)$ and the variance operator defined as
    $\Var_p(f) \triangleq \E_{s\sim p} \big(f(s)-\E_{s'\sim p}f(s')\big)^2 = p(f-pf)^2.$
\end{lemma}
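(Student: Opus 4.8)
The plan is to combine the Gibbs (Donsker--Varadhan) variational inequality for the Kullback--Leibler divergence with a Bernstein-type control of the log-moment generating function of the centered, bounded random variable $f-qf$ under $q$, and then to optimize over a free scale parameter. First I would write $X \triangleq f-qf$, noting that $\E_q X = 0$, $\Var_q(X)=\Var_q(f)$, and crucially $|X|\le b$ since both $f$ and $qf$ lie in $[0,b]$; the same three properties hold for $-X$, so the two inequalities in the statement will follow from the same estimate applied to $X$ and to $-X$.

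The starting point is the variational inequality: since $p \ll q$, for every bounded $g\colon\cS\to\R$ one has $pg \le \KL(p \Vert q) + \log \E_q[\rme^{g}]$. I would apply it with $g=\lambda X$, $\lambda>0$, to handle $pf-qf$, and with $g=-\lambda X$ to handle $qf-pf$. The heart of the argument is then the moment bound. Using $\E_q X^k \le \E_q|X|^k \le b^{k-2}\Var_q(f)$ for $k\ge 2$ (from $|X|\le b$ and $\E_q X=0$) together with the elementary factorial inequality $k!\ge 2\cdot 3^{k-2}$, I would expand the series for $\E_q[\rme^{\lambda X}]$, sum the resulting geometric tail, and apply $1+x\le \rme^{x}$ to obtain, for $0<\lambda<3/b$,
\[
\log \E_q[\rme^{\lambda X}] \le \frac{\lambda^2 \Var_q(f)/2}{1-\lambda b/3}\,.
\]
Since $-X$ satisfies the identical hypotheses, the same bound holds for $\E_q[\rme^{-\lambda X}]$, which delivers the second inequality in parallel with the first.

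Combining the variational inequality with this moment bound gives, for every admissible $\lambda$,
\[
pf - qf \le \frac{\KL(p \Vert q)}{\lambda} + \frac{\lambda \Var_q(f)/2}{1-\lambda b/3}\,,
\]
and identically with $pf-qf$ replaced by $qf-pf$. The final step is to invert this sub-gamma bound: setting $v=\Var_q(f)$, $c=b/3$, $D=\KL(p \Vert q)$, the map $\lambda\mapsto \frac{\lambda^2 v/2}{1-c\lambda}$ is a sub-gamma rate whose Legendre inversion satisfies $\inf_{0<\lambda<1/c}\big[\tfrac{D}{\lambda}+\tfrac{\lambda v/2}{1-c\lambda}\big]\le \sqrt{2vD}+cD$, which is precisely $\sqrt{2\Var_q(f)\KL(p \Vert q)}+\tfrac13 b\,\KL(p \Vert q)$.

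The hard part will be the two technical estimates rather than the variational skeleton: justifying the uniform-in-$k$ moment control and the factorial inequality behind the moment-generating-function bound, and carrying out the sub-gamma inversion with the exact constants. For the latter I would either invoke the standard sub-gamma deviation computation (e.g.\ Boucheron--Lugosi--Massart, \S2.4) or verify the inequality directly by an explicit near-optimal choice of $\lambda$. I would also emphasize, in line with the remark preceding the lemma, that the linear term $\tfrac13 b\,\KL(p \Vert q)$ cannot be dropped: the cleaner bound $qf-pf\le \sqrt{2\Var_q(f)\KL(p \Vert q)}$ asserted in Corollary~11 of \citet{talebi2018variance} is false, so the proof must retain that correction throughout.
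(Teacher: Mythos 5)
Your proposal is correct and follows essentially the same route as the paper's own proof: the Donsker--Varadhan variational formula, a Bernstein-type bound $\log \E_q[\rme^{\lambda(f-qf)}] \le \frac{\lambda^2 \Var_q(f)}{2(1-b\lambda/3)}$ on the log-moment generating function, and the standard sub-gamma optimization over $\lambda$ yielding $\sqrt{2\Var_q(f)\KL(p \Vert q)}+\tfrac{b}{3}\KL(p \Vert q)$. The only cosmetic differences are that the paper obtains the MGF bound via Bennett's inequality (plus the elementary bound $\rme^x \le 1+x+\tfrac{x^2}{2(1-x/3)}$) rather than your moment/factorial expansion, and it derives the second inequality by applying the first to the function $b-f$ instead of applying the variational formula to $-\lambda(f-qf)$; both variants are equivalent.
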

\begin{proof}
    Notice that in the case $\KL(p\Vert q) = 0$ the inequality holds trivially since both sides of the inequality are equal to zero, thus we assume $\KL(p \Vert q) > 0$. Additionally, let us consider the case $\Var_q[f] = 0$, it implies that $q = \delta_{s^\star}$ for some $s^\star \in \cS$ and thus $\KL(p\Vert q) < +\infty$ if and only if $p = \delta_{s^\star}$, thus the inequality also holds trivially. 
    
    By the Donsker and Varadahn's variational formula for KL-divergence \citep{donsker1983asymptotic} we have for any function $g \colon \cS \to \R$
    \[
        \log \E_{X \sim q}[\exp(g(X))] = \sup_{p} \left\{ p g - \KL(p \Vert q) \right\}\,.
    \]
    Taking $g(s) \triangleq \lambda \cdot f(s)$ for a parameter $\lambda > 0$, we have
    \[
        \log \E_{X \sim q}[\exp(\lambda [f(X) - qf])] + \lambda qf \geq \lambda pf - \KL(p \Vert q)\,,
    \]
    and, denoting $\psi(\lambda) \triangleq \log \E_{X \sim q}[\exp(\lambda [f(X) - qf])]$ after some rearranging, we have
    \[
        pf - qf \leq \inf_{\lambda > 0 }\left\{ \frac{\psi(\lambda) + \KL(p \Vert q)}{\lambda}\right\}\,.
    \]
    
    Next, we notice that a random variable $Y = f(X) - qf$ is centered and bounded $|Y| \leq b$, thus the moment generating function is bounded by Bennett's inequality \cite[Theorem 2.9]{boucheron2013concentration} for any $\lambda \in (0, 3/b)$
    \[
        \psi(\lambda) \leq \frac{\E[Y^2]}{b^2} \left( \rme^{b\lambda} - {b\lambda} - 1 \right) \leq \Var_q[f] \cdot \frac{\lambda^2}{2(1 - b\lambda / 3)}\,,
    \]
    where the second inequality holds by a simple bound for any $x < 3$
    \[
        \rme^x = 1 + x + x^2 \cdot \sum_{k=0}^\infty \frac{x^k}{(k+2)!} \leq  1 + x + \frac{x^2}{2}\cdot \sum_{k=0}^\infty \frac{x^k}{3^k} = 1 + x + \frac{x^2}{2(1-x/3)}\,.
    \]
    Notice that for $\lambda^\star = 1/(b/3 + \sqrt{v/y})$, where $v = \Var_q[f]/2$ and $y = \KL(p\Vert q)$, we have $\lambda^\star/(1 - b\lambda^\star/3) = \sqrt{y/v}$ and $1/\lambda^\star = b/3 + \sqrt{v/y}$, thus
    \begin{align*}
        pf - qf &\leq \inf_{\lambda \in (0,3/b) } \left( \frac{\Var_q[f]}{2} \cdot \frac{\lambda}{1-b\lambda/3} + \frac{\KL(p \Vert q)}{\lambda}\right) \\
        &\leq \sqrt{2 \Var_q[f] \KL(p \Vert q)} + \frac{b}{3} \KL(p\Vert q)\,.
    \end{align*}
    The second inequality follows directly from the first one by applying it to a function $g \triangleq b - f$.
\end{proof}


\begin{lemma}
\label{lem:switch_variance_bis}
Let $p,q\in\simplex_{S}$ and a function $f:\ \cS\mapsto[0,b]$, then
\begin{align*}
  \Var_q(f) &\leq 2\Var_p(f) + 4b^2 \KL(p \Vert q)\,,\\
  \Var_p(f) &\leq 2\Var_q(f) + 4b^2 \KL(p \Vert q).
\end{align*}
\end{lemma}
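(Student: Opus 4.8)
The plan is to reduce both inequalities to a comparison of the expectations under $p$ and $q$ of a single fixed, bounded function, and then invoke the Bernstein-type bound of Lemma~\ref{lem:Bernstein_via_kl_upper}. For the first inequality I would start from the bias--variance identity $q[(f-c)^2] = \Var_q(f) + (qf - c)^2$, valid for any constant $c$, which upon taking $c = pf$ gives $\Var_q(f) \le q[(f-pf)^2]$. Setting $h \triangleq (f - pf)^2$, observe that $h$ takes values in $[0,b^2]$ and that $p[h] = \Var_p(f)$, so it suffices to prove $q[h] \le 2\, p[h] + 4 b^2 \KL(p \Vert q)$.

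To control $q[h] - p[h]$ I would apply Lemma~\ref{lem:Bernstein_via_kl_upper} to the function $h$ (with range bound $b^2$), yielding $q[h] - p[h] \le \sqrt{2 \Var_q(h)\,\KL(p\Vert q)} + \tfrac{b^2}{3}\KL(p\Vert q)$; when $p \not\ll q$ the right-hand side is $+\infty$ and the claim is trivial, so I may assume $\KL(p\Vert q) < \infty$. Since $0 \le h \le b^2$, crude bounding gives $\Var_q(h) \le q[h^2] \le b^2 q[h]$. Writing $x = q[h]$ and $K = \KL(p\Vert q)$, this produces the self-bounding inequality $x \le p[h] + b\sqrt{2xK} + \tfrac{b^2}{3}K$. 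I would then absorb the square-root term by AM--GM, $b\sqrt{2xK} = \sqrt{x\cdot 2b^2K} \le \tfrac{x}{2} + b^2 K$, which rearranges to $\tfrac{x}{2} \le p[h] + \tfrac{4}{3}b^2 K$, i.e. $q[h] \le 2\, p[h] + \tfrac{8}{3} b^2 K \le 2\Var_p(f) + 4 b^2 \KL(p\Vert q)$, as desired.

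The second inequality follows by the symmetric argument: take $c = qf$, so that $\Var_p(f) \le p[h']$ with $h' \triangleq (f - qf)^2 \in [0,b^2]$ and $q[h'] = \Var_q(f)$, and apply Lemma~\ref{lem:Bernstein_via_kl_upper} in the direction $p[h'] - q[h'] \le \sqrt{2\Var_q(h')\,\KL(p\Vert q)} + \tfrac{b^2}{3}\KL(p\Vert q)$, again using $\Var_q(h') \le b^2 q[h']$ and the same AM--GM step. I expect no genuine difficulty here; the only points requiring care are selecting the correct direction of the Bernstein inequality (note that in both cases the variance appearing on the right is the variance under $q$, which is precisely what makes the self-bounding trick close) and tracking the numerical constants so that the stated factor $4$ suffices, with room to spare over the sharper $8/3$.
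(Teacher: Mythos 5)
Your proof is correct, but it takes a genuinely different route from the paper's. The paper proves both inequalities at once by tensorization: it applies Lemma~\ref{lem:Bernstein_via_kl_upper} on the product space $\cS^2$ to the symmetric kernel $g(x,y)=(f(x)-f(y))^2 \in [0,b^2]$, using the additivity $\KL(p\otimes p \Vert q\otimes q)=2\KL(p \Vert q)$ together with the identities $(p\otimes p)g = 2\Var_p(f)$ and $(q\otimes q)g = 2\Var_q(f)$, and then absorbs the square root via $2\sqrt{xy}\leq x+y$ exactly as you do. You instead stay on the base space and recenter: bounding $\Var_q(f)\leq q[(f-pf)^2]$ (and symmetrically $\Var_p(f)\leq p[(f-qf)^2]$), applying Lemma~\ref{lem:Bernstein_via_kl_upper} to the shifted square, and closing the first inequality with a self-bounding argument since the unknown $q[h]$ appears under the square root. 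Both arguments hinge on the same two ingredients — the Bernstein-via-KL bound and $\Var_q(h)\leq b^2\, q[h]$ plus AM--GM — so neither is deeper than the other. The paper's version buys symmetry (one two-sided bound yields both directions, with no case distinction) and avoids the quadratic-absorption step; yours buys elementarity (no product measures, no KL tensorization) and slightly sharper constants ($\tfrac{8}{3}b^2$ and $\tfrac{4}{3}b^2$ in place of the stated $4b^2$). One small point of care you got right: in both of your applications the variance term produced by Lemma~\ref{lem:Bernstein_via_kl_upper} is the variance under $q$, regardless of the direction of the difference, and your argument correctly exploits this; had the lemma put the variance under $p$ in one direction, your second inequality would not close the same way.
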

\begin{proof}
Let $p\otimes p$ be the distribution of the pair of random variables $(X,Y)$ where $X,Y$ are i.i.d.\,according to the distribution $p$. Similarly, let $q \otimes q$ be the distribution of the pair of random variables $(X,Y)$ where $X,Y$ are i.i.d.\,according to distribution $q$. Since Kullback–Leibler divergence is additive for independent distributions, we know that
\[\KL(p \otimes p,q \otimes q) = 2\KL(p,q).\]
Using Lemma~\ref{lem:Bernstein_via_kl_upper} for the function $g(x,y) = (f(x)-f(y))^2$ defined on $\cS^2$, such that  $0\leq g\leq b^2,$ we get
\begin{align*}
  |(p \otimes p) g- (q \otimes q) g| &\leq \sqrt{4\Var_{q \otimes q}(g) \KL(p,q)} + \frac{2}{3}b^2 \KL(p,q)\\
  &\leq  \sqrt{4 b^2 \KL(p,q) (q \otimes q) g  } + \frac{2}{3}b^2 \KL(p,q)\\
  &\leq \frac{1}{2} (q \otimes q) g + 3 b^2 \KL(p,q)\,,
\end{align*}
where in the last line we used $2\sqrt{xy}\leq x +y$ for $x,y\geq 0$. In particular, we obtain
\begin{align*}
  (p \otimes p) g &\leq \frac{3}{2} (q \otimes q) g + 3 b^2\KL(p,q)\,,\\
  (q \otimes q) g &\leq 2 (p \otimes p) g + 6 b^2 \KL(p,q) \,.
\end{align*}
To conclude, it remains to note that
\[ (p \otimes p) g = 2\Var_p(f) \text{ and } (q \otimes q) g = 2\Var_q(f)\,. \]
\end{proof}

\begin{lemma}
	\label{lem:switch_variance}
	For $p,q\in\simplex_{S}$, for $f,g:\cS\mapsto [0,b]$ two functions defined on $\cS$, we have that
	\begin{align*}
 \Var_p(f) &\leq 2 \Var_p(g) +2 b p|f-g|\quad\text{and} \\
 \Var_q(f) &\leq \Var_p(f) +3b^2\|p-q\|_1\,,
\end{align*}
where we denote the absolute operator by $|f|(s)= |f(s)|$ for all $s\in\cS$.
\end{lemma}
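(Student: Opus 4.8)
The plan is to treat the two inequalities separately, each via a short direct computation. For the first, I would exploit the variational characterization of the variance, namely that $\Var_p(f) = \min_{c\in\R} p[(f-c)^2] \leq p[(f-c)^2]$ for every constant $c$. Choosing $c = pg$ gives $\Var_p(f) \leq p[(f - pg)^2]$. I then write $f - pg = (f-g) + (g - pg)$ and apply the elementary bound $(x+y)^2 \leq 2x^2 + 2y^2$ to obtain $p[(f-pg)^2] \leq 2\,p[(f-g)^2] + 2\,p[(g-pg)^2] = 2\,p[(f-g)^2] + 2\Var_p(g)$. Finally, since $f,g$ take values in $[0,b]$ we have $(f-g)^2 = |f-g|\cdot|f-g| \leq b\,|f-g|$, so $p[(f-g)^2] \leq b\, p|f-g|$, which assembles into $\Var_p(f) \leq 2\Var_p(g) + 2b\, p|f-g|$.

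For the second inequality, I would use the decomposition $\Var_r(f) = r[f^2] - (rf)^2$ for $r\in\{p,q\}$ and estimate the two resulting differences separately by H\"older's inequality against the $\ell_\infty$-norm. For the second-moment term, $q[f^2] - p[f^2] = (q-p)[f^2] \leq \|p-q\|_1 \,\|f^2\|_\infty \leq b^2\|p-q\|_1$, using $0\leq f\leq b$. For the squared-mean term, I factor $(qf)^2 - (pf)^2 = (qf-pf)(qf+pf)$ and bound $|qf - pf| = |(q-p)f| \leq b\|p-q\|_1$ together with $qf + pf \leq 2b$, giving $|(qf)^2 - (pf)^2| \leq 2b^2\|p-q\|_1$. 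Since $\Var_q(f)-\Var_p(f) = (q[f^2]-p[f^2]) - ((qf)^2-(pf)^2)$, taking the upper bound on the first difference and the absolute value on the second yields $\Var_q(f) - \Var_p(f) \leq b^2\|p-q\|_1 + 2b^2\|p-q\|_1 = 3b^2\|p-q\|_1$, as claimed.

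Both arguments are elementary, so I do not anticipate a genuine obstacle; the only points requiring mild care are selecting the constant $c = pg$ in the first part so that the cross term reproduces exactly $\Var_p(g)$, and correctly splitting the variance gap into a second-moment piece and a squared-mean piece in the second part so that the two H\"older estimates add up to the stated constant $3b^2$. A slightly sharper constant $b^2$ is in fact available for the second inequality by instead writing $\Var_q(f) \leq q[(f-pf)^2] = \Var_p(f) + (q-p)[(f-pf)^2]$ and bounding $(f-pf)^2\leq b^2$, but the stated $3b^2$ already suffices for the paper's purposes, so I would present the direct second-moment decomposition for transparency.
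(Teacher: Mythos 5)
Your proof is correct and follows essentially the same route as the paper: for the first inequality you decompose $f$ into an $(f-g)$ part plus a $g$ part, apply $(x+y)^2 \leq 2x^2+2y^2$, and bound $p[(f-g)^2] \leq b\,p|f-g|$ (the paper centers $f-g$ by its mean where you instead invoke the variational characterization $\Var_p(f)\leq p[(f-c)^2]$ with $c=pg$, a cosmetic difference), and for the second inequality your split into the second-moment gap $(q-p)[f^2]$ and the factored squared-mean gap $(qf-pf)(qf+pf)$ is exactly the paper's one-line computation. Your closing remark that the constant $3b^2$ can be sharpened to $b^2$ via $\Var_q(f)\leq q[(f-pf)^2]=\Var_p(f)+(q-p)[(f-pf)^2]$ is also valid, though not needed.
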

\begin{proof}
    For the first inequality, we have
    \begin{align*}
        \Var_p(f) &= p(f-pf)^2 = p(f-g + g - pg + pg - pf)^2 \\
        &\leq 2p(f-g - p(f-g))^2 + 2 p(g- pg)^2 = 2 \Var_p(g) + 2 \Var_{p}(f-g)\,,
    \end{align*}
    and the bound $\Var_{p}(f-g) \leq p(f-g)^2 \leq b p|f-g|$.
    
    The second inequality follows from the following representation:
    \[
        \Var_q(f) - \Var_p(f) = (q-p)f^2 + (pf - qf)(pf+ qf) \leq 3 b^2 \norm{p-q}_1\,.
    \]
\end{proof}

\subsection{Change of policy}

Let $\pi$ and $\pi'$ be two Markovian policies and let $\pi^{(h')}$ for $h' \in [H]$ be a family of policies defined as follows
\[
    \pi^{(h')}_h(s) = \begin{cases}
        \pi_h(s) & h \not = h'\,, \\
        \pi'_h(s) & h = h'\,.
    \end{cases}
\]

\begin{lemma}\label{lem:change_of_policy}
    For any measurable function $f\colon \cS \to \R$ and any $h \leq h'$ it holds
    \[
        \E_{\pi}[f(s_{h}) | s_1] = \E_{\pi^{(h')}}[f(s_{h}) | s_1]\,.
    \]
    Moreover, for any measurable $g \colon \cS \times \cA \to \R$ and any $h \geq h'$
    \[
        \E_{\pi}[g(s_h,a_h) | (s_{h'}, a_{h'}) = (s,a)] = \E_{\pi^{(h')}}[g(s_h,a_h) | (s_{h'}, a_{h'}) = (s,a)]\,.
    \]
\end{lemma}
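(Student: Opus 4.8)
The plan is to exploit the Markovian factorization of the trajectory law and to observe that each of the two identities involves only those steps at which $\pi$ and $\pi^{(h')}$ coincide. Recall that under any Markov policy $\mu$ the joint law of a trajectory factorizes as
\[
    \P_{\mu}(s_1,a_1,\ldots,s_H,a_H) = \prod_{k=1}^{H} \mu_k(a_k|s_k) \prod_{k=1}^{H-1} p_k(s_{k+1}|s_k,a_k)\,,
\]
so marginals and conditionals are obtained by summing out the appropriate coordinates, and I would use this as the single structural fact throughout.

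First I would establish the first identity. For $h \leq h'$ the marginal law of $s_h$ given $s_1$ is obtained by summing the factorization above over $a_1,s_2,\ldots,s_{h-1},a_{h-1}$, giving an expression that involves only the policy factors $\pi_1,\ldots,\pi_{h-1}$. Every index $k \in \{1,\ldots,h-1\}$ satisfies $k < h \leq h'$, hence $k \neq h'$ and $\pi^{(h')}_k = \pi_k$ by definition of $\pi^{(h')}$. Consequently the marginal laws of $s_h$ under $\pi$ and $\pi^{(h')}$ are identical, and integrating $f$ against them gives the first equality.

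For the second identity I would fix $h \geq h'$ and condition on $(s_{h'},a_{h'}) = (s,a)$. The case $h = h'$ is immediate because $g(s_h,a_h) = g(s,a)$ is then determined by the conditioning. For $h > h'$ the conditional law of $(s_h,a_h)$ is obtained by summing the factorization over $s_{h'+1},\ldots,s_h$ and $a_{h'+1},\ldots,a_{h-1}$, and it involves only $p_{h'}$ and the policy factors $\pi_{h'+1},\ldots,\pi_h$. Conditioning on the action $a_{h'}$ has removed any dependence on $\pi_{h'}$, while each remaining index $k \in \{h'+1,\ldots,h\}$ satisfies $k > h'$, so $\pi^{(h')}_k = \pi_k$. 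Hence the conditional laws coincide under the two policies and the second equality follows.

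The argument is essentially index bookkeeping, and the main point to get right is the precise role of the endpoints. For the first identity one must notice that the value of $s_h$ does not depend on the action $a_h$, so only policies through step $h-1$ enter and the non-strict bound $h \leq h'$ suffices. Symmetrically, in the second identity the conditioning on $a_{h'}$ is precisely what neutralizes the single step $h'$ where $\pi$ and $\pi^{(h')}$ differ, which is exactly why the non-strict bound $h \geq h'$ is admissible. I expect no genuine obstacle beyond this careful handling of the index ranges.
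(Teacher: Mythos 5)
Your proof is correct, but it takes a different route from the paper's. You argue directly on the trajectory law: you factorize $\P_{\mu}$ as a product of policy and transition kernels, marginalize out the irrelevant coordinates, and observe that the surviving policy factors ($\pi_1,\dots,\pi_{h-1}$ for the first identity, $\pi_{h'+1},\dots,\pi_h$ for the second) all have index different from $h'$, so they are shared by $\pi$ and $\pi^{(h')}$; your observation that conditioning on $a_{h'}$ cancels the $\pi_{h'}$ factor is exactly the right endpoint bookkeeping. The paper instead proceeds by induction on $h$ (forward for the first identity, forward from $h'$ for the second), using only the tower property of conditional expectation together with the operator identities $\E_\pi[f(s_{h+1})\mid s_h,a_h]=p_hf(s_h,a_h)$ and $\E_\pi[g(s_h,a_h)\mid s_h]=\pi_h g(s_h)$, never writing the joint trajectory density. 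Your argument is more transparent and makes the cancellation mechanism visible at a glance; the paper's operator-level induction has the advantage of not requiring an explicit product-form expression for marginals and conditionals, which is convenient in the general measurable state-space setting (where your sums should be read as integrals against the kernels, and conditional laws on null events need the kernel interpretation you implicitly use). Neither gap is substantive; the two proofs formalize the same underlying fact at different levels of abstraction.
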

\begin{proof}
    At first, we recall that by definition of a kernel $p_h$ we have for any measurable $f\colon$
    \begin{equation}\label{eq:def_p_def_pi}
        \E_\pi[ f(s_{h+1}) | s_{h}, a_{h}] = p_h f(s_{h}, a_{h}), \quad  \E_{\pi}[ g(s_h,a_h) | s_h ] = \pi_h g(s_h)\,.
    \end{equation}

    We show the first statement by induction over $h \leq h'$. For $h=1$ the statement is trivial. Next, we assume that it holds for $h$ and we have to show for $h+1\leq h'$. By the tower property of conditional expectation and \eqref{eq:def_p_def_pi}
    \begin{align*}
        \E_{\pi}[f(s_{h+1}) | s_1] &= \E_{\pi}[\E_\pi[ f(s_{h+1}) | s_{h}, a_{h}] | s_1]  =  \E_{\pi}[ p_hf(s_{h},a_{h}) | s_1 ] \\
        &= \E_{\pi}[ \E_{\pi}[p_h f(s_{h},a_{h}) | s_h] | s_1 ] = \E_{\pi}[ \pi_h [p_h f](s_{h}) | s_1 ]\,.
    \end{align*}
    We notice that since $h < h'$ then $\pi_h = \pi_h^{(h')}$. Then we can apply the induction hypothesis to a function $\pi^{(h')}_h [p_h f](s_{h})$
    \[
        \E_{\pi}[f(s_{h+1}) | s_1] = \E_{\pi}[ \pi_h [p_h f](s_{h}) | s_1 ] = \E_{\pi}[ \pi^{(h')}_h [p_h f](s_{h}) | s_1 ] = \E_{\pi^{(h')}}[f(s_{h+1}) | s_1]\,.
    \]

    We use induction over $h \geq h'$ to show the second statement. For $h=h'$ the statement is trivial. Next, we assume that it holds for $h\geq h'$, and we have to show it for $h+1$. Again, using the tower property
    \begin{align*}
        \E_{\pi}[g(s_{h+1}, a_{h+1}) | s_{h'},a_{h'}] = \E_{\pi}[ \E_{\pi}[ g(s_{h+1}, a_{h+1}) | s_{h+1} ] | s_{h'},a_{h'}] = \E_{\pi}[  \pi_{h+1} g(s_{h+1}) | s_{h'},a_{h'}]\,.
    \end{align*}
    We notice that $\pi_{h+1} = \pi^{(h')}_{h+1}$ since $h \geq h'$. Thus, again applying the tower property, \eqref{eq:def_p_def_pi}, and induction hypothesis
    \begin{align*}
        \E_{\pi}[g(s_{h+1}, a_{h+1}) | s_{h'},a_{h'}] &= \E_{\pi}[  \E_{\pi}[ \pi^{(h')}_{h+1} g(s_{h+1}) | s_h, a_h ]  | s_{h'},a_{h'}] \\
        &= \E_{\pi}[  p_h[\pi^{(h')}_{h+1} g](s_{h},a_h)  | s_{h'},a_{h'}] \\
        &= \E_{\pi^{(h')}}[  p_h[\pi^{(h')}_{h+1} g](s_{h},a_h)  | s_{h'},a_{h'}] = \E_{\pi^{(h')}}[g(s_{h+1}, a_{h+1}) | s_{h'},a_{h'}]\,.
    \end{align*}

\end{proof}

\end{document}